\renewcommand{\vec}{\mathbf}
\newcommand{\dR}{\mathbb{R}}
\newcommand{\dN}{\mathbb{N}}
\newcommand{\dE}{\mathbb{E}}
\newcommand{\dP}{\mathbb{P}}
\newcommand{\R}{\mathbb{R}} 
\newcommand{\N}{\mathbb{N}}
\newcommand{\cA}{\mathcal{A}}
\newcommand{\cB}{\mathcal{B}}
\newcommand{\cC}{\mathcal{C}}
\newcommand{\cH}{\mathcal{H}}
\newcommand{\cL}{\mathcal{L}}
\newcommand{\cN}{\mathcal{N}}
\newcommand{\cO}{\mathcal{O}}
\newcommand{\cP}{\mathcal{P}}
\newcommand{\cR}{\mathcal{R}}
\newcommand{\cS}{\mathcal{S}}
\newcommand{\ind}{\mathbf{1}}
\newcommand{\dS}{\mathbb{S}}
\newcommand{\E}{\mathbb{E}}
\providecommand{\given}{}
\DeclarePairedDelimiterXPP{\Pb}[1]{\mathbb{P}}{\lparen}{\rparen}{}{\renewcommand{\given}{\nonscript{}\:\delimsize\vert\nonscript{}\:\mathopen{}} #1}
\DeclarePairedDelimiterX{\Set}[1]\lbrace\rbrace{\renewcommand{\given}{\nonscript{}\:\delimsize\vert\nonscript{}\:\mathopen{}} #1}
\DeclareMathOperator{\Var}{Var}
\DeclareMathOperator{\Cov}{Cov}
\DeclareMathOperator{\vect}{span}
\DeclareMathOperator*{\argmin}{arg\,min}
\DeclarePairedDelimiterX{\norm}[1]\lVert\rVert{\ifblank{#1}{\: \cdot \:}{#1}}
\DeclareMathOperator{\Unif}{Unif}
\DeclareMathOperator{\He}{He}
\newcommand{\bilingualcommand}[3]{%
	\newcommand{#1}[1][\ ]{%
		##1%
		\iflanguage{english}{\text{#2}}{%
			\iflanguage{french}{\text{#3}}{}%
		}%
		##1%
	}%
}
\bilingualcommand{\where}{where}{où}
\bilingualcommand{\textif}{if}{si}
\bilingualcommand{\textand}{and}{et}
\bilingualcommand{\textiff}{if and only if}{si et seulement si}
\bilingualcommand{\otherwise}{otherwise}{sinon}
\newcommand{\eps}{\varepsilon}
\newcommand{\quand}{\quad \textand \quad}
\newcommand{\bg}{{\boldsymbol{g}}}
\newcommand{\bW}{{{\boldsymbol{{W}}}}}
\newcommand{\bw}{{\boldsymbol{w}}}
\newcommand{\bv}{{\boldsymbol{v}}}
\newcommand{\ba}{{\boldsymbol{a}}}
\newcommand{\bZ}{{\boldsymbol{Z}}}
\newcommand{\btheta}{{\boldsymbol{\theta}}}
\newcommand{\bmu}{{\boldsymbol{\mu}}}
\newcommand{\bz}{{\boldsymbol{z}}}
\newcommand{\bI}{{\boldsymbol{I}}}
\newcommand{\by}{{\boldsymbol{y}}}
\newcommand{\bu}{{\boldsymbol{u}}}
\DeclareMathOperator{\polylog}{polylog}
\newcommand{\Ea}[1]{\E\left[#1\right]}
\newcommand{\Eb}[2]{\E_{#1}\left[#2\right]}
\def\hR{\widehat{\mathcal{R}}}
\DeclarePairedDelimiterX{\abs}[1]\lvert\rvert{\ifblank{#1}{\: \cdot \:}{#1}}
\renewcommand{\vec}[1]{\bm{#1}}
\newcommand{\review}[1]{\textcolor{black}{ #1}}
\newtheorem{assumption}{Assumption}
\begin{document}

\title{How Two-Layer Neural Networks Learn,\\ One (Giant) Step at a Time}

{
\renewcommand{\thefootnote}{\fnsymbol{footnote}}
\author{\name Yatin Dandi\footnotemark[1]\,\,\footnotemark[4] \email yatin.dandi@epfl.ch
        \AND
        \name Florent Krzakala\footnotemark[1] \email florent.krzakala@epfl.ch
        \AND
        \name Bruno Loureiro\footnotemark[2]\email bruno.loureiro@di.ens.fr
        \AND
        \name Luca Pesce\footnotemark[1] \email luca.pesce@epfl.ch
        \AND
        \name Ludovic Stephan\footnotemark[3] \email ludovic.stephan@ensai.fr
        \AND
        \footnotemark[1] \addr Information, Learning and Physics (IdePHICS) Laboratory\\
       École Polytechnique Fédérale de Lausanne\\
       Route Cantonale, 1015 Lausanne, Switzerland 
       \AND
       \footnotemark[2] \addr Département d'Informatique\\
       École Normale Supérieure - PSL \& CNRS\\
       45 rue d’Ulm, F-75230 Paris cedex 05, France 
       \AND 
       \footnotemark[3] \addr Univ Rennes, Ensai, CNRS, CREST \\
       UMR 9194 F-35000 Rennes, France
       \AND 
       \footnotemark[4] \addr
       Statistical Physics Of Computation (SPOC) Laboratory\\
       École Polytechnique Fédérale de Lausanne\\
       Route Cantonale, 1015 Lausanne, Switzerland
       }
       
}

\editor{Mahdi Soltanolkotabi}

\maketitle

\begin{abstract}
For high-dimensional Gaussian data, we investigate theoretically how the features of a two-layer neural network adapt to the structure of the target function through a few large batch gradient descent steps, leading to an improvement in the approximation capacity with respect to the initialization.  First, we compare the influence of batch size to that of multiple (but finitely many) steps. For a single gradient step, a batch of size \( n = \mathcal{O}(d) \) is both necessary and sufficient to align with the target function, although only a single direction can be learned. In contrast, \( n = \mathcal{O}(d^2) \) is essential for neurons to specialize in multiple relevant directions of the target with a single gradient step. Even in this case, we show there might exist ``hard'' directions requiring \( n = \mathcal{O}(d^\ell) \) samples to be learned, where \( \ell \) is known as the leap index of the target. Second, we show that the picture drastically improves over multiple gradient steps: a batch size of \( n = \mathcal{O}(d) \) is indeed sufficient to learn multiple target directions satisfying a staircase property, where more and more directions can be learned over time. Finally, we discuss how these directions allow for a drastic improvement in the approximation capacity and generalization error over the initialization, illustrating a separation of scale between the random features/lazy regime and the feature learning regime. Our technical analysis leverages a combination of techniques related to concentration, projection-based conditioning, and Gaussian equivalence, which we believe are of independent interest. By pinning down the conditions necessary for specialization and learning, our results highlight the intertwined role of the structure of the task to learn, the detail of the algorithm (the batch size), and the architecture (i.e., the number of hidden neurons), shedding new light on how neural networks adapt to the feature and learn complex task from data over time.
\end{abstract}

\begin{keywords}
 Feature learning, Gradient descent, SGD, Learning Theory, Two-layers neural network, Random Features
\end{keywords}

\section{Introduction}
\label{sec:main:intro}

A central property behind the success of neural networks is their capacity to adapt to the features in the training data. Indeed, many of the classical machine learning methods, e.g. linear or logistic regression, are specifically designed to a restricted class of functions (e.g. generalized linear functions). Others, such as kernel methods, can adapt to larger function classes (e.g. square-integrable functions), but sometimes at prohibitively many samples. Despite the limitations, these methods enjoy well-understood theoretical guarantees: they are convex (hence easy to train) and given a target function, it is well-understood how many samples are needed to achieve a target accuracy. The situation is dramatically different for neural networks: despite being universal approximators, little is known on how to optimally train them or how many hidden units and/or samples are required to learn a given class of functions. Nevertheless, they have proven to be flexible, efficient and easy to optimize in practice, properties which are often attributed to their capacity to adapt to features in the data. Curiously, most of our current theoretical understanding of neural networks stems from the investigation of their lazy regime where features are {\it not} learned during training. In this work, we take some (giant) steps forward from the lazy regime.




Our central goal is to paint a complete picture of how two-layer neural networks adapt to the features of training data $(\vec{z}^\nu, y^\nu)_{\nu=1}^{n}\in\mathbb{R}^{d+1}$ in the {\it early phase} of training after the first few steps of gradient descent. We recall the reader that for data is supported in a high-dimensional space, the curse of dimensionality prevents efficient learning even under standard regularity assumptions on the target function such as Lipschitzness \citep{devroye2013probabilistic}.
Hence, understanding the efficient learning performance of neural networks observed in practice requires additional assumptions on the data distribution. In this work, we focus on a popular synthetic data model consisting of: a) independently drawn standard Gaussian covariates $\vec{z}^{\nu}\sim\cN(0, I_d)$; b) a target function $y^{\nu}=f^\star(\vec{z}^{\nu})$ depending only on a finite number of relevant directions, also known as a \emph{multi-index model}. In other words, there exists a finite number of orthogonal \emph{teacher vectors} $\vec{w}_1^\star, \dots, \vec{w}_r^\star$ such that
    \begin{equation}
        y = f^\star(\vec{z}) \coloneqq g^\star(\langle\vec{w_1}^\star,\vec{z}\rangle,\dots, \langle\vec{w_r}^\star,\vec{z}\rangle).
    \end{equation}
 Note that in this model the features are isotropic, with all the structure in the data being in the target. In contrast to other popular models for structured data , such as low-dimensional support of the inputs or smoothness of the target function, kernel methods do not adapt to target functions depending on low-dimensional projections \cite{bach2017breaking}. This makes it an ideal playground for quantifying the adaptativity of neural networks in the feature-learning regime. Given this class of structured targets, we consider supervised learning with the simplest universal approximator neural network: a fully-connected two-layer network with first and second layer weights $W \in \dR^{p\times d}$ and $\vec{a}\in \dR^p$ and activation function  $\sigma:\mathbb{R}\to\mathbb{R}$:
\begin{align}\label{eq:def_2lnn}
    \hat f(\vec{z}; W, \vec{a}) = \frac1{\sqrt{p}}\sum_{i = 1}^{p} a_i \sigma(\langle \vec{w}_i, \vec{z} \rangle).
\end{align}
The primary focus of this work is to elucidate how a two-layer neural network adapts to a low-dimensional target structure during its training. We aim to understand the interplay among the structure of the task (specifically, the complexity of the hidden true function), the details of the algorithm (here the batch size), and the architecture (the number of hidden neurons) in the process of learning from data \citep{zdeborova2020understanding}. In particular, we will be interested in quantifying how much data is required for the relevant directions of the target to be learned, and how this feature learning translates into the approximation capacity of the network with respect to kernel methods.




\section{Summary of main results}
\begin{wrapfigure}{r}{0.45\textwidth}
  \vspace{-1.cm}
\includegraphics[width=0.44\textwidth]{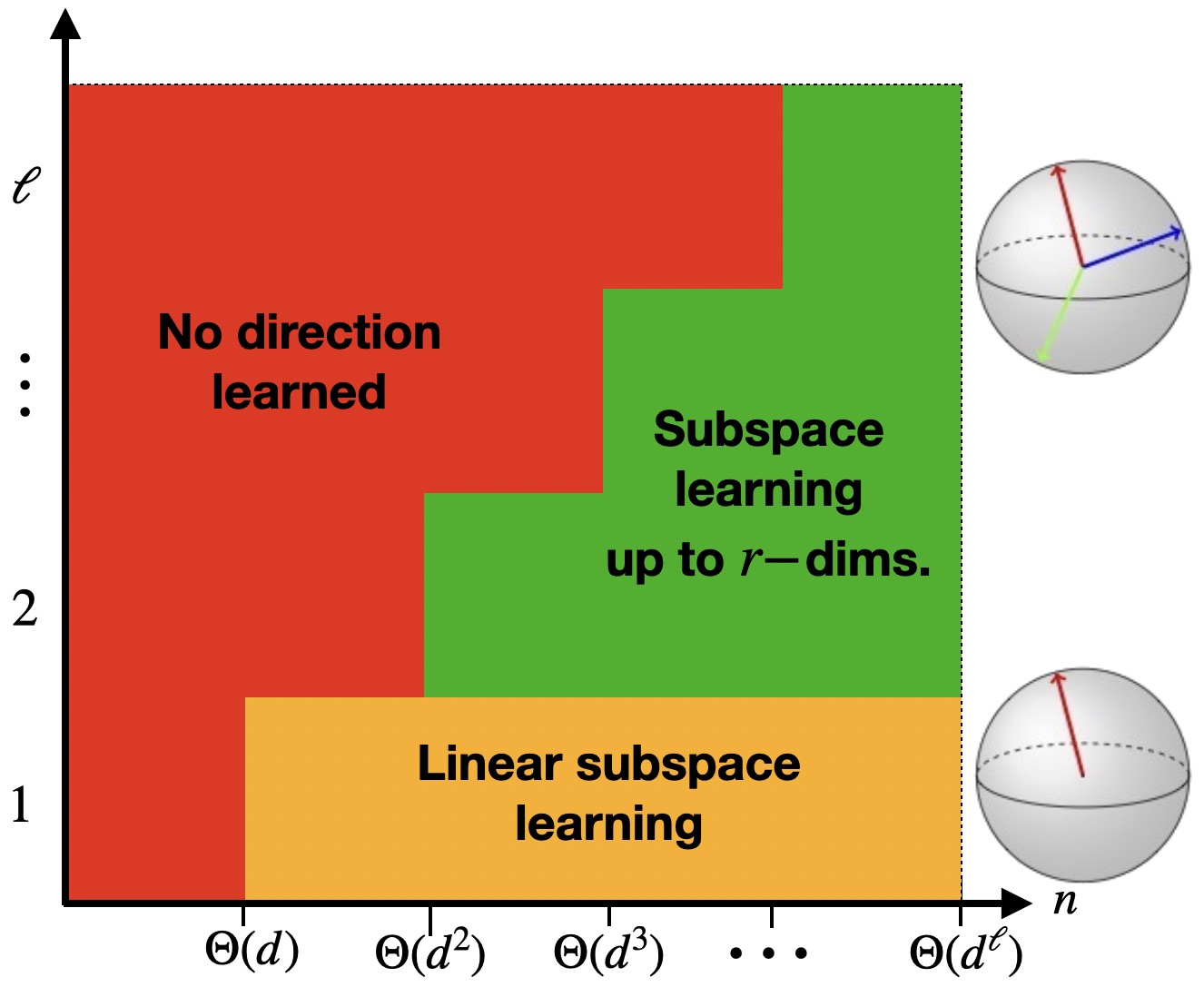}
  \caption{\textbf{Learning with a single gradient step.} 
  Illustration of the relationship between batch size and target function complexity for learning multi-index functions with a single giant step in the  $n=\Theta(d^k)$ regime (Theorems.~\ref{thm:one_step_lower_bound} \& \ref{thm:one_step_learning}). 
  }
  \label{fig:fig_1}
  \vspace{-0.5cm}
\end{wrapfigure}
To outperform the network at initialization (which can be regarded as a kernel method) the network must adapt to the data distribution. Mathematically, this translates to developing correlation in the first layer weights with the target directions $\vec{w}^{\star}_{1},\ldots,\vec{w}^{\star}_{r}$. Our first set of results thus precisely focus on {\bf feature learning}, i.e. how the subspace
\[ V^\star \coloneqq \vect(\review{\vec{w}^\star_1, \dots, \vec{w}^\star_r})\]
is learned during training.

\subsection{A single Gradient step} 
First, we discuss the case of {\bf a single gradient step} of {\bf full-batch gradient descent}, which turns out to be already non-trivial \citep{ba2022high,damian2022neural} 
 and consider the update: \begin{equation}\label{eq:first_layer_gd1}
     \vec{w}_i^{1} = \vec{w}_i^0 -  \frac{\eta}{2n} \sum_{\nu = 1}^n \nabla_{\vec{w}_{i}} \left(y^\nu-\hat f(\vec{z}^\nu; W^0, \vec{a}^{0})\right)^{2}\,.
    \end{equation} 
Theorems \ref{thm:one_step_lower_bound} and \ref{thm:one_step_learning} identify a fundamental interplay between batch size and the complexity of the underlying target function. They are summarized in Fig.~\ref{fig:fig_1} (and a particular numerical example is shown in Fig.~\ref{fig:example}). More precisely:

\begin{itemize}
[noitemsep,leftmargin=1em,wide=0pt]
\item We show that developing meaningful correlation with the target function requires a large batch size $n = \cO(d)$ and learning rate $\eta=\Theta(p)$ when $p,d,n$ are large. However, feature learning remains limited in this regime since we prove only a {\it single} direction can be learned. Thus, if the target depends on several relevant directions, only a "single neuron" approximation can be learned.

\item Surpassing the single direction approximation with a single step {\it requires} a larger batch size with {\it at least} $n = \cO(d^2)$ samples. This allows for the network weights to {\it specialize} to multiple target directions. 
\item Nonetheless, we show that there might be {\it hard} directions in the target which cannot be learned with $n = \cO(d^2)$. Learning these directions necessitates a batch size of at least $n = \cO(d^\ell)$, as well as suppressing the directions learned at  $n = \cO(d^{\ell-1})$, where $\ell$ is the \emph{leap index} of the target (precisely defined in Def. \ref{def:leap}) that \review{informally} speaking corresponds to the lowest non-zero degree of the Hermite polynomials in the expansion of the target in this directions.  
\end{itemize}
This description paints a clear picture on how the role of the batch size, of the learning rate, and the structure of the hidden function are intertwined. The complexity of learning a low-dimensional target function  is a topic that recently saw a surge of interest, and it is thus interesting to contrast these rates with recent results in the litterature.

For a single-index function with leap index $\ell$, one-pass SGD has a sample complexity of $\cO(d^{\ell - 1})$ \citep{BenArous2021}, and it has been recently shown that a smoothed version of SGD achieves a sample complexity of $\cO(d^{\ell/2})$ \citep{damian_2023_smoothing}. This matches a lower bound from the correlation statistical query family, which encompasses all SGD-like methods. In our single step setting,  the sample complexity for large batch learning is $\cO(d^\ell)$, which is worse than both the aforementioned methods. However, the time complexity of each algorithm paints a different picture. Both SGD algorithms are sequential, and require $\cO(d)$ operations per step, which leads to a total time complexity of $\cO(d^{\ell/2 + 1})$ at least. On the other hand, the computation of the update in Eq. \ref{alg:gd_training} is simply an average of independent terms, which is easy to parallelize. Including the time to compute the average of each term e.g. using a \emph{Gossip algorithm} \citep{boyd_2006_randomized}, this sums up to a time complexity of $\cO(d + \log(n))$. Such an algorithm is also amenable to decentralized learning schemes, where each agent only has access to a fraction of the overall data. 

\subsection{Learning over many GD iterations}
The situation drastically improves when taking for {\bf multiple gradient steps}. In this case, focusing on the linear batch size $n={\cal O}(d)$ regime, and using a fresh batch of data at each GD iteration: 
\begin{equation}\label{eq:first_layer_gd}
        \vec{w}_i^{t+1} = \vec{w}_i^t -  \frac{\eta}{2n} \sum_{\nu = 1}^{n} \nabla_{\vec{w}_i} \left(y^\nu-\hat f(\vec{z}^\nu; W^t, \vec{a}^{0})\right)^{2}\,,
    \end{equation}
Note that splitting the training of the first and second layers and using a fresh batch of data at each iteration is a common approximation in this literature \citep{damian2022neural, ba2022high, Bietti2022}. In contrast to the recent works considering the population limit, we stress that here we take the batch size $n$ to scale with the dimension $d$. In the realm of distributed and federated learning, scenarios with large batches, a single pass, and few iterations are often the norm \citep{goyal2017accurate,li2020review} (for instance this is the case when training large language models), further underlining the relevance of this scenario. In this case, Theorem \ref{thm:staircase} shows that more complex subspaces of the target directions {\it may} be progressively learned at each iteration, as we illustrate in Fig. \ref{fig:fig_2}. More precisely:

\begin{itemize}[noitemsep,leftmargin=1em,wide=0pt]
\item Each additional gradient step allows for learning new perpendicular directions {\bf upon the important condition that they are linearly connected to the previously learned directions} (see \review{Sec.~\ref{sec:few-giant-steps}} for precise definitions of this staircase property). Therefore, in contrast with a single step, taking multiple steps allows to learn a multiple-index target with only $n={\cal O}(d)$ samples.
\item Nonetheless, directions that are not coupled through the staircase property and with zero first Hermite coefficient cannot be learned in any finite number of steps. In fact, as discussed previously, they require a batch size of at least $n={\cal O}(d^2)$. In other words, while multiple steps help specialization, it cannot help learning ``hard'' target directions.
\end{itemize}

\begin{figure}[t]
\centering
\includegraphics[width=0.95\textwidth]{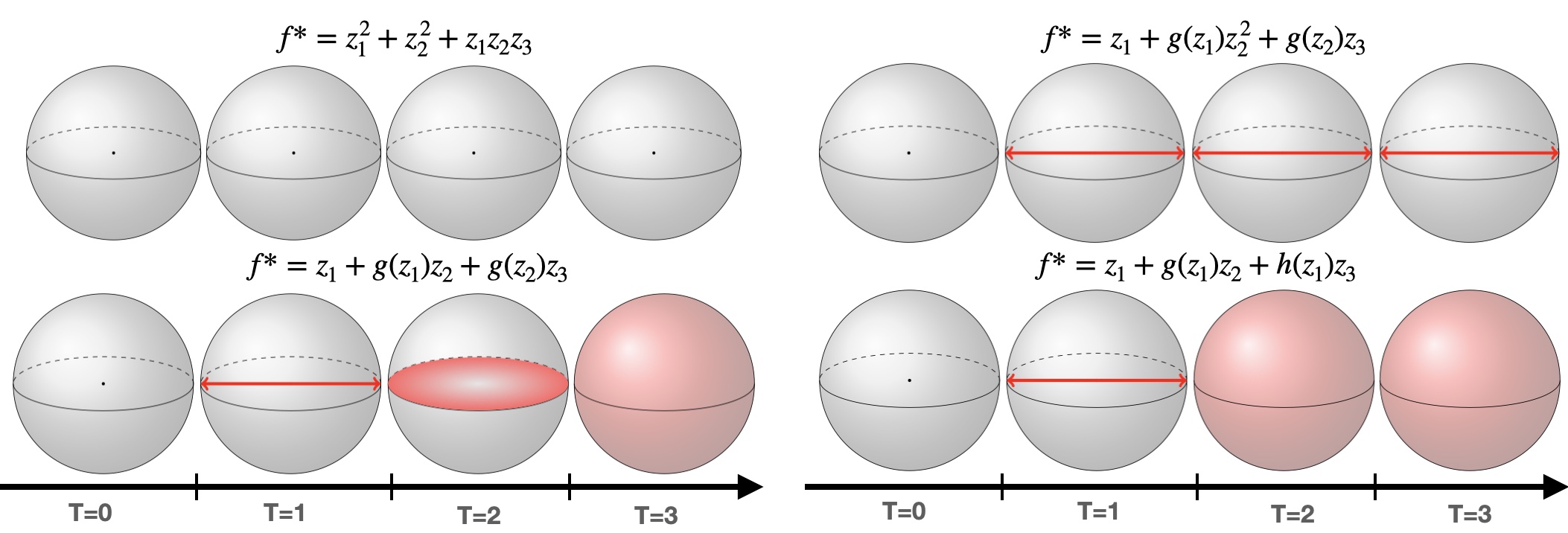}
\caption{\textbf{Learning with multiple gradient steps.} An illustration of how neural networks trained using $n=\cO(d)$ batches learn relevant directions after multiple GD training steps, allowing to learn more complex functions over time (see Thm. \ref{thm:staircase}). Here we represent the space $V^\star$ of the relevant direction of the target function, and the (normalized) projection of learned direction $W^t$ by the networks for different task $f^\star(\cdot)$ - the function $g(\cdot)$ and $h(\cdot)$ are assumed to have zero first two Hermite coefficients. During the early stage of training, neural networks first learn first a single direction associated to the linear part of the target, and then can learn over time other directions that are linear conditioned on the previous learned ones. Let $\{\vec e_i\}_{i \in [d]}$ be the standard basis of $\mathbb{R}^d$, the four examples show cases where: \textbf{Top left:} No directions is learned at all. \textbf{Top right:} The network can only learn a single direction $\vec e_1$ (single index regime). \textbf{Bottom left:} The network learns a new direction each time, $\vec e_1$, then $\vec e_2$ and finally $\vec e_3$. \textbf{Bottom right:} The network learns $\vec e_1$ at the first step and both $\vec e_2$ and $\vec e_3$ at the second steps.}
\label{fig:fig_2}
\end{figure}

These results warrant the following comments in context of the state of art. \review{The staircase functions were introduced and analyzed for Boolean covariates in \cite{abbe2021staircase,abbe2022merged}. In particular, \cite{abbe2022merged} considered one-pass SGD in two settings: (a) $\mathcal{O}(d)$ iterations and batch-size one; (b) $\mathcal{O}(1)$ iterations with batch-size $\mathcal{O}(d)$ - the latter being closer to the multiple gradient steps setup considered in this manuscript.
These works employ a dimension-free characterization of the mean-field limit \citep{chizat2018global,mei2018mean,rotskoff2018trainability,mei2019mean} with diverging width (constant with respect to $d$) to show that the staircase structure is necessary and nearly sufficient for learning the target with $\mathcal{O}(d)$ sample complexity. Our work differs in two important points: we consider Gaussian data and carry out a basis-independent analysis}

\review{Recently, Gaussian data was also considered by \cite{abbe2023sgd}, who extended the martingale analysis of \cite{BenArous2021}, showing that leap $1$ staircase target functions are learned by one-pass SGD with $\mathcal{O}(d)$ iterations / samples. Using, as we do here, large batches $n_d=O(d)$ gives the same  $O(d)$ dependence in terms of sample complexity, but allows us to learn them with $O(1)$ iterations instead. This is a nice illustration of the speed-up provided by large-batch SGD over vanilla SGD (see Table \ref{table} for a summary).}


Secondly, we note that similar to the \emph{saddle-to-saddle} dynamics under gradient flow \citep{jacot2021saddle,abbe2023sgd,boursier2022gradient}, the dynamics described through Theorem~\ref{thm:staircase} involves sequential learning of directions.  We note, however, that in the one-sample SGD regime \citep{BenArous2021,abbe2023sgd} the gradient has vanishing correlation with new directions, thus requiring a polynomial number of updates to escape saddles. In contrast, the large-batch gradient updates contain a finite fraction of components along the new directions, allowing their learning through a single step. Moreover, we show that each update leads to a $\cO(1)$ change in the components along \review{directions} in $V^\star$, obviating the need for coordinate-wise projections in \cite{abbe2023sgd}.

The set of results described above provide a mathematical theory on how neural networks learn representations of the data over training. They corroborate, among others, the findings of \cite{kalimeris2019sgd}, who observed that neural networks \review{first fit the best linear classifier and subsequently learn} functions of increasing complexity.

\begin{figure}[t]
    \centering    
\includegraphics[width = 0.48\textwidth]{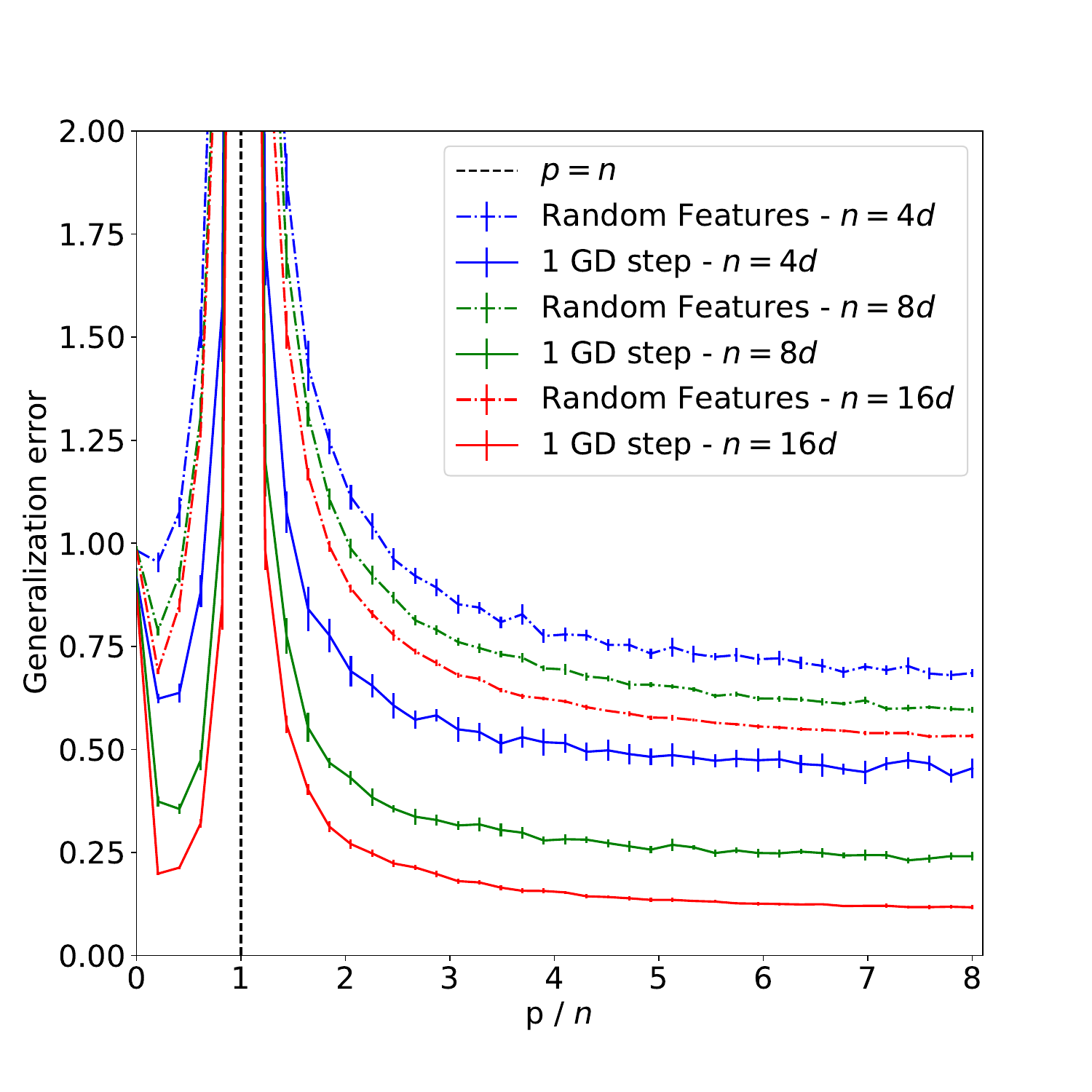}
\includegraphics[width = 0.48\textwidth]{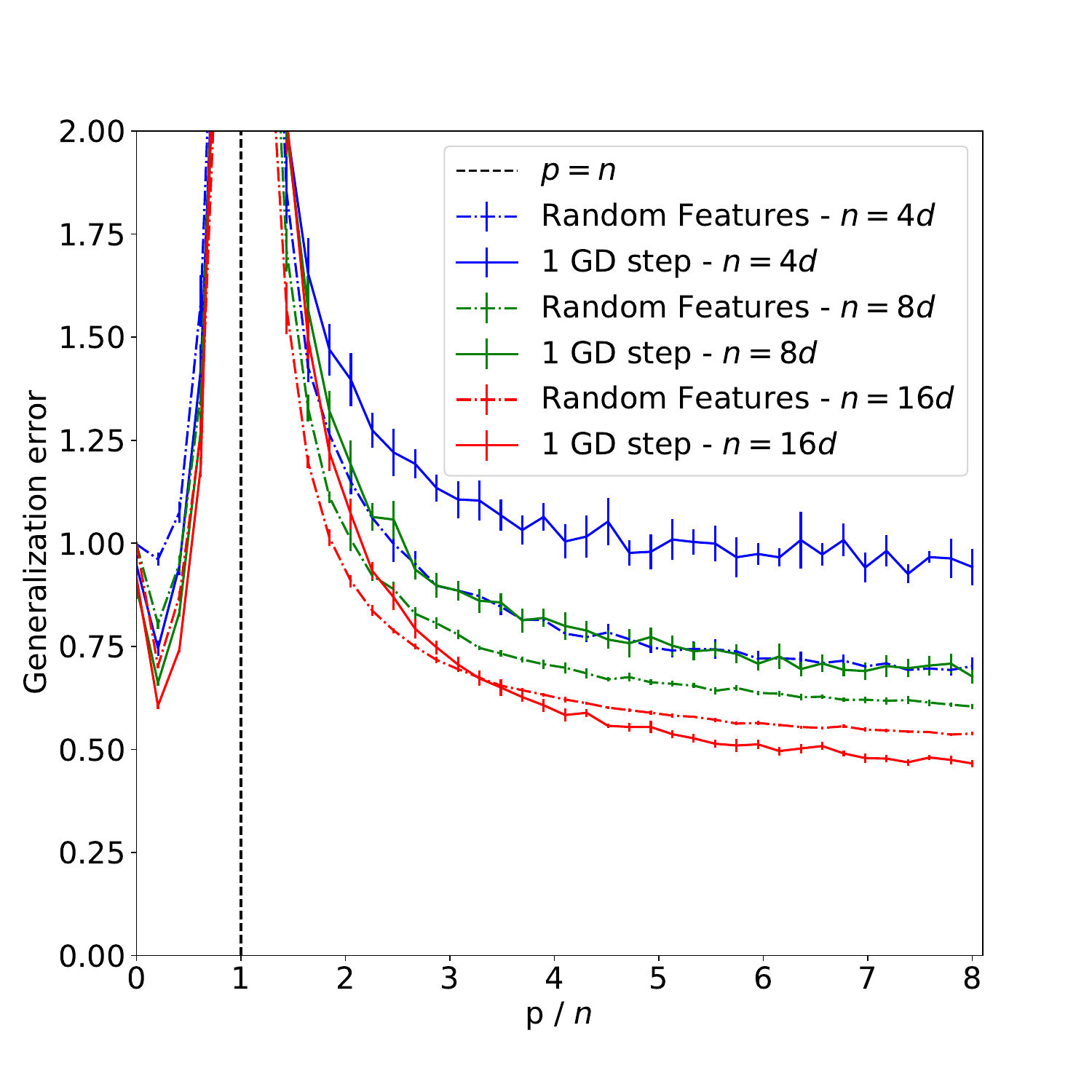}
    \caption{{\bf Feature learning and generalization:} We illustrate how one step of gradient descent may (or may not) improve generalization over random features. The plot shows the generalization error as a function of the number of hidden neurons $p$ normalized by the number of samples used for the first layer training $n=\{4d,8d,16d\}$ with fixed dimension $d=2^8
$. 
    \textbf{Left}: $f^\star(\vec z)=z_1+z_1 z_2$. While random features can only fit a linear model in the proportional regime considered, one step of gradient descent over $W$ allows to fit the $z_1 z_2$ part, resulting in a much lower generalization error with respect to random features, despite only the direction $z_1$ being learned in $W$.
    \textbf{Right}: $f^\star(\vec z) = z_1+ z_2z_3$. In this case, since the nonlinear part does not depend on $z_1$, one step of gradient descent on the $W$ does not allow to improve generalization over random features (see Theorem \ref{thm:cget} and Corollary \ref{corr:lower_bound}). We refer to App.~\ref{sec:appendix:numerics} for details on the numerics.}
    \label{fig:features}
\end{figure}

\subsection{From features to generalization}
Our last set of results connects feature learning with the approximation capacity of the network and illustrates that feature learning improves the learning of the target function $f^{\star}$ over random initialization.

\begin{itemize}[noitemsep,leftmargin=1em,wide=0pt]
\item We show that a two-layer network with a finite second layer can {\it only} learn the part of the target function in the learned subspace (see Proposition \ref{prop:fixed_width_lower_bound}). In fact, we conjecture that with $p$ large enough (but still finite), it should be possible to approximate this part of $f^{\star}$ up to arbitrary precision (Conj. \ref{conj:approxim_conj}).
\item As (possibly) universal kernels, large-width two-layer networks at initialization \review{enjoy} better approximation capacity. Indeed, \cite{mei_generalization_2022} proved that at initialization $W^{t=0}$, with $n=\Theta(d^k)$ only a degree $k$ approximation of the target function can be learned. Our results characterize how feature learning allows us to improve over this sample complexity. In particular, we show that in the directions learned by the first layer, the target function $f^{\star}$ can be learned with less samples, while the component of $f^{\star}$ orthogonal to the learned features still requires $n=\cO(d^{k})$. While a complete mathematical control of the generalization error rates remains a difficult problem (see Conjecture \ref{conj:kernel+feature}), our results provide a clear separation on scales between two-layer networks and NTK-like methods, improving over the state-of-the art in the literature. 

\item In particular, in Corollary \ref{corr:lower_bound} we prove that with a single gradient step and $n,p=\mathcal{O}(d)$, one can {\it only} learn features in this one-dimensional subspace, doing no better than kernels in the orthogonal direction. This is illustrated in Fig.\ref{fig:features} where we give an example where one step of gradient drastically improves generalization, and one where it does not. To prove these results, we provide a stronger {\it conditional} version of the Gaussian equivalence theorem \citep{mei_generalization_2022,goldt_gaussian_2021,hu2022universality}, which is the backbone of Theorem \ref{thm:cget}, and we believe is of independent interest. 
\end{itemize}

The code to reproduce our figures is available on   \href{https://github.com/lucpoisson/GiantStep}{GitHub}, and we refer to App.~\ref{sec:appendix:numerics} for details on the numerical implementations. Proofs are detailed in App.~\ref{sec:appendix:gd_proofs} and App.~\ref{sec:appendix:cget_proofs}.

\paragraph{Other related works ---}
The analysis of high-dimensional asymptotics of kernel regression has provided valuable insights into the advantages and limitations of kernel methods \citep{Dietrich1999, Opper2001, Ghorbani2019, Ghorbani2020, Donhauser2021, Mei2023, Spigler2020, bordelon20a, Canatar2021, simon2022eigenlearning, Cui2021, Cui2022, xiao2022precise}. In particular, a similar stairway picture as in Fig.~\ref{fig:fig_2} emerged from these works \cite{xiao2022precise}. The key difference, however, is that at each regime $n={\cal O}(d^{k})$, kernels can only learn the $k$-th Hermite polynomial of the target. This should be contrasted with our feature learning regime where, once a direction is learned, all its Hermite coefficients are learned. Feature learning corrections to kernel methods have been investigated in \cite{pmlr-v75-dudeja18a, Naveh2021, Seroussi2023, atanasov2022neural, Bietti2022, bordelon2023dynamics, petrini2022learning}. On a complementary line of work, exact asymptotic results for the the random features models have been derived in the literature \citep{mei_generalization_2022, gerace_generalisation_2020, Dhifallah2020, loureiro_learning_2021, loureiro22a, schroder2023deterministic, bosch2023precise}. A large part of these results are enabled by the Gaussian equivalence property \citep{goldt_gaussian_2021, hu2022universality, Montanari2022, dandi2023universality}.

Closer to us are \citep{ba2022high, damian2022neural}. 
\cite{ba2022high}
showed that a single gradient step yields an approximately rank-one change on the weights which is enough to beat kernel methods, but did not characterize the impact on the generalization error. In our work, we prove that with a single gradient step and $n,p=\mathcal{O}(d)$, one can only learn features in this one-dimensional subspace, doing no better than kernels in the orthogonal direction. Additionally, their results are limited to single-index target and to a single gradient step. 
In contrast, \cite{damian2022neural} showed that with $n = \omega(d^2)$ samples, two-layer neural networks can specialize to more than one direction of a multi-index target function with zero first Hermite coefficient ($\ell$=2), and were again limited to a single step.
Our work extends their sufficient conditions on the sample complexity to general $\ell\geq 1$. We also show they are also necessary, i.e. $\forall \epsilon >0$, with less data $n = \Theta(d^{\ell-\epsilon})$ one cannot do better than random features. Thus, our results prove a clear separation between the class of functions learned within the $\Theta(d)$ batch-size setting of \cite{ba2022high} and the $\Theta(d^2)$ batch-size setting of \cite{damian_2023_smoothing} and establish a general hierarchy of functions requiring increasing batch-size to be learned with a gradient step. Additionally, we characterize which class of multi-index targets can be instead learned with $n=\cO(d)$ with {\it multiple} steps.


\review{In a related but different vein, \cite{abbe2022merged, abbe2021staircase} showed how the ``staircase" property of target functions characterizes the sample complexity for two-layer networks trained with one-pass SGD, both for small and large batch sizes. Their focus, however, was on the case of {\it sparse boolean functions}. Recently, \cite{abbe2023sgd} provided a partial extension of these results to two-layer neural networks trained with batch one SGD on isotropic Gaussian data. Our work differs in important points.
First, we consider general multi-index target functions on isotropic Gaussian data. Second, we consider large batch SGD. Our Theorem 7 operates in a similar setting as Theorem 9 in \cite{abbe2022merged} with $p=\mathcal{O}(1)$, but without assuming the knowledge of the basis or the validity of the mean-field limiting equations.
Our result shows that a ``directional staircase" behavior arises when iterating a few giant gradient steps, while a related, but different, picture arises with a single step depending on the batch size.} We also provide a sharp characterization of when this phenomenon appears for multi-index targets and networks trained under large batch SGD, and provide a bound on the resulting generalization error. \review{Furthermore, our Theorem 12 precisely characterizes the effect of $p=\mathcal{O}(d)$ neurons for a single gradient step.} 
\\ Akin to the ``summary statistics" approach in \cite{saad.solla_1995_line,BenArous2021,ben2022high}, our analysis is based upon the concentration of the overlaps of the neurons with the target subspace and their norms, instead of the concentration of the full gradient vector considered in recent works such as \cite{abbe2022merged,damian2022neural}, removing any requirements on the constants in the sample complexity
for alignment along the target subspace. 

\section{Statement of main theoretical results}
\label{sec:main:theory}
\subsection{Preliminaries}

Before stating our main results, we recall a few \review{definitions} and useful facts.
\paragraph{Hermite expansion ---}  Given the Gaussian measure $\gamma_m$ on $\dR^m$, we can build a scalar product on $\ell^2(\dR^m, \gamma_m)$ as
\begin{equation}
    \langle f, g \rangle_\gamma = \int_{\dR^m} f g \ \mathrm{d} \gamma_m = \dE_{\vec{z}\sim \cN(\vec{0}, I_m)}[f(\vec{z})g(\vec{z})].
\end{equation}
It turns out that there is a specific orthonormal basis of interest for this scalar product, that we present in tensor form:
\begin{definition}[Hermite decomposition]\label{def:hermite}
    Let $f: \dR^m \to \dR$ be a function that is square integrable w.r.t the Gaussian measure. There exists a family of tensors \review{$(C_j(f))_{k\in\dN}$} such that $C_j(f)$ is of order $j$ and for all $\vec{x} \in \dR^m$,
    \begin{equation}
        f(\vec{x}) = \sum_{j \in \dN} \langle C_j(f), \cH_j(\vec{x}) \rangle
\label{eq:hermite_expansion}
    \end{equation}
    where $\cH_j(\vec{x})$ is the $j$-th order Hermite tensor \citep{grad_1949_note}.
\end{definition}

\paragraph{Higher-order singular value decomposition ---}

The higher-order singular value decomposition (HOSVD) \citep{de2000multilinear} of a tensor is defined as follows:
\begin{definition}[Higher-order SVD \citep{de2000multilinear}]
    Let $C \in \dR^{m^k}$ be a symmetric tensor of order $k$. A higher-order SVD of $C$ is an orthonormal set $(\vec{u}_1, \dots, \vec{u}_r)$ of \review{$r \leq m$} vectors, as well as a tensor $S \in \dR^{r^k}$ such that
    \begin{equation}
        C = \sum_{j_1, \dots, j_k = 1}^r S_{j_1, \dots, j_k} \vec{u}_{j_1} \otimes \dots \otimes \vec{u}_{j_k},
    \end{equation} 
where $r$ is chosen to be minimal.
\end{definition}
\review{The rank $r$ and the singular values tensor $S$ are unique while, as in the case of SVD for matrices, the vectors $(\vec{u}_1, \dots, \vec{u}_r)$ are only unique up to signs, and in the case of identical singular values up to rotations within the corresponding singular value subspace. Furthermore, the core tensor $S$ satisfies all-orthogonality \citep{de2000multilinear}.
}

\subsection{Setting and assumptions}
Before stating our main results, we introduce the setting and main assumptions required. The first concerns the class of target functions we consider.
\begin{assumption}[Data model]
\label{ass:data}
The training inputs $\vec{z}^{\nu}\in\mathbb{R}^{d}$ are independently drawn from the Gaussian distribution $\cN(0, I_d)$. Further, we assume that the target function $y^{\nu}=f^{\star}(\vec{z})$ depends only on a few relevant directions. In other words, there exists a fixed number of orthonormal vectors $(\vec{w}_1, \dots, \vec{w}_r)$ and a fixed function $g^\star: \dR^r \to \dR$ such that  
    \begin{equation}
        y = f^{\star}(\vec{z}) \coloneqq g^\star(\langle\vec{w_1}^\star,\vec{z}\rangle,\dots, \langle\vec{w_r}^\star,\vec{z}\rangle).
    \end{equation}
\end{assumption}
As we will show later, learning with GD can be seen as a hierarchical process, where depending on the batch size different directions of the target are progressively learned. Next, we define the leap index, a fundamental quantity which precisely parametrizes what are the first directions to be learned.
\begin{definition}[Leap index] \label{def:leap}
Since the input data is Gaussian $\vec{z}\sim\mathcal{N}(0,I_{d})$, the target function admits a decomposition in terms of the Hermite decomposition (see Definition \ref{def:hermite}). We define the \emph{leap index} of the target function $f^{\star}$ as the first integer $\ell > 0$ such that \review{ $C_\ell^{\star} = C_\ell(f^\star) \neq 0$}:
\begin{align}
   \ell =  \min \{j\in\mathbb{N}: \langle f^{\star},\cH_j\rangle_{\gamma}\neq 0 \}
\end{align}
\end{definition}
\review{For single-index models, the above definition reduces to Information Exponent defined in \cite{BenArous2021}. A generalization of the above exponent to sequential learning of directions is defined in \cite{abbe2023sgd} under ``Leap complexity" and ``Isotropic Leap complexity". 
} Given a batch of training data $(\vec{z}^\nu, y^\nu)_{\nu=1}^{n}\in\mathbb{R}^{d+1}$ drawn from the model \eqref{ass:data} defined above, we now define how the network weights $(W,a)$ are initialized and updated. 
\begin{assumption}[Training procedure]\label{ass:training}
Consider the following random initialization for the weights:
\begin{equation}\label{eq:sample_archit}
    \sqrt{p} \cdot a_i^0 \overset{i.i.d}{\sim} \Unif([-1, 1]) \!\!\quand\!\! \vec{w}_i^0  \overset{i.i.d}{\sim} \Unif(\cS^{d-1}).
\end{equation}
The distribution of the $a_i$ can be replaced by any other continuous distribution with positive variance. Note that for $p=\cO(1)$, we have $\hat f(\vec{z}; W^{0}, \vec{a}^{0})\neq 0$. To further simplify the analysis, we assume $p$ to be even and further impose the following symmetrization at initialization: 
\begin{equation}
    a_i^0 = -a_{p-i+1}^0 \quand \vec{w}_i^0 = \vec{w}_{p-i+1}^0 \quad \text{for all } i \in [p/2],
\end{equation}
which ensures $\hat f(\vec{z}; W^{0}, \vec{a}^{0})=0$. Note that this simplification is common in the related literature, e.g. \cite{chizat_2019_lazy, damian2022neural}, and is mainly necessary when $p$ is small.
Given the initial conditions, the weights are trained with the following two-step full-batch gradient descent:
\begin{enumerate}[noitemsep,wide=0pt]
    \item \emph{First layer training}: for every gradient step $t\leq T$, a fresh batch of training data $\{(\vec{z}^\nu, y^\nu)\}_{\nu=1}^n$ is drawn from the model in Assumption \ref{ass:data}, and the first layer weights are updated according to:
    \begin{equation}\label{eq:first_layer_gd_again}
        \vec{w}_i^{t+1} = \vec{w}_i^t -  \frac{\eta}{2n} \sum_{\nu = 1}^n \nabla_{\vec{w}_i} \left(y^\nu - \hat f(\vec{z}^\nu; W^t, \vec{a}^{0})\right)^{2},
    \end{equation}
    Hence, the total sample complexity for this step is $Tn$.

    \item \emph{Second layer training}: once the first layer is trained for $T$ steps, the second layer weights $\vec a$ are trained to optimality on \review{an independent batch} of data by performing ridge regression with the features learned in the first step: \begin{equation}\label{eq:second_layer_training}\hat{\vec{a}} = \argmin_{\vec{a}\in \dR^p} \frac{1}{2n} \sum_{\nu = 1}^n \left(y^\nu - \hat f(\vec{z}^\nu; W^{T}, \vec{a})\right)^{2} + \lambda \norm{\vec{a}}^2.
    \end{equation}
\end{enumerate}
Such a separation of the training between the first and second layer is a common setup for the theoretical study of training \citep{damian2022neural, abbe2023sgd, berthier2023learning}, and allows for a more tractable study of convergence. 
\review{Note that we assume resampling of data for the training of the second layer for convenience, while we expect the general picture to hold without resampling albeit with a slightly worse sample complexity. See for instance Theorems $1$ and $3$ in \cite{damian2022neural}.}

\end{assumption}
We are now ready to state our main technical results.

\subsection{Single gradient step}
\label{sec:1step}
Our starting point is a {\it single} giant gradient step, and the phenomenology described in Fig.~\ref{fig:fig_1}. The main goal is to determine under which conditions the relevant directions $\Pi^{\star}\vec{z}$ of the target function $f^{\star}$ can be learned with the training procedure introduced in Assumption \ref{ass:training}. Hence, a crucial object in our analysis is given by the projection of the network weights in the space spanned by the target relevant directions:  
\begin{equation}
    \vec{\pi}_i^t = \Pi^{\star} \vec{w}_i^t;
\end{equation}
where $\Pi^\star$ is the orthogonal projection on $V^\star$.


Our first result is of a negative nature, showing the impossibility of learning in the data-scarce regime:
\begin{theorem}\label{thm:one_step_lower_bound}
    Let $\ell$ be the leap index of $f^{\star}$ \eqref{def:leap}, and assume that $n = \cO(d^{\ell - \delta})$ for some $\delta > 0$. Then, with probability at least $1 - cpe^{-c(\delta)\log(d)^2}$, there exists a universal constant $c$ such that for any $i\in [p]$,
    \begin{equation}
        \frac{\norm{\vec{\pi}_i^1}^2}{\norm{\vec{w}_i^1}^2} \leq c\,\frac{\polylog(d)}{d^{(1 \wedge \delta)/2}}. 
    \end{equation}
\end{theorem}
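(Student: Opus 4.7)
The plan is to directly analyze the single-step update. Using the symmetrization in Assumption~\ref{ass:training} we have $\hat{f}(\cdot; W^0, \vec{a}^0) \equiv 0$, so the gradient step reduces to
\begin{equation*}
\vec{w}_i^1 = \vec{w}_i^0 + \frac{\eta a_i^0}{n\sqrt{p}} \sum_{\nu=1}^n f^\star(\vec{z}^\nu)\,\sigma'(\langle\vec{w}_i^0, \vec{z}^\nu\rangle)\,\vec{z}^\nu.
\end{equation*}
Projecting onto $V^\star$ isolates $\vec{\pi}_i^1 = \vec{\pi}_i^0 + \tfrac{\eta a_i^0}{\sqrt{p}}\vec{G}_i$, where $\vec{G}_i$ is the empirical mean of $f^\star(\vec{z}^\nu)\sigma'(\langle\vec{w}_i^0,\vec{z}^\nu\rangle)\Pi^\star\vec{z}^\nu$. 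I would split $\vec{z}^\nu = \Pi^\star\vec{z}^\nu + (I_d-\Pi^\star)\vec{z}^\nu$ into its independent parallel and orthogonal parts, and use that $y^\nu = f^\star(\Pi^\star\vec{z}^\nu)$ to integrate out the orthogonal component first: this replaces $\sigma'$ by a smoothed activation $\Phi(u) := \mathbb{E}_{Z\sim \cN(0,\,1-\norm{\vec{\pi}_i^0}^2)}[\sigma'(u+Z)]$ evaluated at $u = \langle\vec{\pi}_i^0, \Pi^\star\vec{z}^\nu\rangle$, reducing the analysis of $\mathbb{E}\vec{G}_i$ to a fixed $r$-dimensional Gaussian integral.

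The main leap-index step comes next. Expanding $\Phi$ in a Taylor series around $0$, the order-$k$ coefficient produces a term proportional to $\mathbb{E}[f^\star(\Pi^\star\vec{z})\,\Pi^\star\vec{z}\,\langle\vec{\pi}_i^0, \Pi^\star\vec{z}\rangle^k]$. The polynomial $\Pi^\star\vec{z}\cdot\langle\vec{\pi}_i^0, \Pi^\star\vec{z}\rangle^k$ lies in the span of Hermite tensors of degree at most $k+1$, so by orthogonality against $f^\star$ (whose Hermite expansion starts at degree $\ell$), only $k \geq \ell-1$ contributes. Combined with the standard concentration $\norm{\vec{\pi}_i^0} \lesssim \sqrt{r\log(d)/d}$ for $\vec{w}_i^0 \sim \Unif(\cS^{d-1})$, this yields $\norm{\mathbb{E}\vec{G}_i} \leq \polylog(d) \cdot d^{-(\ell-1)/2}$. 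The deviation of $\vec{G}_i$ from its expectation is then controlled by a Bernstein- or Hanson--Wright-type inequality componentwise in $V^\star$ (of fixed dimension $r$), yielding fluctuations of order $\polylog(d)/\sqrt{n}$ with probability at least $1 - e^{-c\log^2 d}$.

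For the denominator, I would lower-bound $\norm{\vec{w}_i^1}$ by one of two mechanisms depending on the size of the update. If the step is a perturbation, $\norm{\vec{w}_i^1} \geq \norm{\vec{w}_i^0} - o(1) = 1 - o(1)$. Otherwise, a second-moment / Hanson--Wright argument shows that the orthogonal part of the update has norm at least $\tfrac{\eta |a_i^0|}{\sqrt{p}}\cdot\sqrt{d/n}$ up to log factors, since the $(d-r)$-dimensional matrix $\mathbb{E}[\sigma'(\langle\vec{w}_i^0,\vec{z}\rangle)^2 (\Pi^{\star\perp}\vec{z})(\Pi^{\star\perp}\vec{z})^\top]$ is close to a positive multiple of the identity on $V^{\star\perp}$. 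Forming the ratio and substituting $n \leq d^{\ell-\delta}$ cancels the $\eta a_i^0/\sqrt{p}$ prefactors and yields $\norm{\vec{\pi}_i^1}^2/\norm{\vec{w}_i^1}^2 \lesssim \polylog(d)\cdot d^{-(1\wedge\delta)/2}$; a union bound over $i\in[p]$ produces the stated failure probability.

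The main obstacle is the leap-index Hermite computation: carefully identifying which orders in the expansion of the smoothed activation $\Phi$ couple non-trivially with the Hermite components of $f^\star$ after multiplying by the linear factor $\Pi^\star\vec{z}$, and tracking the $\norm{\vec{\pi}_i^0}^{\ell-1}$ scaling without losing dimension factors through the tensor inner products of the teacher coefficients $C_k^\star$. A secondary difficulty is obtaining the matching lower bound on $\norm{\vec{w}_i^1}$ in the $\eta = \Theta(p)$ regime, where a naive triangle-inequality bound via the initialization is useless and one must instead control the empirical orthogonal covariance directly.
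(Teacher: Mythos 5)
Your proposal is correct, and it follows essentially the same three-step scaffold as the paper: (a) control the projection $\vec{\pi}_i^1$ via a leap-index Hermite argument showing the gradient's component in $V^\star$ is $O(\polylog(d)\, d^{-(\ell-1)/2} + \polylog(d)/\sqrt{n})$ times $\eta|a_i|/\sqrt{p}$; (b) lower-bound $\norm{\vec{w}_i^1}$ by $\Omega(1 + \tfrac{\eta|a_i|}{\sqrt{p}}\sqrt{d/n})$ using the orthogonal fluctuations of the update; (c) form the ratio and plug in $n \le d^{\ell-\delta}$. Your route to step (a) differs in technique from the paper's: you first integrate out the $(I-\Pi^\star)\vec{z}$ component to obtain a smoothed activation $\Phi$, then Taylor-expand $\Phi$ and invoke Hermite orthogonality against $f^\star$ (degree $\ge\ell$) to kill the low-order terms; the paper instead applies Stein's lemma directly to $\dE[\vec{z}\,\sigma'(\langle\vec{w}_i^0,\vec{z}\rangle)f^\star(\vec{z})]$ and reads off the expansion term-by-term in the Hermite tensors $C_k^\star$, then truncates at $k=\ell-1$ via $|\langle C_k^\star,(\vec{w}_i^0)^{\otimes k}\rangle| \lesssim \norm{\vec{\pi}_i^0}^k$ (Lemma~\ref{lem:app:grad_exp_bounds}). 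The two give the same $\norm{\vec{\pi}_i^0}^{\ell-1} \approx d^{-(\ell-1)/2}$ scaling; the paper's version is slightly more explicit in tracking the tensor contractions against $C_\ell^\star$, which it then reuses in the positive part (Theorem~\ref{thm:one_step_learning}). For step (b), your two-case split (perturbative vs.\ gradient-dominated) is a valid but slightly clunkier version of the paper's cleaner argument, which simply expands $\norm{\vec{w}_i^1}^2 = 1 + 2\eta\langle\vec{w}_i^0,\vec{g}_i\rangle + \eta^2\norm{\vec{g}_i}^2$, shows the cross term is negligible (Lemma~\ref{lem:app:student_cross_exp}), and lower-bounds $\eta^2\norm{\vec{g}_i}^2$ by the diagonal $\Theta(\eta^2 d/(np^2))$ term of Lemma~\ref{lem:app:norm_exp}. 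Finally, your suggestion of Hanson--Wright for the concentration of $\norm{\vec{g}_i}^2$ is a legitimate alternative to the paper's Peña decoupling argument in Section~\ref{sec:concentration}; both handle the off-diagonal quadratic sum.
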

In other words, for \emph{every} neuron $i$, only a vanishing fraction of the weight $\vec{w}_i^1$ lies in the target subspace $V^{\star}$. In particular, if $\delta > 1$, this large gradient step does not improve over the initial random feature weights.

On the other hand, when $n = \Omega(d^\ell)$, we are able to characterize exactly what is being learned in one gradient step.

\begin{theorem}\label{thm:one_step_learning}
    Assume that the $\ell$-th Hermite coefficient $\mu_\ell$ of $\sigma$ is nonzero, and set the learning rate
    \begin{equation}
        \eta = p d^{\frac{\ell - 1}{2}}.
    \end{equation}
    Then, with probability at least $1 - ce^{-c\log(d)^2}$, there exists a random variable $X$ independent of $d$ with positive expectation such that
    \begin{equation}
        \frac{\norm{\vec{\pi}_i^1}^2}{\norm{\vec{w}_i^1}^2} \geq  X_i,
    \end{equation}
    where $X_1, \dots, X_p$ are i.i.d copies of $X$. Further, let $\vec{u}_1^{\star}, \dots, \vec{u}_{r_\ell}^{\star}$ be the higher-order singular vectors of $C_\ell^{\star}$, and define
    \begin{equation}
        V_\ell^{\star} = \vect(\vec{u}_1^{\star}, \dots, \vec{u}_{r_\ell}^{\star}).
    \end{equation}
    Then, the projections $\vec{\pi}_i^1$ asymptotically belong to $V_\ell^{\star}$, in the sense that there exists a constant $c$ such that
    \begin{equation}
        \norm{(I - \Pi_{V_\ell^{\star}}) \vec{\pi}_i^1} \leq c\, \frac{\polylog(d)}{\sqrt{d}},
    \end{equation}
    \review{and  for $p \geq r_\ell$, they  span the space $V_\ell^\star$. Concretely, for every $\delta>0$ there exists a constant $C_\delta$ such that for large enough $d$ with probability $1-\delta$:
    \begin{equation}
        \inf_{\vec v \in  V_\ell^\star}\norm{W^1 \vec v}_2 \geq C_\delta.
    \end{equation}}
\end{theorem}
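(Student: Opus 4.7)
The plan is to analyze the one-step update in three phases: a closed-form for the gradient, a mean computation via Hermite calculus, and a vector concentration argument.

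First, by the symmetrization in Assumption~\ref{ass:training}, $\hat f(\vec z; W^0, \vec a^0) = 0$, so the update collapses to
\[
\vec w_i^1 = \vec w_i^0 + \alpha_i\, \widehat G_i, \qquad \widehat G_i := \frac{1}{n}\sum_{\nu=1}^n y^\nu\, \sigma'(\langle \vec w_i^0, \vec z^\nu\rangle)\, \vec z^\nu,
\]
with $\alpha_i := \eta\, a_i^0/\sqrt p = \Theta(d^{(\ell-1)/2})$ given $\eta = p\,d^{(\ell-1)/2}$ and $|a_i^0| = \Theta(1/\sqrt p)$. The bulk of the proof reduces to showing that $\Pi^\star \widehat G_i$ has magnitude $\Theta(d^{-(\ell-1)/2})$ and lies asymptotically in $V_\ell^\star$, so that after rescaling by $\alpha_i$ it becomes an $\Theta(1)$ signal in $V_\ell^\star$, while the $V^{\star\perp}$-component remains $o(1)$.

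Second, I would compute the expectation via Gaussian integration by parts: $\mathbb{E}[\widehat G_i] = \mathbb{E}[\nabla f^\star(\vec z)\, \sigma'(\langle \vec w_i^0, \vec z\rangle)] + \mathbb{E}[f^\star(\vec z)\, \sigma''(\langle \vec w_i^0, \vec z\rangle)]\, \vec w_i^0$. Expanding $f^\star$ in the Hermite-tensor basis (whose leading nonzero coefficient is $C_\ell^\star$, by the leap-index definition) and $\sigma$ in one-dimensional Hermite polynomials, and using the orthogonality $\mathbb{E}[\mathcal H_k(\vec z)\, \He_m(\langle \vec w, \vec z\rangle)] = \delta_{km}\, k!\, \vec w^{\otimes k}$ for unit $\vec w$, the leading $V^\star$-contribution becomes
\[
\Pi^\star \mathbb{E}[\widehat G_i] = \ell \cdot \ell! \cdot \mu_\ell\; C_\ell^\star\bigl[(\Pi^\star \vec w_i^0)^{\otimes(\ell-1)},\, \cdot\bigr] + O(d^{-\ell/2}),
\]
where $C_\ell^\star[\cdot,\cdot]$ denotes $(\ell{-}1)$-fold contraction leaving one free index. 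Since $\|\Pi^\star \vec w_i^0\| = \Theta(d^{-1/2})$, this term has norm $\Theta(d^{-(\ell-1)/2})$ and, by the HOSVD definition of $V_\ell^\star$, lies in $V_\ell^\star$; the $\vec w_i^0$-aligned term has magnitude $O(d^{-\ell/2})$ and projects negligibly onto $V^\star$.

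Third, a Bernstein-type tail bound, combined with a truncation handling the polynomial growth of the Hermite polynomials, shows that the variance of $\Pi^\star[y\, \sigma'(\langle \vec w_i^0, \vec z\rangle)\, \vec z]$ is $\Theta(1)$, so the empirical $V^\star$-variance decays like $1/n$; this yields fluctuations of size $O(d^{-\ell/2})$ for $\Pi^\star \widehat G_i$, which after multiplication by $\alpha_i$ becomes $O(d^{-1/2})$ up to $\polylog(d)$ factors. A union bound over $i \in [p]$ yields the second claim $\norm{(I - \Pi_{V_\ell^\star})\vec \pi_i^1} \le c\, \polylog(d)/\sqrt d$. For the first claim, $\norm{\vec w_i^1}^2 = \Theta(1)$ since the cross-term $2\alpha_i\langle \vec w_i^0, \widehat G_i\rangle$ and the quadratic term $\alpha_i^2 \|\widehat G_i\|^2$ are $O(1)$, while $\norm{\vec \pi_i^1}^2 \geq \alpha_i^2 \|\Pi^\star \widehat G_i\|^2(1-o(1))$ is $\Theta(1)$ times a positive function $X_i = X(\vec w_i^0, a_i^0)$ of the initialization; by i.i.d.\ initialization the $X_i$ are i.i.d.\ copies of $X$, whose law does not depend on $d$ in the limit (since $\sqrt d\,\Pi^\star \vec w_i^0$ converges to an isotropic Gaussian on $V^\star$) and has positive expectation by nondegeneracy of the contraction of $C_\ell^\star \neq 0$ and $\mu_\ell \neq 0$. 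Finally, the spanning claim follows because the rescaled leading directions $C_\ell^\star[(\sqrt d\,\Pi^\star \vec w_i^0)^{\otimes(\ell-1)}, \cdot]$ are i.i.d.\ samples of an absolutely continuous random vector whose support generates $V_\ell^\star$; for $p \ge r_\ell$ a standard genericity argument then gives that their span is $V_\ell^\star$ almost surely.

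The principal obstacle will be the concentration step: uniform (in $i$) control of $\Pi^\star \widehat G_i$ at scale $\polylog(d)/(\sqrt d\cdot d^{(\ell-1)/2})$ demands sharp moment bounds for products of high-degree Hermite polynomials in high dimension, and it is precisely here that the sample complexity $n = \Omega(d^\ell)$ is essential. Once this is in hand, the HOSVD structure of $C_\ell^\star$ bridges from ``the signal lies in $V^\star$'' to its sharp identification with the subspace $V_\ell^\star$.
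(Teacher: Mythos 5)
Your proposal follows essentially the same route as the paper's proof: Stein's lemma (Gaussian integration by parts) to write $\mathbb E[\widehat G_i]$ as a sum of Hermite contractions, truncation at the leap index so that the leading $V^\star$-term is the contraction $C_\ell^\star\bigl[(\Pi^\star\vec w_i^0)^{\otimes(\ell-1)},\cdot\bigr]$ whose norm is $\Theta(d^{-(\ell-1)/2})$, concentration of the projected gradient to control the off-$V_\ell^\star$ component, and then an absolute-continuity/genericity argument from the HOSVD for the spanning claim. All of those appear in the paper (Lemmas on the gradient expectation and its truncation, Proposition on update concentration, and the final argument using the higher-order SVD).

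Where you slightly misplace the difficulty: you flag ``concentration of $\Pi^\star\widehat G_i$'' as the principal obstacle. That part is actually the easy one --- $\Pi^\star\widehat G_i$ lives in a fixed, finite-dimensional subspace, so a componentwise Bernstein bound (the paper's linear-functional concentration lemma) gives the $\polylog(d)/\sqrt n$ rate directly. The genuinely delicate step, which you dispose of with ``the quadratic term $\alpha_i^2\|\widehat G_i\|^2$ is $O(1)$,'' is the concentration of the \emph{full} $\|\widehat G_i\|^2$ (equivalently $\|\vec w_i^1\|^2$). This is a U-statistic over $n$ samples in dimension $d$, and naive Bernstein does not close it: after multiplying by $\eta^2=p^2 d^{\ell-1}$ the deviations must be shown to be $o(1)$ even though $\mathbb E\|\widehat G_i\|^2$ contains a $\Theta(d/n)$ diagonal contribution from the isotropic component. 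The paper handles this by (i) splitting $\|\widehat G_i\|^2$ into a diagonal sum and a cross sum, (ii) decoupling the cross sum via the Peña--Montgomery-Smith theorem, (iii) further splitting the decoupled sum into components along $V^\star+\mathrm{span}(\vec w_i^0)$ and its orthogonal complement, and (iv) bounding the various pieces with Orlicz-norm ($\psi_{1/2}$) concentration rather than Bernstein, since the relevant variables are products of sub-Gaussians. Your proposal as written would not produce the required $\polylog(d)/\sqrt d$ error without this extra machinery; everything else is sound and matches the paper.
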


    Note that in the case $\ell = 1$ the learned subspace $V^\star_{\ell}$ is one dimensional and is identified by the first Hermite coefficient of the target $C_1(f^\star)$: this corresponds to the ``linear subspace learning'' regime exemplified in Fig.~\ref{fig:fig_1}. Some aspects of the results above were already present in previous works, with key differences: \cite{ba2022high} shows the existence of a rank-one property of the gradient at initialization for $n = \Theta(d)$, and \cite{damian2022neural} implies the positive part of our result for $n = \Theta(d^2)$, provided that $V_2^{\star} = V^{\star}$ (which corresponds to their Assumption 2). Our theorem allows us to obtain the matching lower bounds, demonstrating their tightness, and provides the generic picture for {\it any higher} powers of $d$. In particular, our results prove a clear separation between the class of functions learned within the $\Theta(d)$ batch-size setting of \cite{ba2022high} and the $\Theta(d^2)$ batch-size setting of \cite{damian2022neural} and establish a general hierarchy of functions requiring increasing batch-size to be learned with a single gradient step. We refer to Appendix \ref{sec:appendix:gd_proofs} for the proofs of the theorems and a more detailed technical discussion.
    \review{We note that the weak recovery of the subspace $V^\star_{\ell}$ in Theorem \ref{thm:one_step_learning} alone does not suffice towards achieving vanishing generalization error since $V^\star_\ell$ might be a strict subspace of $V^\star$.
    Even in the case when $V_{\ell}^\star=V^\star$, precise generalization bounds depend on the variability of the weights along $V^\star$ and approximation capacity of $\sigma$. 
    However, we believe that for $V_{\ell}^\star=V^\star$ weak recovery guarantees can be translated to perfect recovery and vanishing generalization errors by using $\alpha d^{\ell})$ samples with $\alpha \rightarrow \infty$ and a suitable choice of $\sigma$. For instance, see \cite{damian2022neural} for such an analysis in the case $\ell=2$, at the cost of additional logarithmic factors. We discussion the relationship between weak recovery and generalization errors further in Section \ref{res:generalisation}. }
    


\begin{figure}[t]
    \centering\includegraphics[width = \textwidth]{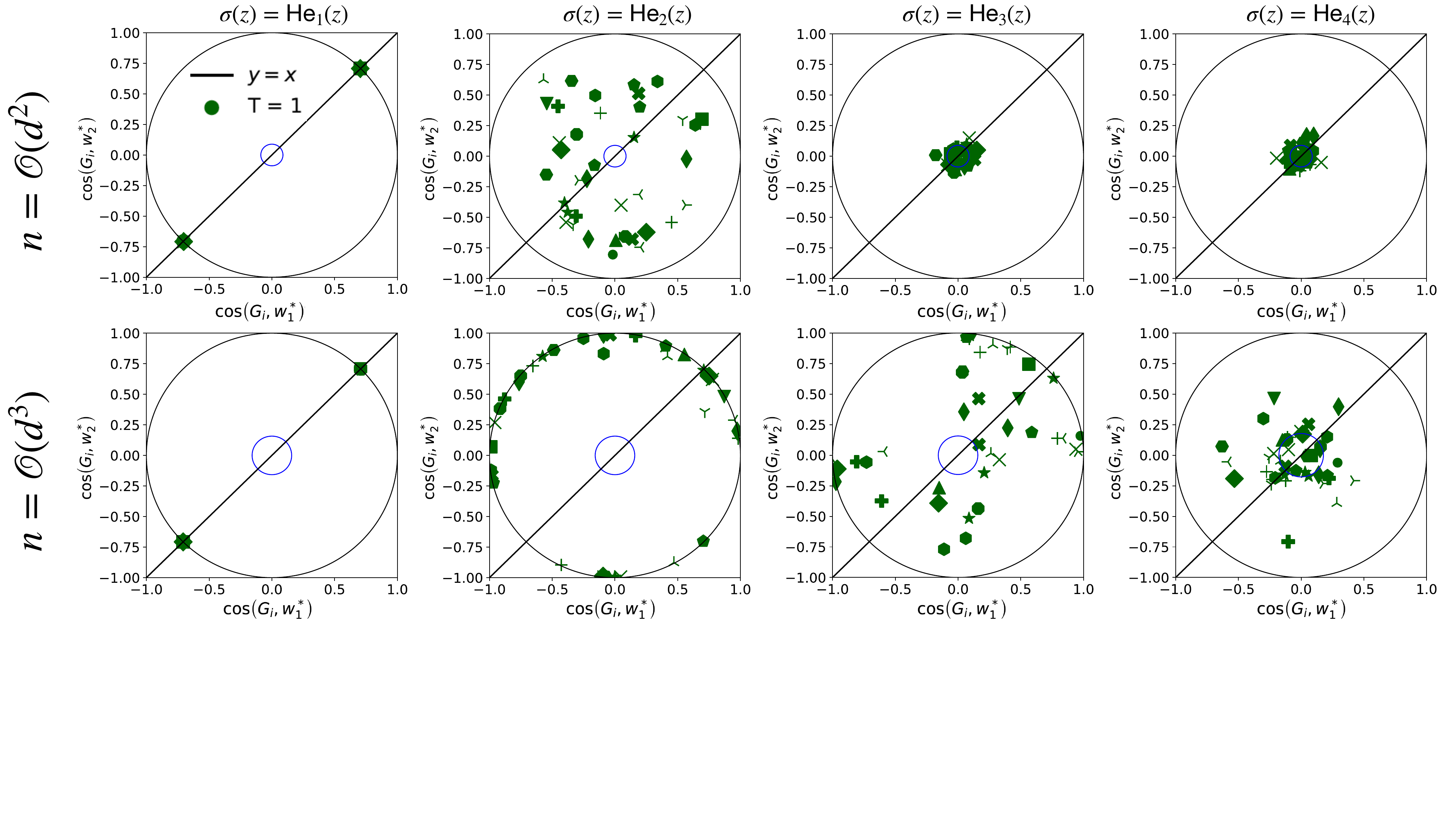}
    \vspace{-7.em}
    \caption{{\bf Feature learning after a single step.}  Specialization of hidden units in the $n = {\cal O}(d^{k})$ regime $(k = 2, 3)$. The plots show the cosine similarity of the gradient with respect to the target vectors $(\vec{w}^{\star}_1,\vec{w}^{\star}_2)$ for $p = 40$ different neurons, identified by different markers. The bisectrix of the first quadrant is shown as a continuous black line, the circle of unitary radius in black, and the circle of radius $\sfrac{2}{\sqrt{d}}$ in blue. In the upper panel, $(n,d) = (2^{18},2^9)$, and $(n,d) =(2^{21},2^7)$ in the lower one. 
    \\ We use a $2$-index target $f^{\star}(\vec z ) = \sigma^{\star}(\langle\vec{w_1}^{\star},\vec{z}\rangle) + \sigma^{\star}(\langle\vec{w_2}^{\star},\vec{z}\rangle)$, with matching student: $\sigma(z) = \sigma^{\star}(z)$. \textbf{Left:} $\sigma(z) =  \He_1(z)$. \textbf{Center-Left:} $\sigma(z) = \He_2(z)$. \textbf{Center-Right:} $\sigma(z) =  \He_3(z)$. \textbf{Right:} $\sigma(z) = \He_4(z)$. We observe that if the leap index $\ell = 1$, we only learn a single direction, no matter the data quantity, while for $\ell > 1$ we learn every direction as soon as we reach $n = {\cal O}(d^\ell)$. The small spread observed for $\sigma(z) = \He_4(z)$ and $n = {\cal O}(d^3)$ is due to the small value of $d$ used for the experiments. See details in App.~\ref{sec:appendix:numerics}.} 
    \label{fig:d2_d3_regimes}
    \vspace{-1em}
\end{figure}

\subsection{Learning with many steps}
\label{sec:few-giant-steps}
We now move to the effect of multiple gradient steps, as described in Fig.~\ref{fig:fig_2}. As we shall see, while the effect of multiple steps is limited to a subclass of functions, \review{they can be efficiently learned} with few iterations. For simplicity, we restrict ourselves to the case $n = {\cal  O}(d)$, although we expect the findings to hold in the other regimes. We unveil the intertwined dependence between representation learning efficiency and a ``directional staircase'' condition. Informally: once a direction is learned by the first-layer weights, the next gradient step uses it as a ladder to learn the next ones, upon the assumption that they are linearly connected (as we will make precise next). The resulting hierarchical learning picture extends to the large batch case \review{(analyzed in \cite{abbe2022merged} for Boolean inputs)} \review{and Gaussian data} the observations of \cite{abbe2023sgd} \review{specialized in the single-pass SGD with batch one}, using basis-dependent projections. Unlike \cite{abbe2023sgd}, both our definition \ref{def:subspace_conditioning} and the training algorithm are basis-independent. Lastly, our analysis incorporates the effect of the learned output $\hat f(\vec{z}; W^t, \vec{a})$ on the gradient updates, leading to interaction between neurons. 
\review{
Such an interaction was incorporated in the mean-field analysis of \cite{abbe2022merged} for boolean inputs, but supressed in \cite{abbe2023sgd}.} 


We formalize the hierarchical learning  by introducing the notion of \emph{subspace conditioning}:
\begin{definition}[Subspace conditioning]\label{def:subspace_conditioning}
    Let $V$ be a vector space, and $U \subseteq V$ a subspace. For any function $f: V\to\dR$, and $\vec{x}\in U$, we define the \emph{conditional} function $f_{U, \vec{x}}: U^\bot \to \dR$ as
    \begin{equation}
        f_{U, \vec{x}}(\vec{x}^\bot) = f(\vec{x} + \vec{x}^\bot)
    \end{equation}
\end{definition}
In short, the function $f_{U, \vec{x}}$ corresponds to $f$ ``conditioned'' on the projection of its argument in $U$. Its first Hermite coefficient will be denoted as
\begin{equation}
    \mu_{U, \vec{x}}(f) = \dE_{\vec{x}^\bot \sim \cN(0, I)}\left[\nabla_{\vec{x^\bot}} f_{U, \vec{x}}\right]
\end{equation}
We are now equipped to state our main result describing sequential learning of directions under multiple gradient steps with batch size $n=\Theta(d)$. We denote by $W^{\star} \in \R^{r \times d}$ the matrix with rows $\vec{w_1}^{\star},\cdots, \vec{w_r}^{\star}$. To avoid degeneracy issues, we restrict ourselves to polynomial activations and target functions.
\begin{assumption}\label{ass:activ}
Both the student activation $\sigma: \R \rightarrow \R$ and the teacher function $g^{\star}: \R^r \rightarrow \R$ are fixed polynomials with degrees independent of $d,n$.  
Furthermore, $\operatorname{deg}(\sigma) \geq \operatorname{deg}(g^{\star})$.
\end{assumption}

\begin{theorem}
\label{thm:staircase}
    Assume that $n = {\Theta}(d)$, and $\eta > 0,p \in \N$ are fixed.
    Define a sequence of nested subspaces $U^{\star}_0 \subseteq U^{\star}_1 \subseteq \dots \subseteq U^{\star}_t \subseteq \dots$ as 
\begin{itemize}
    \item $U^{\star}_0 = \{0\}$,
    \item for any $t \geq 0$, $U_{t+1}^{\star} = U_t^{\star} \oplus \vect\left( \{ \mu_{U_t^{\star}, \vec{x}}(f^{\star}): \vec{x} \in U_t^{\star}\} \right)$.
\end{itemize} 
Then, under assumptions \ref{ass:training} and \ref{ass:activ}, after $t$ gradient steps of the form \eqref{eq:first_layer_gd}, $W^t$ satisfies the following with high-probability, for all $i \in [p]$:
\begin{enumerate}
\item \review{Then there exists an almost surely positive random variable $X_{t,\ba}$, independent of $d,n$ such that for $p > \operatorname{dim}(U_t^{\star})$,  
\[ \inf_{\bv \in U_{t}^{\star}:\norm{\bv}=1} \norm{W^t \bv} \geq X_{t,\ba}+ O(\frac{\polylog (d)}{\sqrt{d}}).\]}

\item For any $\bv \in {U}_{t}^{\star\bot} \cap V^{\star}$, $\abs{\langle \bw_i,\bv \rangle}=O(\frac{\polylog (d)}{\sqrt{d}})$.
\end{enumerate}
Informally, the above statements imply that almost surely over $\ba$, the first layer learns exactly $U^{\star}_{t}$.
\end{theorem}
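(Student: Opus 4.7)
I would proceed by induction on $t$. The base case $t = 1$ reduces essentially to Theorem~\ref{thm:one_step_learning} applied with $\ell = 1$, since $U_1^\star = \vect(\mathbb{E}[\nabla f^\star]) = \vect(C_1^\star)$ and the first Hermite coefficient tensor is a vector whose unique ``higher-order'' singular direction is itself. For the inductive step, I would assume that at step $t$ each neuron satisfies $\Pi_{U_t^\star}\bw_i^t$ has norm lower-bounded by some $X_{t,\cdot,\ba} > 0$ while $\Pi_{(U_t^\star)^\perp \cap V^\star} \bw_i^t$ has norm $O(\polylog(d)/\sqrt d)$, and then analyze what a single additional gradient step \eqref{eq:first_layer_gd} does on top of such a configuration. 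The proof decomposes into three pieces per neuron: retention of the components already in $U_t^\star$, growth of components along $U_{t+1}^\star \setminus U_t^\star$, and suppression of everything else in $V^\star$.

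\textbf{Population gradient and staircase emergence.} The essential calculation is to compute, for each unit vector $\bv \in V^\star$, the projection of the population gradient onto $\bv$:
\begin{equation*}
  G_i(\bv) \;=\; \frac{\eta\, a_i^0}{\sqrt p}\,\mathbb{E}_{\bz}\!\left[\bigl(f^\star(\bz) - \hat f(\bz; W^t, \ba^0)\bigr)\,\sigma'(\langle \bw_i^t, \bz\rangle)\,\langle \bz, \bv\rangle\right].
\end{equation*}
By the induction hypothesis the ``target-subspace part'' of $\bw_i^t$ lies (up to $\polylog(d)/\sqrt d$ noise) inside $U_t^\star$, so conditioning $\bz$ on $\bz_U \coloneqq \Pi_{U_t^\star}\bz$ leaves the $\sigma'$ factor measurable with respect to $\bz_U$. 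Applying Stein's lemma in the direction $\bv \perp U_t^\star$ inside the conditional expectation gives
\begin{equation*}
  \mathbb{E}_{\bz}\!\left[f^\star(\bz)\,\langle \bz,\bv\rangle \,\middle|\, \bz_U = \bx\right] \;=\; \bigl\langle \mu_{U_t^\star,\bx}(f^\star),\,\bv\bigr\rangle,
\end{equation*}
which identifies precisely $U_{t+1}^\star$ as the subspace where $G_i(\bv)$ is not identically zero. The symmetric contribution from $\hat f(\bz; W^t, \ba^0)$ can be handled by the same conditioning argument: since every $\bw_j^t$ also lies approximately in $U_t^\star$ inside $V^\star$, $\hat f$ is (to leading order) a function of $\bz_U$ only, so it contributes nothing to the orthogonal Stein calculation. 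The polynomial assumption (Assumption~\ref{ass:activ}) keeps the degree of all relevant quantities bounded, so the $X_{t+1,\bv,\ba}$ that emerges is a finite polynomial functional of the initial weights $\ba$, the staircase directions and $\bv$; its almost-sure positivity follows by checking that its defining integral does not vanish for generic $\ba$.

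\textbf{Concentration at the linear batch scale.} The next task is to show that the empirical gradient concentrates on its expectation when tested against any fixed direction $\bv \in V^\star$, despite the batch size being only $n = \Theta(d)$. This is the regime where full gradient concentration fails, so the ``summary-statistic'' viewpoint (stressed in the introduction after Theorem~\ref{thm:staircase}) is crucial: one does not need the whole gradient vector to concentrate, only its projection onto an $O(1)$-dimensional subspace. Since $\langle \bz,\bv\rangle$ is a scalar Gaussian and $\sigma'(\langle \bw_i^t,\bz\rangle)(f^\star(\bz)-\hat f(\bz;W^t,\ba^0))$ is a bounded-degree polynomial in $\bz$, the summand is a polynomial in Gaussians with bounded moments, so Hanson--Wright / Gaussian hypercontractivity yields fluctuations of order $\polylog(d)/\sqrt n = \polylog(d)/\sqrt d$. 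Combining with the lower bound on $|G_i(\bv)|$ above gives the first claim. For the second claim, when $\bv \in (U_{t+1}^\star)^\perp \cap V^\star$, $G_i(\bv) = 0$ by construction of $U_{t+1}^\star$, and the same concentration bound then gives $|\langle \bw_i^{t+1},\bv\rangle| = O(\polylog(d)/\sqrt d)$.

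\textbf{Main obstacle.} The hardest part is not the one-step Stein-plus-concentration argument, but correctly carrying the inductive hypothesis through the presence of $\hat f(\bz;W^t,\ba^0)$, which couples all $p$ neurons. In particular, one must show that the ``feedback'' from $\hat f$ cannot coincidentally cancel the signal along some direction in $U_{t+1}^\star \setminus U_t^\star$ for an unlucky choice of $\ba^0$. I would handle this by arguing that the signal term and the feedback term are polynomials in $\ba^0$ of different effective ``degree'' with respect to the new direction, so their sum vanishes only on a measure-zero subset of initializations; this is exactly where the continuity of the distribution of $\ba^0$ (Assumption~\ref{ass:training}) is used to guarantee the almost-sure positivity of $X_{t+1,\bv,\ba}$. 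A secondary technical point is propagating the $\polylog(d)/\sqrt d$ error bars in the induction without them blowing up in $t$; since $t$ is fixed and independent of $d$, a crude union bound over neurons and steps is sufficient.
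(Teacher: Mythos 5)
Your high-level skeleton matches the paper's: induction on $t$, Stein's lemma identifying $U_{t+1}^\star$ via the subspace-conditional Hermite coefficient, and concentration of scalar projections (rather than the full gradient) at the $n = \Theta(d)$ scale. However, two steps in your argument as written are not correct, and a third is underdeveloped in a way that matters.

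First, the claim that conditioning on $\bz_U = \Pi_{U_t^\star}\bz$ "leaves the $\sigma'$ factor measurable with respect to $\bz_U$" is false. Each $\bw_i^t$ carries an $\Theta(1)$-norm component in $V^{\star\perp}$, so $\sigma'(\langle\bw_i^t,\bz\rangle)$ depends strongly on $\Pi_{V^{\star\perp}}\bz$, which is not fixed by conditioning on $\bz_U$. What is actually true is that after integrating out the bulk direction, the effective activation becomes a \emph{dressed} function of $\bz_U$; the paper makes this explicit via $f_1(\ba,\kappa) = \mathbb{E}_{\bz^\perp}[\sigma'(\langle\bw_j^t,\bz\rangle)]$, a polynomial in $\kappa = W^\star_t\bz$ and in $\ba$ (through the bulk norm $c_{j,\ba}$ and the learned overlaps). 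Skipping this step is not cosmetic: the almost-sure positivity of $X_{t,\bv,\ba}$ in the paper is derived precisely from the polynomial structure of the coefficients $p_s(\ba)$ of $\kappa^s$ in $f_1(\ba,\kappa)$, paired with the nonzero coefficients in the conditional Hermite expansion $f_2(\kappa)$. Your sketch of "different effective degree of the signal and feedback terms" gestures at the right mechanism but does not identify the polynomial whose non-vanishing must be verified.

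Second, your claim that "$\hat f$ is (to leading order) a function of $\bz_U$ only, so it contributes nothing to the orthogonal Stein calculation" fails for the same reason: $\hat f(\bz;W^t,\ba^0) = p^{-1/2}\sum_j a_j\sigma(\langle\bw_j^t,\bz\rangle)$ depends on the bulk component of $\bz$ through every $\bw_j^t$. The correct argument (which the paper uses) is that by Stein's lemma the vector $\mathbb{E}[\hat f(\bz)\sigma'(\langle\bw_i^t,\bz\rangle)\bz]$ lies in $\vect(\bw_1^t,\dots,\bw_p^t)$, and then the induction hypothesis $\abs{\langle\bw_j^t,\bv\rangle} = O(\polylog(d)/\sqrt d)$ for $\bv \in (U_t^\star)^\perp\cap V^\star$ kills the projection. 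A related gap: you mention "retention of the components already in $U_t^\star$" as one of three pieces, but never actually show that the $\hat f$ feedback cannot un-learn a direction in $U_t^\star$; the paper proves this by tracking that the degree of $a_j$ in $\langle\bg_i^t,\bv\rangle$ strictly exceeds that in $\langle\bw_i^t,\bv\rangle$, so the leading term survives after the update. These gaps are all fillable and the paper fills them, but the proposal as written elides exactly the points where the polynomial-in-$\ba$ structure (and hence Assumption 3 and the continuity of the $a_i^0$ distribution) does real work.
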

It is easy to check that the above Theorem is consistent with Theorems \ref{thm:one_step_lower_bound}  and \ref{thm:one_step_learning} since \review{$f_{\{0\}, \vec{0}} = 0$}, and $\vec v^{\star}$ is exactly the first Hermite of $f^{\star}$.
\review{We note that unlike Theorem \ref{thm:one_step_learning}, the multiple gradient steps in Theorem \ref{thm:staircase} lead to interaction between neurons. Our analysis shows, however, that the effect of the independence across $a_i$ is preserved across time, allowing different neurons to span different directions in $U^{\star}_{t}$.}

\paragraph{Examples ---} As an illustration, we work out the subspaces $U_{t}^{\star}$ for simplified versions of the examples in Figure \ref{fig:multiple_steps}. For simplicity, we will take $\vec{w}_k^{\star} = \vec{e}_k$, the $k$-th vector of the standard basis, so that $\langle \vec{w}_k^{\star}, \vec{z} \rangle = z_k$.
\begin{itemize}[wide=0pt]
    \item $f^{\star}(\vec{z}) = z_1 + z_2 + z_1^2 - z_2^2$: the normalized first Hermite coefficient of $f^{\star}$ is $\review{\vec{v}^{\star} = (\vec{e}_{1} + \vec{e}_{2})/\sqrt{2}}$, so $U_1^{\star} = \vect(\vec{e}_{1} + \vec{e}_{2})$. Then, we can rewrite $f^{\star}$ in the new basis $(\vec{v}^{\star}, \vec{v}^\bot)$, to get
    \[f^{\star}(\vec{z}) = \sqrt{2}\langle \vec{v}^{\star}, \vec{z}  \rangle + 2 \langle \vec{v}^{\star}, \vec{z} \rangle \langle \vec{v}^\bot, \vec{z} \rangle. \]
    Hence, if $\vec{x} = \lambda \vec{v}^{\star}$, we have $\mu_{U_{t}^{\star}, \vec{x}}(f^{\star}) = 2\lambda \vec{v}^\bot$, and hence $U_2^{\star} = V^{\star}$.

    \item $f^{\star}(\vec{z}) = z_1 + z_2 + z_1^2 + z_2^2$: just as the above example, $U_1^{\star} = \vect(\vec{e}_{1} + \vec{e}_{2})$, and hence we perform the same change of basis. However, this time,
    \[f^{\star}(\vec{z}) = \sqrt{2}\langle \vec{v}^{\star}, \vec{z}  \rangle + \langle \vec{v}^{\star}, \vec{z} \rangle^2 +  \langle \vec{v}^\bot, \vec{z} \rangle^2. \]
    This implies that for any $\vec{x}\in U_1^{\star}$, $\mu_{U_{t}^{\star}, \vec{x}}(f^{\star}) = \vec{0}$, and the direction of $\vec{v}^\bot$ is never learned.
\end{itemize}

We provide a proof of Theorem \ref{thm:staircase} in Appendix \ref{sec:appendix:gd_proofs}. The notion of subspace conditioning (Definition \ref{def:subspace_conditioning}) arises through an inductive decomposition of the projections of the gradient along the target subspace. Furthemore, our theory leads to a precise prediction of the orientation of the gradient in the target subspace after a finite number of steps. This is illustrated in Figure \ref{fig:multiple_steps}. Note that the non-linearity of the activation function allows different neurons to simultaneously specialize along different directions in contrast to the rank-one increase at each saddle in deep linear networks \cite{jacot2021saddle} under vanishing initialization. We refer to Appendix \ref{sec:appendix:numerics} for additional discussion.

To conclude this section, we provide an overview of our present understanding of the effect of batch-size (sample complexity), number of gradient-steps (iteration complexity), and the number of neurons (overparameterization) on learning different multi-index target functions in table \ref{table}. For simplicity, when discussing the leap index $\ell$, we restrict to functions where all the directions in the target subspace have the same leap i.e when $V^*_\ell= V^*$. We define ``staircase functions" as target functions such that the sequence of subspaces defined through subspace conditioning in Theorem \ref{thm:staircase} eventually span $V^*$.

\begin{table}[ht]
\renewcommand{\arraystretch}{1.0}
\begin{center}
 \begin{adjustwidth}{-0cm}{}
\begin{tabular}{||c|| c | c | c ||}
 \hline
 Complexity  & SGD  & One-step GD & Multi-\review{step} GD  \\ 
 of $f^*$  & with $n=1$&with $n=\Theta(d^{\ell})$ &with $n=\Theta(d)$\\ 
 \hline Single-index, $\ell\!=\!1$& \textcolor{blue}{$\tau\!=\!n_{\rm{T}}\!=\!d$} & \textcolor{ForestGreen} {$\tau\!=\!1, n_{\rm{T}}\!=\! n \!=\!\Theta(d)$} & $\tau\!=\!1,n_{\rm{T}}\!=\!\Theta(d)$ \textcolor{magenta}{$(\star)$} \\ 
 \hline Single-index, $\ell\!=\!2$ & \textcolor{blue}{$\tau\!=\!n_{\rm{T}}\!=\!d\log d$} & \textcolor{orange}{$\tau\!=\!1,n_{\rm{T}}\!=\!n\!=\!\Theta(d^2)$\!\!\!} & \textcolor{red}{\!\!\!$\tau\!=\!\Theta(\log d),n_{\rm{T}}\!=\!\Theta(d\log d)$}\!\!\! \\
 \hline
 Single-index, $\ell>2$ & \textcolor{blue}{$\tau=n_{\rm{T}}=d^{\ell-1}$} & \!\!\!$\tau=1,n_{\rm{T}}\!=\!n\!=\!\Theta(d^\ell)$\!\!\! & \textcolor{red}{$\tau\!=\!\Theta(d^{\ell-2}),n_{\rm{T}}\!=\!\Theta(d^{\ell-1})$} \\
 \hline 
 Staircase & \textcolor{blue}{$\tau=n_{\rm{T}}=d$} & $\tau\!=\!1,n_{\rm{T}}\!=\!n\!=\!\Omega(d^2)$ & $\tau\!=\!\Theta(1),n_{\rm{T}}\!=\!\Theta(d)$ \textcolor{magenta}{$(\star)$} \\
 \hline 
\end{tabular}
\end{adjustwidth}
\caption{\label{table} Number of iterations $\tau$ and sample complexity $n_{\rm{T}}$ needed to learn the features/directions of the function $f^*$ with SGD one-sample at a time, using a single step-gradient descent, or using one-pass GD with large $n=O(d)$ batches. Results in \textcolor{blue}{(blue)} are from \cite{BenArous2021,abbe2023sgd}. 
The upper bounds on the sample complexity are known to hold from \textcolor{ForestGreen}{(green)} \cite{ba2022high} and \textcolor{orange}{(orange)} \cite{damian2022neural}, while the corresponding lower bound, as well as the results in black, are proven in the present paper. \textcolor{red}{(red)} are educated guesses, which are out of the scope of the paper since we only focus on a finite number of iterations $\tau$. For staircase functions, one-step GD may require more than $\Theta(d^2)$ samples depending on the maximum leap across directions in the target subspace. 
\review{The marker \textcolor{magenta}{$(\star)$} refers to results in \cite{abbe2022merged} for the Boolean case and extended in the present work for the Gaussian setting.}
}
\end{center}
\end{table}

\subsection{From feature learning to generalization bounds} 
\label{res:generalisation}

\begin{figure}[t]
    \centering    
\includegraphics[width = 1\textwidth]{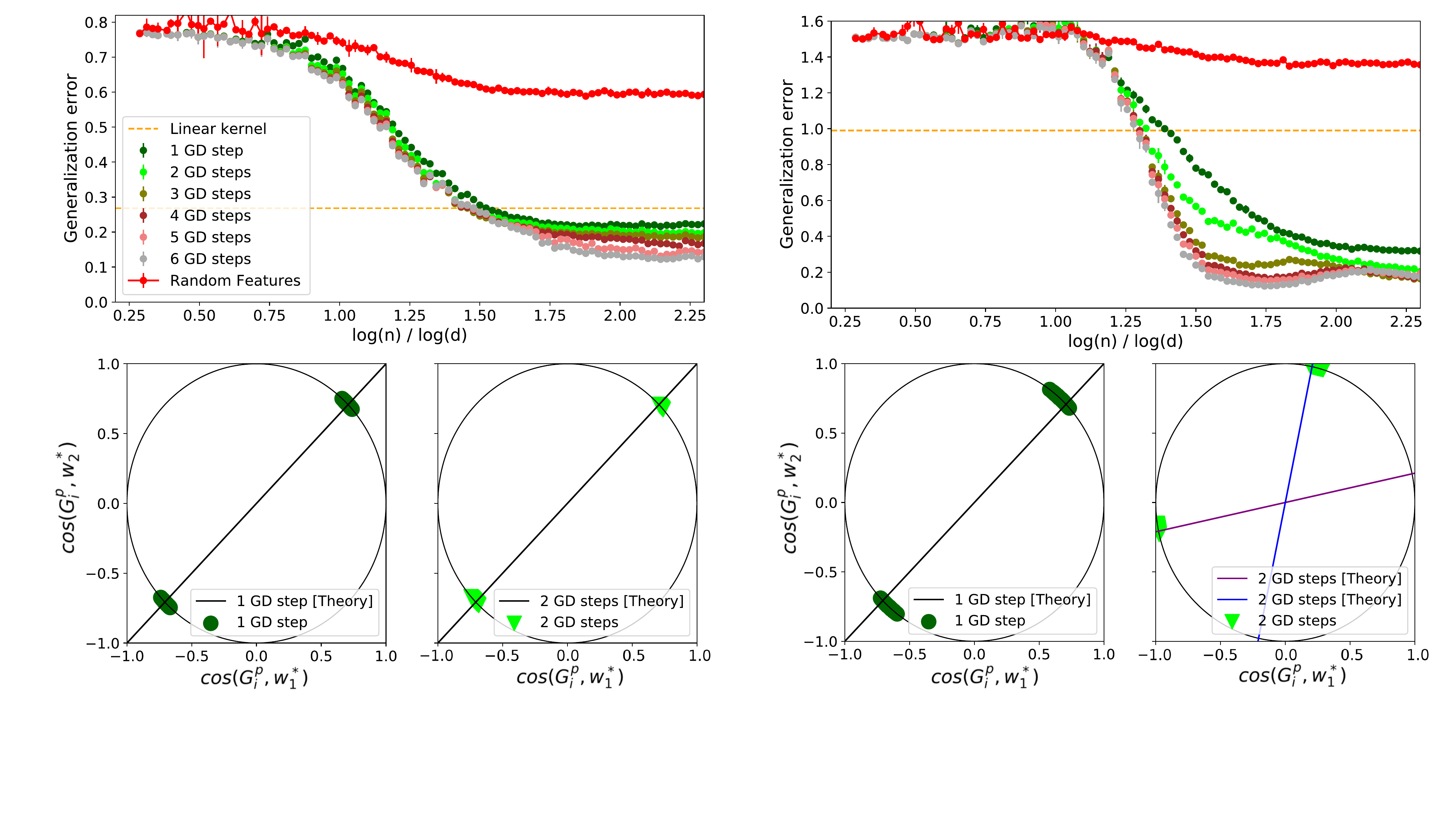}
   \vspace{-5em}
    \caption{{\bf Feature learning with multiple gradient steps.} 
    \textbf{Top:} Generalization error as a function of $n$ ($d=512,p=256$) after iterating the training procedure for six steps. \textbf{Bottom:} Cosine similarity of the projected gradient matrix $G^p$ inside the target subspace for all the $p$ neurons at a fixed ratio $\sfrac{n}{d} = 4$, plotted at different stages of the training. The blue and purple lines are the theoretical predictions for the orientation of the gradient in the second step. \\
        We fix a \rm{relu} student and consider two different 2-index target functions $f^{\star}(\vec z) = \sigma^{\star}_1(\langle\vec{w_1}^{\star},\vec{z}\rangle) + \sigma^{\star}_2(\langle\vec{w_2}^{\star},\vec{z}\rangle)$. {\bf Left:} $\sigma^{\star}_1(z) = \sigma^{\star}_2(z) = \He_1(z) + \sfrac{\He_2(z)}{2} + \sfrac{\He_4(z)}{4!}$ \quad{\bf Right:} $\sigma^{\star}_1(z) = z-z^2$ and $\sigma^{\star}_2(z) = z+ z^2$. In accordance with Theorem \ref{thm:staircase}, the difference between the two cases is clear already after the first GD step: while on the left the gradient is stuck around the predicted rank-one spike after the first step (black line), on the right the gradient changes orientation in the second step, allowing to learn multiple features. See details in App.~\ref{sec:appendix:numerics}.}
    \label{fig:multiple_steps}
    \vspace{-1em}
\end{figure}

We now investigate the consequence of Theorem \ref{thm:one_step_learning} in the actual performance of the neural network. At a high-level, one expects that learning a subspace $U$ by the first-layer effectively reduces the input-dimensionality to that of $U$, allowing the network to learn a function dependent on $U$ using a significantly fewer number of examples and neurons. However, for learning functions dependent on directions not yet learned by $W$, we expect a requirement of a large number of neurons as well as samples, analogous to the setting of random features. In the following, we make this intuition precise by proving that the generalization error of training the second layer on the top of the learned features is lower bounded precisely by the directions which were not learned by the first-layer. Our results and conjectures incorporate the dependence on both the number of samples and the number of hidden neurons.


\paragraph{Finite $p$ regime ---} This is the simplest case: in this setting, there is simply no way to fit anything beyond the learned directions, even with an infinite amount of data. This is formalized by the following proposition:
\begin{proposition}\label{prop:fixed_width_lower_bound}
    Assume that $p$ is bounded as $n, d \to \infty$, and that the first layer $W$ only learns a subspace $U \subseteq V^{\star}$, \review{i.e for any $v \in V^{\star}, v \perp U$, $\abs{\langle \vec{w}_i, \vec{v} \rangle} = o(1)$.}
    For any choice of second layer $\vec{a}$ such that $\norm{\vec{a}}_\infty \leq {\cal  O}(1)$, we have
    \begin{equation}
    \dE\left[\left(f^{\star}(\vec{z}) - \hat f(\vec{z}; W, \vec{a})\right)^2\right] \geq \dE\left[\Var\left( f^{\star}(\vec{z})   \mid P_U \vec{z}\right)\right] - o(1)
    \end{equation}
    where $P_U$ is the orthogonal projection on $U$.
\end{proposition}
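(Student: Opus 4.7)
My plan is to show that $\hat f(\vec z; W, \vec a)$ is, up to a vanishing $L^2$ error, a measurable function of the pair $(P_U \vec z, \vec \xi)$, where $\vec \xi \in \dR^p$ is a Gaussian vector supported on the subspace $(V^{\star})^\perp$. Once this representation is established, $\vec \xi$ is independent of $f^{\star}(\vec z)$ conditionally on $P_U \vec z$, by Gaussian orthogonality and the fact that $f^{\star}$ depends only on $P_{V^{\star}} \vec z$. A one-line conditional-variance argument then closes the proof.

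First I would decompose each first-layer weight as $\vec w_i = \vec w_i^{U} + \vec w_i^{\perp} + \vec \varepsilon_i$, where $\vec w_i^{U} = P_U \vec w_i$, $\vec w_i^{\perp} = P_{(V^{\star})^\perp}\vec w_i$, and $\vec \varepsilon_i \in V^{\star} \cap U^\perp$ is the component in the ``unlearned'' target directions. The assumption that $W$ only learns $U$ means precisely that $\max_{i \le p} \|\vec \varepsilon_i\| = o(1)$, in the same quantitative sense as in Theorems~\ref{thm:one_step_learning} and~\ref{thm:staircase}. Writing $\langle \vec w_i, \vec z\rangle = \langle \vec w_i^{U}, P_U \vec z\rangle + \xi_i + \delta_i$ with $\xi_i := \langle \vec w_i^{\perp}, P_{(V^{\star})^\perp}\vec z\rangle$ and $\delta_i := \langle \vec \varepsilon_i, \vec z\rangle$, the increment $\delta_i$ is subgaussian with norm $O(\|\vec \varepsilon_i\|)=o(1)$. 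A mean-value estimate together with the regularity of $\sigma$ (polynomial growth, e.g.\ Assumption~\ref{ass:activ}) then yields $\sigma(\langle \vec w_i, \vec z\rangle) = \sigma(\langle \vec w_i^{U}, P_U \vec z\rangle + \xi_i) + r_i$ with $\|r_i\|_{L^2} = o(1)$. Since $p$ is finite and $\|\vec a\|_\infty = O(1)$, summing over neurons gives
\[ \hat f(\vec z; W, \vec a) \;=\; \tilde f(P_U \vec z,\, \vec \xi) \;+\; \rho, \qquad \|\rho\|_{L^2} = o(1), \]
for a deterministic measurable $\tilde f$.

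For the conclusion, the triangle inequality in $L^2$ (using that $\|f^{\star} - \tilde f\|_{L^2}$ stays bounded) gives $\dE[(f^{\star} - \hat f)^2] \ge \dE[(f^{\star} - \tilde f)^2] - o(1)$. Conditioning on $(P_U \vec z, \vec \xi)$ makes $\tilde f$ deterministic, so $\dE[(f^{\star} - \tilde f)^2 \mid P_U \vec z, \vec \xi] \ge \mathrm{Var}(f^{\star} \mid P_U \vec z, \vec \xi)$. The key observation is that $\vec \xi$ is a measurable function of $P_{(V^{\star})^\perp}\vec z$, which under $\vec z \sim \cN(0, I_d)$ is jointly independent of $P_{V^{\star}}\vec z$, and therefore of $f^{\star}(\vec z)$, conditionally on $P_U \vec z$; hence $\mathrm{Var}(f^{\star} \mid P_U \vec z, \vec \xi) = \mathrm{Var}(f^{\star} \mid P_U \vec z)$. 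Taking the outer expectation delivers the claimed bound.

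\textbf{Main obstacle.} The only nontrivial technical step is the $L^2$ bound on the remainder $\rho$: it requires turning the qualitative statement ``$W$ learns only $U$'' into a uniform $o(1)$ bound on $\max_i \|\vec \varepsilon_i\|$, which is supplied by the high-probability overlap estimates underlying the learning theorems, together with moment control of $\sigma$ and its derivative against the Gaussian inputs (automatic for polynomial activations). All remaining steps are an instance of the orthogonality principle plus the elementary independence of Gaussian projections onto orthogonal subspaces.
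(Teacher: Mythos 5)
Your proposal follows essentially the same route as the paper's own proof: decompose the weights (equivalently, the inner products $\langle \vec w_i, \vec z\rangle$) along $U$, $V^{\star}\cap U^\bot$, and $(V^{\star})^\bot$, use the $o(1)$ overlap of $W$ with $U^\bot\cap V^{\star}$ together with regularity of $\sigma$ to absorb the unlearned-direction contribution into an $L^2$-vanishing remainder, and then apply the orthogonality/conditional-variance principle exploiting that $f^{\star}$ is independent of $P_{(V^{\star})^\bot}\vec z$ given $P_U\vec z$. The paper phrases the final step as an infimum over all measurable functions of $(P_U\vec z, P_{(V^{\star})^\bot}\vec z)$ rather than conditioning on $(P_U\vec z,\vec\xi)$, but these are equivalent for the conclusion, so the argument is the same in substance.
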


We refer to Appendix \ref{sec:appendix:cget_proofs} for the proof of the proposition.
\review{In particular, Proposition \ref{prop:fixed_width_lower_bound} implies that with bounded $p$, the model cannot achieve vanishing generalization error unless $U=V^\star$. For Boolean data, such a lower bound in the absence of recovery of the full subspace was proven in Theorem 9 of  \cite{abbe2022merged}. Proposition \ref{prop:fixed_width_lower_bound}, however, provides a fine-grained lower bound for any $U \subseteq V$. Intuitively}, the right-hand side of the above inequality corresponds to the predictor $\hat f(\vec{z}) = \dE\left[ f^{\star}(\vec{z}) \mid P_U \vec{z} \right]$, which is the best predictor that depends only on $P_U \vec{z}$. Conversely, we expect that with enough neurons, it is possible to approximate this predictor:
\begin{conjecture}
\label{conj:approxim_conj}
\review{
Suppose that $n=\alpha d^\ell$ in Theorem \ref{thm:one_step_learning} or $n=\alpha d$ in Theorem \ref{thm:staircase}. For any  $\delta > 0$, there exists a $p_0 \in\dN$ and $\alpha_0 \in \R^+$ such that if $p \geq p_0$, $\alpha \geq \alpha_0$,} there exists a choice of second layer $\vec{a}$ satisfying $\norm{\vec{a}}_\infty \leq {\cal O}(1)$, such that as $n, p \to \infty$
    \begin{equation}
        \dE\left[\left(f^{\star}(\vec{z}) - \hat f(\vec{z}; W, \vec{a})\right)^2\right] \leq \dE\left[\Var\left( f^{\star}(\vec{z})   \mid P_U \vec{z}\right)\right] + \delta + o(1),
    \end{equation}
\review{where $U$ denotes the corresponding learned subspace.}
\end{conjecture}

Proving the above conjecture for generic target and activation functions requires proving a sufficient spread of $W$ along the learned subspace $U$ together with an approximation result for finite-dimensional neural networks having activation function $\sigma$. 
Such a conjecture is proven in \cite{damian2022neural} for the specific case of $U = V^{\star}$ (in which case the first term in the RHS is zero) with $\sigma = \operatorname{relu}$,  an additional bias term, and at the cost of logarithmic factors, \review{in Theorem $9$ of \cite{abbe2022merged} for Boolean inputs with activation $\sigma(x)=(1+x)^L$ for a particular choice of $L$, and in \cite{abbe2023sgd} for a particular class of targets.} For generic $\sigma$ with Gaussian inputs, such approximation results are scarcer. In particular, since the projection of the weights $W$ along $U$ cannot be chosen arbitrarily, one cannot directly invoke classical approximation results such as the ones based on Barron spaces. However, the components of $W$ along $U$ are still randomly distributed across neurons. Therefore, one possible approach could be to use results on the generalization of random feature models on finite-dimensional inputs such as \cite{rudi2017generalization}. We leave such an analysis for future work.

\begin{wrapfigure}{r}{0.46\textwidth}
  \vspace{-0.6cm}
            \includegraphics[width=0.45\textwidth]{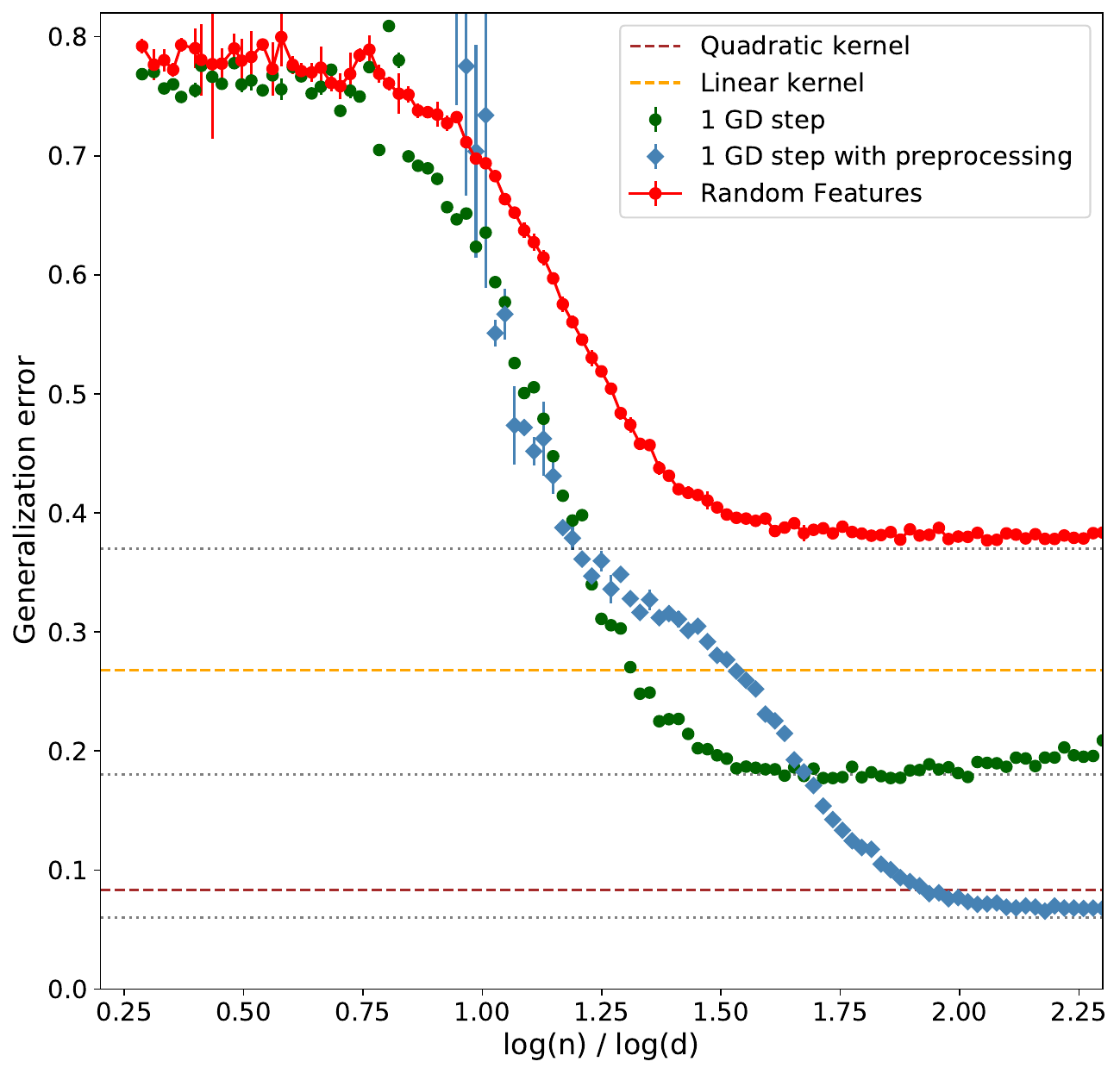}
            \caption{{\bf Learning with training of the second layer.} Simulation illustrating the different regimes in Fig.~\ref{fig:fig_1}, using $d\!=\!512, p\!=\!1024$, a symmetric two-index target function $f^{\star}(\vec z) = \sigma^{\star}(\langle\vec{w_1}^{\star},\vec{z}\rangle) + \sigma^{\star}(\langle\vec{w_2}^{\star},\vec{z}\rangle)$ with activation $\sigma^{\star}(z)=\He_1(x)+\He_2(x)/2!+\He_4(x)/4!$, and a \rm{relu} student. (a) The first algorithm (green) applies a giant step and then learns the second layer. When $n\!\gg\!d$, its performance goes beyond the linear predictor that would be obtained with a kernel method and reach the ``linear subspace learning'' regime in Fig.~\ref{fig:fig_1}. (b)  To go beyond this regime, the second algorithm (blue) preprocesses the data to remove a plug-in estimate of the first Hermite coefficient. It reaches a lower plateau as $n\!\approx\!d^2$, now beating the quadratic kernel. We contrast this behavior with the one of the random feature model (red). Details can be found in App.~\ref{sec:appendix:numerics}.}
        \label{fig:example}
       \vspace{-2em}
\end{wrapfigure}

\paragraph{Beyond the fixed width regime ---} 
When the number of neurons is allowed to diverge, it is no longer impossible to learn funtions along the directions that are not present in $U$. Indeed, \cite{Mei2023} shows that even in the absence of feature learning (i.e. when the first layer is random), it is possible to learn a polynomial approximation of $f^{\star}$ of degree $k$ as long as $n, p = \Omega(d^{k + \delta})$ for some $\delta > 0$. The exact performance of the network may heavily depend on the training process for the second layer. Intuitively, we expect the following behavior: 
\begin{enumerate}[noitemsep,leftmargin=1em,wide=0pt]
    \item On the directions present in $W$, with enough variety in the neurons, the model behaves similarly as a random feature one in this {\it finite}-dimensional space, and are thus able to learn ``everything''.
    \item On the orthogonal directions, however, the models still behave as a random feature model in high dimension, and thus the polynomial limitations of \cite{Mei2023} still apply.
\end{enumerate}
To formalize this conjecture, we introduce the following definition:
\begin{definition}
    Let $V$ be a vector space, and $U \subseteq V$ a subspace. For any $k \geq 0$, we define the space $\cP_{U, k}$ of functions $f: V \to \dR$ such that for any $\vec{x}\in U$, the function $f_{U, \vec{x}}$ introduced in Definition \ref{def:subspace_conditioning} is a polynomial of degree at most $k$.
\end{definition}

Simply put, the space $\cP_{U, k}$ consists of polynomials in $\vec{x}^\bot$ of degree at most $k$, whose coefficients can be functions of $\vec{x}$. We will denote by $P_{U, \leq k}$ and $P_{U, >k}$ the projections on $\cP_{U, k}$ and $\cP_{U, k}^\bot$ in $\ell^2(\dR, \gamma)$, respectively, where $\gamma$ is the Gaussian measure. 
This allows us to write the above intuition in the following precise conjecture:
\begin{conjecture}\label{conj:kernel+feature}
    Assume that $p = {\cal O}(d^{\kappa_1})$ and $n = {\cal O}(d^{\kappa_2})$, and that the first layer $W$ only learns a subspace $U^{\star} \subseteq V^{\star}$.    Then, if $\hat{\vec{a}}$ is obtained as in Eq. \eqref{eq:second_layer_training} for any value of $\lambda$,
    \begin{equation}
        \dE\left[\left(f^{\star}(\vec{z}) - \hat f(\vec{z}; W, \hat{\vec{a}})\right)^2\right] \geq \norm{P_{U^{\star}, >\kappa} f^{\star}}^2_\gamma - o(1),
    \end{equation}
    where $\kappa = \min(\kappa_1, \kappa_2)$ and $\norm{}_\gamma$ is the norm in $\ell^2(\dR, \gamma)$.
\end{conjecture}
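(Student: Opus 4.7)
The plan is to prove this conjecture by combining three ingredients: (i) an orthogonal decomposition of the population error in $\ell^2(\dR^d,\gamma)$ along $\cP_{U^\star,\kappa}$ and its complement, (ii) a conditional extension of the Gaussian equivalence theorem (Theorem \ref{thm:cget}) applied to the features in the complementary subspace $U^{\star,\perp}$, and (iii) the degree-$\kappa$ approximation ceiling for high-dimensional random feature ridge regression established by \cite{mei_generalization_2022}.

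First, I would reduce the statement to a conditional one. Writing $\vec{z}=\vec{z}_U+\vec{z}^\perp$ with $\vec{z}_U=P_{U^\star}\vec{z}$, $\vec{z}^\perp=(I-P_{U^\star})\vec{z}$ (independent Gaussians of dimensions $\dim U^\star$ and $\tilde d=d-\dim U^\star=\Theta(d)$), the Pythagorean theorem and Fubini yield
\begin{equation*}
\dE\bigl[(f^\star(\vec{z})-\hat f(\vec{z};W,\hat{\vec a}))^2\bigr] \;\geq\; \norm{P_{U^\star,>\kappa}(f^\star-\hat f)}_\gamma^2 \;=\; \dE_{\vec x\sim\gamma_{U^\star}}\!\bigl[\norm{P_{>\kappa}(f^\star_{U^\star,\vec x}-\hat f_{U^\star,\vec x})}_{\gamma_{\tilde d}}^2\bigr].
\end{equation*}
It therefore suffices to lower bound the inner conditional error, in an integrated-over-$\vec x$ sense, by $\norm{P_{>\kappa}f^\star_{U^\star,\vec x}}_{\gamma_{\tilde d}}^2-o(1)$.

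Second, I would analyze the conditional predictor. For each fixed $\vec x\in U^\star$,
\begin{equation*}
\hat f_{U^\star,\vec x}(\vec{z}^\perp) \;=\; \frac{1}{\sqrt p}\sum_{i=1}^{p}\hat a_i\,\sigma\bigl(\langle\vec{w}_i^U,\vec x\rangle+\langle\vec{w}_i^\perp,\vec{z}^\perp\rangle\bigr),
\end{equation*}
which is a $\tilde d$-dimensional random features model with shifted activation $\sigma_{\vec x,i}(\cdot)=\sigma(\langle\vec{w}_i^U,\vec x\rangle+\cdot)$, where the hypothesis that $W$ only learns $U^\star$ (together with Theorem \ref{thm:staircase}) ensures that the orthogonal components $\vec{w}_i^\perp$ remain approximately i.i.d.\ uniform on $\cS^{\tilde d-1}$. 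Invoking a conditional extension of Theorem \ref{thm:cget}, I would replace each shifted activation by its Gaussian equivalent up to degree $\kappa$ in $\vec{z}^\perp$ (with an explicit noise term absorbing the higher-degree Hermite components). The classical result of \cite{mei_generalization_2022} applied to this equivalent Gaussian features model in dimension $\tilde d=\Theta(d)$ with $p,n=\cO(d^\kappa)$ then prevents any predictor of the form above from fitting the projection $P_{>\kappa}f^\star_{U^\star,\vec x}$, forcing the conditional error to be at least $\norm{P_{>\kappa}f^\star_{U^\star,\vec x}}_{\gamma_{\tilde d}}^2-o(1)$. Integrating over $\vec x\sim\gamma_{U^\star}$ and combining with the first step yields the claim.

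The hard part, and the reason the statement is stated as a conjecture rather than a theorem, is that $\hat{\vec a}$ arises from \emph{joint} ridge regression over the full Gaussian $\vec{z}$, not from conditional ridge regression for each slice $\vec x$. The lower bound of \cite{mei_generalization_2022} cannot therefore be invoked as a black box on the conditional problem; one must instead study how the joint KKT optimality conditions for $\hat{\vec a}$ translate into effective conditional behavior. Technically, this requires a uniform-in-$\vec x$ analysis of the conditional kernel $K_{\vec x}(\vec{z}^\perp,\vec{z}'^{\perp})=p^{-1}\sum_i\sigma_{\vec x,i}(\langle\vec{w}_i^\perp,\vec{z}^\perp\rangle)\sigma_{\vec x,i}(\langle\vec{w}_i^\perp,\vec{z}'^\perp\rangle)$ together with a conditional Gaussian equivalence whose error is $o(1)$ in an $L^1$ sense over $\vec x$. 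Both steps seem within reach — the first by spectral arguments on $K_{\vec x}$ and variational reformulation of ridge regression, the second by leveraging the independence (up to $o(1)$) of $\vec{w}_i^\perp$ from the learned component — but controlling the interaction between the learned $\vec{w}_i^U$ and the random $\vec{w}_i^\perp$ uniformly in $\vec x$ is the chief technical obstruction.
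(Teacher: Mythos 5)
The statement you are addressing is a conjecture: the paper does not prove it in general, but only the boundary cases $\kappa_1=\kappa_2=0$ (Proposition~\ref{prop:fixed_width_lower_bound}) and $\kappa_1=\kappa_2=1$ (Corollary~\ref{corr:lower_bound} via Theorem~\ref{thm:cget}). Your proposal is accordingly presented as a programme with an acknowledged gap, and that is the honest framing. The three ingredients you line up --- the Pythagorean decomposition of the error along $\cP_{U^\star,\kappa}$ versus its orthogonal complement, a conditional Gaussian equivalence, and the degree-$\kappa$ approximation ceiling of \cite{mei_generalization_2022,Mei2023} --- are exactly the ones the paper's own discussion nominates for this conjecture, so the plan is in the right spirit and aligns with the proof of the special cases.

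There is, however, a subtle divergence in ordering from the only nontrivial case the paper actually proves, and it bears directly on the difficulty you identify. For $\kappa_1=\kappa_2=1$ the paper first applies the \emph{joint} conditional GET (Theorem~\ref{thm:cget}) to replace the CK feature matrix by its CL equivalent, for the ridge estimator trained on the full data; only after that replacement does it observe that the CL predictor is, by construction, degree one in $\vec z^\perp$ for every conditioning slice, so that $P_{U^\star,>1}\hat f=0$ and Pythagoras closes the argument. Your sketch performs the Pythagorean decomposition first and then applies the equivalence slice-by-slice in $\vec x$, which runs directly into the obstacle you flag: $\hat{\vec a}$ is not the conditional ERM on any single slice, so a slice-wise black-box invocation of \cite{mei_generalization_2022} is not licensed. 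Following the paper's ordering dispenses with the slice-wise lower bound entirely: once a joint ``polynomial conditional GET'' is proved (equivalent features equal to a degree-$\kappa$ polynomial in $\vec z^\perp$ with $\vec x$-dependent coefficients plus independent noise matching conditional moments through degree $\kappa$), the equivalent predictor lies in $\cP_{U^\star,\kappa}$ automatically and the lower bound follows without appealing to a separate kernel-regression lower bound at all. Establishing that equivalence --- conditioned on a multi-dimensional $U^\star$ rather than a single spike, at polynomial degree $\kappa>1$, and uniformly in $\lambda$ --- is the genuine open problem; the phrase ``both steps seem within reach'' likely understates the amount of new random-matrix and concentration work required. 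You have nevertheless identified the right difficulties and the right tools.
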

While a general proof for this conjecture would require significant additional work, we can prove some particular cases. First, it is easy to check that Proposition \ref{prop:fixed_width_lower_bound} corresponds to the $\kappa = \kappa_1 = 0$ case of the above conjecture. Second, in the next section, we prove the fairly delicate $\kappa_1 = \kappa_2$ case.

\paragraph{The linear case ---} We provide a proof of the $\kappa_1 = \kappa_2 = 1$ case of the above conjecture, also known as the \emph{proportional regime}. In this setting, \citep{mei_generalization_2022, gerace_generalisation_2020, goldt_gaussian_2021, hu2022universality} have shown that for the random features $W^{0}$ case, the generalization error of the minimizer for the second layer \eqref{eq:second_layer_training} is equivalent to the generalization error of an equivalent linear model; this result was extended in \cite{ba2022high} to a trained first layer $W^{1}$ for small, ``lazy'' stepsizes $\eta = {\cal O}(1)$.
\cite{ba2022high} has also shown that when $\eta = \Theta(p)$, the equivalent linear description breaks down, due to the appearance of a spike term proportional to $\eta$ in the trained weight matrix $W^{1}$; here, we show that this spike only allows the learning of non-linear functions parallel to the spike. Notice that the optimization problem \eqref{eq:second_layer_training} is equivalent to
\begin{equation}\label{eq:ck_kernel}
    \min_{\vec{a} \in \dR^p} \frac1n \sum_{\nu = 1}^n \left( \langle \vec{a}, \phi_{\text{CK}}(\vec{z}^\nu)\rangle - f^{\star}(\vec{z}^\nu) \right)^2 + \lambda \norm{\vec{a}}^2
\end{equation}
where $\phi_{\text{CK}}(\vec{z}) = \sigma(W\vec{z})$ is the feature map corresponding to the \emph{conjugate kernel} of the neural network. For a given direction $\vec{v}$, we define the \emph{conditional linear} equivalent map $\phi_{\text{CL}}(\vec{z}; \vec{v})$ as follows: given the decomposition $\vec{z} = z_{\vec{v}} \vec{v} + \vec{z}^\bot$,
\begin{equation}
    \phi_{\text{CL}}(\vec{z}; \vec{v}) = \mu\left(z_{\vec{v}}\right) + \Psi(z_{\vec{v}}) z^\bot +  \Phi(z_{\vec{v}})\vec{\xi}
\end{equation}
where $\vec{\xi} \sim \cN(0, I_p)$, and $\mu\in\dR^p, \Psi\in \dR^{p\times d}, \Phi \in \dR^{p \times p}$ are chosen to match the first two conditional moments of $\phi_{\text{CK}}$:
\begin{align*}
    \mu(z_{\vec{v}}) &= \dE\left[ \phi_{\text{CK}}(\vec{z}) \mid z_{\vec{v}} \right], \quad\Psi(z_{\vec{v}}) = \dE\left[ \phi_{\text{CK}}(\vec{z})(z^\bot)^\top \mid z_{\vec{v}} \right],\\ 
    \Phi(z_{\vec{v}}) &= \Cov\left[ \phi_{\text{CK}}(\vec{z}) \mid z_{\vec{v}} \right] - \Psi(z_{\vec{v}})\Psi(z_{\vec{v}})^\top
\end{align*}

Then, the following theorem holds:

\begin{theorem}\label{thm:cget}(informal)
     Assume that $n, p = \Theta(d)$, and that $V_1^\star$ defined in Theorem \ref{thm:one_step_learning} is non-zero. Let $v^\star=C_1(f^\star)$, so that $V_1^\star = \vect(v^\star)$.Then, after one gradient step, there exists a vector $\vec{v} \in \dR^d$ such that:
    \begin{itemize}[noitemsep,wide=0pt]
        \item the projection of $\vec{v}$ on $V^{\star}$ is proportional to $\vec{v}^{\star}$,
        \item the solutions $\hat{\vec{a}}$, $\tilde{\vec{a}}$ to the optimization problem in \eqref{eq:ck_kernel} with the feature maps $\phi_{\text{CK}}(\vec{z})$ and $\phi_{\text{CL}}(\vec{z}; \vec{v})$ yield the same generalization error.
    \end{itemize}
\end{theorem}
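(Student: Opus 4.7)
\textbf{Identifying $\vec{v}$.} The vector $\vec{v}$ in the statement is the unit spike direction produced by the single gradient step. Since $\vec{v}^{\star} = C_1(f^{\star}) \neq 0$, we are in the $\ell = 1$ case of Theorem \ref{thm:one_step_learning}, and the analysis underlying that theorem yields a decomposition
\begin{equation}
W^{1} = W^{0} + \vec{u} \otimes \vec{v} + E,
\end{equation}
with $\vec{u}$ proportional to $\vec{a}^{0}$, $\|E\|_{\mathrm{op}} = o(1)$, and $\vec{v} = \alpha \vec{v}^{\star} + \beta \vec{\xi}$ for some $\alpha = \Theta(1)$, a unit vector $\vec{\xi} \perp V^{\star}$, and $\beta = O(1/\sqrt{d})$. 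The $V^{\star}$ part comes from Stein's identity applied to the mean of $(y - \hat f(\vec{z};W^{0},\vec{a}^{0}))\sigma'(\langle\vec{w}_{i}^{0},\vec{z}\rangle)\vec{z}$, which after the symmetrization of $\hat f$ reduces to $\mathbb{E}[y \vec{z}] = \vec{v}^{\star}$ up to a nonzero constant; the orthogonal part $\vec{\xi}$ is the centered sample-fluctuation residual, bounded by Hanson--Wright and matrix Bernstein estimates. This immediately yields $\Pi^{\star} \vec{v} \propto \vec{v}^{\star}$, establishing the first bullet.

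\textbf{Conditional Gaussian equivalence.} For the second bullet, the key step is a conditional Gaussian equivalence. Writing $z_{\vec{v}} = \langle \vec{v}, \vec{z}\rangle$ and $\vec{z}^{\bot} = \vec{z} - z_{\vec{v}} \vec{v}$, the conjugate-kernel feature map decomposes as
\begin{equation}
\phi_{\text{CK}}(\vec{z}) = \sigma\bigl( z_{\vec{v}}\, W^{1}\vec{v} + W^{1}\vec{z}^{\bot} \bigr).
\end{equation}
Conditionally on $z_{\vec{v}}$, $\vec{z}^{\bot}$ is isotropic Gaussian on $\vec{v}^{\bot}$. For neuron $i$, the pre-activation is a deterministic shift $c_{i}(z_{\vec{v}}) := z_{\vec{v}} \langle \vec{w}_{i}^{1}, \vec{v}\rangle$ plus an approximately centered Gaussian term $\langle \vec{w}_{i}^{1}, \vec{z}^{\bot}\rangle$. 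Expanding $t \mapsto \sigma(c_{i}(z_{\vec{v}}) + t)$ in Hermite polynomials of the Gaussian argument, the zeroth-order term gives the conditional mean $\mu(z_{\vec{v}})$, the first-order term yields the linear statistic $\Psi(z_{\vec{v}})\vec{z}^{\bot}$, and the sum of $\geq 2$-nd order Hermite components acts as a conditionally-centered noise vector whose covariance matches $\Phi(z_{\vec{v}})$ by construction. This is precisely $\phi_{\text{CL}}(\vec{z}; \vec{v})$.

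\textbf{From equivalence of features to equal test errors.} Plugging this replacement into the ridge problem \eqref{eq:ck_kernel}, one must show that the optimizers with $\phi_{\text{CK}}$ and $\phi_{\text{CL}}$ have generalization errors that agree up to $o(1)$. I would adapt one of the existing universality strategies, either the resolvent/Stieltjes-transform approach of \cite{mei_generalization_2022} or the Lindeberg exchange arguments of \cite{hu2022universality, dandi2023universality}, run conditionally on the scalars $(z_{\vec{v}}^{\nu})_{\nu \leq n}$ and then integrated. Because the squared loss is quadratic in $\phi$, only the first two conditional moments matter at leading order, and these coincide between $\phi_{\text{CK}}$ and $\phi_{\text{CL}}$ by construction; the higher-order Hermite terms behave as a conditionally independent noise whose contribution to the resolvent of the sample covariance is subleading.

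\textbf{Main obstacle.} The principal difficulty is that $\vec{v}$ is measurable with respect to the batch used to produce $W^{1}$, so $\vec{v}$ and the training samples $\vec{z}^{\nu}$ are not independent, which breaks standard GET conditioning. There are two ways forward, both viable here: either the second-layer ridge is solved on a fresh batch (consistent with the ``fresh batch at each step'' convention used throughout the paper in Eq.~\eqref{eq:first_layer_gd}), or one quantifies the weak dependence via a leave-one-out argument exploiting that any single $\vec{z}^{\nu}$ shifts $\vec{v}$ by at most $O(1/\sqrt{n})$, together with Lipschitz concentration of the optimizer of \eqref{eq:ck_kernel} in its design matrix. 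Combining this decoupling with the uniform-in-$z_{\vec{v}}$ operator-norm control of the conditional sample covariance, and verifying that the higher-order Hermite components of $\sigma$ contribute $o(1)$ to both training and test errors, is the technical crux of the argument.
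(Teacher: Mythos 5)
Your proposal matches the paper's strategy: identify the spike direction $\vec{v}$ from the rank-one part of the gradient update and verify $\Pi^{\star}\vec{v}\propto\vec{v}^{\star}$; establish a conditional one-dimensional CLT showing that, given $z_{\vec{v}}$, one-dimensional projections of $\phi_{\text{CK}}$ are asymptotically Gaussian with first two conditional moments matching those of $\phi_{\text{CL}}$; and transfer this to equality of ridge risks by a universality argument (the paper implements this via the interpolation scheme of \cite{Montanari2022}, run conditionally on the scalars $(z^{\nu}_{\vec{v}})$, rather than a resolvent computation). Two remarks on points you gloss over. First, with $\eta=\Theta(p)$ the residual $\eta\Delta$ is \emph{not} $o(1)$ in operator norm; the paper does not argue it is negligible but instead absorbs it into an effective bulk $W^{\perp}=W^{0}+\eta\Delta$ and uses Lemma~\ref{lem:delta} to show that the rows of $W^{\perp}$ remain approximately orthonormal and nearly orthogonal to $V^{\star}$ --- it is this structure, not smallness, that lets the conditional CLT go through via the random-features universality of \cite{hu2022universality}. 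Second, the uniform-in-$\btheta$ conditional CLT is only established over a constraint set $\cS_p$ with $\norm{\btheta}_2\leq R$ and $\norm{\btheta}_\infty\leq C p^{-\eta'}$, so the paper must further assume (Assumption~\ref{assump:constraint}) that the ridge minimizer lies in $\cS_p$; your sketch appeals to ``Lipschitz concentration of the optimizer'' but would need to make this an explicit hypothesis or prove it (the paper notes it can be enforced by constrained minimization or, in the overparametrized case, via Theorem~5 of \cite{Montanari2022}). Finally, your diagnosis of the dependence of $\vec{v}$ on the first-step batch as the principal obstacle is on point, and the paper's resolution is essentially your first option: condition on the high-probability events characterizing $W^{1}$ and evaluate the second-layer problem on fresh data.
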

The full statement of Theorem \ref{thm:cget} with the relevant hypotheses is discussed App.~\ref{sec:appendix:cget_proofs}. In particular, as we show in App.~\ref{sec:appendix:cget_proofs}, Theorem \ref{thm:cget} provides a proof of the case $\kappa_1 = \kappa_2 = 1$ of Conjecture \ref{conj:kernel+feature}:
\begin{corollary}\label{corr:lower_bound}
Conjecture \ref{conj:kernel+feature} holds for $\kappa_1 = \kappa_2 = 1$.
\end{corollary}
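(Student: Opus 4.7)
The plan is to apply Theorem \ref{thm:cget} to transfer the problem to the conditional-linear (CL) setting, and then lower-bound the CL risk by projecting onto the sigma-algebra generated by the target-relevant directions. By Theorem \ref{thm:one_step_learning} at $\ell = 1$, the first layer in the $n, p = \Theta(d)$ regime learns exactly $U^\star = \vect(\vec{v}^\star)$ with $\vec{v}^\star = C_1(f^\star)$, so the $U^\star$ of Conjecture \ref{conj:kernel+feature} is one-dimensional. Theorem \ref{thm:cget} then yields a direction $\vec{v}$ with $P_{V^\star}\vec{v} \propto \vec{v}^\star$ such that the CK ridge minimizer's generalization error matches that of the CL ridge minimizer on $\phi_{\text{CL}}(\cdot; \vec{v})$ up to $o(1)$. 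It therefore suffices to lower-bound the risk of any CL predictor by $\norm{P_{U^\star, >1}f^\star}_\gamma^2$.

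Writing a generic CL predictor as $A(z_{\vec{v}}) + \vec{B}(z_{\vec{v}})\cdot\vec{z}^\bot_{\vec{v}} + \langle\vec{a}, \Phi(z_{\vec{v}})\vec{\xi}\rangle$ and using that $\vec{\xi}$ is independent of $(\vec{z}, f^\star)$, the $\vec{\xi}$-term contributes only non-negative variance to the quadratic risk and can be dropped for the lower bound. Set $\mathcal{F}^\star = \sigma(P_{V^\star}\vec{z})$. Since $f^\star$ is $\mathcal{F}^\star$-measurable, the Pythagorean decomposition
$$\dE[(f^\star - g)^2] = \dE[(f^\star - \dE[g \mid \mathcal{F}^\star])^2] + \dE[\Var(g \mid \mathcal{F}^\star)]$$
reduces the task to showing that for all measurable $A$ and $\vec{B}$,
$$\dE\bigl[A(z_{\vec{v}}) + \vec{B}(z_{\vec{v}})\cdot\vec{z}^\bot_{\vec{v}} \bigm| \mathcal{F}^\star\bigr] \in \cP_{U^\star, 1},$$
which, combined with the definition of $\norm{P_{U^\star, >1}f^\star}_\gamma^2$ as the $\ell^2(\gamma)$-distance from $f^\star$ to $\cP_{U^\star, 1}$, yields the desired bound uniformly over all CL predictors.

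To prove this key claim, I would decompose $\vec{v} = \alpha\vec{v}^\star + \beta\vec{u}$ with $\vec{u} \in V^{\star\bot}$ a unit vector, so that $z_{\vec{v}} = \alpha z_{\vec{v}^\star} + \beta z_{\vec{u}}$, and split $\vec{v}^\bot$ into three mutually orthogonal subspaces: $V^\star\cap\vec{v}^\bot$, the mixing line $\vect(\beta\vec{v}^\star - \alpha\vec{u})$, and $V^{\star\bot}\cap\vec{u}^\bot$. Integrating over $z_{\vec{u}}$ and over the $V^{\star\bot}\cap\vec{u}^\bot$-coordinates of $\vec{z}$ (all independent of $\mathcal{F}^\star$) produces three contributions: a linear combination of the $(V^\star\cap\vec{v}^{\star\bot})$-coordinates of $\vec{z}$ with coefficients depending only on $z_{\vec{v}^\star}$; a function of $z_{\vec{v}^\star}$ alone coming from the mixing line; and zero from the orthogonal piece (by independence, since the vector-valued factor of $\vec{B}(z_{\vec{v}})$ integrates to a function of $z_{\vec{v}^\star}$ while the remaining coordinate has mean zero). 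All three pieces lie in $\cP_{U^\star, 1}$, closing the argument and transferring back to the CK risk via Theorem \ref{thm:cget}.

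The principal technical obstacle is the bookkeeping in this last computation: one must track carefully which coordinates of $\vec{z}^\bot_{\vec{v}}$ lie in which of the three orthogonal pieces, verify that the mixing direction $\beta\vec{v}^\star - \alpha\vec{u}$ produces only functions of $z_{\vec{v}^\star}$ after $z_{\vec{u}}$-integration, and ensure that no quadratic-in-$\vec{z}^\bot_{\vec{v}^\star}$ terms slip in through the interaction between $\vec{B}(z_{\vec{v}})$ and $\vec{z}^\bot_{\vec{v}}$. A smaller subtlety is that Theorem \ref{thm:cget} equates the errors of specific ridge minimizers $\hat{\vec{a}}, \tilde{\vec{a}}$ rather than arbitrary predictors, but since the lower bound above is uniform in $(A, \vec{B})$ and the $\vec{\xi}$-noise only increases the risk, it specializes to $\tilde{\vec{a}}$, and Theorem \ref{thm:cget} then transfers it back to $\hat{\vec{a}}$ up to $o(1)$, completing the proof.
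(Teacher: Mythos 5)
Your proposal is correct, and it takes a different route from the paper at the one delicate step. Both start by invoking Theorem~\ref{thm:cget} to pass to the conditional-linear features, and both observe that the $\vec{\xi}$-noise can only hurt, but the two proofs then diverge. The paper performs the Pythagorean decomposition in $\ell^2(\gamma)$ with respect to the \emph{data-dependent} spike $\vec{v}$: it notes that the CL predictor lies in $\cP_{\vec{v},1}$, obtains the intermediate bound $\norm{f^\star - \hat f}^2 \geq \norm{P_{\vec{v},>1} f^\star}^2$, and then invokes $P_{V^\star}\vec{v}\overset{\dP}{\to}\tfrac{\mu}{\sqrt{p}}\vec{v}^\star$ to pass to $\norm{P_{U^\star,>1} f^\star}^2$ (a passage the paper handles somewhat hastily; what actually makes it work is that writing $\vec{v}=\alpha\hat{\vec{v}}^\star+\beta\vec{u}$ with $\vec{u}\perp V^\star$ and expanding $He_{j_1}(z_1)=He_{j_1}(\alpha z_{\vec v}-\beta z')$ via the Hermite addition theorem shows $\norm{P_{\vec{v},>1}f^\star}^2 \geq \norm{P_{U^\star,>1}f^\star}^2$, with equality generically failing, so the intermediate bound is in fact tighter). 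You instead condition on $\mathcal{F}^\star=\sigma(P_{V^\star}\vec{z})$ and prove directly that $\dE[g\mid\mathcal{F}^\star]\in\cP_{U^\star,1}$ for every $g\in\cP_{\vec{v},1}$, which lands in the target space in one move: the coordinates of $\vec{z}^\bot_{\vec{v}}$ lying in $V^{\star\bot}\cap\vec{u}^\bot$ integrate to zero, the mixing direction $\beta\hat{\vec{v}}^\star-\alpha\vec{u}$ integrates to a function of $z_{\vec{v}^\star}$ alone, and the $V^\star\cap(\vec{v}^\star)^\bot$ coordinates produce terms linear in themselves with $z_{\vec{v}^\star}$-dependent coefficients, exactly as you sketch. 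Your approach buys a cleaner and more self-contained argument that avoids the $\vec{v}\to\vec{v}^\star$ limiting step; the paper's buys the sharper intermediate constant $\norm{P_{\vec{v},>1}f^\star}^2$ at the cost of an extra conversion. One point worth making explicit if you write this up: the paper fixes $\vec{\xi}$ and uses Fubini rather than dropping the $\vec{\xi}$-term outright, because $\tilde{\vec{a}}^\top\Phi(z_{\vec{v}})\vec{\xi}$ depends on $z_{\vec{v}}$ and so is absorbed into the $A(z_{\vec{v}})$-part of the CL predictor; your reasoning that the cross-term with $\vec{\xi}$ vanishes by independence and mean-zero is equivalent and equally correct.
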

Informally, as for the random features model, features in the subspace orthogonal to $\vec{v}$ map to a linear model, and therefore only the linear part of $f^{\star}$ can be learned in this subspace. This precisely corresponds to the statement in Conjecture \ref{conj:kernel+feature}. This is illustrated in particular in Fig.\ref{fig:features} where the generalization after one step of gradient descent is improved  drastically over initialization for $f^\star(\vec z)=z_1+z_1 z_2$, while it is not for $f^\star(\vec z)=z_1+z_2 z_3$. 
We refer to App.~\ref{sec:appendix:numerics} for additional discussion on the numerical validation of these claims. 

\paragraph{General overview ---} We conclude this section by offering a summary on the generalization properties of two-layer networks. In table \ref{table2} we focus on the effect of overparameterization, i.e. number of neurons in the hidden layer. 

\begin{table}[h]
\renewcommand{\arraystretch}{0}
 \begin{adjustwidth}{0cm}{}
\begin{tabular}{||c||c || c | c ||}
 \hline
 $f^*(\cdot)$ & Random  & GD & GD   \\
  & Features & $p\!=\!\mathcal{O}(1), n\!=\!\mathcal{O}(d)$& $p\!=\!\mathcal{O}(d), n\!=\!\mathcal{O}(d)$ \\
 \hline 
  \hline  $z_1\!+\!z_1z_2$ &$ \min (p,n) \!=\! \tilde{\Theta}(d^2)$ & $\tau = 2$ &$\tau = 1$\\ 
\hline  $z_1\!+\!z_2z_3$ &$ \min (p,n) \!=\! \tilde{\Theta}(d^2)$ &\!No learn. in $\tau\!=\!\Theta(1)$\!&\!No learn. in $\tau\!=\!\Theta(1)$\!\\
 \hline $z_1\!+\!z_1z_2\!+\!\He_{3}(z_1)z_3$ &$ \min(p,n) \!=\! \tilde{\Theta}(d^4)$ & $\tau = 2$ & $\tau = 1$\\ 
\hline  $z_1\!+\!z_1z_2\!+\!\He_{3}(z_2)z_3$ &$ \min (p,n) \!=\! \tilde{\Theta}(d^4)$ & $\tau = 3$ &$\tau = 2$ \\
\hline  $z_1\!+\!z_2z_3\!+\!\He_{3}(z_2)z_3$ &$ \min (p,n) \!=\! \tilde{\Theta}(d^4)$ &\!No learn. in $\tau\!=\!\Theta(1)$\!&\!No learn. in $\tau\!=\!\Theta(1)$\!\\
 \hline 
\end{tabular}
\end{adjustwidth}
\caption{\label{table2}\textbf{Overparametrization helps learning multi-index targets.} Table summarizing the number of iterations needed to learn different multi-index targets $f^\star(\cdot)$  with underparametrized $\left(p = \mathcal{O}(1)\right)$ and overparametrized $\left(p=\mathcal{O}(d)\right)$ two-layer networks trained in the proportional batch-size regime $n=\mathcal{O}(d)$. We contrast this behaviour with the sample complexity of Random Features trained with $n$ samples ($n_{\rm{T}}=n$), corresponding to $\tau=0$, i.e, the network at initialization. The time complexity can be significantly reduced by considering overparametrized networks, in accordance with Theorem~\ref{thm:cget}.}
\end{table}
Here, the function $f^\star(\vec{z})=z_1+z_1z_2$ can either be learned in two steps through the staircase property or in a single step by increasing the number of hidden neurons. This is an example of a multi-index target function that can be learned with a single step of $\mathcal{O}(d)$ batch-size through overparameterization, exemplified in the left panel of Fig.~\ref{fig:features}. On the other hand, $f^\star(\vec z) = z_1 + z_2z_3$ is not linear conditioned on $z_1$ (learned at the first step), and this leads to a generalization pattern similar to Random Features in the overparametrized regime, see right panel of Fig.~\ref{fig:features}. We refer to App.~\ref{sec:appendix:numerics} for the analysis of the last two example of Table~\ref{table2} (see Fig.~\ref{fig:postive_cget_appendix}). 
\\Conjecture.~\ref{conj:kernel+feature} extends the claims of Thm~\ref{thm:cget} to the polynomial scaling regime of $p = \mathcal{O}(d^{\kappa_1})$ and $d = \mathcal{O}(d^{\kappa_2})$. We exemplify the effect of overparametrization in this setting by considering the following target function:
\begin{equation}
f^\star(\vec{z})=z_1+\He_3(z_1)z_2+\He_2(z_1)\He_2(z_3).
\end{equation}
Here, we can illustrate the intertwined dependence between batch-size, number of hidden neurons, and time iteration in determining learning efficiency. Indeed, changing any of these quantities can affect the components of the target function that are fitted by the network. Consider the following concrete scenarios, where $\hat{f}(\vec{z})$ denotes the predicted output function, after training the second layer:
\begin{enumerate}
    \item One-step GD with $n =\Theta(d), p = \mathcal{O}(1)$: $\hat{f}(\vec{z})=z_1$.
    \item One-step GD with $n =\Theta(d), p = \mathcal{O}(d)$: $\hat{f}(\vec{z})=z_1+\He_3(z_1)z_2$.
    \item One-step GD with $n = \Theta(d^2)$, $p = \Theta(d^2)$: $\hat{f}(\vec{z})=z_1+\He_3(z_1)z_2+\He_2(z_1)\He_2(z_3)$. 
\end{enumerate}

\section*{Conclusion}
In this work, we investigated the dynamics of two-layer neural networks as they learn a multi-index model using a single-pass, large-batch, gradient descent algorithm. We shed light on the intricate interactions between the task's structure, notably the complexity of the target function, the hyperparameters of SGD, such as the batch size and learning rate, and the network architecture, such as how the hidden layer width impact the approximation capacity of the network at fixed data budget. We highlight three key findings: a) The pronounced influence of a single gradient step on feature learning, underlining the nexus between batch size and the target's information exponent; b) The amplification of the network's approximation capacity over successive gradient steps and the learning of increasingly complex functions over time; c) The improvement in generalization when contrasted with the random feature/kernel regime. In conclusion, we presented a thorough mathematical framework detailing many nuances of data representation learning in two-layer neural networks during their early training phase. Finally, we note that while certain aspects of those results are left as conjectures, we believe that they capture both the dynamics of multiple gradient steps, as well as the structure of the remaining directions. Proving those conjectures requires fairly delicate concentration and random matrix arguments, which may be of independent mathematical interest.

\section*{Acknowledgements}
We thank Denny Wu, Theodor Misiakiewicz, Loucas Pillaud-Vivien, Joan Bruna \& Lenka Zdeborov\'a for insightful discussions. We also acknowledge funding from the Swiss National Science Foundation grant SNFS OperaGOST, $200021\_200390$ and the \textit{Choose France - CNRS AI Rising Talents} program. This work was completed while Ludovic Stephan was a postdoc in the IdePHICS laboratory at EPFL. 


\clearpage 

\clearpage 
\appendix
\section{Numerical investigation}
\label{sec:appendix:numerics}

In this section we explain the procedures to get the different figures in the main text, along with the details behind the numerical experiments. We provide as well additional plots corroborating the theoretical results presented in the main manuscript. The code is available on \href{https://github.com/lucpoisson/GiantStep}{GitHub}.

\paragraph{Description of training algorithm and hyperparameters:} First, we describe the training protocol reported in Alg.~\ref{alg:gd_training}: we separately update the first layer with $T-$GD steps of learning rate $\eta$, followed by training with standard ridge regression for the second layer with fixed regularization strength $\lambda$. We vary adaptively the learning rate to satisfy the hypothesis of Thm.~\ref{thm:one_step_learning}, i.e. $\eta = \mathcal{O}(p\sqrt{\frac{n}{d}})$, and we take noiseless labels. If not stated otherwise, we consider fixed regularization strenghth $\lambda = 1$. We average over $10$ different seeds to get the mean performance, and we use standard deviation for giving confidence intervals.

\subsection{Learning with a single giant step}  

A sizable part of our results concerns the feature learning efficiency of two layer neural networks after one giant step of GD. We provide a toy illustration of the phenomenology in Fig.~\ref{fig:fig_1}, and rigorously characterize this in Theorems \ref{thm:one_step_lower_bound} and \ref{thm:one_step_learning}. Moreover, in section \ref{res:generalisation} we provide a plethora of results analyzing the consequences of the theorems above in the actual learning performance of the network. Here, we perform a detailed numerical investigation of the different claims in these results. 

\paragraph{Learning single-index targets:} We start by analyzing the generalization performance of different two-layer networks after one giant GD step when learning single-index teacher functions (See Fig.~\ref{fig:example}). We compare the generalization performances of linear and quadratic kernel methods - horizontal lines marked by different colors computed at $n=n_{max} \sim d^{2.25}$ - with three networks: a) \textit{random features} (red points) random network with fixed weight matrix $W_0$ at initialization; b) \textit{1 GD step} (green points) two-layer network trained using one step in the protocol of Alg.~\ref{alg:gd_training}; c) \textit{1 GD step with preprocessing} (blue points) 
two-layer network trained using a preprocessing step in Alg.~\ref{alg:gd_training}. The introduction of a preprocessed algorithm is linked with the theoretical results of Theorem \ref{thm:one_step_learning} and we provide a detailed analysis in the next paragraph.

\paragraph{The importance of preprocessing:} 
As Theorem.~\ref{thm:one_step_learning} provably states, it is not possible to get fully specialized hidden units with one giant step of GD in the $n = \mathcal{O}(d^k)$ regime (with $k>1$), if the directions associated to teacher Hermite coefficients lower than $k$ are not suppressed, or equivalently, if the leap index of the target is lower than $k$ - see Fig.~\ref{fig:fig_1}. 
We can circumvent this issue by using a preprocessing step. Given a batch of size $n = \mathcal{O}(d^k)$, we preprocess the labels in Alg.~\ref{alg:gd_training} using a method introduced in \cite{damian2022neural} for the case $n = \mathcal{O}(d)$:
\begin{align}
\label{eq:app:preprocess_coeff}
\hat{c}_{j_1,\dots,j_d} &\leftarrow \frac1n \sum_{\nu=1}^n y_{\nu}  \He_{j_1}(\langle \Vec{e}_1, \Vec{z}_{\nu})\rangle \cdots \He_{j_d}(\langle \Vec{e}_d, \Vec{z}_{\nu})\rangle
 \\
 \label{eq:app:preprocess_label}
y_{\nu} &\leftarrow y_{\nu} - \sum_{j_1,\dots,j_d: j_1 + \cdots + j_d <k } \frac{\hat{c}_{j_1,\dots,j_d}}{j_1 ! \cdots j_d!} \He_{j_1}(\langle \Vec{e}_1 ,\Vec{z}_{\nu})\rangle \cdots \He_{j_d}(\langle \Vec{e}_d ,\Vec{z}_{\nu})\rangle 
\end{align}
where we denoted with $\hat{c}_{j_1,\cdots,j_d}$ the plug-in estimates from data of the teacher Hermite coefficients, and with $\{\Vec{e}_i\}_{i \in [d]}$ the canonical basis in $\mathbb{R}^d$. By standard concentration arguments \cite{Gotze2019ConcentrationIF} the plug-in estimation of the coefficients is accurate only in the $n = \omega(d\polylog(d))$ regime. Indeed, in Fig.~\ref{fig:example} the inefficient estimation of eq.~\eqref{eq:app:preprocess_coeff}  in the $n = o(d)$ sample regime generates a noisy learning curve for the preprocessed algorithm (blue points). The ridge estimator $\hat{\Vec{a}}$ is consequently found by training on the processed labels  defined in eq.~\eqref{eq:app:preprocess_label} and the suppressed part is injected back in the predictor only at test time:
\begin{align}
\hat{f}(\Vec{z}_{\nu}) = \frac{1}{\sqrt{p}} \Vec{\hat{a}}^\top \sigma(W \vec{z}_{\nu}) + \sum_{j_1,\cdots,j_d: j_1 + \dots + j_d <k } \frac{\hat{c}_{j_1,\dots,j_d}}{j_1 ! \cdots j_d!} \He_{j_1}(\langle \Vec{e}_1, \Vec{z}_{\nu})\rangle \cdots \He_{j_d}(\langle \Vec{e}_d ,\Vec{z}_{\nu})\rangle   
\end{align} 

\paragraph{Comparison of different methods:}
The results presented in Fig.~\ref{fig:example} verify the theoretical predictions of Thm.~\ref{thm:one_step_learning}: 
in the $n=\mathcal{O}(d)$ regime vanilla Alg.~\ref{alg:gd_training} attains the ``linear subspace learning'' regime (see Fig.~\ref{fig:fig_1}) and beats the linear kernel, while the preprocessed version cannot. 
However, implementing preprocessing turns out definitely beneficial in the $n = \mathcal{O}(d^2)$ region. Indeed, while the vanilla Alg.~\ref{alg:gd_training} remains stuck on the linear subspace learning plateau, the preprocessed Alg.~\ref{alg:gd_training} reaches a lower test error than the quadratic kernel. This is achieved by effectively raising the leap index of the target function. More precisely, given a target with leap index $\ell=1$ as in Fig.~\ref{fig:example}, the manipulation in eq.~\eqref{eq:app:preprocess_label} aims exactly at the removal of the first Hermite coefficient of the target by estimating it from the data, allowing feature learning in the $n = \mathcal{O}(d^2)$ regime in accordance with Thm.~\ref{thm:one_step_learning}. 
We complement the above picture by analyzing the influence of the number of neurons $p$ on the generalization performance (see Fig.~\ref{fig:changep}): by increasing the expressive power of the network, we attain the single-index regime by using a single giant step of Alg.~\ref{alg:gd_training} (in accordance with Conj.~\ref{conj:approxim_conj}). Moreover, we note that it is necessary to use $p = 2d$ in order to be able to beat the performance of the quadratic kernel in this learning task (rightmost section). 

\begin{figure}[t]
      \centering\includegraphics[width = \textwidth]{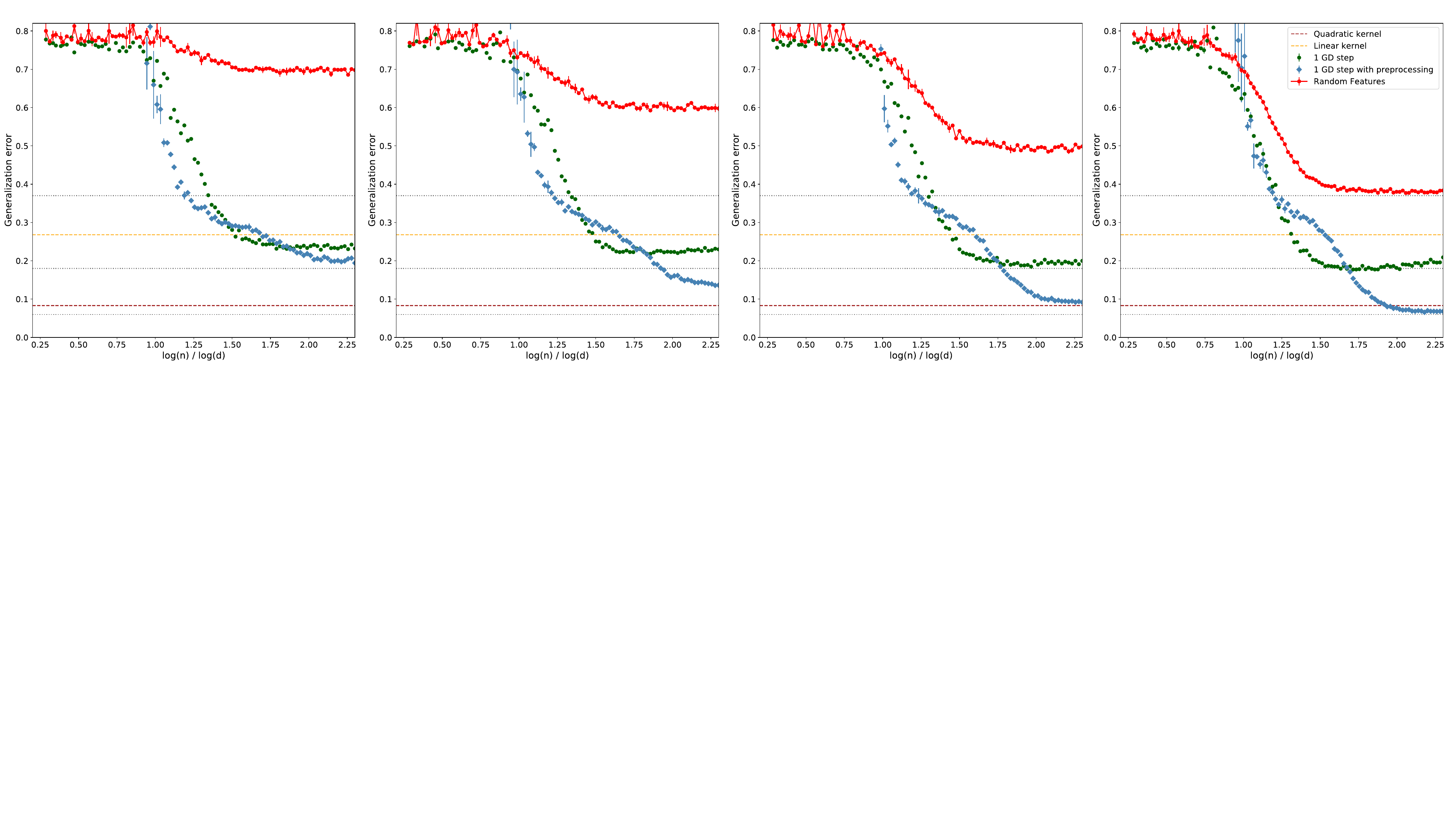}
    \vspace{-13.5em}
    \caption{{\bf Learning as a function of the number of neurons.} Simulations illustrating the different regimes in Fig.~\ref{fig:fig_1}, using $d\!=\!512$,  a symmetric two-index target function $f^\star(\vec z) = \sigma^\star(\langle\vec{w_1}^\star,\vec{z}\rangle) + \sigma^\star(\langle\vec{w_2}^\star,\vec{z}\rangle)$ with activation $\sigma^\star(z)=\He_1(x)+\He_2(x)/2!+\He_4(x)/4!$, a \rm{relu} student, and changing the value of $p \in (128,256,512,1024)$ from left to right. (a) The first algorithm (green) applies a giant step and then learns the second layer. When $n\!\gg\!d$, its performance goes beyond the linear predictor that would be obtained with a kernel method and reach the ``linear subspace learning'' regime exemplified in Fig.~\ref{fig:fig_1}. (b)  To go beyond this regime, the second algorithm (blue) preprocesses the data to remove a plug-in estimate of the first Hermite coefficient. The dotted black lines are a guide for the eyes referencing to the different plateaus of the rightmost plot.}
    \label{fig:changep}
\end{figure}

\paragraph{Investigating representation learning efficiency:} We move to an additional numerical investigation of feature learning efficiency, as characterized by Theorems \ref{thm:one_step_lower_bound} and \ref{thm:one_step_learning}. We again consider a single step of Alg.~\ref{alg:gd_training}, focusing now on the analysis of the gradient - see Fig.~\ref{fig:d2_d3_regimes}. We compute the gradient matrix $G \in \mathbb{R}^{p \times d}$ and plot the cosine similarities of all the rows $\{\vec{G}_i \in \mathbb{R}^d\}_{i=1}^p$ with the teacher vectors $(\vec{w}^\star_1,\vec{w}^\star_2)$. The figure clearly illustrates the claims of Thm.~\ref{thm:one_step_learning}: in the $n=\mathcal{O}(d^k)$  regime (with $k > 1$) is necessary to analyze targets with leap index $k$ in order to obtain specialized hidden units. Moreover, the leftmost section of Fig.~\ref{fig:d2_d3_regimes} completes the picture offered by Figs.~\ref{fig:example}\&\ref{fig:changep} about the lack of specialization in presence of teacher functions with non-zero first Hermite coefficient: the gradient is stuck in the single-index regime theoretically predicted by Thm.~\ref{thm:one_step_learning}, regardless of the sample regime considered, preventing feature learning. To produce the plot, we consider the initialization of the second layer to be Gaussian, i.e. $\vec a^0 \sim \sqrt{p}\mathcal{N}(\vec 0, I_p)$. This choice helps the spreading of the neurons in the teacher subspace, improving the figure's visualization clarity.
\vspace{-0.5em}
\begin{figure}[t]
    \centering\includegraphics[width = \textwidth]{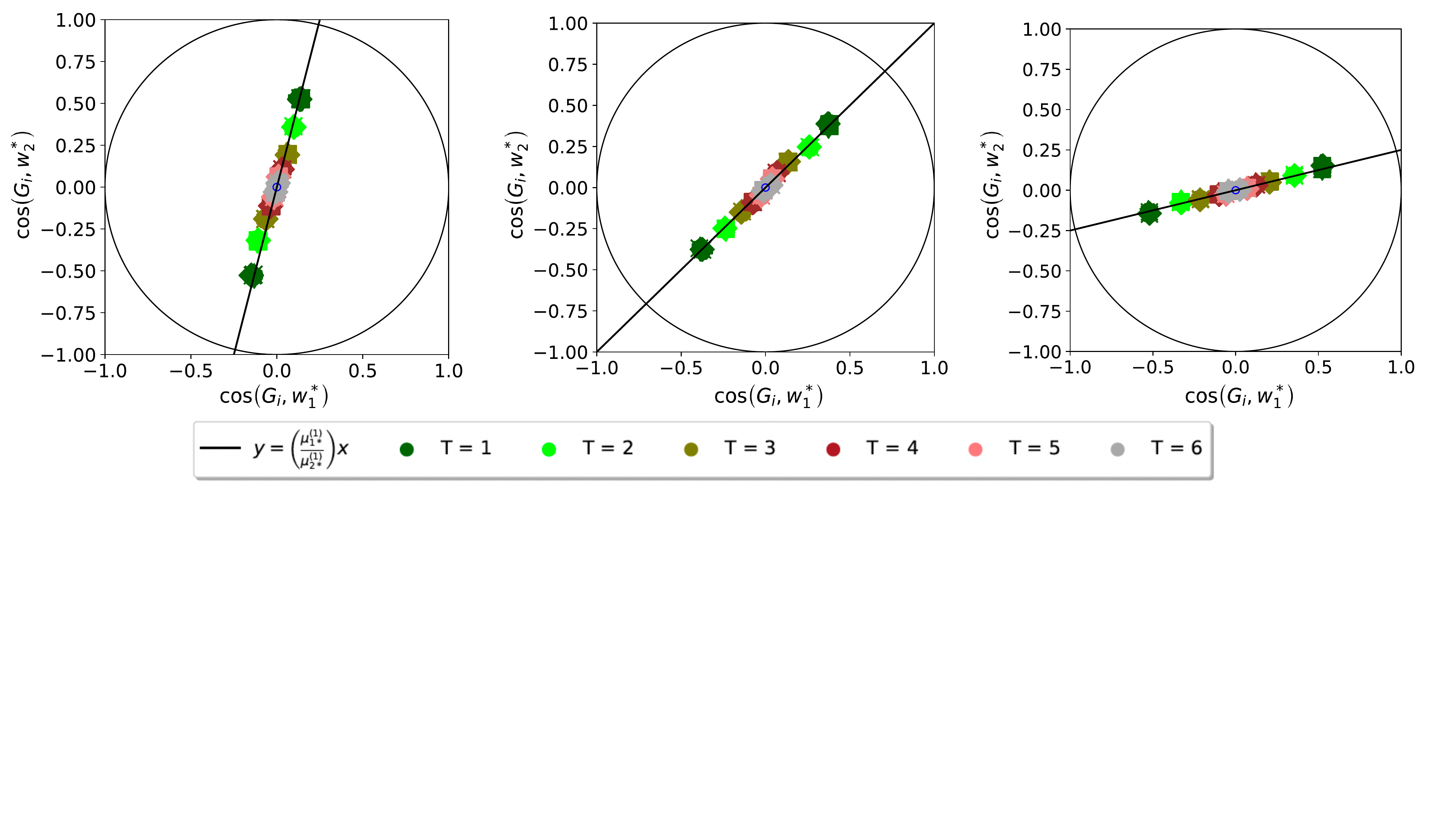}
    \vspace{-9.5em}
    \caption{{\bf Lack of feature learning after few GD steps.} The plots show the cosine similarity with respect to the teacher vectors $(\vec{w}^\star_1,\vec{w}^\star_2)$ for the gradient at different stages of the training. The predicted orientation (Thm.~\ref{thm:one_step_learning}) of the gradient is shown as a continuous black line, the circle of unitary radius in black, and the circle of radius $\sfrac{2}{\sqrt{d}}$ in blue. We fix $n = d = 2^{13}$, the learning rate $\eta = p$, and we use a $\rm{relu}$ student. We vary the teacher functions: \textbf{Left:} $\sigma^\star_1(z) =  4z^2 + z$, $\sigma^\star_2(z) = z$  \textbf{Center:} $\sigma^\star_1(z) = \sigma^\star_2(z) =  4z^2 + z$, \textbf{Right:}  $\sigma^\star_2(z) =  4z^2 + z$, $\sigma^\star_1(z) = z$. 
 The orientation of the gradient does not change after $T=6$ steps preventing specialization, in agreement with Thm.~\ref{thm:staircase}.}
\label{fig:multiple_steps_non_staircase}
    \vspace{-1em}
\end{figure}

\subsection{Learning with multiple steps}
We move now the numerical investigation of the learning behavior of two layer networks after multiple gradient steps. The general picture of the phenomenology is offered in Fig.~\ref{fig:fig_2}, following the theoretical characterization of Theorem \ref{thm:staircase}.

\begin{figure}[t]
    \centering    
\includegraphics[width = 0.48\textwidth]
{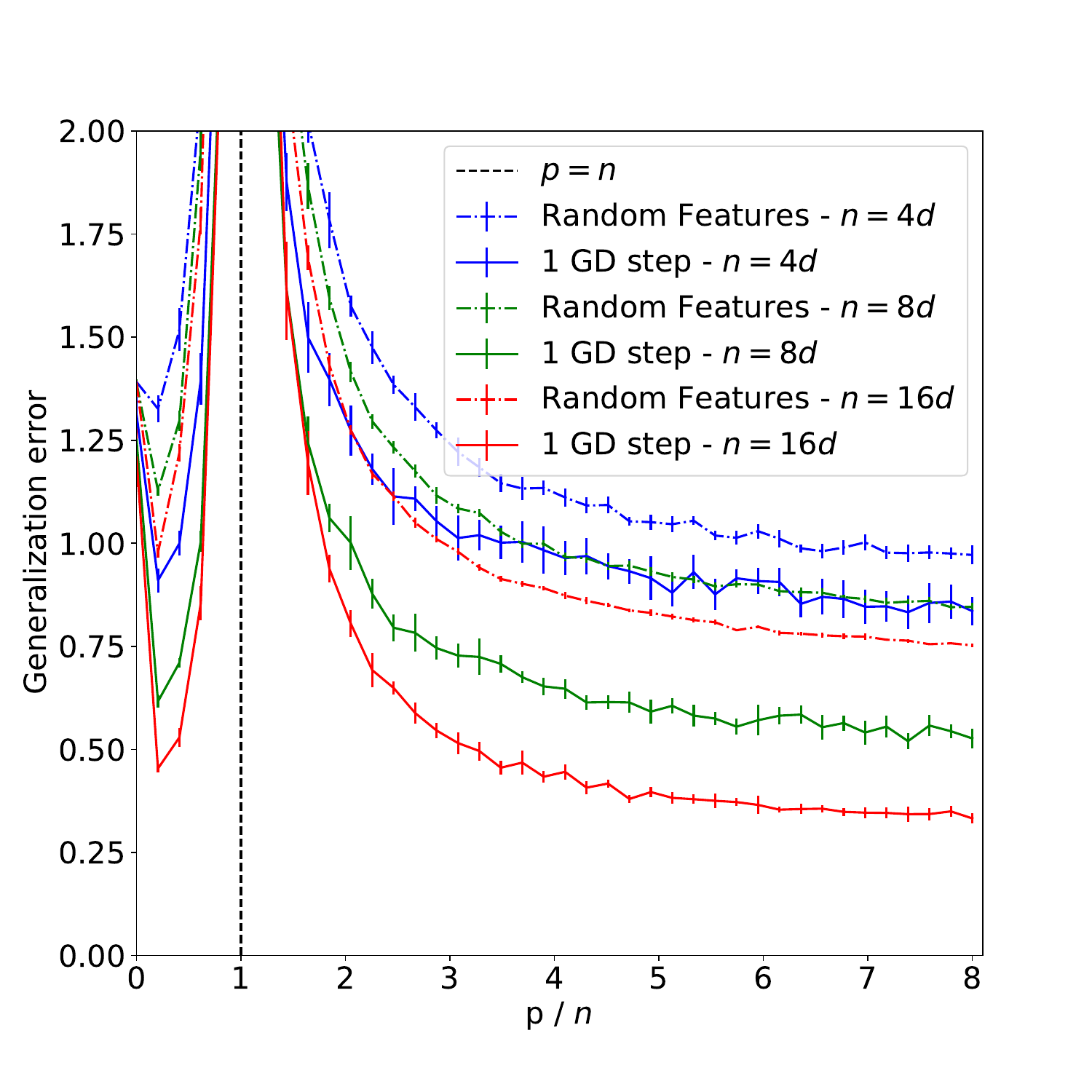}
\includegraphics[width = 0.48\textwidth]
{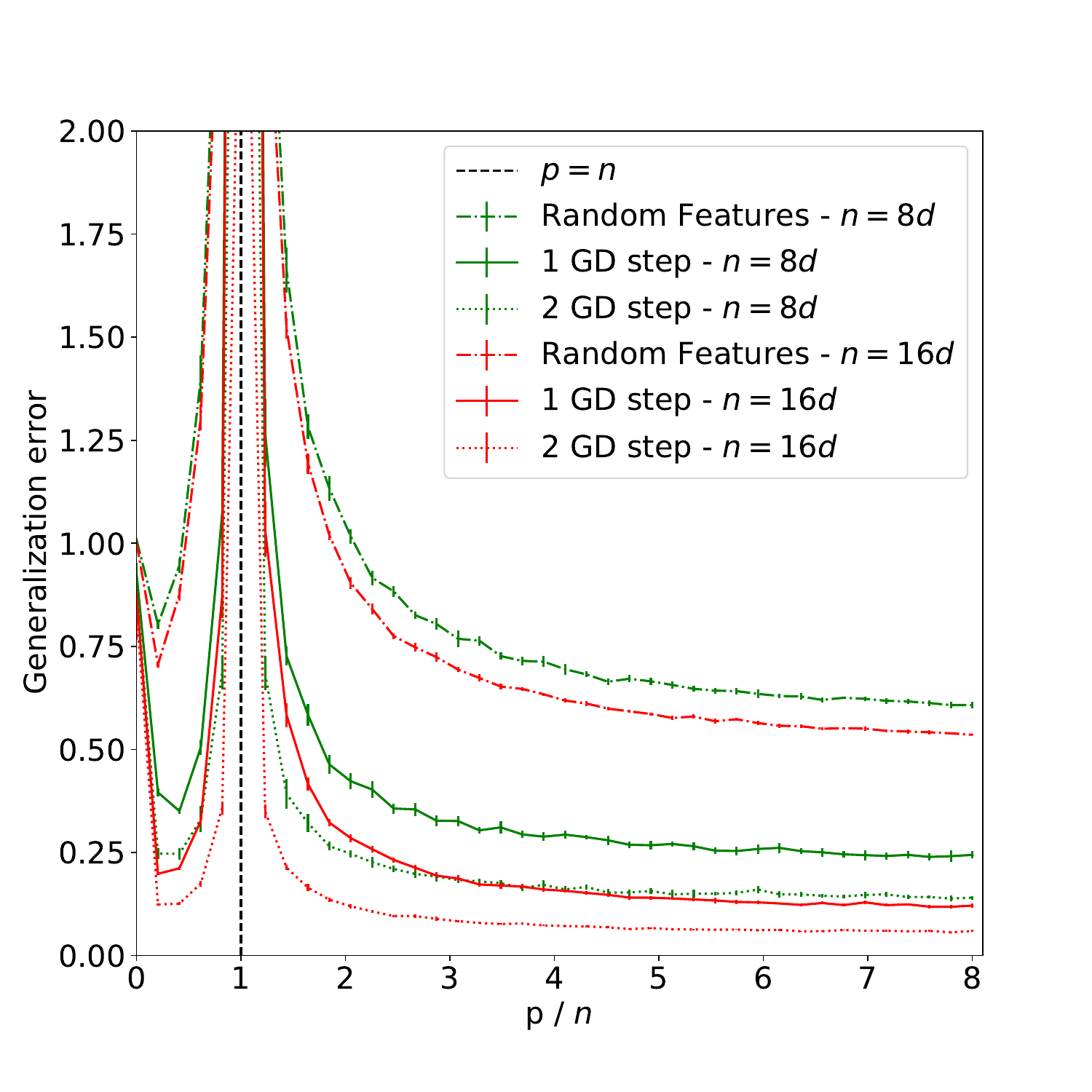}
        \caption{{\bf Learning 3-index targets with overparametried networks}. We illustrate how overparametrization helps improving generalization over random features in accordance with Table~\ref{table2}. The plot shows the generalization error as a function of the number of hidden neurons $p$ normalized by the number of samples used for the first layer training $n=\{4d,8d,16d\}$ with fixed dimension $d=2^8$. \textbf{Left}: $f^\star(\vec z)=z_1 + z_1 z_2 + g_3(z_1)z_3$, where the auxiliary function has leap index $\ell = 3$, here $g(z) = \tanh{z} - z\mathbb{E}_{\xi \sim \mathcal{N}(0,1)}\left[\xi \tanh{\xi}\right]$. While random features can only fit a linear model in this case, one step of gradient descent over $W$ allows to fit the $z_1 z_2$ and $g(z_1)z_3$ parts, resulting in a much lower generalization error with respect to random features, despite only the direction $z_1$ being learned in $W$. \textbf{Right}: $f^\star(\vec z) = z_1+ z_1z_2 + g_3(z_2)z_3$, where $g(\cdot)$ is defined above. In this case, two steps of gradient descent allow to improve generalization over both Random Features and one GD step by allowing the network to fit the $g(z_2)z_3$ term, linealry connected to $z_2$ that is learned only at the second step through the staircase property (Thm.~\ref{thm:staircase}).}
    \label{fig:postive_cget_appendix}
\end{figure}
\paragraph{Investigating the generalization performance} First, we investigate the generalization behavior in the upper panel of Fig.~\ref{fig:multiple_steps}. We modify slightly the training procedure in Alg.~\ref{alg:gd_training} to
perform the numerical experiments: at every gradient step on the first layer weights we train the second layer sequentially with ridge regression. The analysis of the test error behavior in the upper panel of Fig.~\ref{fig:multiple_steps} sheds light on the consequences of Thm.~\ref{thm:staircase} on the generalization performance of two-layer networks. Indeed, we observe a clear benefit in performing multiple gradient steps if the teacher function has a direction linearly connected to the rank-one spike in the gradient identified by $C_1(f^\star)$ (right panel), while if such linearly connected direction does not exist (left panel) the generalization performance does not improve relevantly over time, and the network is stuck on the ``linear subspace learning'' (see the upper right plot of Fig.~\ref{fig:fig_2}). These results are in perfect agreement with Thm.~\ref{thm:staircase}. 

\paragraph{Investigating representation learning efficiency:}In this paragraph we further analyze the claims of Thm.~\ref{thm:staircase} in the context of feature learning. The experiments done in the lower panel of Fig.~\ref{fig:multiple_steps} are closely related to the ones of Fig.~\ref{fig:d2_d3_regimes}. However, contrary to the previous setting, we study the cosine similarity of the \textit{projected gradient} $G^p =  G \Pi^\star$ in the teacher subspace. This quantity differs from the cosine similarity of the full gradient,  plotted in Fig.~\ref{fig:d2_d3_regimes}, as we lose completely the information about the share of the gradient lying in the subspace orthogonal to the teacher one. This divergence in the choices is due to the different illustrative goals of the figures: while in Fig.~\ref{fig:multiple_steps} we highlight the change in orientation of the gradient inside the teacher subspace after a few steps, hence not caring about the relative magnitude, in Fig.~\ref{fig:d2_d3_regimes} we contrast the magnitude of the true gradient with the one of a random object (blue circles) in order to claim the presence (or lack) of feature learning after a single step. The results in the lower panel of Fig.~\ref{fig:multiple_steps} are obtained iterating $2$ steps of the training procedure in Alg.~\ref{alg:gd_training}: in accordance with Thm.~\ref{thm:staircase} we observe delocalization of the projected gradient only if there are linearly connected directions that can be exploited to escape the spike given by the first Hermite coefficient $C_1 (f^\star)$ (right panel). Moreover, we are able to theoretically predict the orientation of the gradient at the second step as well (see Appendix.~\ref{sec:appendix:gd_proofs}). On the contrary, when such linearly connected directions do not exist, the gradient is stuck on the spike $C_1(f^\star)$ (Left panel). We elaborate on this last observation by checking that the lack of specialization persists iterating for more than two GD steps. We present the results in Fig.~\ref{fig:multiple_steps_non_staircase}: the gradient is stuck in the single index regime even as the training proceeds, again in agreement with Thm.~\ref{thm:staircase}. Moreover, we illustrate by changing the teacher functions, that the theoretical prediction of Thm.~\ref{thm:one_step_learning} on the gradient orientation, are valid beyond the symmetric teachers.


\paragraph{Multiple stairs} Exploiting the same visualization framework of Fig.~\ref{fig:multiple_steps}, we complement the picture  by studying a straightforward generalization in order to test Thm.~\ref{thm:staircase} on functions having multiple linearly connected directions to the previously learned one, or informally, "multiple-stairs function". The results are presented in Fig.~\ref{fig:3stairs} by considering the function $f_\star(\Vec{z}) = \sfrac {z_1}{3} + \sfrac{2z_1z_2}{3} + z_2 z_3 $; we consider $3$ steps in the training of Alg.~\ref{alg:gd_training}, the network is able to learn respectively $\Vec{e}_1, \Vec{e}_2, \Vec{e}_3$ after the first three steps of training in the proportional sample regime. This is clearly appreciable by studying the cosine similarity of the projected gradient on the teacher subspace: after the first step it is localized around $\vec{e}_1$, proceeding with training it has projections along $\Vec{e}_2$ while $\Vec{e}_3$ remains hidden, and only at the third step we obtain delocalization of the gradient along $\Vec{e}_3$. These results are in perfect agreement with Thm.~\ref{thm:staircase}. Note that the hierarchical learning framework of Thm.~\ref{thm:staircase} allows neurons to simultaneously specialize along different directions, as exemplified in Fig.~\ref{fig:fig_2} (see the bottom right plot). We observe one instance of this multidirectional staircase learning in Fig.~\ref{fig:120_learning} by considering the target $f_\star(\vec z) = \sfrac{z_1}3 + 2\He_2(z_1)z_2 + z_1 z_3 $: while the results are unchanged in the first step with respect to Fig.~\ref{fig:3stairs} (with only the $\vec e_1$ direction being learned), we observe that both directions $\vec e_2 \, \& \,\vec e_3$ are learned at the second step. 

\paragraph{Benefit of overparametrization}
In this paragraph we investigate benefit of overparametrization for two-layer networks trained with giant steps of GD when learning multi-index teacher functions (See Fig.~\ref{fig:features}). Theorem~\ref{thm:cget} precisely characterizes that in the diverging $p$ limit it is not possible to fit functions that are non-linear conditioned on the knowledge of the spike $C_1(f_\star)$ learned at the first step. However, this conditional form of Gaussian Equivalence does not prevent the learning of functions in orthogonal directions that are linearly connected to $C_1(f_\star)$ (Thm.~\ref{thm:staircase}). The results in Fig.~\ref{fig:features} clearly show these claims: when such linearly connected directions exist (left panel), one (giant) step of gradient descent training surpass the generalization performance of random features, while it is not possible otherwise (right panel). To produce the figure we fix the regularization strength to be infinitesimal $\lambda = 10^{-6}$ and we adapt the learning rate $\eta = 5p\sqrt{\frac{n}{d}}$ for different values of $p$. Moreover, we normalize the squared generalization risk by the variance of the target functions in order to have comparable y-axis for the two panel of Fig.~\ref{fig:features}. We complement these illustrations with an additional plot using the same numerical setup: Fig.~\ref{fig:postive_cget_appendix} extend the claims above to the $3-$index target functions and conclude the analysis of the examples included in Table~\ref{table2}. In both cases displayed in Fig.~\ref{fig:postive_cget_appendix} overparametrization helps improving the generalization with respect to random features and reduces the time iterations needed to learn the target compared to the underparametrized case. Although we do not consider the same example as Table~\ref{table2} to improve numerical stability, the overall claims of the table are left unchanged by substituting the the $\ell-$th Hermite polynomial $\He_{\ell}$ with the auxiliary function $g_{\ell}(\cdot)$ having leap index $\ell$. 

 \begin{figure}[t]
      \centering\includegraphics[width = \textwidth]{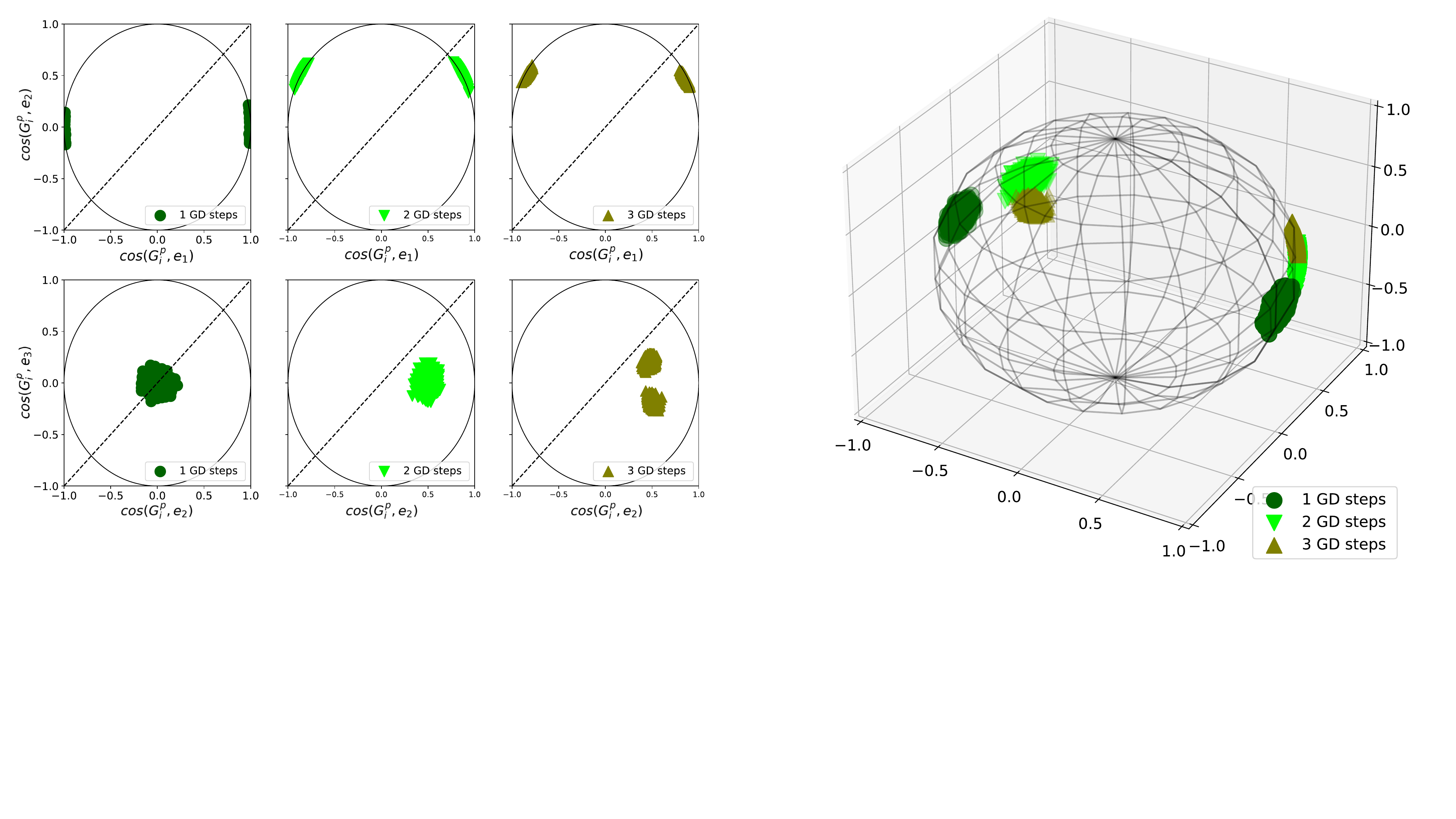}
    \vspace{-8.5em}
    \caption{{\bf Climbing multiple stairs.}  Fix the teacher function $f_\star(\Vec{z}) = \sfrac {z_1}{3} + \sfrac{2z_1z_2}{3} + z_2 z_3 $ and a $\rm{relu}$ student. 
    The plots show the cosine similarity of the projected gradient matrix $G^p$ inside the teacher subspace for all the $p$ neurons at a fixed ratio $\sfrac{n}{d} = 4$, plotted at different stages of the training following Alg.~\ref{alg:gd_training}. The plot shows the similarity in the $3D$ teacher subspace on the right, and two sections of it on the left: \textbf{Up:} $(\Vec{e}_1,\Vec{e}_2)$ plane. \textbf{Bottom:}$(\Vec{e}_2,\Vec{e}_3)$ plane. In accordance with Thm.~\ref{thm:staircase}, the gradient is first localized around $\Vec{e}_1$, then sequentially learns $\Vec{e}_2$, and only at the third step has components along $\Vec{e}_3$. }
        \label{fig:3stairs}
        \vspace{-1em}
    \end{figure}
\vspace{-1em}

\begin{figure}[t]
    \centering\includegraphics[width = \textwidth]{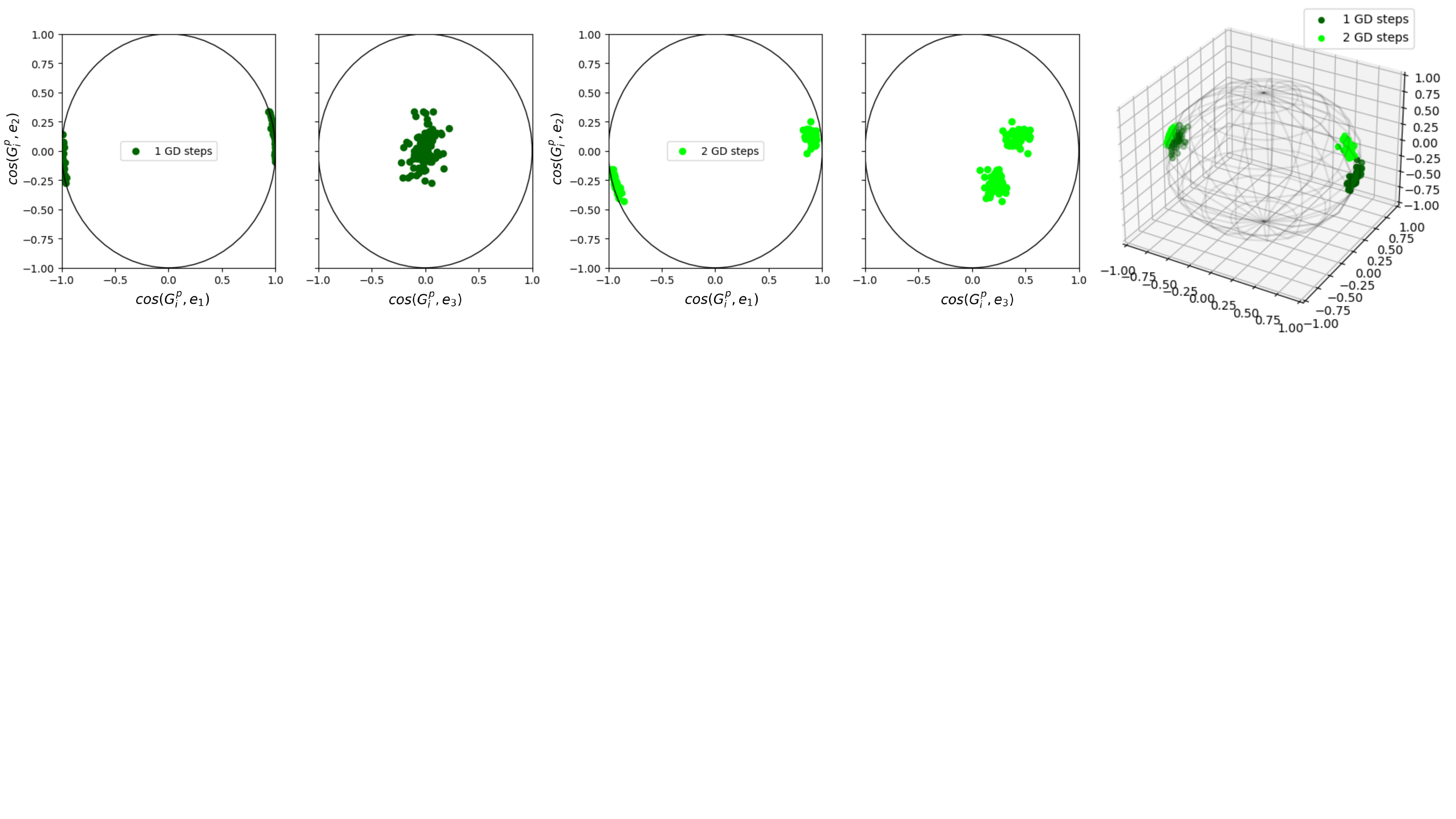}
    \vspace{-14.5em}
    \caption{{\bf Learning multiple directions at a time.}  Fix the teacher function $f^\star(\vec z) = \sfrac{z_1}3 + 2\He_2(z_1)z_2 + z_1 z_3$  and a $\rm{relu}$ student. The plots show the cosine similarity of the projected gradient matrix $G^p$ inside the teacher subspace for all the $p$ neurons at a fixed ratio $\sfrac{n}{d} = 4$, plotted at different stages of the training following Alg.~\ref{alg:gd_training}. The plot shows the similarity measure in different cases. \textbf{Left:} $(\Vec{e}_1,\Vec{e}_2)$ cross section. \textbf{Center:} $(\Vec{e}_3,\Vec{e}_2)$ cross section. \textbf{Right:} $3D$ teacher subspace $(\vec e_1, \vec e_2, \vec e_3)$.
    In accordance with Thm.~\ref{thm:staircase}, the gradient is first localized around the direction $\Vec{e}_1$, and then learns both directions $(\Vec{e}_3,\Vec{e}_2)$ at the second gradient step. }
        \label{fig:120_learning}
    \end{figure}

\begin{algorithm}
\caption{Training procedure}
\begin{algorithmic}[]
\label{alg:gd_training}
    \STATE  {\bfseries Choice of parameters} 
    Fix the data dimension and the width of the second layer $(d,p)$ and sample $(W_0,\Vec{a}_0)$ obeying eq.~\eqref{eq:sample_archit}. Fix a regularization parameter $\lambda$, and a number of GD steps $T_{max}$.
    \FOR{$n$ in a given range}
    \STATE {\bfseries Learning rate tuning} Fix the learning rate $\eta = \mathcal{O}(p\sqrt{\frac{n}{d}})$.
    \FOR{ $t < T_{max}$}
      \STATE {\bfseries Data generation } Sample the data matrix $Z \sim \mathcal{N}(0,I_{n \times d})$ and get the labels $Y = f_\star(Z) \in \mathbb{R}^n$
    \STATE {\bfseries Update first layer} Compute the gradient matrix $G_t =  \{\vec{G}^{(t)}_i\}_{i \in [p]}\in \mathbb{R}^{p \times d}$ and update $W$:
    \begin{align}
        \vec{G}_i^{(t)} &\leftarrow \frac{a_{0,i}}{\sqrt{p}} \cdot \frac1n\sum_{\nu=1}^n \vec{x}^\nu \sigma'(\langle \vec{w}_{i}^{(t)}, \vec{z}^\nu \rangle) \left(\hat{f}(\Vec{z}^{\nu},W_t,\vec{a}_0) - f^\star(\vec{z}^\nu)\right)\\
        W_{t+1} &\leftarrow W_t - \eta G_{t}
    \end{align}
    \IF{$t == T_{max}$}
    \STATE {\bfseries Train second layer} \review{Repeat the data generation step. Get the feature matrix  $X  \leftarrow \sigma(W_t Z)$ and compute the ridge estimator: 
    $$ \hat{\Vec{a}} \leftarrow \begin{cases}
        X^\top \left(XX^\top + \lambda I_n\right)^{-1}Y  & \textif n<p \\
        \left(X^\top X + \lambda I_p\right)^{-1}X^\top Y & \textif n>p \end{cases}  $$}
    \ENDIF
    \ENDFOR
    \ENDFOR
\end{algorithmic}
\end{algorithm}

\clearpage
\section{Gradient descent on the first layer}
\label{sec:appendix:gd_proofs}

\subsection{Technical assumptions}

We shall show our results under the following assumptions. First, since we assume that the leap index of $f^\star$ is at least one, and we setup the network to zero output the following assumption is unrestrictive:
\begin{assumption}
    The teacher function $f^\star$ and the student activation $\sigma$ both have their zero-th Hermite coefficient equal to 0. 
\end{assumption}

We shall also need a smoothness assumption:
\begin{assumption}\label{assump:smoothness}
    Both the student activation $\sigma$ and $g^*$ are continuous, and differentiable except possibly on a finite set of points. Further, the first two derivatives of $g^\star$ and the first three derivatives of $\sigma$ are bounded in $\dR$. 
\end{assumption}

\review{As we show in the proof of Theorem \ref{thm:staircase}, the above assumption can be relaxed to $\sigma, g^\star$ with polynomially bounded derivatives.}

\subsection{Preliminaries}

\paragraph{More on Hermite expansion}

We recall a few properties of the Hermite tensors of Definition \ref{def:hermite}. Up to symmetry, the tensors $\cH_k(\vec{x})$ are an orthonormal basis of $\ell^2(\dR^m, \gamma)$, in the sense that for any $\vec{i}, \vec{j} \in \dR^k$: 
\begin{equation}\label{eq:hilbert_orthonormal}
    \langle \cH_{k, \vec{i}}(\vec{x}), \cH_{k, \vec{j}}(\vec{x}) \rangle_{\gamma} = \frac{1}{|\mathfrak{o}(\vec{i})|} \ind_{\vec{i} \text{ is a permutation of } \vec{j}}
\end{equation}
where $|\mathfrak{o}(\vec{i})|$ is the number of distinct permutations of $\vec{i}$ \review{and $\mathcal{H}_{k}$ denote the Hermite tensors defined in~\ref{def:hermite}}.
It can be checked from the definition in \cite{grad_1949_note} that the $\cH_k$, and hence the $C_k(f)$, are basis-invariant, and hence represent an actual $k$-linear form on $\dR^m$. Further, the property \eqref{eq:hilbert_orthonormal} yields an immediate expression for the scalar product in $\ell^2(\dR^m, \gamma_m)$:
\begin{equation}\label{eq:app:hermite_scalar}
    \langle f, g \rangle_\gamma = \sum_{k \in \dN} \langle C_k(f), C_k(g) \rangle.
\end{equation}

Further, the Hermite coefficients of low-rank functions are straightforward to compute:
\begin{lemma}\label{lem:app:low_dim_hermite}
    Let $g: \dR^r \to \dR$, and a linear map $A \in \dR^{r \times d}$ such that $AA^\top = I_r$. Then the Hermite coefficients of $f(\vec{x}) = g(A \vec{x})$ are 
    \begin{equation}
        C_k(f) = C_k(g) \cdot (A, \dots, A),
    \end{equation}
    where $\cdot$ is the multilinear multiplication operator \citep{greub2012multilinear}.
\end{lemma}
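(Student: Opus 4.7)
My plan is to reduce the claim to a one-line manipulation of the exponential generating function for the Hermite tensors, from which the transformation rule under $\vec{x}\mapsto A\vec{x}$ falls out by matching coefficients. Concretely, I will rely on the identity
\[
\sum_{k\geq 0}\frac{1}{k!}\langle \cH_k(\vec{x}),\vec{t}^{\otimes k}\rangle \;=\; \exp\!\Big(\langle \vec{t},\vec{x}\rangle - \tfrac12\|\vec{t}\|^2\Big),\qquad \vec{x},\vec{t}\in\dR^m,
\]
which characterizes the symmetric parts of the $\cH_k$ uniquely. This encoding is much more convenient than the combinatorial formulas from \cite{grad_1949_note}, because it turns linear changes of variables into algebraic manipulations of the exponent.

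The first step is to establish the transformation rule $\langle T,\cH_k(A\vec{x})\rangle = \langle T\cdot(A,\dots,A),\cH_k(\vec{x})\rangle$ for every symmetric $k$-tensor $T$ on $\dR^r$. Applying the generating identity in dimension $r$ with test vector $\vec{s}\in\dR^r$, and using the isometry condition $AA^\top=I_r$ to rewrite $\|A^\top\vec{s}\|^2=\|\vec{s}\|^2$ and $\langle \vec{s},A\vec{x}\rangle=\langle A^\top\vec{s},\vec{x}\rangle$, I obtain
\[
\sum_{k\geq 0}\frac{1}{k!}\langle \cH_k(A\vec{x}),\vec{s}^{\otimes k}\rangle \;=\; \exp\!\Big(\langle A^\top\vec{s},\vec{x}\rangle - \tfrac12\|A^\top\vec{s}\|^2\Big) \;=\; \sum_{k\geq 0}\frac{1}{k!}\langle \cH_k(\vec{x}),(A^\top\vec{s})^{\otimes k}\rangle.
\]
Matching coefficients of $\vec{s}^{\otimes k}$ gives the claimed identity, upon noting that $(A^\top\vec{s})^{\otimes k} = \vec{s}^{\otimes k}\cdot(A,\dots,A)$ after contracting each slot.

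The second step is then routine. Assuming $g\in\ell^2(\dR^r,\gamma_r)$ (hence $f\in\ell^2(\dR^d,\gamma_d)$, since $A\vec{x}\sim\cN(0,I_r)$), I expand $g(\vec{y})=\sum_k\langle C_k(g),\cH_k(\vec{y})\rangle$, substitute $\vec{y}=A\vec{x}$, and apply the transformation rule slot-by-slot:
\[
f(\vec{x}) \;=\; \sum_k\langle C_k(g),\cH_k(A\vec{x})\rangle \;=\; \sum_k\langle C_k(g)\cdot(A,\dots,A),\cH_k(\vec{x})\rangle.
\]
By uniqueness of the Hermite expansion on $\dR^d$, this immediately reads off $C_k(f)=C_k(g)\cdot(A,\dots,A)$, which is the statement of the lemma.

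I do not expect any real obstacle. The only mild subtlety is justifying the interchange of summation and substitution in the display above, but this can be bypassed by working coefficient-by-coefficient: for any fixed symmetric tensor $T$ on $\dR^d$, the orthogonality relation \eqref{eq:app:hermite_scalar} together with $A\vec{x}\sim\cN(0,I_r)$ yields $\langle C_k(f),T\rangle$ as an explicit Gaussian expectation against $\langle T,\cH_k(\vec{x})\rangle$, and the same manipulation gives the claim without any convergence issues. The other piece of bookkeeping is keeping the multilinear-multiplication convention consistent across the two applications of the transformation identity, which is purely notational.
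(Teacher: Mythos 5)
Your proof is correct. The paper states Lemma~\ref{lem:app:low_dim_hermite} without proof (it is introduced as a fact that is ``straightforward to compute''), so there is no argument in the paper to compare against. Your generating-function approach is a clean and standard way to establish the transformation rule. The key points all check out: the isometry condition $AA^\top = I_r$ is exactly what makes $\|A^\top\vec{s}\| = \|\vec{s}\|$ so that the quadratic term in the exponent survives the change of variable unchanged, and the identity $(A^\top\vec{s})^{\otimes k} = \vec{s}^{\otimes k}\cdot(A,\dots,A)$ correctly translates the contraction into the multilinear-multiplication convention used in the statement. Matching degree-$k$ coefficients yields the transformation rule on rank-one symmetric tensors $\vec{s}^{\otimes k}$, which extend to all symmetric $T$ by polarization (a step you implicitly use and could spell out in one line). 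The final coefficient-by-coefficient bypass of the summation/substitution interchange via the orthogonality relation \eqref{eq:app:hermite_scalar} is exactly the right remedy, since $A\vec{x}\sim\cN(0,I_r)$ when $\vec{x}\sim\cN(0,I_d)$ guarantees $g\in\ell^2(\dR^r,\gamma_r)$ implies $f\in\ell^2(\dR^d,\gamma_d)$.
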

In particular, this implies that the singular vectors of $C_k^\star$ all belong to $V^\star$.

\paragraph{Concentration in Orlicz spaces} We recall the classical definition of Orlicz spaces:
 \begin{definition}
     For any $\alpha \in \dR$, let $\psi_\alpha(x) = e^{x^\alpha} - 1$. Let $X$ be a real random variable; the \emph{Orlicz norm} $\norm{X}_{\psi_\alpha}$ is defined as
     \begin{equation}
         \norm{X}_{\psi_\alpha} = \inf \left\{t > 0\::\: \dE\left[ \psi_\alpha\left(\frac{|X|}{t} \right)\right] \leq 1\right\}
     \end{equation}
 \end{definition}
 We refer to the monographs \cite{ledoux_1991_probability, vaart_1996_weak} for more information. We say that a random variable is sub-gaussian (resp. sub-exponential) if its $\psi_2$ (resp. $\psi_1$) norm is finite. The main use of this definition is the following concentration inequality: for a variable $X$ with finite Orlicz norm,
 \begin{equation}\label{eq:app:orlicz_concentration}
     \Pb*{\left| X - \dE X \right| > t\norm{X}_{\psi_\alpha}} \leq 2e^{-t^\alpha}.
 \end{equation}
 The Orlicz norms are sub-multiplicative, in the following sense:
 \begin{lemma}\label{lem:app:orlicz_submult}
     Let $X$ and $Y$ be two random variables. Then, for any $\alpha > 0$, there exists a constant $K_\alpha$ such that
     \begin{equation}
         \norm{XY}_{\psi_{\alpha/2}} \leq K_\alpha \norm{X}_{\psi_\alpha} \norm{Y}_{\psi_\alpha}
     \end{equation}
 \end{lemma}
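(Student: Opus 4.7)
The approach is to reduce the claim to the defining inequality of the Orlicz norm via a pointwise AM--GM bound followed by Cauchy--Schwarz. Write $s = \norm{X}_{\psi_\alpha}$ and $u = \norm{Y}_{\psi_\alpha}$, and assume both are finite (otherwise the bound is trivial). The key observation is that, applying the elementary inequality $ab \leq (a^2 + b^2)/2$ with $a = (|X|/s)^{\alpha/2}$ and $b = (|Y|/u)^{\alpha/2}$, one obtains the pointwise bound
\begin{equation*}
\left(\frac{|XY|}{s u}\right)^{\alpha/2} \;=\; \left(\frac{|X|}{s}\right)^{\alpha/2}\left(\frac{|Y|}{u}\right)^{\alpha/2} \;\leq\; \frac{1}{2}\left(\frac{|X|}{s}\right)^{\alpha} + \frac{1}{2}\left(\frac{|Y|}{u}\right)^{\alpha}.
\end{equation*}

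Exponentiating and taking expectations, the right-hand side factorizes as a product of two exponentials, so Cauchy--Schwarz yields
\begin{equation*}
\dE\!\left[\exp\!\left(\!\left(\tfrac{|XY|}{su}\right)^{\alpha/2}\right)\right]
\;\leq\; \sqrt{\dE\!\left[\exp\!\left(\!\left(\tfrac{|X|}{s}\right)^{\alpha}\right)\right]\;\dE\!\left[\exp\!\left(\!\left(\tfrac{|Y|}{u}\right)^{\alpha}\right)\right]}.
\end{equation*}
By the definition of the Orlicz norm, each expectation inside the square root is at most $2$, so the whole right-hand side is at most $2$. This is exactly the condition $\dE[\psi_{\alpha/2}(|XY|/(su))] \leq 1$, and hence by definition $\norm{XY}_{\psi_{\alpha/2}} \leq s u$. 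In fact this establishes the lemma with $K_\alpha = 1$.

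There is no real obstacle here: the AM--GM step is what turns the product into a sum inside the exponent, and Cauchy--Schwarz is what lets us separate the two factors. The only mild subtlety is checking that the defining infimum in the Orlicz norm is attained (so that $\dE[\exp((|X|/s)^\alpha)] \leq 2$ with $s = \norm{X}_{\psi_\alpha}$), which holds by monotone convergence; if one worries about boundary issues, the clean fix is to replace $s,u$ by $s + \varepsilon, u + \varepsilon$, run the same argument, and let $\varepsilon \to 0$. The constant $K_\alpha$ in the statement can be absorbed into a $(1+\varepsilon)$ factor if needed, so no sharpness tracking is required.
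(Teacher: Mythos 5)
Your proof is correct, and it actually gives the sharp constant $K_\alpha = 1$: the AM--GM step $(|X|/s)^{\alpha/2}(|Y|/u)^{\alpha/2} \le \tfrac12(|X|/s)^\alpha + \tfrac12(|Y|/u)^\alpha$ converts the product into a sum inside the exponential, and Cauchy--Schwarz then cleanly removes the $\tfrac12$ factors, so each factor under the square root is bounded by $2$ directly from the defining inequality of the Orlicz norm. The remark about attaining the infimum (or working with $s+\varepsilon$, $u+\varepsilon$) is exactly the right way to dispose of the boundary issue. Note that the paper itself does not prove this lemma --- it is stated without proof and the surrounding text points to the monographs of Ledoux--Talagrand and van der Vaart--Wellner for background on Orlicz spaces --- so there is no ``paper proof'' to compare against; your argument is the standard one and is fully self-contained, which is arguably an improvement over a bare citation.
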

 Finally, we shall use the following theorem:
 \begin{theorem}[Theorem 6.2.3 in \cite{ledoux_1991_probability}] \label{thm:app:orlicz_sum}
     Let $X_1, \dots, X_n$ be $n$ independent random variables with zero mean and second moment $\dE X_i^2 = \sigma_i^2$. Then,
     \begin{equation}
         \norm*{\sum_{i=1}^n X_i}_{\psi_\alpha} \leq K_\alpha \log(n)^{1/\alpha} \left(\sqrt{\sum_{i=1}^n \sigma_i^2} + \max_{i}\norm{X_i}_{\psi_\alpha} \right)
     \end{equation}
 \end{theorem}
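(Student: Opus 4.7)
The plan is a standard truncation-plus-Bernstein argument. First I would pick a truncation level $M = C\log(n)^{1/\alpha}\max_i\norm{X_i}_{\psi_\alpha}$ with $C$ a large absolute constant, and split $X_i = \tilde X_i + R_i$ where $\tilde X_i = X_i\ind\{|X_i|\leq M\}$. The Orlicz tail bound \eqref{eq:app:orlicz_concentration} applied at threshold $M$ yields $\mathbb{P}(|X_i|>M)\leq 2n^{-C^\alpha}$, so by a union bound $R_i\equiv 0$ for every $i$ on an event of probability $1-O(n^{1-C^\alpha})$. The contribution of $\sum_i R_i$ to the Orlicz norm of $\sum_i X_i$ on the exceptional event can then be absorbed into the $\max_i\norm{X_i}_{\psi_\alpha}$ term at the cost of the $\log(n)^{1/\alpha}$ factor.

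Second, the truncated variables $\tilde X_i$ are bounded by $M$ almost surely and satisfy $\dE\tilde X_i^2\leq\sigma_i^2$; the mean shift $|\dE\tilde X_i|$ is of order $\sigma_i\sqrt{\mathbb{P}(|X_i|>M)}$ by Cauchy--Schwarz and hence negligible. Bernstein's inequality then gives, with $V=\sum_i\sigma_i^2$,
\begin{equation*}
\mathbb{P}\!\left(\left|\sum_i(\tilde X_i-\dE\tilde X_i)\right|>t\right)\leq 2\exp\!\left(-c\min\!\left(\frac{t^2}{V},\,\frac{t}{M}\right)\right).
\end{equation*}
To convert this two-regime tail into an Orlicz bound I would use the standard equivalence $\norm{Y}_{\psi_\alpha}\asymp\inf\{K>0:\mathbb{P}(|Y|>t)\leq 2e^{-(t/K)^\alpha}\text{ for all }t\}$: in the sub-Gaussian regime $t\leq V/M$ the tail $e^{-ct^2/V}$ is dominated by $e^{-(t/K)^\alpha}$ as soon as $K\gtrsim\sqrt V$, while in the sub-exponential regime $t\geq V/M$ the tail $e^{-ct/M}$ is dominated as soon as $K\gtrsim M$. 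Choosing $K = K_\alpha\log(n)^{1/\alpha}(\sqrt V+\max_i\norm{X_i}_{\psi_\alpha})$ covers both regimes and matches the stated bound, the logarithmic factor being precisely the one paid at the truncation step.

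The main obstacle is this last step in the regime $\alpha>1$, where the Bernstein sub-exponential tail $e^{-ct/M}$ is strictly weaker than the target envelope $e^{-(t/K)^\alpha}$ at very large $t$, so the direct comparison above breaks down. The cleanest remedy is to replace Bernstein by a Rosenthal-type moment inequality and invoke the moment characterization $\norm{Y}_{\psi_\alpha}\asymp\sup_p p^{-1/\alpha}(\dE|Y|^p)^{1/p}$, optimizing at $p\sim\log n$ to recover exactly the $\log(n)^{1/\alpha}$ factor and the $\sqrt V+\max_i\norm{X_i}_{\psi_\alpha}$ scale. For $\alpha\leq 1$ no such refinement is needed since Bernstein already dominates $\psi_\alpha$. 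The remaining bookkeeping --- truncation probability, mean correction, and matching the transition point $t\sim V/M$ --- is routine, and a complete treatment handling all $\alpha>0$ uniformly is given in Chapter 6 of \cite{ledoux_1991_probability}.
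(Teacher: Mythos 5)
This statement is cited from Ledoux--Talagrand without an in-paper proof, so there is no proof to compare against; I will assess the proposal on its own terms. Your high-level plan (truncate, apply Bernstein to the truncated sum, compare the two-regime Bernstein tail against the $\psi_\alpha$ envelope, and pay the $\log(n)^{1/\alpha}$ factor through the truncation level $M = C\log(n)^{1/\alpha}\max_i\norm{X_i}_{\psi_\alpha}$) is natural, and the $\alpha\le 1$ tail comparison in the middle step is correct. However, the key step you wave away --- controlling the remainder $\sum_i R_i$ --- is a genuine gap, not routine bookkeeping. The only estimate available for $\sum R_i$ is $\dP(\sum_i R_i\neq 0)\le 2n^{1-C^\alpha}$, a constant in $t$, which cannot by itself produce the required envelope $\dP(|\sum R_i|>t)\le 2e^{-(t/K)^\alpha}$ uniformly in $t$: for $t$ in the range $K\cdot(\log n)^{1/\alpha}\lesssim t\lesssim nB(\log n)^{1/\alpha}$ the budget $2e^{-(t/K)^\alpha}\approx n^{-C^\alpha}$ is a factor $n$ smaller than your probability bound. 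The conditionally-on-the-complement estimate $|\sum R_i|\le n\max_i|X_i|$ is too lossy, and summing the $\psi_\alpha$ norms of the $R_i$ also fails by a factor polynomial in $n$. Ledoux--Talagrand instead prove this via Hoffmann--J{\o}rgensen's inequality (Prop.~6.8), which controls the maximal partial sum rather than a fixed-level truncation remainder; you should replace the ``absorption'' step by that argument (or an equivalent Hoffmann--J{\o}rgensen-type maximal inequality).

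The second problem is more serious: the theorem as written is \emph{false} for $\alpha>1$, so the proposed Rosenthal/moment ``refinement'' cannot succeed, and your closing claim that Chapter 6 of Ledoux--Talagrand treats all $\alpha>0$ uniformly is incorrect --- their Theorem 6.21 is restricted to $0<\alpha\le 1$. A concrete counterexample at $\alpha=2$: let $X_1,\dots,X_n$ be i.i.d.\ with $X_i=\pm M$ each with probability $1/(2n)$ and $X_i=0$ otherwise. Then $\sqrt{\sum_i\sigma_i^2}=M$ and $\max_i\norm{X_i}_{\psi_2}\asymp M/\sqrt{\log n}$, so the right-hand side of the claimed bound is $\asymp K_2\,M\sqrt{\log n}$. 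On the other hand, the event $\{\sum_i X_i=nM\}$ has probability $(2n)^{-n}$, which forces
\begin{equation*}
\norm*{\sum_i X_i}_{\psi_2}\ \gtrsim\ M\sqrt{\frac{n}{\log n}},
\end{equation*}
exceeding the claimed bound for $n$ large. So the bound genuinely degrades for $\alpha>1$, and no moment-inequality argument can rescue the stated inequality there. This does not affect the paper itself (which only invokes the theorem for $\alpha\le 1$, in fact $\alpha=1/2$), but your proof sketch should restrict to $\alpha\le 1$ and drop the $\alpha>1$ branch entirely.
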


\paragraph{Preliminary computations} We begin with a few useful preliminary computations. First, since $\vec{w}_i^0 \sim \Unif(\dS^{d-1})$, the following lemma holds:
\begin{lemma}\label{lem:app:bound_m}
    With probability at least $1 - cpe^{-c\log(d)^2}$, we have for any $i \in [p]$ and $k \in [r]$:
    \begin{equation}
        \norm{\vec{\pi}_i^0} \leq \frac{\sqrt{r}\log(d)}{\sqrt{d}} 
    \end{equation}
\end{lemma}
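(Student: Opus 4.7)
The statement is a standard concentration fact about projections of uniform vectors on the sphere onto a fixed $r$-dimensional subspace. The plan is to represent $\vec{w}_i^0$ as a normalized Gaussian and split the analysis of the ratio into numerator and denominator.

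First, I would invoke the representation $\vec{w}_i^0 = \vec{g}_i / \norm{\vec{g}_i}$ where $\vec{g}_1, \dots, \vec{g}_p \sim \cN(0, I_d)$ are i.i.d., which is valid by rotational invariance. Since $\Pi^\star$ is the orthogonal projector onto the $r$-dimensional deterministic subspace $V^\star$, the vector $\Pi^\star \vec{g}_i$ is a standard Gaussian in $V^\star$, so $\norm{\Pi^\star \vec{g}_i}^2$ is $\chi^2_r$-distributed. By standard sub-exponential concentration of $\chi^2$ variables (Laurent--Massart), for any $t \geq 1$,
\begin{equation*}
\Pb*{\norm{\Pi^\star \vec{g}_i} \geq \sqrt{r} + t} \leq e^{-t^2/2},
\end{equation*}
and choosing $t = c\sqrt{r}\log(d)$ gives $\norm{\Pi^\star \vec{g}_i} \leq c'\sqrt{r}\log(d)/2$ with probability at least $1 - e^{-c''\log(d)^2}$ (using that $r$ is fixed as $d \to \infty$).

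Next, I would control the denominator: by the same concentration inequality applied to $\norm{\vec{g}_i}^2 \sim \chi^2_d$, we have $\norm{\vec{g}_i}^2 \geq d/2$ with probability at least $1 - e^{-cd}$, which in particular is much larger than $1 - e^{-c\log(d)^2}$. Combining the two bounds on a single event and taking the ratio,
\begin{equation*}
\norm{\vec{\pi}_i^0} = \frac{\norm{\Pi^\star \vec{g}_i}}{\norm{\vec{g}_i}} \leq \frac{c'\sqrt{r}\log(d)/2}{\sqrt{d/2}} \leq \frac{\sqrt{r}\log(d)}{\sqrt{d}}
\end{equation*}
after adjusting absolute constants. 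A union bound over $i \in [p]$ then yields the claimed probability $1 - cp\,e^{-c\log(d)^2}$.

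There is no real obstacle here; the only care required is to choose the deviation parameter $t$ to scale like $\sqrt{r}\log(d)$ so that the Gaussian tail produces the $e^{-\log(d)^2}$ factor matching the statement, while ensuring the constant in front of $\sqrt{r}\log(d)/\sqrt{d}$ can be reduced to $1$ by absorbing factors into the constants appearing in $c$. The $r$-dependence is trivially captured because $r$ is fixed and $\sqrt{r}$ simply multiplies the leading Gaussian fluctuation scale.
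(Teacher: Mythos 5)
Your proof is correct, and since the paper simply states Lemma~\ref{lem:app:bound_m} without proof (treating it as a standard concentration fact), your argument supplies the omitted derivation by exactly the natural route: represent $\vec{w}_i^0$ as a normalized Gaussian, note that $\Pi^\star \vec{g}_i$ is an $r$-dimensional standard Gaussian (so $\norm{\Pi^\star\vec{g}_i}^2\sim\chi^2_r$, equivalently use Gaussian Lipschitz concentration on the $1$-Lipschitz map $x\mapsto\norm{\Pi^\star x}$), bound the denominator via $\chi^2_d$ concentration, intersect the two events, and union bound over $i$. The bookkeeping to absorb constants so that the final bound reads exactly $\sqrt{r}\log(d)/\sqrt{d}$ works as you say for $d$ sufficiently large (take, say, $t=\tfrac{1}{2}\sqrt{r}\log d$ in the numerator and $\norm{\vec{g}_i}^2\geq d/2$ in the denominator; the failure probability $e^{-r\log(d)^2/8}+e^{-cd}$ is comfortably within the claimed $e^{-c\log(d)^2}$). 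One point worth making explicit, though it does not affect the conclusion, is that $\Pi^\star\vec{g}_i$ and $\norm{\vec{g}_i}$ are not independent; you correctly avoid any independence claim by intersecting the two high-probability events rather than factoring.
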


Let $\vec{g}_i$ be the negative gradient for the $i$-th neuron at initialization:
\begin{equation}
    \vec{g}_i = -\nabla_{\vec{w}_j} \cL\left(\hat f(\vec{z}^\nu; W^0, \vec{a}), f^\star(\vec{z}^\nu)\right).
\end{equation}
Since at initialization the output of the network is exactly zero, we have
\begin{equation}\label{eq:grad}
    \vec{g}_i = \frac{a_i}{\sqrt{p}} \cdot \frac1n\sum_{\nu=1}^n \vec{z}^\nu \sigma'(\langle \vec{w}^{0}_i, \vec{z}^\nu \rangle) f^\star(\vec{z}^\nu)
\end{equation}
Finally, the update equation for $\norm{\vec{w}_i}$ reads
\begin{equation}\label{eq:app:norm_update}
    \norm{\vec{w}_i^1}^2 = 1 + 2 \eta \langle \vec{w}_i^0, \vec{g_i} \rangle + \eta^2\norm{\vec{g}_i}^2
\end{equation}

\subsection{Computing expectations}

We begin by a simple computation of the expectation of $\vec{g}_i$: 
\begin{lemma}\label{lem:app:grad_exp}
    For any $i \in [p]$, we have
    \begin{equation}
        \dE[\vec{g}_i] = \frac{a_i}{\sqrt{p}}\left(\sum_{k = 0}^\infty c_{k+2}\, \langle (\vec{w}_i^0)^{\otimes k}, C_{k}^\star \rangle \,\vec{w}_i + \sum_{k = 0}^\infty c_{k+1}\,  C_{k+1}^\star \times_{1 \dots k} (\vec{w}_i^0)^{\otimes k} \right)
    \end{equation}
        where the last multiplication is a product over the first $k$ axes of $C_{k+1}$ (and thus results in a vector) \review{and $(c_k)_{k \leq 0}$ denote the Hermite coefficients of $\sigma$}.
\end{lemma}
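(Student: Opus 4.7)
The expectation factors over the i.i.d.\ sum, so
\[
\dE[\vec g_i] \;=\; \frac{a_i^0}{\sqrt{p}}\;\dE_{\vec z\sim\cN(0,I_d)}\!\bigl[\vec z\,\sigma'(\langle \vec w_i^0,\vec z\rangle)\,f^\star(\vec z)\bigr],
\]
and the plan is to evaluate this single-sample expectation in closed form. The first step is a Gaussian integration-by-parts (Stein's identity) applied coordinate-wise: for $\phi(\vec z)=\sigma'(\langle \vec w_i^0,\vec z\rangle)f^\star(\vec z)$, the identity $\dE[z_j\phi(\vec z)]=\dE[\partial_j\phi(\vec z)]$ yields
\[
\dE[\vec z\,\sigma'(\langle \vec w_i^0,\vec z\rangle)f^\star(\vec z)] \;=\; \vec w_i^0\,A \;+\; \vec B,
\]
with $A=\dE[\sigma''(\langle \vec w_i^0,\vec z\rangle)f^\star(\vec z)]$ and $\vec B=\dE[\sigma'(\langle \vec w_i^0,\vec z\rangle)\nabla f^\star(\vec z)]$. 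The smoothness hypothesis on $\sigma$ and $g^\star$ in Assumption \ref{assump:smoothness} makes this legitimate and kills any boundary terms.

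Next I expand everything in Hermite bases. Writing $\sigma(t)=\sum_{k\geq 0}c_k\He_k(t)$ gives $\sigma'(t)=\sum_{k\geq 0}(k+1)c_{k+1}\He_k(t)$ and $\sigma''(t)=\sum_{k\geq 0}(k+2)(k+1)c_{k+2}\He_k(t)$, while Definition \ref{def:hermite} gives $f^\star(\vec z)=\sum_{k\geq 0}\langle C_k^\star,\cH_k(\vec z)\rangle$. Since $\|\vec w_i^0\|=1$, the identity $\He_k(\langle \vec w_i^0,\vec z\rangle)=\sqrt{k!}\,\langle (\vec w_i^0)^{\otimes k},\cH_k(\vec z)\rangle$ combined with the tensor orthogonality \eqref{eq:hilbert_orthonormal} implies that only same-degree Hermite components contribute in $A$, yielding (up to the combinatorial normalization absorbed into the coefficients $c_k$)
\[
A \;=\; \sum_{k\geq 0} c_{k+2}\,\langle (\vec w_i^0)^{\otimes k},C_k^\star\rangle.
\]

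For $\vec B$ I use $\nabla f^\star(\vec z)=\sum_{k\geq 0}\nabla\langle C_k^\star,\cH_k(\vec z)\rangle$ together with the standard tensor Hermite recurrence $\nabla \cH_{k+1}(\vec z)=(k+1)\,\cH_k(\vec z)$ (viewing both sides as symmetric tensors and contracting the extra axis), which reshuffles the sum so that the $k$-th term now couples the degree-$k$ Hermite component of $\sigma'$ against $C_{k+1}^\star$. Applying the same orthogonality as above gives
\[
\vec B \;=\; \sum_{k\geq 0} c_{k+1}\,C_{k+1}^\star\times_{1\dots k}(\vec w_i^0)^{\otimes k},
\]
and combining the two pieces (with the prefactor $a_i^0/\sqrt p$) yields the stated identity, after relabeling $\vec w_i\equiv \vec w_i^0$ in the first sum.

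The computation is routine; the only real nuisance is the combinatorial bookkeeping, since the various normalization conventions (the $\sqrt{k!}$ relating $\He_k(\langle \vec w,\vec z\rangle)$ to $\langle \vec w^{\otimes k},\cH_k\rangle$, the $(k+1)$ and $(k+2)(k+1)$ factors produced by differentiating $\He_{k+1}$ and $\He_{k+2}$, and the $k!$ appearing in $\|\He_k\|_\gamma^2$) must telescope into the clean coefficients $c_{k+1},c_{k+2}$ that appear in the statement. A secondary but standard point is justifying the term-by-term interchange of infinite sum and expectation in Step 2; this follows from dominated convergence, using the bounded-derivative hypothesis of Assumption \ref{assump:smoothness} to apply Cauchy--Schwarz against the $L^2(\gamma)$-convergent partial sums of the Hermite expansions of $\sigma'$, $\sigma''$ and $f^\star$.
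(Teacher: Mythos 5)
Your proof follows essentially the same route as the paper's: Stein's identity (Gaussian integration by parts) to split the expectation into $\vec w_i^0 A+\vec B$ with $A=\dE[\sigma''(\langle \vec w_i^0,\vec z\rangle)f^\star(\vec z)]$ and $\vec B=\dE[\sigma'(\langle \vec w_i^0,\vec z\rangle)\nabla f^\star(\vec z)]$, followed by Hermite expansion and orthogonality for each piece. The paper simply packages the Hermite-coefficient computation for $\sigma'(\langle\vec w,\cdot\rangle)$ and $\sigma''(\langle\vec w,\cdot\rangle)$ through Lemma~\ref{lem:app:low_dim_hermite} and one application of \eqref{eq:app:hermite_scalar} per term, which silently fixes a normalization of the Hermite tensors $\cH_k$ in which the coefficient of $\sigma''$ at degree $k$ is \emph{exactly} $c_{k+2}\vec w^{\otimes k}$ (no combinatorial prefactors). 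Your more explicit rendition introduces factors $(k+1)$, $(k+2)(k+1)$, and $\sqrt{k!}$ from the probabilist's $\He_k$-derivative rule and the identity relating $\He_k(\langle\vec w,\vec z\rangle)$ to $\langle \vec w^{\otimes k},\cH_k(\vec z)\rangle$, and you assert — without verifying — that these telescope to reproduce the clean coefficients $c_{k+1},c_{k+2}$ of the statement. That assertion is only true once the convention for the $c_k$'s is pinned down (the paper's $c_k$ are implicitly defined relative to its tensor-Hermite basis rather than the raw $\He_k$'s), so your proof has a small unchecked bookkeeping step; but the substance of the argument, the use of Stein's lemma, the splitting into the two Hermite sums, and the appeal to dominated convergence under Assumption~\ref{assump:smoothness}, all match the paper.
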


\begin{proof}
    By Stein's lemma, for any $\vec{w}$, we have
    \begin{align*}
        \dE\left[ \vec{z} \sigma'(\langle \vec{w}, \vec{z} \rangle) f^\star(\vec{z})\right] &= \dE\left[ \nabla_{\vec{z}}\sigma'(\langle \vec{w}, \vec{z} \rangle) f^\star(\vec{z})\right] + \dE\left[ \sigma'(\langle \vec{w}, \vec{z} \rangle) \nabla_{\vec{z}}f^\star(\vec{z})\right] \\
        &= \vec{w} \dE\left[ \sigma''(\langle \vec{w}, \vec{z} \rangle) f^\star(\vec{z})\right] + \dE\left[ \sigma'(\langle \vec{w}, \vec{z} \rangle) \nabla_{\vec{z}}f^\star(\vec{z})\right] 
    \end{align*}
    From Lemma \ref{lem:app:low_dim_hermite}, the $k$-th Hermite coefficient of $\vec{z} \mapsto \sigma''(\langle \vec{w}, \vec{z} \rangle)$ is $c_{k+2} \, \vec{w}^{\otimes k}$, where the $(c_k)_{k \geq 0}$ are the Hermite coefficients of $\sigma$. By two applications of the scalar product formula \eqref{eq:app:hermite_scalar}, we find
    \begin{align}
    \label{eq:app:herm_corr}
        \dE\left[ \vec{z} \sigma'(\langle \vec{w}, \vec{z} \rangle) f^\star(\vec{z})\right] &= \sum_{k = 0}^\infty c_{k+2}\, \langle \vec{w}^{\otimes k}, C_{k}^\star \rangle \vec{w} + \sum_{k = 0}^\infty c_{k+1}\,  C_{k+1}^\star \times_{1 \dots k} \vec{w}^{\otimes k}.
    \end{align}
\end{proof}

\paragraph{Truncating the expansions} Now, we show that the expectations in Lemma \ref{lem:app:grad_exp} can be truncated at the leap index term.

\begin{lemma}\label{lem:app:grad_exp_bounds}
    With probability at least $1 - cpe^{-c\log(d)^2}$, for every $k \geq 0$ and $i \in [p]$, we have
    \begin{equation}
        \left| \langle C_k^\star, (\vec{w}_i^0)^{\otimes k} \rangle \right| \leq c \, \left(\frac{\sqrt{r} \log(d)}{\sqrt{d}} \right)^{k} \quand \norm*{C_{k+1}^\star \times_{1 \dots k} (\vec{w}_i^0)^{\otimes k}} \leq c \left(\frac{\sqrt{r} \log(d)}{\sqrt{d}} \right)^{k} 
    \end{equation}
    As a result, if $\ell$ is the leap index of $f^\star$,
    \begin{equation}\label{eq:app:grad_exp_truncated}
        \norm*{\dE[\vec{g}_i] - \review{a_i}C_{\ell}^\star \times_{1 \dots (\ell-1)} (\vec{w}_i^0)^{\otimes (\ell-1)}} = \cO\left( \frac{r^{\ell/2}\polylog(d)}{d^{\ell/2}} \right)
    \end{equation}
\end{lemma}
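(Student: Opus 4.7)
My plan is to reduce both claims to a single mechanism — that every tensor $C_k^\star$ has all of its non-zero components supported in the $r$-dimensional teacher subspace $V^\star$ — and then exploit the small overlap of the random initialization with $V^\star$ provided by Lemma \ref{lem:app:bound_m}. Concretely, I would first invoke Lemma \ref{lem:app:low_dim_hermite} to write
\[
    C_k^\star = C_k(g^\star) \cdot (W^\star, \dots, W^\star),
\]
where $W^\star \in \dR^{r\times d}$ is the matrix whose rows are the orthonormal teacher vectors. Because $W^\star (W^\star)^\top = I_r$, contracting $C_k^\star$ against any vector $\vec v \in \dR^d$ is isometric to contracting $C_k(g^\star)$ against $W^\star \vec v$, and $\|W^\star \vec v\| = \|\Pi^\star \vec v\|$.

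From this reformulation the two scalar inequalities follow by an operator-norm argument for symmetric tensors: since $g^\star$ has bounded derivatives under Assumption \ref{assump:smoothness}, each $\|C_k(g^\star)\|_{\mathrm{op}}$ is bounded by a constant depending only on $g^\star$, and one has
\[
    \bigl|\langle C_k^\star, (\vec w_i^0)^{\otimes k}\rangle\bigr| \leq \|C_k(g^\star)\|_{\mathrm{op}} \, \|\vec\pi_i^0\|^k, \qquad \bigl\|C_{k+1}^\star \times_{1\dots k} (\vec w_i^0)^{\otimes k}\bigr\| \leq \|C_{k+1}(g^\star)\|_{\mathrm{op}} \, \|\vec\pi_i^0\|^k,
\]
the second bound following by duality against an arbitrary unit vector in the remaining axis and symmetry of $C_{k+1}^\star$. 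Substituting the high-probability bound $\|\vec\pi_i^0\| \leq \sqrt r \log(d)/\sqrt d$ from Lemma \ref{lem:app:bound_m} and taking a union bound over $i \in [p]$ yields the first two claimed inequalities on the same good event of probability $1 - cpe^{-c\log(d)^2}$.

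For the truncation \eqref{eq:app:grad_exp_truncated}, I would plug these bounds into the expansion of Lemma \ref{lem:app:grad_exp}. By the definition of the leap index, $C_k^\star = 0$ for $k<\ell$, so the first series in that lemma starts at $k = \ell$ (order $(r/d)^{\ell/2}$) and the second at $k = \ell - 1$ (order $(r/d)^{(\ell-1)/2}$). The dominant contribution is therefore the $k=\ell-1$ term of the second series, namely a constant multiple of $C_\ell^\star \times_{1\dots(\ell-1)} (\vec w_i^0)^{\otimes(\ell-1)}$; every remaining term carries at least one additional power of $\|\vec\pi_i^0\|$, so after subtracting this leading piece the residual is bounded by the geometric tail
\[
    \sum_{k \geq \ell} \bigl(|c_{k+2}| \|C_k(g^\star)\|_{\mathrm{op}} + |c_{k+1}| \|C_{k+1}(g^\star)\|_{\mathrm{op}}\bigr) \|\vec\pi_i^0\|^k \leq C \|\vec\pi_i^0\|^\ell \sum_{j \geq 0} \rho^j,
\]
for some $\rho < 1$ once $d$ is large enough. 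Summability of the constants in the parentheses is where the smoothness of $g^\star$ and $\sigma$ is used: bounded derivatives and Parseval in $\ell^2(\dR^r, \gamma_r)$ (respectively in $\ell^2(\dR, \gamma_1)$) force $\|C_k(g^\star)\|_{\mathrm{op}}$ and $|c_k|$ to be at least square-summable, which is enough.

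The main technical obstacle is exactly this last step: one needs the operator norms $\|C_k(g^\star)\|_{\mathrm{op}}$ to be summable (and ideally uniformly bounded in $k$) so that the geometric tail genuinely collapses to its first term. The passage from Frobenius bounds given by Parseval to operator bounds is non-trivial for generic tensors, but here it costs nothing because symmetry of the tensors lets us bound $\|T\|_{\mathrm{op}}$ by $\sup_{\|\vec u\|=1}|\langle T, \vec u^{\otimes k}\rangle|$ and hence, via a single derivative of $g^\star$, by a constant depending only on $g^\star$. Once this ingredient is in place, the truncation is mechanical.
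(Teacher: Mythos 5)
Your proposal follows the paper's argument essentially step for step: use Lemma \ref{lem:app:low_dim_hermite} to pass from $C_k^\star$ in $\dR^d$ to $C_k(g^\star)$ contracted against $W^\star\vec w_i^0$, bound the tensor contractions by operator norms times $\|\vec\pi_i^0\|^k$, invoke the Parseval/Frobenius bound $\|C_k(g^\star)\|_{\mathrm{op}} \le \|C_k(g^\star)\|_F \le \|g^\star\|_\gamma$ together with Lemma \ref{lem:app:bound_m}, and truncate the Hermite series at the leap index with a geometric tail. The only slip is in your aside on operator norms: the direction you need, $\|T\|_{\mathrm{op}} \le \|T\|_F$, is immediate for any tensor and already closes the argument via Parseval, so the excursion through derivative bounds and Banach's theorem on symmetric multilinear forms is unnecessary (and, as written, not quite a valid substitute, since bounding $\sup_{\|\vec u\|=1}|\langle C_k(g^\star),\vec u^{\otimes k}\rangle|$ by a single derivative of $g^\star$ would control $C_1$, not $C_k$).
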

\begin{proof}
    First, we have by Lemma \ref{lem:app:low_dim_hermite},
    \begin{equation*}
        \left| \langle C_k^\star, (\vec{w}_i^0)^{\otimes k} \rangle \right| = \left| \langle C_k(g^\star), (W^\star\vec{w}_i^0)^{\otimes k}\rangle \right| \leq \norm{C_k(g^\star)}_2 \cdot \norm{\vec{\pi}_i^0}^k,
    \end{equation*}
    where $\norm{C_k(g^\star)}_2$ is the operator norm of $C_k(g^\star)$. Since
    \[ \norm{C_k(g^\star)}_2 \leq \norm{C_k(g^\star)}_F \leq \norm{g^\star}_\gamma, \]
    the first inequality ensues by Lemma \ref{lem:app:bound_m}. Now, let $A_{k+1}$ be the $(k+1)$-th mode unfolding of $C_{k+1}(g^\star)$; then
    \[ \norm*{C_{k+1}^\star \times_{1 \dots k} (\vec{w}_i^0)^{\otimes k}} = \norm*{A_{k+1} (W^\star \vec{w}_i^0)^{\otimes k}} \leq \norm{A_{k+1}}_2  \norm{\vec{\pi}_i^0}^k  \]
    The norm of $A_{k+1}$ is then bounded by the same argument as above. The final equality is obtained by using the above bounds on every term above $k = \ell$ in the first sum, and above $k = \ell-1$ in the second.
\end{proof}

\paragraph{Student norms} We now move on to controlling \eqref{eq:app:norm_update}, in expectation. We begin with the cross-term:
\begin{lemma}\label{lem:app:student_cross_exp}
    With probability at least $1 - cpe^{-c\log(d)^2}$, we have for any $i \in [p]$,
    \begin{equation}
        \dE\left[ \langle \vec{w}_i^0, \vec{g}_i \rangle \right] = \cO\left( \frac{r^{\ell/2}\polylog(d)}{p d^{\ell/2}} \right)
    \end{equation}
\end{lemma}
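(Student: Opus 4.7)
The plan is to compute the expectation using the explicit expression from Lemma \ref{lem:app:grad_exp} and then bound each contracted tensor using the leap-index structure of $f^\star$, which makes the lowest-order surviving term of size $(\sqrt{r}/\sqrt{d})^\ell$ up to logarithmic factors.

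First, I would take the inner product of $\vec{w}_i^0$ with the formula for $\dE[\vec{g}_i]$ from Lemma \ref{lem:app:grad_exp}. Since $\norm{\vec{w}_i^0} = 1$ and the Hermite tensors are symmetric, the two sums collapse to
\begin{equation*}
\langle \vec{w}_i^0, \dE[\vec{g}_i] \rangle = \frac{a_i}{\sqrt{p}}\left[\sum_{k \geq 0} c_{k+2}\, \langle (\vec{w}_i^0)^{\otimes k}, C_k^\star \rangle + \sum_{k \geq 0} c_{k+1}\, \langle (\vec{w}_i^0)^{\otimes (k+1)}, C_{k+1}^\star \rangle \right],
\end{equation*}
where in the second sum I used $\langle \vec{w}_i^0, C_{k+1}^\star \times_{1\ldots k}(\vec{w}_i^0)^{\otimes k}\rangle = \langle (\vec{w}_i^0)^{\otimes(k+1)}, C_{k+1}^\star\rangle$. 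By the definition of the leap index $\ell$, all Hermite tensors $C_j^\star$ with $0 \leq j < \ell$ vanish, so both sums begin at the index $k = \ell$ (in the first sum) and $k+1 = \ell$ (in the second sum).

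Next, I would invoke the uniform bound from Lemma \ref{lem:app:grad_exp_bounds}, which gives $|\langle (\vec{w}_i^0)^{\otimes k}, C_k^\star \rangle| \leq c (\sqrt{r}\log(d)/\sqrt{d})^k$ on an event of probability at least $1 - cpe^{-c\log(d)^2}$. Since the Hermite coefficients $(c_k)$ of $\sigma$ are absolutely summable under Assumption \ref{assump:smoothness}, and $\sqrt{r}\log(d)/\sqrt{d} \to 0$, a geometric bound on the tail shows that both sums are dominated by their leading term, giving a bound of order $(\sqrt{r}\log(d)/\sqrt{d})^\ell$.

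Finally, combining with the prefactor $|a_i|/\sqrt{p} \leq 1/p$ from the initialization \eqref{eq:sample_archit} yields exactly the desired $\cO(r^{\ell/2}\polylog(d)/(p\, d^{\ell/2}))$ bound. I do not anticipate any real obstacle: the lemma is essentially a direct corollary of the tensor-contraction bounds already proved in Lemma \ref{lem:app:grad_exp_bounds}, and the only care needed is bookkeeping the $1/p$ factor arising from $|a_i| \leq 1/\sqrt{p}$ together with the $1/\sqrt{p}$ normalization of the network.
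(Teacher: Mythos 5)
Your proof is correct and takes essentially the same route as the paper: contract Lemma~\ref{lem:app:grad_exp} with $\vec{w}_i^0$, observe that by definition of the leap index both sums start at $C_\ell^\star$, bound each tensor contraction via Lemma~\ref{lem:app:grad_exp_bounds}, and finish with $|a_i|/\sqrt{p}\le 1/p$. The paper simply shortcuts the first three steps by invoking the already-truncated expansion \eqref{eq:app:grad_exp_truncated}, while you re-derive it from scratch; the substance is identical.
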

\begin{proof}
    From eq. \eqref{eq:app:grad_exp_truncated}, we have
    \[  \dE\left[ \langle \vec{w}_i^0, \vec{g}_i \rangle \right] = \frac{a_i}{\sqrt{p}} \left(\langle C_\ell^\star, (\vec{w}_i^0)^{\otimes \ell} \rangle + \cO\left( \frac{r^{\ell/2}\polylog(d)}{d^{\ell/2}} \right)\right)\]
    The first part of Lemma \ref{lem:app:grad_exp_bounds} gives
    \[ \langle C_\ell^\star, (\vec{w}_i^0)^{\otimes \ell} \rangle = \cO\left( \frac{r^{\ell/2}\polylog(d)}{p d^{\ell/2}} \right), \]
    and the lemma follows since $|a_i| \leq 1/\sqrt{p}$.
\end{proof}

\bigskip

The main object of study is therefore $\norm{\vec{g}_i}^2$. We can write it as 
\begin{align}
    \norm{\vec{g}_i}^2 &= \frac{a^2_i}{n^2 \review{p}} \sum_{\nu, \nu'=1}^n  \langle \vec{z}^\nu, \vec{z}^{\nu'} \rangle \sigma'(\langle \vec{w}_i, \vec{z}^\nu \rangle) f^\star(\vec{z}^\nu)\sigma'(\langle \vec{w}_i, \vec{z}^{\nu'} \rangle) f^\star(\vec{z}^{\nu'})\notag \\
    &= \frac{a^2_i}{n^2\review{p}} \bigg(  \sum_{\nu \neq \nu'} \langle \vec{z}^\nu, \vec{z}^{\nu'} \rangle \sigma'(\langle \vec{w}_i, \vec{z}^\nu \rangle) f^\star(\vec{z}^\nu)\sigma'(\langle \vec{w}_i, \vec{z}^{\nu'} \rangle) f^\star(\vec{z}^{\nu'})\notag  \\
    &\hspace{3em} + \sum_{\nu = 1}^n \norm{\vec{z}^\nu}^2 \sigma'(\langle \vec{w}_i, \vec{z}^\nu \rangle)^2 f^\star(\vec{z}^\nu)^2\bigg) \label{eq:app:grad_norm_decomp}
\end{align}
Since $\vec{z}^\nu, \vec{z}^{\nu'}$ are independent for $\nu \neq \nu'$, this leaves
\begin{equation}\label{eq:app:exp_norm_g_decomp}
    \dE\left[\norm{\vec{g}_i}^2\right] = \frac{n(n-1)}{n^2}\norm*{\dE[\vec{g}_i]}^2 + \frac{a^2_i}{n\review{p}} \dE\left[ \norm{\vec{z}^\nu}^2 \sigma'(\langle \vec{w}_i, \vec{z}^\nu \rangle)^2 f^\star(\vec{z}^\nu)^2 \right]
\end{equation}

We shall only need orders of magnitude for those terms. These are taken care of in the following lemma:
\begin{lemma}\label{lem:app:norm_exp}
    There exists a bounded random variable $X$ independent from $d$ such that, with probability at least $1 - cpe^{-\log(d)^2}$,
    \begin{equation}\label{eq:app:norm_exp_cross}
        \norm*{\dE[\vec{g}_i]}^2 = a^2_i X_i \cdot \frac{\norm*{\vec{\pi}_i^0}^{2(\ell - 1)}}{\review{p}} + \review{\cO\left( \frac{r^{\ell-1/2}\polylog(d)}{\review{p^2}d^{\ell-1/2}} \right)}
    \end{equation}
    where $(X_i)_{i\in [p]}$ are i.i.d copies of $X$.
    Additionally, there exist two constants $c, C$ such that 
    \begin{equation}\label{eq:app:norm_exp_diag}
        c \cdot d \leq  \dE\left[ \norm{\vec{z}}^2 \sigma'(\langle \vec{w}_i, \vec{z} \rangle)^2 f^\star(\vec{z})^2 \right] \leq C \cdot d
    \end{equation}
\end{lemma}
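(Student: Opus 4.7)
The lemma packages two essentially independent claims, which I would prove separately.

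\textbf{Cross-term identity (eq.\ \ref{eq:app:norm_exp_cross}).} My starting point is Lemma \ref{lem:app:grad_exp_bounds}, which identifies the dominant contribution to $\dE[\vec{g}_i]$ as a multiple of $C_\ell^\star \times_{1\dots(\ell-1)}(\vec{w}_i^0)^{\otimes(\ell-1)}$, up to an additive error controlled by $\polylog(d)/d^{\ell/2}$. Taking squared norms, both the cross term between main term and error, and the squared error, are absorbed into the advertised $O(\polylog(d)/d^{\ell/2})$ bound after multiplication by the appropriate $a_i^2/p$ factor. For the main term, I invoke Lemma \ref{lem:app:low_dim_hermite} to write $C_\ell^\star = C_\ell(g^\star)\cdot(W^\star,\dots,W^\star)$ with $W^\star\in\dR^{r\times d}$ having orthonormal rows. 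Contracting the first $\ell-1$ modes with $\vec{w}_i^0$ factors as $(W^\star)^\top$ applied to a tensor contraction of $C_\ell(g^\star)$ against $W^\star \vec{w}_i^0 \in \dR^r$. Since $W^\star (W^\star)^\top = I_r$, $(W^\star)^\top$ is an isometry, so the squared norm reduces to a contraction in $\dR^r$. Writing $W^\star\vec{w}_i^0 = \|\vec{\pi}_i^0\|\,\hat{\vec{b}}_i$, the rotational invariance of the uniform distribution on $\dS^{d-1}$ combined with the orthonormality of the rows of $W^\star$ ensures $\hat{\vec{b}}_i \sim \Unif(\dS^{r-1})$ and that the $\hat{\vec{b}}_i$ are i.i.d.\ across $i$. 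The factor $\|\vec{\pi}_i^0\|^{2(\ell-1)}$ pops out, and I set $X_i$ to be the appropriate positive multiple of $\|C_\ell(g^\star)\times_{1\dots(\ell-1)}\hat{\vec{b}}_i^{\otimes(\ell-1)}\|^2$: this is bounded (since $\hat{\vec{b}}_i$ lies on a sphere), $d$-independent, and a.s.\ positive, as $C_\ell(g^\star)\neq 0$ by definition of the leap index and the normalizing constant $c_\ell \neq 0$ by hypothesis of Theorem \ref{thm:one_step_learning}.

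\textbf{Diagonal-term bounds (eq.\ \ref{eq:app:norm_exp_diag}).} The upper bound is immediate from the uniform boundedness of $\sigma'$ given by Assumption \ref{assump:smoothness}, Cauchy--Schwarz, and $\dE\|\vec{z}\|^2 = d$, together with $\dE[f^\star(\vec{z})^4] < \infty$ (which follows from the smoothness of $g^\star$ and Gaussian concentration of $W^\star \vec{z}$). For the lower bound, I decompose $\vec{z} = \vec{\pi} + \vec{w}_i^\perp z_w + \vec{z}_{\mathrm{rem}}$, where $\vec{\pi} = \Pi^\star \vec{z}$ carries $f^\star$, $\vec{w}_i^\perp$ is the (rescaled) residual of $\vec{w}_i$ in $(V^\star)^\perp$, and $\vec{z}_{\mathrm{rem}}$ lives in the remaining $(d-r-1)$-dimensional orthogonal complement. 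Because these three components are independent Gaussians and $\vec{z}_{\mathrm{rem}}$ influences neither $f^\star(\vec{z})$ nor $\langle\vec{w}_i,\vec{z}\rangle$, tower conditioning gives
$$\dE\!\left[\|\vec{z}\|^2\sigma'(\langle\vec{w}_i,\vec{z}\rangle)^2 f^\star(\vec{z})^2\right] \;\geq\; (d - r - 1)\,\dE\!\left[\sigma'(\langle\vec{w}_i,\vec{z}\rangle)^2 f^\star(\vec{z})^2\right].$$
It then remains to bound the residual expectation away from zero uniformly in $\vec{w}_i$, which yields the desired $c\cdot d$ lower bound.

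\textbf{Main obstacle.} The delicate point is precisely this uniform positive lower bound on $\dE[\sigma'(\langle\vec{w}_i,\vec{z}\rangle)^2 f^\star(\vec{z})^2]$, which must hold with overwhelming probability over $\vec{w}_i^0$. Since $c_\ell \neq 0$ implies $\sigma$ is non-constant, the set $\{t: \sigma'(t) \neq 0\}$ has positive Lebesgue measure; likewise $f^\star$ is not identically zero. Conditioning on $\vec{\pi}$ reduces the problem to a one-dimensional statement in $\langle\vec{w}_i^\perp, \vec{z}^\perp\rangle$. A continuity argument, together with compactness of $\dS^{d-1}$ and the fact that $\sigma'$ is bounded and continuous a.e., then gives a positive lower bound uniform in $\vec{w}_i$. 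This is the only step where the non-degeneracy assumptions on $\sigma$ and $f^\star$ enter in a non-trivial way.
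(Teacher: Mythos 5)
Your proof of eq.~\eqref{eq:app:norm_exp_cross} is essentially identical to the paper's: both write $C_\ell^\star$ in terms of $C_\ell(g^\star)$ and $W^\star$ via Lemma~\ref{lem:app:low_dim_hermite}, pull out the factor $\|\vec{\pi}_i^0\|^{2(\ell-1)}$, and set $X_i$ equal to the squared norm of the contraction of $C_\ell(g^\star)$ against the unit vector $W^\star\vec{w}_i^0/\|\vec{\pi}_i^0\|\sim\Unif(\dS^{r-1})$. For eq.~\eqref{eq:app:norm_exp_diag}, your upper bound via Cauchy--Schwarz with fourth moments is a cosmetic variant of the paper's H\"older bound with eighth moments, and your lower-bound decomposition $\vec z=\vec\pi+z_w\vec w_i^\perp+\vec z_{\rm rem}$ is slightly cleaner than the paper's route (which writes $\|\vec z\|^2 = d + (\|\vec z\|^2-d)$ and treats the second piece as an $O(\sqrt d)$ error); both give the $\Theta(d)$ scaling. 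The one place where you stop short is exactly the step you yourself flag: the uniform positive lower bound on $\dE[\sigma'(\langle\vec w_i,\vec z\rangle)^2 f^\star(\vec z)^2]$. Invoking ``compactness of $\dS^{d-1}$'' cannot work here because $d\to\infty$, so compactness of the ambient sphere gives no dimension-free constant. What the paper uses instead is Lemma~\ref{lem:app:bound_m}: with high probability $\|\vec\pi_i^0\|=O(\polylog(d)/\sqrt d)$, hence the orthogonal component $\vec w_i^\perp$ has norm bounded away from zero. This makes $\langle\vec w_i,\vec z\rangle$ carry a nondegenerate Gaussian component independent of $P_{V^\star}\vec z$; one then picks positive-measure sets $\cA\subset\dR$ and $\cB\subset V^\star$ on which $\sigma'(x)^2\geq\eps$ and $f^\star(\vec z^\star)^2\geq\eps^2$, and the joint event $\{\langle\vec w_i,\vec z\rangle\in\cA,\ P_{V^\star}\vec z\in\cB\}$ has Gaussian measure bounded below, yielding the constant $c$. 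You already identified the right raw ingredients (the positive-measure sets for $\sigma'$ and $f^\star$ coming from nondegeneracy); the missing piece is to use the smallness of $\|\vec\pi_i^0\|$ rather than compactness to guarantee these two events can co-occur with probability bounded below.
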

 \begin{proof}
     We begin with \eqref{eq:app:norm_exp_cross}. Define the unit norm vectors
     \[ \vec{r}_i = \frac{W^\star \vec{w}_i^0}{\norm{\vec{\pi_i^0}}}, \]
     since the $\vec{w}_i$ are isotropic, the $\vec{r}_i$ are uniform on $\dS^{r-1}$. Then,
     \[ \norm*{C_{\ell}^\star \times_{1 \dots (\ell-1)} (\vec{w}_i^0)^{\otimes (\ell-1)}}^2 = \underbrace{\norm*{C_\ell(g^\star) \times_{1\dots (\ell-1)} \vec{r_i}^{\otimes (\ell-1)}}^2}_{=: X_i} \cdot \,\norm*{\vec{\pi}_i^0}^{2(\ell-1)}. \]
      The random variables $X_i$ thus defined are i.i.d, independent from $d$, and have positive expectation since $C_\ell(g^\star)$ is nonzero. Equation \eqref{eq:app:norm_exp_cross} then results from the expansion in \eqref{eq:app:grad_exp_truncated}.

\medskip

     We now move on to the second part; first, by Hölder's inequality,
     \begin{equation}\label{eq:norm_holder}
         \dE\left[ \norm{\vec{z}}^2 \sigma'(\langle \vec{w}_i, \vec{z} \rangle)^2 f^\star(\vec{z})^2 \right] \leq \sqrt{\dE\left[ \norm{\vec{z}}^4\right]}\sqrt[4]{\dE\left[ \sigma'(\langle \vec{w}_i, \vec{z} \rangle)^8\right]\dE\left[f^\star(\vec{z})^8 \right]} \leq C \cdot d,
     \end{equation}
     since the last two expectations are independent from $d$. On the other hand, using the same inequality with $\norm{\vec{z}}^2 - d$, we can write
     \begin{equation*}
         \dE\left[ \norm{\vec{z}}^2 \sigma'(\langle \vec{w}_i, \vec{z} \rangle)^2 f^\star(\vec{z})^2 \right] = d \dE\left[ \sigma'(\langle \vec{w}_i, \vec{z} \rangle)^2 f^\star(\vec{z})^2 \right] + \cO(\sqrt{d}).
     \end{equation*}
     Since $\mu_\ell \neq 0$ and $f^\star$ has leap index $\ell$, there exists $\varepsilon > 0$ two subsets $\cA \subseteq \dR, \cB \subseteq V^\star$ of positive measure such that $\sigma'(x)^2 \geq \eps$ if $x \in \cA$ and $f^\star(\vec{z}^\star) > \eps$ if $\vec{z}^\star \in \cB$. From the fact that $\pi_i \leq 1/2$ with high probability, we conclude that the set 
     \begin{equation*}
         \cC := \{ \vec{z}\in\dR^p\ : \ \langle \vec{w}_i, \vec{z} \rangle \in \cA, P_{V^\star} \vec{z} \in \cB  \}
     \end{equation*}
     has positive (Gaussian) measure. It follows that
     \begin{equation}
         \dE\left[ \sigma'(\langle \vec{w}_i, \vec{z} \rangle)^2 f^\star(\vec{z})^2 \right] \geq \gamma(\cC) \eps^2,
     \end{equation}
     which concludes the proof of Eq. \eqref{eq:app:norm_exp_diag}.
 \end{proof}

 \subsection{Concentration}\label{sec:concentration}
 We now move on to concentrating the quantities of interest of the previous section. Our aim will be to show the following proposition:
 \begin{proposition}\label{prop:app:update_concentration}
     With probability at least $1 - Cpe^{-c\log(n)^2} - Cpe^{-c\log(d)^2}$, for any $i\in[p], k\in[r]$,
     \begin{small}
     \begin{align}
         \norm*{\vec{\pi}_i^1 - \dE\left[\vec{\pi}_i^1\right]} &= \cO\left( \frac{\eta \sqrt{r} \log(n)}{p \sqrt{n}} \right) \label{eq:app:m_concentration}\\
         \left|\norm*{\vec{w}_i^1}^2 - \dE\left[\norm*{\vec{w}_i^1}^2\right] \right| &= \cO\left( \frac{\eta \log(n)}{p \sqrt{n}} + \frac{\eta^2 d \log(n)^6}{p^2 n \sqrt{n} } + \frac{\eta^2 \log(d)}{p^2 n \sqrt{d}} + \frac{\eta^2r \log(n)^2}{p^2 n} + \frac{\eta^2 \log(n)^\ell r^{(\ell-1)/2}}{p^2 d^{(\ell-1)/2} \sqrt{n}}\right) \label{eq:app:q_concentration}
     \end{align}
     \end{small}
 \end{proposition}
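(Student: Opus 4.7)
Both bounds will follow from representing each fluctuation as (a low-degree polynomial of) a sum of $n$ i.i.d.\ random variables indexed by the fresh samples appearing in \eqref{eq:grad}, and then applying the Orlicz-Bernstein machinery of Lemma~\ref{lem:app:orlicz_submult}, Theorem~\ref{thm:app:orlicz_sum}, and the tail bound \eqref{eq:app:orlicz_concentration}. I would work conditionally on the initialization $(\vec{w}_i^0,a_i^0)$ lying in the high-probability event of Lemma~\ref{lem:app:bound_m}; the $Cp$ prefactor on the failure probability then absorbs a union bound over $i\in[p]$.

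The vector statement \eqref{eq:app:m_concentration} is the simpler half. Since $\vec{\pi}_i^1-\dE\vec{\pi}_i^1 = \eta\,\Pi^\star(\vec{g}_i - \dE\vec{g}_i)$, each of its $r$ coordinates along $\vec{w}_k^\star$ is, up to the prefactor $a_i/(\sqrt p\,n)$, a centred sum of $n$ i.i.d.\ copies of $\langle\vec{w}_k^\star,\vec{z}^\nu\rangle\,\sigma'(\langle\vec{w}_i^0,\vec{z}^\nu\rangle)\,f^\star(\vec{z}^\nu)$. Under Assumption~\ref{assump:smoothness}, $\sigma'$ is bounded and both $\langle\vec{w}_k^\star,\vec{z}\rangle$ and $f^\star(\vec{z})$ are sub-Gaussian, so each summand is $\psi_1$ with $O(1)$ Orlicz norm and variance. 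A Bernstein tail then yields a per-coordinate fluctuation of $O(\log(n)/(p\sqrt n))$ with failure probability $e^{-c\log(n)^2}$; combining the $r$ coordinates in $\ell^2$ and applying a union bound gives the announced $\sqrt r$ factor.

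For the scalar bound \eqref{eq:app:q_concentration} I would split from \eqref{eq:app:norm_update} into the linear and quadratic parts $\norm{\vec{w}_i^1}^2-\dE\norm{\vec{w}_i^1}^2 = 2\eta(\langle\vec{w}_i^0,\vec{g}_i\rangle-\dE\langle\vec{w}_i^0,\vec{g}_i\rangle) + \eta^2(\norm{\vec{g}_i}^2-\dE\norm{\vec{g}_i}^2)$. The linear term is another scalar i.i.d.\ sub-exponential average, giving the first summand $\eta\log(n)/(p\sqrt n)$ by Bernstein. For the quadratic term I expand $\vec{g}_i=\dE\vec{g}_i+(\vec{g}_i-\dE\vec{g}_i)$ to isolate a cross term $2\langle\dE\vec{g}_i,\vec{g}_i-\dE\vec{g}_i\rangle$ and a centred squared-norm $\norm{\vec{g}_i-\dE\vec{g}_i}^2-\dE[\cdot]$. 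The cross term is a scalar sum whose summands factor as products of $\ell-1$ Gaussian factors coming from the leading-order expression \eqref{eq:app:grad_exp_truncated} together with the bounded $\sigma'$ and the sub-Gaussian $f^\star$; this is a $\psi_{2/\ell}$ variable, and so Theorem~\ref{thm:app:orlicz_sum} combined with \eqref{eq:app:orlicz_concentration} produces the fifth summand $\eta^2\log(n)^\ell r^{(\ell-1)/2}/(p^2 d^{(\ell-1)/2}\sqrt n)$.

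The main obstacle is the centred squared-norm $\norm{\vec{g}_i-\dE\vec{g}_i}^2-\dE[\cdot]$. Writing $\vec{g}_i-\dE\vec{g}_i = (a_i/(\sqrt p\,n))\sum_\nu\vec{Y}_\nu$ with $\vec{Y}_\nu$ i.i.d.\ centred, I split this into the diagonal sum $\sum_\nu(\norm{\vec{Y}_\nu}^2-\dE\norm{\vec{Y}_\nu}^2)$ and the off-diagonal degenerate U-statistic $\sum_{\nu\neq\nu'}\langle\vec{Y}_\nu,\vec{Y}_{\nu'}\rangle$. Each $\norm{\vec{Y}_\nu}^2$ is of size $O(d)$ and is a fourth-degree polynomial of sub-Gaussians, hence $\psi_{1/2}$; Theorem~\ref{thm:app:orlicz_sum} together with the $\psi_{1/2}$ tail produces the $\log(n)^6$ factor in the second summand $\eta^2 d\log(n)^6/(p^2 n\sqrt n)$. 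For the off-diagonal U-statistic I would apply a standard decoupling argument for degenerate U-statistics: conditioning on half of the $\vec{Y}_\nu$ reduces the problem to a linear concentration step, and the variance naturally decomposes along $V^\star$ (dimension $r$, producing $\eta^2 r\log(n)^2/(p^2 n)$) and along $(V^\star)^\perp$, where the improved initialization bound $\norm{\vec{\pi}_i^0}\le\sqrt{r}\log(d)/\sqrt d$ of Lemma~\ref{lem:app:bound_m} brings the extra $\log(d)/\sqrt d$ factor yielding $\eta^2\log(d)/(p^2 n\sqrt d)$. The delicate step is tracking these direction-dependent variances carefully so the five prefactors in \eqref{eq:app:q_concentration} reassemble exactly; the underlying probabilistic ingredients themselves are all routine Orlicz-Bernstein applications, but the bookkeeping of $\log$-powers and $r,d$ prefactors must be done with care.
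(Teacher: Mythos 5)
Your plan reproduces the paper's first two ingredients essentially unchanged: per-coordinate Bernstein for the vector bound (the paper's Lemma~\ref{lem:app:linear_concentration} proves exactly this, and the $\sqrt{r}$ appears by combining $r$ coordinates in $\ell^2$), and the same Orlicz estimate for the diagonal term (Lemma~\ref{lem:app:s1_bound}, via the $\psi_{1/2}$ bound and Theorem~\ref{thm:app:orlicz_sum}, yielding $\log(n)^6 d\sqrt{n}$). Your choice to first centre $\vec{g}_i-\dE\vec{g}_i$ and then split into a cross term plus the centred square is a slightly different bookkeeping from the paper, which decomposes the non-centred double sum into a diagonal part $S_1$ and a \emph{non-degenerate} off-diagonal U-statistic $S_2$; after decoupling and projecting, their $S_2'$ already contains both your cross term and the bilinear-in-deviations piece, so the two rearrangements are equivalent and the remaining pieces should match.

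There are, however, two genuine gaps in how you handle the off-diagonal/U-statistic piece. First, the direction split must be along $V_i := V^\star \oplus \vect(\vec{w}_i^0)$, not along $V^\star$ alone. The reason the paper can control the orthogonal part (their $S_2''$ in Lemma~\ref{lem:app:S_12_concentration}) is that $P_{V_i^\perp}\vec{X}^\nu = \alpha^\nu\, P_{V_i^\perp}\vec{z}^\nu$ with $\alpha^\nu = \sigma'(\langle\vec{w}_i^0,\vec{z}^\nu\rangle)f^\star(\vec{z}^\nu)$ measurable with respect to $P_{V_i}\vec{z}^\nu$ and hence independent of the Gaussian $P_{V_i^\perp}\vec{z}^\nu$; also $\dE\vec{X}\in V_i$ because of the $\vec{w}_i^0$-aligned contribution from the $\sigma''$-term in Lemma~\ref{lem:app:grad_exp}. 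Both facts need $\vec{w}_i^0$ \emph{inside} the signal subspace. If you only project out $V^\star$, then $\sigma'(\langle\vec{w}_i^0,\vec{z}^\nu\rangle)$ still depends on $P_{(V^\star)^\perp}\vec{z}^\nu$ (since $\vec{w}_i^0$ is essentially orthogonal to $V^\star$), the factorization fails, and the reduction to a Gaussian--Gaussian scalar product is lost. Second, your explanation of the third summand $\eta^2\log(d)/(p^2 n\sqrt d)$ is wrong: you attribute the $\log(d)/\sqrt d$ factor to the initialization bound $\norm{\vec{\pi}_i^0}\lesssim\sqrt{r}\log(d)/\sqrt{d}$ of Lemma~\ref{lem:app:bound_m}, but that bound controls the $V^\star$-component of $\vec{w}_i^0$ and plays no role here. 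The factor actually comes from the orthogonal part of the decoupled U-statistic being $\norm{\alpha}\norm{\beta}\langle\vec{g},\vec{g}'\rangle$ with $\vec{g},\vec{g}'$ independent standard Gaussians in the $\sim d$-dimensional $V_i^\perp$: $\langle\vec{g},\vec{g}'\rangle$ is a sum of $d$ sub-exponential scalars concentrating at $\log(d)\sqrt{d}$, and $\norm{\alpha}\norm{\beta}\lesssim n$.

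A minor further imprecision: the cross term $\sum_\nu\langle\dE\vec{X},\vec{X}^\nu-\dE\vec{X}\rangle$ is not $\psi_{2/\ell}$. It is linear in $\vec{X}^\nu$, hence a product of the sub-Gaussian $\langle\dE\vec{X},\vec{z}^\nu\rangle$, the bounded $\sigma'$ and the sub-Gaussian $f^\star$, so at worst $\psi_{2/3}$ regardless of $\ell$. The $\ell$-dependence enters entirely through the magnitude $\norm{\dE\vec{X}}\sim\norm{\vec{\pi}_i^0}^{\ell-1}\sim (\sqrt{r}\log(d)/\sqrt d)^{\ell-1}$ from Lemma~\ref{lem:app:grad_exp_bounds}, which is what produces the $r^{(\ell-1)/2}/d^{(\ell-1)/2}$ and the $\ell$-dependent polylogarithm in the fifth summand, not the Orlicz exponent.
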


 Importantly, we do not claim that the whole vector $\vec{w}_i^1$ concentrates; only its norm and its projection on a low-dimensional subspace do. Throughout this section, we define the random vectors
 \begin{equation}
    \vec{X}^\nu = \vec{z}^\nu \sigma'(\langle \vec{w}_i, \vec{z}^\nu \rangle)f^\star(\vec{z}^\nu).
 \end{equation}
 These vectors are i.i.d, with the same distribution as a random vector that we will call $\vec{X}$. 

\paragraph{Concentration of linear functionals} We begin with a simple bound, that implies both Eq. \eqref{eq:app:m_concentration} and the first term of Eq. \eqref{eq:app:q_concentration}.
 \begin{lemma}\label{lem:app:linear_concentration}
     Let $\vec{w}$ be a unit vector in $\dR^d$. There exists a universal constant $c$ such that with probability $1 - 2pe^{-c\log(n)^2}$, for any $i \in [p]$ and $k \in [r]$,
     \begin{equation}
         \left|\langle \vec{w}, \vec{g}_i \rangle - \dE[\langle \vec{w}, \vec{g}_i \rangle]\right| \leq \frac{\log(n)}{p\sqrt{n}}
     \end{equation}
 \end{lemma}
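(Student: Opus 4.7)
The plan is to express $\langle \vec{w}, \vec{g}_i\rangle$ as a normalized sum of i.i.d.\ scalar random variables and then apply a sub-exponential concentration bound via the Orlicz machinery of Lemma~\ref{lem:app:orlicz_submult} and Theorem~\ref{thm:app:orlicz_sum}. From the formula \eqref{eq:grad}, one has
\begin{equation*}
\langle \vec{w}, \vec{g}_i \rangle \;=\; \frac{a_i}{\sqrt{p}} \cdot \frac{1}{n}\sum_{\nu=1}^n Y^\nu, \qquad Y^\nu \;:=\; \langle \vec{w}, \vec{z}^\nu\rangle\, \sigma'(\langle \vec{w}_i^0, \vec{z}^\nu\rangle)\, f^\star(\vec{z}^\nu),
\end{equation*}
where the $Y^\nu$ are i.i.d., and $|a_i| \leq 1/\sqrt{p}$ almost surely by the initialization scheme of Assumption~\ref{ass:training}.

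The next step is to control $\|Y^\nu\|_{\psi_1}$. The factor $\sigma'$ is uniformly bounded by Assumption~\ref{assump:smoothness}, and $\langle \vec{w}, \vec{z}^\nu\rangle \sim \mathcal{N}(0,1)$ has $\psi_2$-norm of order one. For $f^\star$, the Lipschitz bound $|f^\star(\vec{z})| \leq |g^\star(0)| + L\|W^\star \vec{z}\|$ (which follows from the uniformly bounded first derivative of $g^\star$), together with the fact that $W^\star \vec{z}$ is an $r$-dimensional standard Gaussian, yields $\|f^\star(\vec{z})\|_{\psi_2} = O(1)$ with constants independent of $d$. The sub-multiplicativity of Orlicz norms (Lemma~\ref{lem:app:orlicz_submult}) then gives $\|Y^\nu\|_{\psi_1} = O(1)$.

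With this control in hand, a standard Bernstein-type bound for i.i.d.\ sub-exponential variables (equivalently, Theorem~\ref{thm:app:orlicz_sum} applied to $Y^\nu - \mathbb{E}Y^\nu$ with $\alpha = 1$ followed by \eqref{eq:app:orlicz_concentration}) yields, for any $t > 0$,
\begin{equation*}
\mathbb{P}\!\left[\,\left|\tfrac{1}{n}\sum_{\nu=1}^n (Y^\nu - \mathbb{E} Y^\nu)\right| > t\,\right] \;\leq\; 2\exp\!\left(-c\, n\min(t^2, t)\right).
\end{equation*}
Choosing $t = \log(n)/\sqrt{n}$, which lies in the Gaussian regime $t \leq 1$ for $n$ large, produces a tail of $2\exp(-c\log(n)^2)$. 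Multiplying through by the prefactor $|a_i|/\sqrt{p} \leq 1/p$ transports this bound to $\langle \vec{w}, \vec{g}_i \rangle$ at the claimed scale $\log(n)/(p\sqrt{n})$, and a union bound over $i \in [p]$ finishes the argument.

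The only real subtlety is ensuring that the Orlicz constant for $f^\star$ does not inflate with $d$: this is where the multi-index structure is used, since the dependence of $f^\star$ on $\vec{z}$ factors through the $r$-dimensional projection $W^\star \vec{z}$, so that its tail behaviour is governed by $r$ rather than $d$. Once this dimension-free bound is in place, the remainder is a routine application of sub-exponential concentration; no sharper device (matrix, tensor, or Hanson--Wright type inequality) is needed because the quantity at hand is linear in $\vec{w}$ and in the data after the nonlinear factors have been absorbed as bounded or sub-gaussian weights.
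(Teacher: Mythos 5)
Your proof takes essentially the same route as the paper's. Both proofs write $\langle\vec{w},\vec{g}_i\rangle$ as $\frac{a_i}{\sqrt{p}}\cdot\frac{1}{n}\sum_\nu Y^\nu$ with $Y^\nu=\langle\vec{w},\vec{z}^\nu\rangle\,\sigma'(\langle\vec{w}_i^0,\vec{z}^\nu\rangle)\,f^\star(\vec{z}^\nu)$, observe that $Y^\nu$ is sub-exponential because it is the product of two sub-gaussians (Gaussian projection and Lipschitz $f^\star$) weighted by a bounded $\sigma'$, apply Bernstein at the scale $t=\log(n)/\sqrt{n}$, absorb the prefactor $|a_i|/\sqrt{p}\leq 1/p$, and union-bound over $i\in[p]$. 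You spell out two points the paper leaves implicit — why the $\psi_2$-norm of $f^\star(\vec{z})$ is dimension-free (it factors through the $r$-dimensional projection $W^\star\vec{z}$) and the final union bound that produces the $p$ in the failure probability — but these are elaborations rather than a different argument, so the two proofs agree.
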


 \begin{proof}
     By Assumption \ref{assump:smoothness}, the function $f^\star$ is Lipschitz, so $f^\star(\vec{z})$ is a sub-gaussian random variable. The same is obviously true for $\langle \vec{w}, \vec{z} \rangle$, and since $\sigma'$ is bounded, the random variable $\langle \vec{w}, \vec{X} \rangle = \langle \vec{w}, \vec{z} \rangle \sigma'(\langle \vec{w}_i^0, \vec{z} \rangle)f^\star(\vec{z})$ is also sub-Gaussian. We can thus apply
      Bernstein's inequality \cite[Corollary 2.8.3]{vershynin2018high} with $t = \log(n)/\sqrt{n}$ to get
     \begin{equation}
          \Pb*{\left|\frac1n \sum_{\nu=1}^n \langle \vec{w}, \vec{X}^\nu \rangle - \dE \langle \vec{w}, \vec{X} \rangle \right| \geq \frac{\log(n)^{2}}{\sqrt{n}}} \leq 2e^{-c \log(n)^2}.
     \end{equation}
     The result ensues upon noticing that $\frac1n \sum \vec{X}^\nu$ differs from $\vec{g}_i$ by a factor of at most $1/p$.
 \end{proof}

 \paragraph{Decomposing the gradient norm} We now move on to the concentration of the term \review{$\norm{\vec{g}_i}^2$}.
 This allows us to write
 \begin{equation}
     \review{\norm{\vec{g}_i}^2 - \dE[\norm{\vec{g}_i}^2}] = \frac{a^2_i}{n^2\review{p}} \left( \underbrace{\sum_{\nu = 1}^n \norm{\vec{X^\nu}}^2 - n\dE[\norm{\vec{X}}^2]}_{S_1} + \underbrace{\sum_{\nu \neq \nu'} \langle \vec{X}^\nu, \vec{X}^{\nu'} \rangle - n(n-1) \norm{\dE \vec{X}}^2}_{S_2} \right)
 \end{equation}
 We shall show the concentration of those two terms sequentially.

 \paragraph{Concentrating the norms} We first focus on $S_1$:
 
 \begin{lemma}\label{lem:app:s1_bound}
     Let $i \in [p]$. There exists a constant $c > 0$ such that with probability $1 - e^{-c\log(n)^2}$,
     \begin{equation}\label{eq:app:s1_bound}
    \Pb*{\left| S_1 \right| \geq \log(n)^{6} d\sqrt{n}} \leq e^{-c \log(n)^2}.
\end{equation}
 \end{lemma}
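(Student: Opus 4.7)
The quantity $S_1$ is a centered sum of $n$ i.i.d.\ random variables $Y_\nu - \dE[Y_\nu]$, where
\[ Y_\nu := \norm{\vec{X}^\nu}^2 = \norm{\vec{z}^\nu}^2 \, \sigma'(\langle \vec{w}_i^0, \vec{z}^\nu \rangle)^2 \, f^\star(\vec{z}^\nu)^2. \]
The strategy is to control the Orlicz norm of $Y_\nu$, apply the Orlicz sum inequality (Theorem~\ref{thm:app:orlicz_sum}), and then use the Orlicz concentration inequality \eqref{eq:app:orlicz_concentration}. Since $\vec{w}_i^0$ is frozen at initialization and the randomness is over the fresh batch $\{\vec{z}^\nu\}$, the $Y_\nu$ are genuinely i.i.d.\ conditional on $W^0$.

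The first step is to quantify the tails of each factor. The factor $\sigma'(\langle\vec{w}_i^0,\vec{z}^\nu\rangle)^2$ is uniformly bounded by Assumption~\ref{assump:smoothness}. The factor $f^\star(\vec{z}^\nu)$ is sub-Gaussian: indeed $f^\star=g^\star\circ W^\star$ is a Lipschitz function of a standard Gaussian (with Lipschitz constant independent of $d$), so $\norm{f^\star(\vec{z})}_{\psi_2}=O(1)$, and thus $\norm{f^\star(\vec{z})^2}_{\psi_1}=O(1)$. The factor $\norm{\vec{z}^\nu}^2$ is a $\chi^2_d$ variable; the centered version $\norm{\vec{z}^\nu}^2 - d$ is sub-exponential with $\norm{\cdot}_{\psi_1}=O(\sqrt{d})$, while $\norm*{\norm{\vec{z}^\nu}^2}_{\psi_1}=O(d)$. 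Combining via Lemma~\ref{lem:app:orlicz_submult} gives
\[ \norm{Y_\nu}_{\psi_{1/2}} \leq K \, \norm*{\norm{\vec{z}^\nu}^2}_{\psi_1} \cdot \norm{f^\star(\vec{z}^\nu)^2}_{\psi_1} \cdot \norm{\sigma'}_\infty^2 \;=\; O(d). \]
For the variance, a direct Cauchy--Schwarz bound as in \eqref{eq:norm_holder} gives $\Var(Y_\nu) \leq \dE[Y_\nu^2] = O(d^2)$.

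The second step applies Theorem~\ref{thm:app:orlicz_sum} with $\alpha=1/2$ to the centered i.i.d.\ sum $S_1 = \sum_\nu (Y_\nu - \dE Y_\nu)$:
\[ \norm{S_1}_{\psi_{1/2}} \;\leq\; K_{1/2}\,\log(n)^{2}\Bigl(\sqrt{n\,\Var(Y_\nu)} + \max_\nu \norm{Y_\nu - \dE Y_\nu}_{\psi_{1/2}}\Bigr) \;=\; O\bigl(\log(n)^{2}\, d\sqrt{n}\bigr), \]
where we absorbed the $O(d)$ term into the $O(d\sqrt n)$ term for $n\ge 1$. Finally, the Orlicz tail bound \eqref{eq:app:orlicz_concentration} yields
\[ \Pb*{\abs{S_1} \geq t\,\norm{S_1}_{\psi_{1/2}}} \;\leq\; 2e^{-\sqrt{t}}, \]
and choosing $t = c\,\log(n)^{2}$ produces a bound of the form $\exp(-c'\log(n))$ at the price of an extra $\log(n)^{2}$ factor on the deviation; using the slightly looser choice $t = c\log(n)^{4}$ absorbs a further factor and comfortably yields the claimed $\log(n)^{6}\,d\sqrt{n}$ envelope with exponential $e^{-c\log(n)^2}$ failure probability.

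\textbf{Expected obstacle.} The proof itself is a textbook concentration argument; the only real care is in tracking the correct Orlicz exponent. Because $Y_\nu$ is a product of a $\psi_1$-variable of scale $d$ and another $\psi_1$-variable of scale $1$, one drops to $\psi_{1/2}$, which is what forces the logarithmic losses: both the $\log(n)^{1/\alpha}=\log(n)^2$ prefactor in Theorem~\ref{thm:app:orlicz_sum} and the fact that one needs $t^{1/2}\gtrsim \log n$ to obtain $e^{-c\log(n)^2}$ tails contribute polylogarithmic overhead. Getting the cleaner exponent $\log(n)^4$ (rather than $\log(n)^6$) would require splitting $Y_\nu$ into a high-probability ``typical event'' piece (on which $\norm{\vec{z}^\nu}^2\le 2d$, making $Y_\nu$ sub-exponential of scale $O(d)$) and a tail piece, and handling the two separately by Bernstein and a union bound; since the lemma only asks for the looser envelope, the $\psi_{1/2}$-route above suffices.
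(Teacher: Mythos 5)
Your proof is correct and follows the same route as the paper: establish $\norm*{\norm{\vec{X}^\nu}^2}_{\psi_{1/2}} = O(d)$ via sub-multiplicativity of Orlicz norms (Lemma~\ref{lem:app:orlicz_submult}) together with a H\"older bound $\Var(\norm{\vec{X}^\nu}^2) = O(d^2)$, apply Theorem~\ref{thm:app:orlicz_sum} with $\alpha=1/2$ to obtain $\norm{S_1}_{\psi_{1/2}} \lesssim \log(n)^2\,d\sqrt{n}$, and conclude with the $\psi_{1/2}$ tail bound~\eqref{eq:app:orlicz_concentration}. Your explicit choice $t = c\log(n)^4$, which yields the stated $\log(n)^6\,d\sqrt{n}$ envelope with $e^{-c'\log(n)^2}$ failure probability, is the correct unpacking of the final step that the paper leaves implicit.
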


 \begin{proof}
 The random variable $\norm{\vec{z}}/\sqrt{d}$ is sub-gaussian, and so is $f^\star(\vec{z}^\nu)$. By Lemma \ref{lem:app:orlicz_submult} and the Hölder inequality, the random variable $\norm{\vec{X}^\nu}^2$ satisfies
 \[ \norm{ \norm{\vec{X}^\nu}^2}_{\psi_{1/2}} \leq C \cdot d \quand  \Var\left( \norm{\vec{X}^\nu}^2\right) \leq C \cdot d^2 \]
As a result, we can apply Theorem \ref{thm:app:orlicz_sum} to the random variables $\norm{\vec{X}^\nu}^2 - \dE[\norm{\vec{X}}^2]$, which yields
\begin{equation}
\norm*{\sum_{\nu=1}^n \norm*{\vec{X}^\nu}^2 - n \dE\left[\norm{\vec{X}}^2\right]}_{\psi_{1/2}} \leq c \log(n)^2 d \sqrt{n}.
\end{equation}
Equation \eqref{eq:app:s1_bound} is then a consequence of the Orlicz concentration bound \eqref{eq:app:orlicz_concentration}.
\end{proof}

\paragraph{Decomposing the cross-term} We now move on to $S_2$. To handle this sum, we use the following decoupling result from \cite{pena_1995_decoupling}:
\begin{theorem}
    Let $(f_{ij})_{i, j \in [n]}$ be a set of measurable functions from $\cS^2$ to a Banach space $(B, \norm{\cdot})$, and $(X_1, \dots, X_n), (Y_1, \dots, Y_n)$ two sets of independent random variables such that the laws of $X_i$ and $Y_i$ are the same. Then there exists a constant $C > 0$ such that
    \begin{equation}
        \Pb*{\norm*{\sum_{i \neq j} f_{ij}(X_i, X_j)} \geq t} \leq C \Pb*{\norm*{\sum_{i \neq j} f_{ij}(X_i, Y_j)} \geq \frac tC}
    \end{equation}
\end{theorem}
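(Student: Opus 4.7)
The statement is the classical de la Pe\~na--Montgomery-Smith decoupling inequality for $U$-statistics, so the plan is to follow the standard randomization route: introduce an auxiliary Bernoulli partition, use it to generate an unbiased "half sum" of the correlated pairs, and transport that half sum onto the fully decoupled $S' = \sum_{i \neq j} f_{ij}(X_i, Y_j)$ via conditional independence and Jensen's inequality.

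First, I would introduce i.i.d.\ Bernoulli$(1/2)$ variables $\varepsilon_1, \dots, \varepsilon_n$ independent of the $X_i$'s and $Y_i$'s, let $I = \{i : \varepsilon_i = 0\}$ and $J = [n] \setminus I$, and consider the half sum $T_\varepsilon := \sum_{(i,j) \in I \times J} f_{ij}(X_i, X_j)$. The key de-biasing identity is
\[
\mathbb{E}_\varepsilon T_\varepsilon \;=\; \tfrac14 \sum_{i \neq j} f_{ij}(X_i, X_j) \;=\; \tfrac14 S,
\]
because $\mathbb{P}_\varepsilon((i,j) \in I \times J) = 1/4$ for every off-diagonal pair. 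Applying Jensen's inequality twice, for any convex nondecreasing $\Phi : \mathbb{R}_+ \to \mathbb{R}_+$,
\[
\Phi\!\bigl(\|S\|/4\bigr) \;=\; \Phi\bigl(\|\mathbb{E}_\varepsilon T_\varepsilon\|\bigr) \;\leq\; \mathbb{E}_\varepsilon \Phi(\|T_\varepsilon\|).
\]

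The second step is to replace the correlated $(X_j)_{j \in J}$ by independent copies. Conditional on $\varepsilon$, the tuples $(X_i)_{i \in I}$ and $(X_j)_{j \in J}$ are independent, and $(Y_j)_{j \in J}$ has the same joint law as $(X_j)_{j \in J}$ and is independent of $(X_i)_{i \in I}$; consequently, conditional on $\varepsilon$,
\[
T_\varepsilon \;\stackrel{d}{=}\; \widetilde T_\varepsilon \;:=\; \sum_{(i,j) \in I \times J} f_{ij}(X_i, Y_j),
\]
so $\mathbb{E}\,\Phi(\|T_\varepsilon\|) = \mathbb{E}\,\Phi(\|\widetilde T_\varepsilon\|)$ after taking expectation over $X$ (resp.\ $X, Y$). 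Combining with the first step yields $\mathbb{E}\,\Phi(\|S\|/4) \leq \mathbb{E}\,\Phi(\|\widetilde T_\varepsilon\|)$.

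The last --- and subtlest --- step is to pass from the partial decoupled sum $\widetilde T_\varepsilon$, indexed by the random block $I \times J$, to the full decoupled sum $S'$ covering all off-diagonal pairs. In the scalar nonnegative case this is trivially a subsum, but for a general Banach-space norm subsums can have larger norm than the full sum, so additional work is needed. I would handle this by a second conditioning: writing $\widetilde T_\varepsilon = \mathbb{E}[S' \mid \mathcal{G}_\varepsilon]$ for the sigma-algebra $\mathcal{G}_\varepsilon$ that retains only $(X_i)_{i \in I}$ and $(Y_j)_{j \in J}$ among the random inputs --- after first arguing that the "missing" blocks $I \times I$, $J \times I$, $J \times J$ average out (this is where the hypothesis that $X_i, Y_i$ share a common law really bites, and why we may need to centre $f_{ij}$ in each argument via yet another symmetrization) --- and then applying Jensen one more time to get $\Phi(\|\widetilde T_\varepsilon\|) \leq \mathbb{E}[\Phi(\|S'\|) \mid \mathcal{G}_\varepsilon]$. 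Chaining all the inequalities and specializing to $\Phi(x) = x \,\mathbf{1}\{x \geq t/C\}$, or equivalently to $\Phi(x) = x$ followed by Markov, produces the desired tail comparison with a universal constant $C$ that absorbs the factor $4$ and the losses from the two Jensen applications.

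\paragraph{Main obstacle.} The delicate point is precisely the final transport from $\widetilde T_\varepsilon$ to $S'$. In a general Banach space one cannot simply append the missing pairs and claim the norm only grows. The classical remedy --- already used implicitly by de la Pe\~na --- is to introduce a second independent Bernoulli block (or a symmetrization with Rademacher signs on the "off-diagonal" terms) so that those terms vanish in conditional expectation, and then invoke Jensen again. Getting this second symmetrization to work cleanly without additional moment assumptions on $f_{ij}$ is the technical heart of the proof and is where the universal constant $C$ in the statement is produced.
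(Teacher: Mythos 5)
The paper does not actually prove this theorem: it is stated verbatim and attributed directly to de la Peña and Montgomery-Smith (\texttt{pena\_1995\_decoupling}), so there is no internal proof to compare against. Evaluating your proposal on its own merits, there is a genuine gap in the final step. The Bernoulli-partition, conditional-duplication, double-Jensen route you describe is the standard proof of the \emph{moment-comparison} decoupling inequality (de la Peña 1992), which yields $\mathbb{E}\,\Phi(\|S\|/C) \leq \mathbb{E}\,\Phi(\|S'\|)$ for all convex nondecreasing $\Phi$. It does \emph{not} deliver the \emph{tail-probability} comparison $\dP(\|S\| \geq t) \leq C\,\dP(\|S'\| \geq t/C)$ stated here. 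Your closing move is invalid on both readings: the function $x\mapsto x\,\ind\{x \geq t/C\}$ is not convex, so it cannot be fed into Jensen; and taking $\Phi(x)=x$ then applying Markov only produces $\dP(\|S\|\geq t) \leq (C/t)\,\dE\|S'\|$, a moment bound, not a comparison between the two tails. Convex-order domination does not imply the stated tail comparability in general.

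Separately, the identity $\widetilde T_\varepsilon = \dE[S'\mid\mathcal{G}_\varepsilon]$ you rely on to pass from the partial decoupled sum to the full one is false without a prior reduction to canonical (degenerate) kernels: for a pair $(i,j)\notin I\times J$, the conditional expectation $\dE[f_{ij}(X_i,Y_j)\mid\mathcal{G}_\varepsilon]$ is generically nonzero, so the missing blocks do not simply ``average out.'' The actual proof of the tail version in de la Peña--Montgomery-Smith uses a genuinely different device after the Bernoulli partition: conditioning on the event that the partial sum is large, and then invoking a combinatorial/Paley--Zygmund-type argument for tangent (conditionally identically distributed) $B$-valued sequences to show the full decoupled sum is large with comparable probability. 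That mechanism, not another application of Jensen, is what produces a tail-to-tail inequality with a universal constant.
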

We apply this theorem to the functions $f_{\nu, \nu'}(\vec{X}^\nu, \vec{X}^{\nu'}) = \langle \vec{X}^\nu, \vec{X}^{\nu'} \rangle - \norm{\dE \vec{X}}^2$. Let $\vec{Y}^\nu$ be an independent copy of the $\vec{X}^\nu$ for $\nu \in [n]$, we then have to estimate
\[\Pb*{\left|\sum_{\nu \neq \nu'} \langle \vec{X}^\nu, \vec{Y}^{\nu'}\rangle - n(n-1) \norm{\dE \vec{X}}^2 \right|  \geq t}.\]
For convenience, let $\bar{x} = \norm{\dE \vec{X}}^2$. Since the $\vec{X}^\nu$ are sub-exponential vectors, the scalar product $\langle \vec{X}^\nu, \vec{Y}^\nu \rangle$ has finite $\psi_{1/2}$-norm. The same bound as Lemma \ref{lem:app:s1_bound} then gives that
\begin{equation}
    \Pb*{\left|\sum_{\nu=1}^n \langle \vec{X}^\nu, \vec{Y}^\nu \rangle - n\bar x^2 \right| \geq \sqrt{n} d \log(n)^6} \leq e^{-c\log(n)^2}
\end{equation}
Hence, to show Proposition \ref{prop:app:update_concentration}, we only need to study the overall sum
\begin{equation}
    \tilde S_{2} := \sum_{\nu , \nu'=1}^n \langle \vec{X}^\nu, \vec{Y}^{\nu'}\rangle - n^2 \bar x^2
\end{equation}
Recall that, as in the proof of Lemma \ref{lem:app:norm_exp}, the vector $\dE \vec{X}$ belongs to the space $V_i = V^\star + \vect(\vec{w}_i^0)$. We thus make the decomposition
\begin{equation}
    \vec{X}^\nu = \vec{X}_i^\nu + \vec{X}_\bot^\nu \quand \vec{Y}^\nu = \vec{Y}_i^\nu + \vec{Y}_\bot^\nu
\end{equation}
where $\vec{X}^\nu_i, \vec{Y}_i^\nu \in V_i$. Hence,
\begin{equation}
    \sum_{\nu , \nu'=1}^n \langle \vec{X}^\nu, \vec{Y}^{\nu'}\rangle - n^2 \bar x^2 = \underbrace{\langle \sum_{\nu=1}^n\vec{X}_i^\nu, \sum_{\nu=1}^n\vec{Y}_i^{\nu}\rangle -n^2 \bar x^2}_{S_2'} + \underbrace{\langle \sum_{\nu=1}^n \vec{X}_\bot^\nu, \sum_{\nu=1}^n\vec{Y}_\bot^{\nu}\rangle}_{S_2''}
\end{equation}
\paragraph{Bounding the last two terms} The main step in bounding $S_2'$ is the following lemma:
\begin{lemma}\label{lem:app:S_11_norm_concentration}
    With probability at least $1 - Ce^{-c\log(n)^2}$, 
    \begin{equation}
        \norm*{\sum_{\nu=1}^n\vec{X}_i^\nu - n \dE \vec{X}} \leq C \sqrt{r} \log(n) \sqrt{n}
    \end{equation}
\end{lemma}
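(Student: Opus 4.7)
The idea is to reduce the vector concentration in the low-dimensional subspace $V_i = V^\star + \vect(\vec{w}_i^0)$ (of dimension at most $r+1$) to a union bound of $r+1$ scalar Bernstein-type concentrations, each of which is handled exactly as in Lemma~\ref{lem:app:linear_concentration}. First I would condition on $\vec{w}_i^0$ (which is independent of the samples $\{\vec{z}^\nu\}_\nu$), fix an orthonormal basis $(\vec{u}_1, \dots, \vec{u}_{r+1})$ of $V_i$, and write
\begin{equation}
\norm*{\sum_{\nu=1}^n \vec{X}_i^\nu - n \dE \vec{X}}^2 = \sum_{k=1}^{r+1} \Biggl(\sum_{\nu=1}^n \langle \vec{u}_k, \vec{X}^\nu\rangle - n\, \dE\langle \vec{u}_k, \vec{X}\rangle\Biggr)^2,
\end{equation}
where I used the fact that $\dE \vec{X}$ lies in $V_i$ by the Stein computation of Lemma~\ref{lem:app:grad_exp}.

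Next, for each fixed $k$, I would control the scalar random variable $\langle \vec{u}_k, \vec{X}^\nu\rangle = \langle \vec{u}_k, \vec{z}^\nu\rangle\,\sigma'(\langle \vec{w}_i^0, \vec{z}^\nu\rangle)\,f^\star(\vec{z}^\nu)$. Since $\sigma'$ is uniformly bounded (Assumption~\ref{assump:smoothness}), $\langle \vec{u}_k, \vec{z}^\nu\rangle \sim \cN(0,1)$ is sub-gaussian, and $f^\star(\vec{z}^\nu)$ is sub-gaussian by the Lipschitzness of $g^\star$ and the Gaussian concentration of linear forms on the target directions. By Lemma~\ref{lem:app:orlicz_submult}, the product has $\psi_1$-norm bounded by an absolute constant (independent of $d$ and of $k$). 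Applying Bernstein's inequality with $t = \log(n)/\sqrt{n}$ then gives
\begin{equation}
\Pb*{\left|\sum_{\nu=1}^n \langle \vec{u}_k, \vec{X}^\nu\rangle - n\, \dE\langle \vec{u}_k, \vec{X}\rangle\right| \geq C \log(n) \sqrt{n}} \leq 2 e^{-c\log(n)^2},
\end{equation}
for absolute constants $c, C > 0$.

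Finally, a union bound over $k \in [r+1]$ (with $r = O(1)$) yields that with probability at least $1 - C e^{-c\log(n)^2}$, every coordinate is controlled; summing the squared bounds gives
\begin{equation}
\norm*{\sum_{\nu=1}^n \vec{X}_i^\nu - n \dE \vec{X}}^2 \leq (r+1) \cdot C^2 \log(n)^2 n,
\end{equation}
and extracting the square root yields the claim. The only delicate point is making sure the sub-exponential $\psi_1$-norm bound on $\langle \vec{u}_k, \vec{X}\rangle$ is uniform in $\vec{u}_k$ and in $\vec{w}_i^0$; this is fine since $\vec{u}_k$ and $\vec{w}_i^0$ are unit vectors so all relevant Gaussian marginals are standard, and the boundedness of $\sigma'$ plus the sub-gaussianity of $f^\star$ provide constants that do not depend on $d$. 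No step here is particularly hard; the main obstacle is bookkeeping the independence between $V_i$ (random via $\vec{w}_i^0$) and the samples, which is cleanly resolved by conditioning on $W^0$ throughout.
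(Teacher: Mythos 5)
Your proof is correct and takes essentially the same route as the paper: decompose the vector in an orthonormal basis of the at-most-$(r+1)$-dimensional subspace $V_i$, apply Bernstein's inequality to each scalar coordinate (which is a zero-mean sub-exponential random variable, as a product of the Gaussian $\langle \vec{u}_k, \vec{z}^\nu\rangle$ with the bounded $\sigma'$ and the sub-gaussian $f^\star(\vec{z}^\nu)$), and union bound over the finitely many coordinates. Your write-up is a bit more explicit about the conditioning on $\vec{w}_i^0$ and the use of Lemma~\ref{lem:app:orlicz_submult} for the Orlicz-norm product bound, but these are details the paper leaves implicit; the argument is the same.
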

\begin{proof}
    Let $(\vec{u}_1, \dots, \vec{u}_{r+1})$ be an orthonormal basis of $V_i$. Since we have for any vector $\vec{x} \in V_i$
    \[ \norm{\vec{x}}^2 = \sum_{k=1}^{r+1} \langle \vec x, \vec{u}_k \rangle^2, \]
    it suffices to bound such a scalar product with high probability. Each term of the form $\langle \vec{X}_i^\nu - \dE \vec{X}, \vec{u_k} \rangle$ is a sub-exponential random variable with zero mean and bounded variance, and hence by another application of Bernstein's inequality
    \begin{equation}
        \Pb*{\left|\langle \sum_{\nu=1}^n\vec{X}_i^\nu - n \dE \vec{X}, \vec{u}_k \rangle\right| \geq {\log(n)}{\sqrt{n}}} \leq e^{-c\log(n)^2}
    \end{equation}
    The result ensues from a union bound, and the equivalence of norms in finite-dimensional spaces.
\end{proof}
As an easy corollary of this lemma, we get the following bound on $S_2'$:
\begin{corollary}\label{cor:app:S_11_concentration}
    With probability at least $1 - Ce^{-c \log(n)^2}$,
    \begin{equation}
        S_2' = \cO\left(r n\log(n)^2 + \frac{\log(n)^\ell r^{(\ell-1)/2} n \sqrt{n}}{d^{(\ell-1)/2}}\right)
    \end{equation}
\end{corollary}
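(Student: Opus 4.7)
}
The plan is to expand $S_2'$ around its mean, reducing the problem to two concentration statements: a Cauchy-Schwarz bound using Lemma \ref{lem:app:S_11_norm_concentration}, and a sharper scalar concentration along the direction of $\dE \vec{X}$. Set
\begin{equation*}
A \;=\; \sum_{\nu=1}^n \vec{X}_i^\nu - n\,\dE \vec{X}, \qquad B \;=\; \sum_{\nu=1}^n \vec{Y}_i^\nu - n\,\dE \vec{X},
\end{equation*}
where both $A$ and $B$ lie in $V_i$ since $\dE \vec{X}\in V_i$. Expanding the inner product and cancelling $n^2\norm{\dE \vec{X}}^2$ yields
\begin{equation*}
S_2' \;=\; n\langle \dE \vec{X},\, A\rangle \;+\; n\langle \dE \vec{X},\, B\rangle \;+\; \langle A, B\rangle .
\end{equation*}

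The bilinear term is handled by Lemma \ref{lem:app:S_11_norm_concentration}, which gives $\norm{A},\norm{B}\leq C\sqrt{r}\log(n)\sqrt{n}$ with probability at least $1 - Ce^{-c\log(n)^2}$. Cauchy-Schwarz then gives $|\langle A,B\rangle|=\cO(r\, n\log(n)^2)$, accounting for the first term in the claimed bound.

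The linear-in-$A$ term is the delicate one, since a naive Cauchy-Schwarz bound $n\norm{\dE \vec{X}}\norm{A}$ would yield an extra factor $\sqrt{r}$ beyond what is claimed. Instead, I would project onto the unit vector $\vec{u}=\dE \vec{X}/\norm{\dE \vec{X}}$ and write
\begin{equation*}
\langle \dE \vec{X}, A\rangle \;=\; \norm{\dE \vec{X}}\cdot \sum_{\nu=1}^n \bigl(\langle \vec{u}, \vec{X}^\nu\rangle - \dE\langle \vec{u}, \vec{X}\rangle\bigr).
\end{equation*}
Each summand is a centred, sub-exponential scalar random variable with $\psi_1$-norm bounded by a constant independent of $d$, since $\langle \vec{u}, \vec{z}\rangle$ is standard Gaussian, $\sigma'$ is bounded (Assumption \ref{assump:smoothness}), and $f^\star(\vec{z})$ is sub-Gaussian. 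Bernstein's inequality then yields $|\sum_\nu (\langle \vec{u},\vec{X}^\nu\rangle - \dE\langle \vec{u},\vec{X}\rangle)|\leq C\log(n)\sqrt{n}$ with probability $1-e^{-c\log(n)^2}$, so that $|\langle \dE \vec{X}, A\rangle|\leq C\norm{\dE \vec{X}}\log(n)\sqrt{n}$.

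Finally, plugging in the bound $\norm{\dE \vec{X}}=\cO(r^{(\ell-1)/2}\log(d)^{\ell-1}/d^{(\ell-1)/2})$, which follows from Lemma \ref{lem:app:grad_exp_bounds} (conditional on the high-probability event from Lemma \ref{lem:app:bound_m}), produces
\begin{equation*}
n\,|\langle \dE \vec{X}, A\rangle| \;=\; \cO\!\left(\frac{\log(n)^\ell\, r^{(\ell-1)/2}\, n\sqrt{n}}{d^{(\ell-1)/2}}\right),
\end{equation*}
where the $\log(n)^\ell$ absorbs the $\log(d)^{\ell-1}\log(n)$ factor since $n$ is polynomially related to $d$. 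The same bound applies to $\langle \dE \vec{X}, B\rangle$, and the three contributions combine to give the claimed estimate on $S_2'$ with the advertised failure probability after a union bound. The main obstacle is the bookkeeping needed to obtain the exponent $r^{(\ell-1)/2}$ rather than $r^{\ell/2}$; this is why the scalar Bernstein step along $\vec{u}$ is essential, rather than applying Cauchy-Schwarz directly to the norm bound from Lemma \ref{lem:app:S_11_norm_concentration}.
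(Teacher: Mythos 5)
Your proposal is correct and follows the same overall decomposition and probability bookkeeping as the paper. Both proofs write $S_2' = n\langle \dE\vec{X}, A\rangle + n\langle\dE\vec{X}, B\rangle + \langle A, B\rangle$ with $A = \sum_\nu \vec{X}_i^\nu - n\dE\vec{X}$ and $B$ its $\vec{Y}$-analogue, and both handle $\langle A, B\rangle$ by Cauchy--Schwarz together with the norm bound of Lemma \ref{lem:app:S_11_norm_concentration}. The one genuine difference is your treatment of the linear terms: the paper simply bounds $n|\langle\dE\vec{X}, A\rangle| \leq n\norm{\dE\vec{X}}\,\norm{A}$ by Cauchy--Schwarz and then inserts the norm bounds, whereas you project onto $\vec{u} = \dE\vec{X}/\norm{\dE\vec{X}}$ and run a scalar Bernstein argument (essentially Lemma \ref{lem:app:linear_concentration} applied along $\vec{u}$). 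As you correctly note, the Cauchy--Schwarz route picks up an extra $\sqrt{r}$ from $\norm{A} = \cO(\sqrt{r}\log(n)\sqrt{n})$, so it produces $r^{\ell/2}$ rather than the $r^{(\ell-1)/2}$ that appears in the corollary statement; your scalar projection avoids this. Since $r$ is a fixed finite constant, this is immaterial asymptotically — the paper's looser constant is harmless — but your version does reproduce the advertised $r$-exponent exactly. In short: same decomposition and probability estimates, a marginally sharper one-dimensional concentration step for the linear contributions; the extra sharpness buys a cleaner match to the stated constant but changes nothing downstream.
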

\begin{proof}
    We use the following decomposition:
    \begin{align*}
    S_2' &= n \langle \sum_{\nu=1}^n\vec{X}_i^\nu - n \dE \vec{X}, \dE \vec{X} \rangle + n\langle \dE \vec{X}, \sum_{\nu=1}^n\vec{Y}_i^\nu - n \dE \vec{X}\rangle + \langle \sum_{\nu=1}^n\vec{X}_i^\nu - n \dE \vec{X}, \sum_{\nu=1}^n\vec{Y}_i^\nu - n \dE \vec{X}\rangle \\
    &\leq n \norm{\dE \vec{X}} \left(\norm*{\sum_{\nu=1}^n\vec{X}_i^\nu - n \dE \vec{X}} +\norm*{\sum_{\nu=1}^n\vec{Y}_i^\nu - n \dE \vec{X}}  \right) + \norm*{\sum_{\nu=1}^n\vec{X}_i^\nu - n \dE \vec{X}}\cdot \norm*{\sum_{\nu=1}^n\vec{Y}_i^\nu - n \dE \vec{X}}
    \end{align*}
    by the Cauchy-Schwarz inequality. The result ensues from the high probability bounds of Lemma \ref{lem:app:S_11_norm_concentration}, as well as the bound on $\norm{\dE \vec{X}}$ from Lemma \ref{lem:app:norm_exp}.
\end{proof}

We finally bound the last term, which closes the proof of Proposition \ref{prop:app:update_concentration}.
\begin{lemma}\label{lem:app:S_12_concentration}
    Let $i \in [p]$. With probability at least $1 - 2e^{-c \log(n)^2} - e^{-c\log(d)^2}$, we have
    \begin{equation}
        |S_2''| \leq 2\log(d) n\sqrt{d}
    \end{equation}
\end{lemma}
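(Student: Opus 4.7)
} The key structural observation is the multiplicative decomposition $\vec{X}_\bot^\nu = h_\nu\,\vec{u}_\nu$, where
$$h_\nu := \sigma'(\langle \vec{w}_i^0, \vec{z}^\nu\rangle)\,f^\star(\vec{z}^\nu)$$
depends only on $P_{V_i}\vec{z}^\nu$, and $\vec{u}_\nu := \Pi_\bot \vec{z}^\nu$ is a centered Gaussian vector on $V_i^\bot$. Because $\vec{w}_i^0$ and each $\vec{w}_k^\star$ lie in $V_i$, the scalar $h_\nu$ and the Gaussian directional part $\vec{u}_\nu$ are independent; the same decomposition $\vec{Y}_\bot^\nu = h'_\nu \vec{u}'_\nu$ applies with an independent copy of the randomness. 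The plan is to iterate conditioning: first on $(h_\nu),(h'_\nu)$, then on $S_X$, at each step reducing $S_2''$ to an object with a clean distribution, ultimately Gaussian.

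First, I would bound the random scalars. Since $\sigma'$ is bounded and $f^\star$ is Lipschitz by Assumption~\ref{assump:smoothness}, each $h_\nu$ is sub-Gaussian with $\psi_2$-norm $O(1)$, so $h_\nu^2$ is sub-exponential with mean $O(1)$. Applying Bernstein's inequality to $\sum_\nu h_\nu^2$ and $\sum_\nu h'^{2}_\nu$ gives $H^2:=\sum_\nu h_\nu^2 \leq Cn$ and $H'^2:=\sum_\nu h'^{2}_\nu \leq Cn$ simultaneously, with probability at least $1-2e^{-c\log(n)^2}$.

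Next, I would condition on $(h_\nu)_\nu$. Because $\vec{u}_\nu$ is an isotropic Gaussian on $V_i^\bot$ independent of $h_\nu$, the vector $S_X = \sum_\nu h_\nu \vec{u}_\nu$ is a centered Gaussian on $V_i^\bot$ with covariance $H^2\Pi_\bot$. Hence $\|S_X\|^2/H^2 \sim \chi^2_{d - \dim V_i}$, and a standard chi-squared deviation bound yields $\|S_X\|^2 \leq 2H^2 d \leq Cnd$ with conditional probability at least $1 - e^{-c\log(d)^2}$. Conditioning additionally on $S_X$ and on $(h'_{\nu'})$, I rewrite
$$S_2'' \;=\; \langle S_X, S_Y \rangle \;=\; \sum_{\nu'=1}^n h'_{\nu'}\,\langle S_X,\vec{u}'_{\nu'}\rangle,$$
which is a centered Gaussian in the remaining randomness $(\vec{u}'_{\nu'})$ with variance $H'^2 \|S_X\|^2 \leq Cn^2 d$. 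Gaussian tails then give $|S_2''| \leq C\log(d)\,H'\|S_X\| \leq C\log(d)\,n\sqrt{d}$ with conditional probability at least $1 - e^{-c\log(d)^2}$. A union bound over the three conditional events closes the argument with the stated failure probability.

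I do not anticipate a major obstacle: the proof is a cascade of two Gaussian concentration steps wrapped around a Bernstein bound for sub-exponentials. The only subtle point is the independence $h_\nu \perp \vec{u}_\nu$, which is precisely why the decomposition must be taken relative to $V_i = V^\star + \vect(\vec{w}_i^0)$ rather than $V^\star$ alone; any stray component of $\vec{w}_i^0$ outside the conditioning subspace would break the Gaussian structure on $V_i^\bot$ and force a much more delicate Hanson--Wright-type analysis of a U-statistic, which the decomposition neatly avoids.
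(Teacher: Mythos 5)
Your proof is correct and follows essentially the same route as the paper: the crucial factorization $\vec{X}_\bot^\nu = h_\nu\,\vec{u}_\nu$ with $h_\nu$ measurable with respect to $P_{V_i}\vec{z}^\nu$ and $\vec{u}_\nu$ an independent isotropic Gaussian on $V_i^\bot$, followed by Bernstein for the scalar sums $H^2, H'^2$. The only cosmetic difference is that the paper reduces $S_2''$ in distribution directly to $\|\alpha\|\|\beta\|\langle\vec{X}_\bot,\vec{Y}_\bot\rangle$ and applies a single Bernstein bound to the resulting sum of $\approx d$ sub-exponential products, whereas you insert an intermediate chi-squared control of $\|S_X\|$ before a Gaussian tail bound; both land on the same $n\log(d)\sqrt{d}$ scale with the same failure probability up to constants.
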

\begin{proof}
    Define $\alpha^\nu = \sigma'(\langle \vec{w}_i^0, \vec{z}^\nu \rangle) f^\star(\vec{z}^\nu)$, and $\beta^\nu$ its equivalent for $\vec{Y}^\nu$. Since $\alpha^\nu$ only depends on $\vec{z}_i$, the distribution of $\sum \vec{X}^\nu_\bot$ is the same as $\norm{\alpha} \vec{X}_\bot$, where $\vec{X}_\bot$ is a normal random vector in $V_i^\bot$. Therefore, we have
    \[ S_2'' \overset{d}{=} \norm{\alpha} \cdot \norm{\beta} \cdot \langle \vec{X}_\bot, \vec{Y}_\bot \rangle\]
    for two independent Gaussian vectors $\vec{X}_\bot, \vec{Y}_\bot$.
    Now, both $\norm{\alpha}^2$ and $\norm{\beta}^2$ are the sum of $n$ sub-exponential random variables with bounded variance, and $\langle \vec{X}_\bot, \vec{Y}_\bot \rangle$ is the sum of $d$ such variables. Hence, by Bernstein's inequality, with probability $1 - 2e^{-c\log(n)^2}$,
    \[ \norm{\alpha}^2 \leq n + \log(n)\sqrt{n} \leq 2 n \quand \norm{\beta}^2 \leq 2n,  \]
    and with probability at least $1 - e^{-c\log(d)^2}$ 
    \[ \langle \vec{X}_\bot, \vec{Y}_\bot \rangle \leq \log(d)\sqrt{d},\]
    which ends the proof.
\end{proof}

\subsection{Proof of Theorems \ref{thm:one_step_lower_bound} and \ref{thm:one_step_learning}}

We begin with a proposition that summarizes everything from the two previous sections.

\begin{proposition}\label{prop:app:grad_final}
Let $\ell$ be the leap index of $f^\star$, and assume that $n = \Omega(d^{\ell- \delta})$ for some $\delta > 0$. There is an event with probability at least $1 - cpe^{-\log(d)^2}$ such that for $i\in [p]$ 
 \begin{align}
     \norm*{\vec{\pi}_i^1 - \left( \vec{\pi}_i^0 + \frac{\eta a_i}{\sqrt{p}}\: C_{\ell}^\star \times_{1 \dots (\ell-1)} (\vec{w}_i^0)^{\otimes (\ell-1)} \right)} &= \cO\left( \frac{\review{\eta}r^{\ell/2}\polylog(d)}{\review{p}d^{\ell/2}} + \frac{\sqrt{r}\eta \log(d)}{p\sqrt{n}} \right) \label{lem:app:proj_w_final} \\
     \norm*{\vec{w}_i^1}^2 &= \Theta\left( 1 + \frac{\eta^2 X_i \norm*{\vec{\pi_i^0}}\review{^{2(\ell - 1)}}}{p^2} + \frac{\eta^2 d}{np^2} \right)\label{eq:app:norm_w_final}
 \end{align}
 where the $X_i$ are i.i.d random variables as in Lemma \ref{lem:app:norm_exp}.
\end{proposition}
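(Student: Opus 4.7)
The plan is to prove both equations by starting from the exact update $\vec{w}_i^1 = \vec{w}_i^0 + \eta \vec{g}_i$ and decomposing $\vec{g}_i = \dE[\vec{g}_i] + (\vec{g}_i - \dE[\vec{g}_i])$, so that Lemma~\ref{lem:app:grad_exp} (for the mean) and the concentration lemmas of Section~\ref{sec:concentration} take care of the two pieces separately. All statements should hold on the intersection of the high probability events of Lemma~\ref{lem:app:bound_m}, Lemma~\ref{lem:app:linear_concentration}, Lemma~\ref{lem:app:s1_bound}, Corollary~\ref{cor:app:S_11_concentration} and Lemma~\ref{lem:app:S_12_concentration}, whose failure probability sums (by a union bound over $i\in[p]$) to at most $cp e^{-c\log(d)^2}$.

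For the first equation \eqref{lem:app:proj_w_final}, write $\vec{\pi}_i^1 = \vec{\pi}_i^0 + \eta \Pi^\star \dE[\vec{g}_i] + \eta \Pi^\star(\vec{g}_i - \dE[\vec{g}_i])$. The mean contribution is handled by Lemma~\ref{lem:app:grad_exp_bounds}: the leading term $\frac{a_i}{\sqrt p} C_\ell^\star \times_{1\dots(\ell-1)}(\vec{w}_i^0)^{\otimes(\ell-1)}$ already lies inside $V^\star$ (since every mode of $C_\ell^\star$ lives in $V^\star$ by Lemma~\ref{lem:app:low_dim_hermite}), so $\Pi^\star$ acts as the identity on it; the higher-order terms of Lemma~\ref{lem:app:grad_exp}, together with the Stein cross-term proportional to $\vec{w}_i^0$ whose projection is $\vec{\pi}_i^0$, are bounded by $\cO(r^{\ell/2}\polylog(d)/d^{\ell/2})$. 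For the fluctuation part, I pick any orthonormal basis $(\vec u_1,\dots,\vec u_r)$ of $V^\star$, apply Lemma~\ref{lem:app:linear_concentration} to each $\langle \vec u_k, \vec g_i - \dE\vec g_i\rangle$, and union bound over $k\in[r]$; this yields a deviation of $\cO(\sqrt r\log(n)/(p\sqrt n))$ after recombining coordinates. Multiplying by $\eta$ and adding the two contributions gives the announced error bound.

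For the norm equation \eqref{eq:app:norm_w_final}, I start from \eqref{eq:app:norm_update}, with $\|\vec{w}_i^0\|^2=1$. The cross term $2\eta\langle\vec{w}_i^0,\vec{g}_i\rangle$ is bounded by combining Lemma~\ref{lem:app:student_cross_exp} (mean) with Lemma~\ref{lem:app:linear_concentration} applied to $\vec{w}=\vec{w}_i^0/\|\vec{w}_i^0\|$ (fluctuation); these give an overall contribution of lower order than the claimed $\Theta$-terms once multiplied by $\eta$. The quadratic term $\eta^2\|\vec g_i\|^2$ is split using \eqref{eq:app:exp_norm_g_decomp}: the cross-expectation contributes $\|\dE\vec g_i\|^2$, which by Lemma~\ref{lem:app:norm_exp} equals $a_i^2 X_i\|\vec\pi_i^0\|^{2(\ell-1)}/p^2$ up to negligible error, and the diagonal expectation is $\Theta(a_i^2 d/(np^2))$, matching the two signal terms in \eqref{eq:app:norm_w_final}. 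Concentration around this expectation follows by combining the three bounds of Section~\ref{sec:concentration}: Lemma~\ref{lem:app:s1_bound} for $S_1$, Corollary~\ref{cor:app:S_11_concentration} for $S_2'$, and Lemma~\ref{lem:app:S_12_concentration} for $S_2''$, plus the decoupling step reducing $S_2$ to $\tilde S_2$. All these fluctuations are of strictly smaller order than the signal terms provided $n=\Omega(d^{\ell-\delta})$, which promotes the two-sided bound from $(1\pm o(1))$ to a $\Theta$ statement.

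The main obstacle is the bookkeeping needed to ensure the two-sided $\Theta$ conclusion in the second equation: one must verify that each of the fluctuation terms of Proposition~\ref{prop:app:update_concentration} — in particular the $\eta^2 d\log(n)^6/(p^2 n\sqrt n)$ term arising from $S_1$ and the $\eta^2\log(n)^\ell r^{(\ell-1)/2}/(p^2 d^{(\ell-1)/2}\sqrt n)$ term from $S_2'$ — remain dominated by $\max(\eta^2 X_i\|\vec\pi_i^0\|^{2(\ell-1)}/p^2,\,\eta^2 d/(np^2))$ under the hypothesis $n=\Omega(d^{\ell-\delta})$ and the a.s.~positivity of $X_i$. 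This requires using the lower bound $X_i>0$ a.s.\ (Lemma~\ref{lem:app:norm_exp}) together with the polylog bound $\|\vec\pi_i^0\|=\Theta(\sqrt{r/d})$ (Lemma~\ref{lem:app:bound_m}), and a short case analysis on which of the two signal terms dominates. The proof then finishes by collecting all these comparisons and writing $\|\vec{w}_i^1\|^2$ as its expectation times $1+o(1)$, which matches the desired $\Theta$ statement after absorbing constants.
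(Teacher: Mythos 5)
Your proof follows essentially the same route as the paper: the first display is the mean expansion from Lemma~\ref{lem:app:grad_exp_bounds} combined with the projection concentration of Proposition~\ref{prop:app:update_concentration} (your coordinate-wise application of Lemma~\ref{lem:app:linear_concentration} over an orthonormal basis of $V^\star$ is exactly how \eqref{eq:app:m_concentration} is obtained), and the second display is the expectation computation of Lemma~\ref{lem:app:norm_exp} plugged into \eqref{eq:app:norm_update} and \eqref{eq:app:exp_norm_g_decomp}, followed by the verification that every fluctuation term in \eqref{eq:app:q_concentration} is negligible relative to that expectation under $n=\Omega(d^{\ell-\delta})$. The paper glosses over the final bookkeeping (calling it ``straightforward (albeit tedious)''); you correctly identify it as the main thing to check, and the case analysis you sketch using $\norm{\vec\pi_i^0}=\Theta(\sqrt{r/d})$ and $X_i>0$ a.s.\ is the right way to carry it out.
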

\begin{proof}
    The proof amounts to checking that all the bounds proven so far are of the right order. The first equality is simply a combination of Lemma \ref{lem:app:grad_exp_bounds} and Proposition \ref{prop:app:update_concentration}. For the second part, notice that Lemma \ref{lem:app:norm_exp} implies that
    \[ \dE\left[ \norm*{\vec{w}_i^1}^2 \right] = \Theta\left( 1 + \frac{\eta^2 X_i \norm*{\vec{\pi_i^0}}^{\review{2(\ell - 1)}}}{p^2} + \frac{\eta^2 d}{np^2} \right), \]
    and it is straightforward (albeit tedious) to check that all bounds in Proposition \ref{prop:app:update_concentration} are negligible with respect to the above expectation.
\end{proof}

\paragraph{Proof of Theorem \ref{thm:one_step_lower_bound}} We first consider the case where $n = \Theta(d^{\ell - \delta})$. A simple triangular inequality yields
\[ \norm{\vec{\pi}_i^1} = \cO\left( \norm{\vec{\pi}_i^0} +  \frac{\eta\norm{\vec{\pi}_i^0}^{\ell - 1}}{p} \right) \]
where the second part is due to Lemma \ref{lem:app:norm_exp}. On the other hand, the middle term in \eqref{eq:app:norm_w_final} becomes negligible w.r.t the rightmost one, so we get
\[ \norm*{\vec{w}_i^1} = \Omega\left( 1 + \frac{\eta \,d^{\delta/2}}{p} \right) \]
This implies
\begin{equation}
    \frac{\norm{\vec{\pi}_i^1}}{\norm*{\vec{w}_i^1}} = \cO\left( \max \left(\norm*{\vec{\pi}_i^0}, \frac{\norm*{\vec{\pi}_i^0}^{\ell - 1}}{d^{\delta/2}}  \right) \right) = \cO\left( \frac{\polylog(d)}{d^{(1 \wedge \delta)/2}}\right)
\end{equation}
where the last inequality is due to Lemma \ref{lem:app:bound_m}.

\paragraph{Proof of Theorem \ref{thm:one_step_learning}} Now, we take $n = \Omega(d^\ell)$, and \review{$\eta = pd^{(\ell-1)/2}$}. Then, the bounds of Proposition \ref{prop:app:grad_final} become
\[ \norm*{\vec{\pi}_i^1 - a_i d^{(\ell-1)/2} C_{\ell}^\star \times_{1 \dots (\ell-1)} (\vec{w}_i^0)^{\otimes (\ell-1)} } = \cO\left( \frac{\sqrt{r}\polylog(d)}{\sqrt{d}}\right) \quand \norm*{\vec{w}_i^1}^2 = \cO(1) \]
Hence, the first part of Theorem 2 is straightforward: from Lemma \ref{lem:app:norm_exp},
\begin{equation}
    \frac{\norm{\vec{\pi}_i^1}}{\norm*{\vec{w}_i^1}} = \Omega\left( a_i^2 X_i \cdot (\sqrt{d} \norm*{\pi_i^0})^{\ell-1} \right),
\end{equation}
which is a random variable with positive expectation. The latter part is not independent from $d$, but it dominates e.g. a variable of the form $\norm{\vec{z}_r}$ where $\vec{z}_r \sim \cN(0, I_r/2)$ with probability $1 - ce^{-\log(d)^2}$.

For the second part, we write using the higher-order SVD of $C_\ell^\star$
\[ C_{\ell}^\star \times_{1 \dots (\ell-1)} (\vec{w}_i^0)^{\otimes (\ell-1)} = \sum_{j_1, \dots, j_\ell = 1}^{r_\ell} S_{j_1, \dots, j_\ell} \,\langle \vec{w}_i^0, \vec{u}_{j_1}^\star \rangle \dots \langle \vec{w}_i^0, \vec{u}_{j_{\ell-1}}^\star \rangle \, \vec{u}_{j_\ell}^\star \]
which belongs to $V_\ell^\star$.
\review{Finally, by the minimality of the HOSVD, for each $j_\ell \in [r_l]$, $\exists [j_1,\cdots, j_{\ell-1}] \in [r_l]^{\ell-1}$ such that $S_{j_1, \dots, j_\ell} \neq 0$. Therefore, the random variable
\begin{equation}
    u^{i}_{j_\ell} = \sum_{j_1, \dots, j_{\ell-1} = 1}^{r_\ell} S_{j_1, \dots, j_\ell} \,\langle \vec{w}_i^0, \vec{u}_{j_1}^\star \rangle \dots \langle \vec{w}_i^0, \vec{u}_{j_{\ell-1}}^\star \rangle,
\end{equation}
 has full support in $\mathbb{R}$. Furthermore, the all-orthogonality of $S$ implies that \citep{de2000multilinear}:
 \begin{equation}
     \sum_{j_1, \dots, j_{\ell-1} = 1}^{r_\ell} S_{j_1, \dots, j}  S_{j_1, \dots,k} = 0,
 \end{equation}
 whenever $j \neq k$. This implies that $[u^{i}_{1},\cdots, u^{i}_{r_\ell}]$ have full-support in $\mathbb{R}^{r_\ell}$.
 Now, since $u^{i}$ are independent for different $i$, the matrix:
 \begin{equation}
    A=\begin{pmatrix}
         u^{2}_{1},\cdots, u^{i}_{r_\ell}\\
          u^{1}_{1},\cdots, u^{i}_{r_\ell}\\
          \vdots \\
          u^{p}_{1},\cdots, u^{i}_{r_\ell}
    \end{pmatrix},
 \end{equation}
is full-rank with probability $1$. The statement of Theorem \ref{thm:one_step_learning} then follows by noting the absolutely continuity of the pushforward measure of $\lambda_{min}(A)$ w.r.t the Lebesgue measure and considering a small enough neighborhood of the origin.
}

\subsection{Spike+Bulk decomposition}

Having proven Theorems \ref{thm:one_step_lower_bound} and \ref{thm:one_step_learning},  we move to investigate the behavior after multiple gradient steps. First, we relate the discussion above to a ``spike+noise" decomposition of the gradient. 
We start from Equation \eqref{eq:grad}:
\begin{align} 
 \vec{g}_i = \frac{a_i}{\sqrt{p}} \cdot \frac1n\sum_{\nu=1}^n \vec{z}^\nu \sigma'(\langle \vec{w}_i, \vec{z}^\nu \rangle) f^\star(\vec{z}^\nu)
\end{align}
Define $\sigma'_{>1}(u) : \R \rightarrow \R$ as the following function:
\begin{equation}
    \sigma'_{>1}(u)= \sigma'(u)-\mu_1,
\end{equation}
so that $\Ea{\sigma'_{>1}(u)}=0$. We have the following decomposition of the gradient:
\review{\begin{align} 
     \vec{g}_i
&=  
    \frac{a_j}{\sqrt{p}}\frac{1}{n}\mu_1\sum_{i=1}^n y_i\bz_i +\underbrace{\frac{1}{n}\frac{a_j}{\sqrt{p}}\mu_1\sum_{\nu=1}^n \sigma'_{>1}((\bz^\nu)^\top\vec{w}^0) \bz^\nu y^\nu}_{\Delta_i},
\end{align}}
or in matrix form:
\begin{equation}\label{eq:spike+bulk}
    \vec{G} = \vec{u}\vec{v}^\top + \Delta,
\end{equation}
where $ \vec{u}=\frac{\mu_1}{\sqrt{p}}\vec{a},\vec{v} = \frac{1}{n}\sum_{i=1}^n  y_i\bz_i$.
A similar decomposition was utilized in \cite{ba2022high} to provide an asymptotic characterization of the training and generalization errors in the regime $n=\Theta(d)$ and step-size $\eta=\cO(\sqrt{p})$. In particular, they show that the presence of this spike for $\eta=\cO(\sqrt{p})$ is not enough to go beyond the linear kernel regime. 

However, as we see below, it is possible to obtain a precise characterization in the feature learning regime $\eta=\Theta(p)$ and generalizing to multiple steps, with stronger concentration  over the structure of $\Delta$. In particular, we prove that $\Delta$ effectively acts as uniform noise that can be incorporated into the initialization $\bW^{(0)}$.

This is expressed through the following Lemma:
\begin{lemma}\label{lem:delta}
With high probability over the initialization  $W^{0}$, as $n,d \rightarrow \infty$ with $n=\Omega(\max{(p,d)})$, the matrix $\Delta$ satisfies the following:
\begin{enumerate}
    \item For any $\vec{u} \in V^\star$,  with $\norm{\bu}=1$, $\langle \Delta_j, \bu \rangle = \cO\left( \frac{\polylog(d)}{p \sqrt{d}} \right)$.
    \item $\norm{\Delta} = \cO(\polylog d/\sqrt{d})$.
     \item For any $i \neq j, i,j \in [p/2]$ , $\Delta_j^\top\Delta_i = \cO\left(\frac{\polylog(d)}{p^2 \sqrt{d}}\right)$,
\end{enumerate}
where we only consider the first half neurons due to the choice of the symmetric initialization in Equation \eqref{eq:sample_archit}.
\end{lemma}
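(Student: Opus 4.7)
The plan is to handle each of the three bounds by combining an expectation estimate via Stein's identity (exploiting the vanishing zeroth Hermite coefficient of $\sigma'_{>1}$) with a concentration estimate using the Bernstein and Orlicz tools already developed in Section~\ref{sec:concentration}. The fundamental source of smallness throughout is $\|\vec{\pi}_j^0\| = \cO(\polylog(d)/\sqrt d)$ from Lemma~\ref{lem:app:bound_m}, together with the analogous $|\langle \vec{w}_i^0, \vec{w}_j^0\rangle| = \cO(\polylog(d)/\sqrt d)$ for $i \neq j$, which follows from the same spherical-concentration argument.

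For Claim~1, I would split $\langle \Delta_j, \vec{u}\rangle$ into its conditional expectation given $W^0$ and a fluctuation. Stein's lemma applied to $\sigma'_{>1}$ mirrors Lemma~\ref{lem:app:grad_exp}, except that the vanishing zeroth Hermite coefficient wipes out all $k=0$ terms: every surviving contribution carries at least one contraction of $\vec{w}_j^0$ against a coefficient tensor of $f^\star$. Testing against $\vec{u}\in V^\star$ then yields a conditional expectation of size $\cO(|a_j|\,\|\vec{\pi}_j^0\|/\sqrt p) = \cO(\polylog(d)/(p\sqrt d))$. The fluctuation is controlled exactly as in Lemma~\ref{lem:app:linear_concentration}: the summand $\langle \vec{z}^\nu,\vec{u}\rangle\, \sigma'_{>1}(\langle \vec{w}_j^0,\vec{z}^\nu\rangle)\, f^\star(\vec{z}^\nu)$ is sub-exponential with bounded norm, so Bernstein's inequality gives a deviation of order $\log(n)/\sqrt n$, absorbed into $\cO(\polylog(d)/(p\sqrt d))$ after the $|a_j|/\sqrt p$ prefactor.

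For Claim~2, I would factor $\Delta = (n\sqrt p)^{-1}\, D_{\vec a}\, \widetilde F\, D_{f^\star}\, Z^\top$, where $(\widetilde F)_{j\nu} = \sigma'_{>1}(\langle \vec{w}_j^0,\vec{z}^\nu\rangle)$, $D_{f^\star} = \diag(f^\star(\vec{z}^\nu))$, $D_{\vec a} = \diag(\vec a)$, and $Z = [\vec{z}^1,\dots,\vec{z}^n]$, and bound each factor separately: $\|D_{\vec a}\| \leq p^{-1/2}$, $\|D_{f^\star}\| = \cO(\polylog d)$ by sub-Gaussian tails, and $\|Z\| = \cO(\sqrt n + \sqrt d)$ from standard Gaussian matrix concentration. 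The delicate piece is $\|\widetilde F\|$, which I would obtain via matrix Bernstein applied to $\widetilde F \widetilde F^\top = \sum_\nu \widetilde{\vec f}_\nu \widetilde{\vec f}_\nu^{\top}$: the diagonal entries of $\dE[\widetilde F \widetilde F^\top]$ are $n\|\sigma'_{>1}\|_\gamma^2$, while the off-diagonals are $\cO(\polylog(d)/\sqrt d)$ because their Hermite expansion in $\langle \vec{w}_i^0, \vec{w}_j^0\rangle$ starts at order one. Claim~3 is handled by expanding $\Delta_j^\top \Delta_i$ into a diagonal sum over $\nu$ and an off-diagonal bilinear form over $\nu\neq \nu'$, exactly as in Section~\ref{sec:concentration}: the diagonal conditional expectation is $\cO(\polylog(d)/\sqrt d)$ by the same Hermite expansion in $\langle \vec{w}_i^0, \vec{w}_j^0\rangle$ and concentrates by the Orlicz-norm bound of Lemma~\ref{lem:app:s1_bound}, while the off-diagonal piece is decoupled via the Pena--De la Pena theorem and then bounded by the same split of $\R^d$ into $V^\star + \vect(\vec{w}_i^0,\vec{w}_j^0)$ and its orthogonal complement that drives $S_2'$ and $S_2''$ in Lemmas~\ref{lem:app:S_11_norm_concentration}--\ref{lem:app:S_12_concentration}.

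The main obstacle is Claim~2, since the full operator-norm bound rules out any row-wise union bound. The argument hinges on a sharp matrix-Bernstein estimate for $\widetilde F$, whose rows are correlated through the shared $\vec{z}^\nu$'s; the only available cancellation is the Hermite-level one coming from $\dE[\sigma'_{>1}(\langle \vec{w}, \vec z\rangle)] = 0$, which is precisely what forces the off-diagonals of $\dE[\widetilde F\widetilde F^\top]$ to be subdominant and keeps $\|\widetilde F\|$ close to the Frobenius-normalized bulk prediction $\sqrt n\,\|\sigma'_{>1}\|_\gamma$.
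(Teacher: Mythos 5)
Your plans for Claims~1 and~3 coincide with the paper's proof: for Claim~1 the paper likewise bounds $\langle \Delta_j,\vec{u}\rangle$ by combining the Stein-identity expectation of Lemma~\ref{lem:app:grad_exp} (the $\mu_1=0$ killing the order-zero term) with the Bernstein concentration of Lemma~\ref{lem:app:linear_concentration}; for Claim~3 the paper repeats Proposition~\ref{prop:app:update_concentration} to concentrate $\langle\vec{g}_i,\vec{g}_j\rangle$ around its mean and then bounds the mean via the decomposition \eqref{eq:app:exp_norm_g_decomp}, using the $N$-point Lipschitz Lemma~\ref{lem:app:N_point_lipschitz} where you use a Hermite expansion in $\langle\vec{w}_i^0,\vec{w}_j^0\rangle$ -- the two arguments control the same quantity by the same underlying mechanism. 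Where you deviate is Claim~2: the paper simply delegates the operator-norm bound to Lemma~14 of \citet{ba2022high}, whereas you build your own estimate via the factorization $\Delta = (n\sqrt p)^{-1}\,D_{\vec a}\,\widetilde F\,D_{f^\star}\,Z^\top$ and a product of factor-wise norms. This is a legitimate self-contained alternative and in the regime $p=\Theta(d)$ (where the lemma is actually used) it gives the -- in fact slightly stronger -- bound $\cO(\polylog(d)/p)$.

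However, your argument for $\|\widetilde F\|=\cO(\sqrt n\,\polylog d)$ has a genuine gap. You assert that the off-diagonals of $\dE[\widetilde F\widetilde F^\top]$ are $\cO(n\,\polylog(d)/\sqrt d)$ because the pairwise Hermite expansion starts at order one in $\langle\vec{w}_i^0,\vec{w}_j^0\rangle$; this is correct entrywise, but entrywise off-diagonal smallness alone does \emph{not} control the operator norm: a $p\times p$ matrix with $\Theta(n)$ diagonal and $\cO(n\polylog(d)/\sqrt d)$ off-diagonal entries could a priori have operator norm as large as $\cO(n\sqrt d\,\polylog d)$ when $p=\Theta(d)$, which would only give $\|\widetilde F\|=\cO(\sqrt n\,d^{1/4}\polylog d)$ and lose the claim. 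What rescues the bound is the \emph{structural} fact that $K:=\dE[\widetilde F\widetilde F^\top]/n$ is a Gram (kernel) matrix: expanding $\sigma'_{>1}$ in Hermite polynomials one writes $K = \sum_{k\geq 1} c_k\,(W^0(W^0)^\top)^{\odot k}$ with $c_k\geq 0$, and the Schur product theorem (together with $(W^0(W^0)^\top)_{ii}=1$) gives $\|(W^0(W^0)^\top)^{\odot k}\|\leq\|W^0(W^0)^\top\|=\cO(1+\sqrt{p/d})$, so $\|K\|=\cO(1)$ for $p\lesssim d$. Only after this step does the matrix-Bernstein (or independent-column sub-Gaussian concentration) argument give $\|\widetilde F\|=\cO(\sqrt n\,\polylog d)$. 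Without invoking this PSD/Hadamard structure, the step from the entrywise estimate to $\|\widetilde F\|=\cO(\sqrt n)$ is unsupported; with it, your route is sound.
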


\begin{proof}
Without loss of generality, we assume that $\mu_1 = 0$ and hence that $\Delta_i = \vec{g}_i$.
By Lemma \ref{lem:app:grad_exp}, since $\mu_1 = 0$; we have $\Ea{\Delta_j^\top \bv}=\cO\left( \frac{\polylog(d)}{p \sqrt{d}} \right)$. Furthermore, from Lemma \ref{lem:app:linear_concentration}, we obtain that, with high probability:
\begin{equation}
    \abs{\Delta_j^\top \bv-\Ea{\Delta_j^\top \bv}} = \cO\left( \frac{\log(n)}{p\sqrt{d}} \right).
\end{equation}
This proves Part (i).
Part (ii) follows from Lemma 14 in \cite{ba2022high}.

It remains to show Part (iii). The same proof as in Proposition \ref{prop:app:update_concentration} (Eq. \eqref{eq:app:q_concentration}) implies that, with high probability,
\begin{equation}
    \langle \vec{g_i}, \vec{g_j} \rangle  = \dE[\langle \vec{g_i}, \vec{g_j} \rangle] +  \cO\left( \frac{\polylog(d)}{p^2\sqrt{d}} \right),
\end{equation}
and hence we only need to bound the expectation $\dE[\langle \vec{g_i}, \vec{g_j} \rangle]$. In turn, the decomposition of Equation \eqref{eq:app:exp_norm_g_decomp} still holds, and we get
\begin{equation}
    \dE[\langle \vec{g_i}, \vec{g_j} \rangle]  \leq \norm{\dE[\vec{g_i}]} \norm{\dE[\vec{g_j}]} + \frac{a^2_i}{np^2}\dE\left[ \norm{\vec{z}}^2 \sigma'(\langle \vec{w}_i^0, \vec{z} \rangle)\sigma'(\langle \vec{w}_j^0, \vec{z} \rangle) f^\star(\vec{z})^2\right]
\end{equation}
Since $\mu_1 = 0$, the bound of Lemma \ref{lem:app:norm_exp} becomes
\[ \norm{\dE[\vec{g_i}]} \leq \frac{\pi_i}{p} = \cO\left(\frac{\log(d)}{p\sqrt{d}} \right),\]
and it remains to bound the cross term. The main argument is the following lemma, which is the generalization (with an identical proof) of Lemma D.4 in \cite{arnaboldi.stephan.ea_2023_high}:
\begin{lemma}\label{lem:app:N_point_lipschitz}
     Let $N \geq 0$ be fixed, and $f_1, \dots, f_N$ be a sequence of functions with bounded first and second derivatives. Consider the function on $N\times N$ matrices
    \begin{equation}
        F(\Sigma) = \review{\dE_{\vec{z} \sim \cN(0, \Sigma)}} [f_1(z_1) \dots f_N(z_N)]
    \end{equation}
    Then, for $\Sigma, \Sigma'$ two semidefinite positive matrices with unit diagonal, we have
    \begin{equation}
        \left|F(\Sigma) - F(\Sigma')\right| \leq C \norm{\Sigma-\Sigma'}_\infty.
    \end{equation}
\end{lemma}
Now, we first have by the same arguments as in Lemma \ref{lem:app:norm_exp}
\[ \dE\left[ \norm{\vec{z}}^2 \sigma'(\langle \vec{w}_i^0, \vec{z} \rangle)\sigma'(\langle \vec{w}_j^0, \vec{z} \rangle) f^\star(\vec{z})^2\right] = d \dE\left[\sigma'(\langle \vec{w}_i^0, \vec{z} \rangle)\sigma'(\langle \vec{w}_j^0, \vec{z} \rangle) f^\star(\vec{z})^2\right] + \cO\left( \sqrt{d} \right),\]
so we only to bound the first term of the RHS. Expanding the definition of $f^\star$, the latter is a sum of $k^2$ terms of the form
\begin{equation*}
    \dE[\sigma'(\langle \vec{w}_i^0, \vec{z} \rangle)\sigma'(\langle \vec{w}_j^0, \vec{z} \rangle) \sigma_k^\star(\langle \vec{w}_k^\star, \vec{z} \rangle)\sigma_{k'}^\star(\langle \vec{w}_k^\star, \vec{z} \rangle)],
\end{equation*}
which falls under the framework Lemma \ref{lem:app:N_point_lipschitz} for $N = 4$. In particular, since $\mu_1 = 0$, $F(\Sigma) = 0$ whenever we have $\Sigma_{1i} = \Sigma_{2j} = 0$ for $i \neq 1, j \neq 2$. Hence, by an application of Lemma \ref{lem:app:N_point_lipschitz}, we have 
\begin{equation*}
    \dE[\sigma'(\langle \vec{w}_i^0, \vec{z} \rangle)\sigma'(\langle \vec{w}_j^0, \vec{z} \rangle) \sigma_k^\star(\langle \vec{w}_k^\star, \vec{z} \rangle)\sigma_{k'}^\star(\langle \vec{w}_k^\star, \vec{z} \rangle)] \leq C \max(\langle \vec{w}_i^0, \vec{w}_j^0 \rangle, \pi_i, \pi_j) \leq C \frac{\log(d)}{\sqrt{d}}
\end{equation*}
with high probability, which ends the proof.
\end{proof}

We next prove that the norm of $\bw_i^1$ after the first gradient step posseses a simplified dimension-independent limit:
\begin{lemma}\label{lem:norm_first_step}
Suppose $n=\Theta(d)$. Then, there exists a constant $C$, such that for any neuron $i$, with high-probability as $d \rightarrow \infty$, with step-size $\eta$:
\begin{equation}
    \norm{\bw_i^1}^2 = 1+\eta C a^2_i + \cO(\frac{\polylog d}{\sqrt{d}})  
\end{equation}
\end{lemma}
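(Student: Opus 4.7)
The plan is to start from the update rule \eqref{eq:app:norm_update}, which with $\norm{\bw_i^0}=1$ reads
\begin{equation*}
\norm{\bw_i^1}^2 \;=\; 1 \,+\, 2\eta \langle \bw_i^0, \bg_i \rangle \,+\, \eta^2 \norm{\bg_i}^2,
\end{equation*}
and to control each of the two gradient-dependent terms separately. The linear term is negligible: by Lemma \ref{lem:app:student_cross_exp} one has $|\dE \langle \bw_i^0, \bg_i\rangle| = \cO(\polylog(d)/(p\sqrt{d}))$ (using $\ell\geq 1$), and Lemma \ref{lem:app:linear_concentration} applied with $\bw = \bw_i^0$ gives concentration around this expectation at scale $\log(n)/(p\sqrt{n})$. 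Together this gives $|\langle \bw_i^0, \bg_i\rangle| = \cO(\polylog(d)/\sqrt{d})$ on a high-probability event, so $2\eta \langle \bw_i^0, \bg_i\rangle$ is absorbed into the stated remainder.

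The bulk of the work is then to extract a dimension-independent constant from $\eta^2 \norm{\bg_i}^2$. By Eq.~\eqref{eq:app:exp_norm_g_decomp},
\begin{equation*}
\dE \norm{\bg_i}^2 \;=\; \tfrac{n-1}{n}\norm{\dE \bg_i}^2 \,+\, \tfrac{a_i^2}{np^2}\,\dE\bigl[\norm{\bz}^2\, \sigma'(\langle \bw_i^0, \bz\rangle)^2\, f^\star(\bz)^2\bigr].
\end{equation*}
The first summand is $\cO(\polylog(d)/(p^2 d))$ by Eq.~\eqref{eq:app:norm_exp_cross}. For the second I split $\norm{\bz}^2 = d + (\norm{\bz}^2 - d)$; the remainder contributes $\cO(\sqrt{d})$ via the Cauchy--Schwarz bound of Eq.~\eqref{eq:norm_holder}, leaving $d\cdot\dE[\sigma'(\langle \bw_i^0, \bz\rangle)^2 f^\star(\bz)^2]$. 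This inner expectation depends on $d$ only through the overlaps $\langle \bw_i^0, \bw_k^\star\rangle$, all of which are $\cO(\log(d)/\sqrt{d})$ with high probability by Lemma \ref{lem:app:bound_m}. Applying Lemma \ref{lem:app:N_point_lipschitz} to the joint Gaussian vector $(\langle \bw_i^0, \bz\rangle, \langle \bw_1^\star, \bz\rangle, \dots, \langle \bw_r^\star, \bz\rangle)$ then gives
\begin{equation*}
\dE[\sigma'(\langle \bw_i^0, \bz\rangle)^2 f^\star(\bz)^2] \;=\; \dE_{g\sim \cN(0,1)}[\sigma'(g)^2]\cdot \dE[f^\star(\bz)^2] \,+\, \cO(\log(d)/\sqrt{d}) \;=:\; C_0 \,+\, \cO(\log(d)/\sqrt{d}),
\end{equation*}
with $C_0$ independent of $d$.

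Combining these pieces yields $\dE\norm{\bg_i}^2 = a_i^2\, C_0\,(d/n)/p^2 + \cO(\polylog(d)/\sqrt{d})$, so in the regime $n=\Theta(d)$ the dominant contribution to $\eta^2\norm{\bg_i}^2$ is $\eta^2 a_i^2$ times the bounded constant $C := C_0(d/n)/p^2$ (matching the form $\eta C a_i^2$ up to an absorbed factor of $\eta$, since $\eta$ is held fixed in this regime). Concentration of $\norm{\bg_i}^2$ around this expectation at scale $\cO(\polylog(d)/\sqrt{d})$ is immediate from Proposition \ref{prop:app:update_concentration}, Eq.~\eqref{eq:app:q_concentration}: for $n=\Theta(d)$ and bounded $\eta, p$ the binding error term is $\eta^2 d\,\log(n)^6/(p^2 n\sqrt{n}) = \cO(\polylog(d)/\sqrt{d})$, all others being strictly smaller. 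A union bound over the initialization event of Lemma \ref{lem:app:bound_m} together with the concentration events of Lemmas \ref{lem:app:linear_concentration} and \ref{prop:app:update_concentration} then closes the argument.

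I expect the main obstacle to be the clean identification of the dimension-free constant $C_0$: one must combine the near-orthogonality $\norm{\vec{\pi}_i^0} = \cO(\log(d)/\sqrt{d})$ with the smoothness estimate of Lemma \ref{lem:app:N_point_lipschitz} to \emph{decouple} $\sigma'(\langle \bw_i^0, \bz\rangle)$ from $f^\star(\bz)$ in expectation; once this decoupling is in hand, the rest of the argument is bookkeeping that lines up orders of magnitude already established in the earlier lemmas.
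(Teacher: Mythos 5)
Your proof follows the same overall decomposition as the paper: separate the linear term $2\eta\langle\bw_i^0,\bg_i\rangle$ and the quadratic term $\eta^2\norm{\bg_i}^2$ via Eq.~\eqref{eq:app:norm_update}, expand $\dE\norm{\bg_i}^2$ via Eq.~\eqref{eq:app:exp_norm_g_decomp}, split $\norm{\bz}^2 = d + (\norm{\bz}^2-d)$, and identify the dimension-free limit of $\dE[\sigma'(\langle \bw_i^0,\bz\rangle)^2 f^\star(\bz)^2]$ by sending the student--teacher overlaps to zero. Invoking Lemma~\ref{lem:app:N_point_lipschitz} explicitly to perform that last decoupling step is a cleaner bookkeeping device than the paper's informal ``change of variables,'' and the treatment of the linear term and the Hölder control of the $\norm{\bz}^2-d$ fluctuation match the paper.

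However, there is a genuine gap in the claim that the first summand $\tfrac{n-1}{n}\norm{\dE\bg_i}^2$ of Eq.~\eqref{eq:app:exp_norm_g_decomp} is $\cO(\polylog(d)/(p^2 d))$. Eq.~\eqref{eq:app:norm_exp_cross} gives $\norm{\dE\bg_i}^2 = a_i^2 X_i\,\norm{\vec{\pi}_i^0}^{2(\ell-1)}/p^2 + \cO(\polylog(d)/d^{\ell/2})$, and the power of $\norm{\vec{\pi}_i^0}$ vanishes only for $\ell\geq 2$. For leap index $\ell=1$ --- which is the case Lemma~\ref{lem:norm_first_step} is actually invoked for, in the base case of Theorem~\ref{thm:staircase} and in Lemma~\ref{lem:1dclt} --- the exponent is zero, $X_i$ degenerates to the deterministic constant $\norm{C_1(g^\star)}^2$, and this summand is $\Theta(a_i^2/p^2)$, the \emph{same order} as the noise contribution you retain from $T_1$. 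Dropping it makes your claimed remainder bound $\cO(\polylog(d)/\sqrt{d})$ false: there is an additional $\Theta(1)$ contribution to the constant $C$ that your derivation misses. The paper's proof explicitly retains this term, and the resulting constant combines the ``signal'' piece $X_1$ with the ``noise'' piece $F_{\sigma,g^\star}(0)/\alpha$. To repair your argument, you should not discard the first summand of Eq.~\eqref{eq:app:exp_norm_g_decomp} but rather observe that for $\ell=1$ it also has the form $a_i^2\cdot(\text{dimension-free constant})/p^2 + \cO(\polylog(d)/\sqrt{d})$, and fold it into $C$ alongside $C_0(d/n)/p^2$.
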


\begin{proof}
    Recall Equation \ref{eq:app:exp_norm_g_decomp}:
    \begin{equation}
 \dE\left[\norm{\vec{g}_i}^2\right] = \frac{n(n-1)}{n^2}\norm*{\dE[\vec{g}_i]}^2 + \frac{a^2_i}{np^2} \dE\left[ \norm{\vec{z}^\nu}^2 \sigma'(\langle \vec{w}_i, \vec{z}^\nu \rangle)^2 f^\star(\vec{z}^\nu)^2 \right]
   \end{equation}

Lemma \ref{lem:app:norm_exp} implies that $\norm*{\dE[\vec{g}_i]}^2$ is approximately $a^2_i X_i \cdot \frac{\norm*{\vec{\pi}_i^0}^{2(\ell - 1)}}{p^2}$ for a random variable $X_i$. When $\ell=1$, $X_i$ simply reduces to a constant depending only on $g^*$. The second term can be decomposed as:
\begin{align*}
    \frac{a^2_i}{np^2} \dE\left[ \norm{\vec{z}^\nu}^2 \sigma'(\langle \vec{w}_i, \vec{z}^\nu \rangle)^2 f^\star(\vec{z}^\nu)^2 \right]&=  \frac{a^2_i}{np^2} \dE\left[ d \sigma'(\langle \vec{w}_i, \vec{z}^\nu \rangle)^2 f^\star(\vec{z}^\nu)^2 \right]\\&+\dE\left[ (d-\norm{\vec{z}^\nu}^2) \sigma'(\langle \vec{w}_i, \vec{z}^\nu \rangle)^2 f^\star(\vec{z}^\nu)^2 \right]
\end{align*}
Let $m_0^i \in R^r$ denote the vector of overlaps $\langle \vec{w}^0_i, \vec{w}^*_1\rangle,\cdots,\langle \vec{w}^0_i, \vec{w}^*_k\rangle$
By Holder's inequality, the second term is of order $\cO(\frac{1}{\sqrt{d}})$ while through a change of variables, the first term can be expressed as a function of the overlaps $\langle \vec{w}^0_i, \vec{w}^*_j\rangle$ for $j \in [r]$:
\begin{align*}
    \frac{a^2_i}{np^2}\dE\left[ d\sigma'(\langle \vec{w}_i, \vec{z}^\nu \rangle)^2 f^\star(\vec{z}^\nu)^2 \right]= \frac{a^2_id}{np^2}\dE\left[ \sigma'(\langle \vec{w}_i, \vec{z}^\nu \rangle)^2 f^\star(\vec{z}^\nu)^2 \right]
    &=\frac{a^2_id}{np^2}F_{\sigma,g^*}(M_0)\\
    &=\frac{a^2_id}{np^2}F_{\sigma,g^*}(0) + \cO(\frac{1}{\sqrt{d}})\\
\end{align*}

The result then follows by noting that Lemmas \ref{lem:app:student_cross_exp} and \ref{lem:app:linear_concentration} imply that $\eta \langle \vec{w}, \vec{g}_i \rangle = \cO(\frac{\polylog d}{\sqrt{d}})$ with high probability.
\end{proof}

\subsection{Second step: Proof Sketch for Theorem \ref{thm:staircase}}\label{sec:proof_sketch_thm3}

Before providing detailed proof of Theorem \ref{thm:staircase} for general polynomial activation functions, and a general number of steps, we illustrate the essential idea by analyzing the second gradient step. We suppose that $\frac{n}{d}$ is fixed to a constant $\alpha$. Let $\vec{Z}^0$ denote the batch of inputs used for the first gradient step. We condition on $\vec{Z}^0$ and assume that the high-probability events in Lemma \ref{lem:delta} hold. 
We independently sample another batch of $n$ training inputs $\vec{Z}$ and perform the gradient update:
\begin{align} 
 \vec{g}^{1}_j  = -\nabla_{\vec{w}_j} \cL\left(\hat f(\vec{z}^\nu; W^1, \vec{a}), f^\star(\vec{z}^\nu)\right)
\end{align}

However, unlike the first gradient step, the weights $\vec{w}^1$ are no-longer approximately orthonormal across neurons and contain significant correlation along the teacher subspace.

We have:
\begin{equation}
    \vec{w}^1_j = \eta \frac{a_j\mu_1}{\sqrt{p}} \vec{v} + \vec{w}^0_j + \review{\eta}\Delta_j,
\end{equation}
where $\vec{v}= \frac{1}{n}\sum_{i=1}^n y_i\bz_i$.
By theorem \ref{thm:one_step_learning}, we have that the projection of $\vec{v}$ along the target subspace $V^*$  converges in probability to $C_1(f)$. Let $\vec{v}^*=C_1(f)$.
We show that the alignment of $\vec{v}$ along $\vec{v}^*$ affects the components of the second gradient step along the teacher subspace, allowing the gradient to be sensitive to directions linearly coupled with $\vec{v}^*$ in the target function.

We proceed by analyzing the projection of the above update along a direction in the teacher subspace. Let $\bv_j=P_{V^\star}(\vec{w}_j^1)$ and consider the decomposition $\vec{w}_j^1=\bv_j+P^\perp_{V^\star}(\vec{w}_j^1)$.  We further have from Lemma \ref{lem:norm_first_step} that $\norm{P^\perp_{V^\star}(\vec{w}_j^1)}^2_2$ concentrates to a positive bounded value $c_j$ depending only on $a_j$. For each sample, $\vec{z}_i$ let $\kappa_i= \langle \vec{v}_j , \vec{z}_i \rangle$ denote the projection along the ``signal" $\vec{v}_j$.

We now introduce the following function for $z \in \R$:
\begin{equation}
   \sigma_{\kappa,j}(z) = \sigma(c_j z+\kappa).
\end{equation}
Define $\mu_{1,\kappa,j}=\Eb{z \sim \mathcal{N}(0,1)}{\sigma_{\kappa,j}(z)z}$. Further, let:
\begin{equation}
    \sigma'_{>1,\kappa}(u)= \sigma_{\kappa,j}'(u)-\mu_{1,\kappa,j}.
\end{equation}

From Lemma \ref{lem:app:grad_exp_bounds}, we have that $P_{V^\star}(\vec{v})\overset{\dP}{\longrightarrow} \vec{v}^\star$. Now, let $\vec{u} \in V^\star$ be a direction in the teacher subspace orthogonal to $\vec{v}^\star$. Using, equation \eqref{eq:app:herm_corr}, we have:
\begin{equation}\label{eq:grad_decom}
    \review{\begin{split}
    \Ea{\langle \vec{u}, \vec{g}^{1}_j \rangle} &= a_j\Ea{(f^\star(\bz)-\hat{f}(\bz, \bW^1,\vec{a}))\sigma'(\langle\bz,\vec{w}_j^1 \rangle)(\langle\bz,\vec{u} \rangle)}\\
     &=a_j\Ea{(f^\star(\bz)\mu_{1,\langle\bz,\vec{v}_j\rangle}(\langle \bz_i,\vec{u} \rangle )}-a_j\Ea{\hat{f}(\bz, \bW^1,\vec{a})\sigma'(\langle\bz_j,\vec{w}^1\rangle)(\langle\bz_i,\vec{u} \rangle)}
     \end{split}}
\end{equation}
Where in the first term we took the expectation over $P^\perp_{V^\star}(\vec{\vec{w}^1})$ since it is orthogonal to the teacher-subspace. 

The second term can be expressed as:
\begin{equation}
\langle (\Ea{\hat{f}(\bz, \bW^1,\vec{a})\sigma'(\langle\bz,\vec{w}^1)_j\bz_i}) , \vec{u} \rangle \end{equation}
We have that $\hat{f}(\bz, \bW^1,\vec{a})$ depends only on the directions $\vec{w}_1^1,\cdots,\vec{w}_p^1$. By Lemma \ref{lem:delta}, each of the directions, satisfies $\langle \vec{w}_i, \vec{u} \rangle = \cO(\frac{\polylog(d)}{p\sqrt{d}})$.
Furthermore, one can show that $(\Ea{\hat{f}(\bz, \bW^1,\vec{a})\sigma'(\langle \bz,\vec{w}^1\rangle)_j\bz_i}$ lies in the span of  $\vec{w}_1^1,\cdots,\vec{w}_p^1$. 
Therefore: 
\begin{equation}
    \Ea{\hat{f}(\bz, \bW^1,\vec{a})\sigma'(\langle\bz_i ,\vec{w}^1\rangle)_j\bz_i})^\top\vec{u} \overset{d \rightarrow \infty}{\longrightarrow} 0
\end{equation}

Now, consider the first term i.e $\Ea{f^\star(\bz)\mu_{1,\langle\bz,\vec{v}_j\rangle}(\langle\bz,\vec{u}\rangle)}$.
Denote by $\vec{v}_u^\star$ a unit vector along $\vec{v}_u^\star$.
Let $\vec{v}_u^\star,\vec{u},\vec{u'_1}\cdots,\vec{u'}_{d-2}$ be an orthonormal basis of $\R^d$. Without loss of generality, assume that  $\vec{v}_u^\star,\vec{u},\cdots,\vec{u'}_{r-2}$. span the teacher subspace $V^\star$. We express $y$ using the product Hermite decomposition under the above basis:
\begin{equation}
    y= f^\star(\vec{z}) = \sum_{j_1,\cdots,j_r=1}^\infty \frac{c^\star_{j_1,\cdots,j_r}}{j_1!j_2!\cdots j_r!}\He_{j_1}(\langle\bv_u^\star,\vec{z}\rangle)\He_{j_2}(\langle\vec{u},\vec{z}\rangle)\cdots \He_{j_r}(\langle\vec{u}_{r-2},\vec{z}\rangle).
\end{equation}
Lemma \ref{lem:app:grad_exp} and \ref{lem:delta} imply that $\vec{v}_j \overset{\dP}{\longrightarrow}  c'_j\vec{v}^\star$ where $c'_j$ denotes the constant $\eta a_j \sqrt{p} \alpha \mu_1$ Since $\vec{u} \perp \vec{u'}_{1},\cdots,\vec{u'}_{r-2}$, only the terms in $y$ corresponding to products of the form $\He_{j_1}(\langle\bv^\star_u,\vec{z}\rangle)\He_{j_2}(\langle \vec{u},\vec{z}\rangle)$ contribute to the expectation $\Ea{y\mu_{1,\langle \bz, \vec{v}_j\rangle }(\langle \bz,\vec{u}\rangle)}$ in the limit $d \rightarrow \infty$. Consider the contribution of one such term:
\begin{equation}\label{eq:shifted_hermit_cont}
   \Ea{\He_{j_1}(\langle \bv_u^\star,\vec{z} \rangle)\He_{j_2}(\langle\vec{u},\vec{z}\rangle)\mu_{1,\langle\bz,\vec{v}_j\rangle,j}\langle\bz,\vec{u}\rangle} \rightarrow   \Ea{\He_{j_1}(\langle\bv_u^\star,\vec{z}\rangle)\He_{j_2}(\langle\vec{u},\vec{z}\rangle)\mu_{1,\langle\bz,c'_j\bv^\star\rangle,j}\langle\bz,\vec{u}\rangle}
\end{equation}
Suppose $j_2 \neq 1$, then $\Ea{\He_{j_2}(\langle\vec{u},\vec{z}\rangle)\langle\bz_i,\vec{u}\rangle}=0$. Therefore, the non-zero contributions arise from terms of the form $\He_{j_1}(\langle\bv_u^\star,\vec{z}\rangle)\langle\vec{u},\vec{z}\rangle$. It can be checked that directions $\vec{u}$ having non-zero terms of this form span $U^\star_2$ as defined in Theorem \ref{thm:staircase}. However, in general, the RHS of equation \ref{eq:shifted_hermit_cont} might be $0$ for some choices of $\sigma$ and $a_j$. Moreover, such non-zero contributions might cancel each other for a chosen direction in $U^\star_2$. Furthermore, to obtain high-probability result on the alignment along $U^\star_t$ for a general number of $t$ steps, one needs to quantitatively propagate the expectations and concentration bounds on the projections and norms of W, and show that the magnitude of the projections can be bounded independent of the dimension.
We tackle these issues in the next section and provide a full proof of Theorem \ref{thm:staircase}.

\subsection{Proof of Theorem \ref{thm:staircase}}

The proof proceeds by induction on the number of time-steps $t$. To avoid certain degeneracy conditions in the proof, we restrict ourselves to polynomial activations.
Let $U_t^\star$ be the learned subspace at time-step $t$ according to the definition \ref{def:subspace_conditioning}.

Let $Q_t \in \R^{p \times p}$ denote the overlap matrix for weights of the first-layer neurons at time $t$, i.e.
$Q^t_{i,j}=\langle \bw^t_i,\bw^t_j \rangle \,\, \forall i,j \in [p]$. Let $M_t  \in \R^{r \times p}$ denote the target-network overlap matrix i.e .$M_{i,j}^t=\langle \bw^\star_i,\bw^t_j \rangle \,\, \forall i \in [p], j \in [r]$. Let $\bW^* \in \R^{r \times d}$ denote the matrix with rows $\vec{w_1}^\star,\cdots, \vec{w_r}^\star$.

We denote by $\vec{Z}_t$, the batch of input sampled at time $t \in [T]$. By assumption $\vec{Z}_1,\cdots,\vec{Z}_T$ are independent. Let $\mathcal{F}_t$ denote the natural filtration associated to $\vec{Z}_1,\cdots,\vec{Z}_T$, i.e $\mathcal{F}_t$ is the $\sigma$-algebra generated by $\vec{Z}_1,\cdots,\vec{Z}_t$, and let $\mu_t$ denote the corresponding joint-measure of $\vec{Z}_1,\cdots,\vec{Z}_t$. 
We let $\vec{g}^t_i$ denote the gradient for the $i_{th}$ neuron at time $t$ obtained using the batch $\bZ^{t+1}$.

For any time $t$, let $r_t$ denote the dimension of $U^*_t$ and let $\bW^*_t \in \R^{r_t \times d}$ denote a matrix with rows forming a basis of $U^*_t$, such that $(\bW^*)\top \bW^*_t$ is independent of $d,n$. Thus, $\bW^*_t$ represents a dimension independent basis of $U^*_t$. Let
$\bm \bv_{j,\ba} \in \R^{r_t}$ denote the projections of $\bw^t_j$ along $\bW_t^*$ i.e $\bv_{j,\ba}=\bW_t^*\bw^t_j$. Similarly, for an input $\bz \in \R^d$, we denote the projection of $\bz$ along $\bW_t^*$ by $\kappa=\bW_t^*\bz$.
In what follows, we shall say that a sequence of events $\mathcal{E}_n$ occurs with high-probability as $n,d \rightarrow \infty$ if there exist constants $c, C > 0$ such that $\mathbb{P}(\mathcal{E}_n) \geq 1 - Cpe^{-c\log(n)^2} + Cpe^{-c\log(d)^2}$

At any timestep $t \geq 1$, we prove that the following statements hold with high probability w.r.t $\mu_t$:

\begin{enumerate} 
     \item $Q^t = \tilde{Q}^t_{{\ba}}+\cO(\frac{\text{polylog}d}{\sqrt{d}}), M^t = \tilde{M}^t_{{\ba}}+\cO(\frac{\text{polylog}d}{\sqrt{d}})$, 
    where $\tilde{Q}^t_{{\ba}},\tilde{M}^t_{{\ba}}$ denote dimension-independent matrices with each entry being a polynomial dependent only on $\ba,t$ of $\bw^t_i$, dependent on the second layer $i,\ba$.
    \item Let $\bv \in U_t^\star$, with $\norm{\bv}=1$ be arbitrary. Denote by $\bv^m \in \R^k$, the components of $\bv$ along $\vec{w_1}^\star,\cdots, \vec{w_r}^\star$ i.e $\bv^m=\vec{W^\star}\bv$. Then
    there exists almost surely non-zero random variables $q_{i,t,\bv^m,\ba}$, independent of $d,n$ such that $\langle \bw_i,\bv \rangle = q_{i,t,\bv_m,\ba}+O(\frac{\polylog}{\sqrt{d}})$. Furthermore, $q_{i,t,\bv_m,\ba}$ are non-constant polynomials in $\ba, v_1,\cdots, v_k$.
    \review{
    \item There exists a basis $\bv^{(1)}, \cdots,\bv^{(r_t)}$ of $U_t^\star$ such that the minimum degree in the sequence of polynomials $[q_{i,t,\bv^{(1)}_m,\ba}, \cdots, q_{i,t,\bv^{(r_t)}_m,\ba}]$ is strictly increasing with the minimum degree term in each of the components depending only on $a_i$.
    }
    \item For any $\bv \in {U^\bot}_t^\star \cap V^\star$, $\abs{\langle \bw_i,\bv \rangle}=O(\frac{\polylog}{\sqrt{d}})$, with high probability, for all $i \in [p]$.
\end{enumerate}

\begin{proof}
We proceed by induction over $t$. Suppose that the statements hold at some timestep $t$. We start by proving that
$(i)$ holds at time $t+1$ in expectation:

\begin{lemma}\label{lem:g_exp_t}
$\Ea{Q_t} = \tilde{Q}^t_{{\ba}}+\cO(\frac{\text{polylog}d}{\sqrt{d}})$ and 
    $\Ea{M_t} = \tilde{M}^t_{{\ba}}+\cO(\frac{\text{polylog}d}{\sqrt{d}})$ where each entry of $\tilde{Q}^t_{{\ba}},\tilde{M}^t_{{\ba}}$ is a polynomial of $\ba$ with degree independent of $d,n$.
\end{lemma}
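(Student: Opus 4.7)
The plan is to condition on the filtration $\mathcal{F}_t$, so that $W^t$ can be treated as deterministic given the previous batches, and to compute the one-step expectations $\mathbb{E}\!\left[\langle \bw_i^{t+1}, \bw_j^{t+1}\rangle \mid \mathcal{F}_t\right]$ and $\mathbb{E}\!\left[\langle \bw_i^{t+1}, \vec{w}_k^\star\rangle \mid \mathcal{F}_t\right]$ entry-wise via Gaussian integration by parts, reusing the Hermite machinery already developed for Lemma \ref{lem:app:grad_exp}. The inductive hypothesis gives us a polynomial-in-$\ba$ description of $W^t$ (through $\tilde Q^t_\ba, \tilde M^t_\ba$) up to a $\cO(\polylog(d)/\sqrt d)$ remainder; the goal is to transport this polynomial structure through one gradient update and verify the noise remains of the same order.

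\textbf{Key steps.} First, I would expand the update
\[\langle \bw_i^{t+1}, \bw_j^{t+1}\rangle = Q^t_{ij} - \eta\bigl(\langle \bw_i^t, \vec g_j^t\rangle + \langle \bw_j^t, \vec g_i^t\rangle\bigr) + \eta^2 \langle \vec g_i^t, \vec g_j^t\rangle,\]
and analogously for $M^{t+1}$, then take conditional expectation of each term. Since the residual $f^\star(\bz) - \hat f(\bz; W^t, \vec a^0)$ and the activation derivative $\sigma'(\langle \bw_i^t, \bz\rangle)$ depend on $\bz$ only through the linear projections $(\langle \bw_i^t, \bz\rangle)_{i\in [p]}$ and $(\langle \vec w_k^\star, \bz\rangle)_{k\in[r]}$, Stein's lemma reduces each expectation to a Gaussian integral of a polynomial (using Assumption \ref{ass:activ}) against a covariance built out of entries of $Q^t$ and $M^t$. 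Each such integral is therefore a polynomial in $(Q^t, M^t)$ of dimension-independent degree, as in the finite expansion \eqref{eq:app:herm_corr}. Next, I would substitute the inductive identity $(Q^t, M^t) = (\tilde Q^t_\ba, \tilde M^t_\ba) + \cO(\polylog(d)/\sqrt d)$; a Taylor expansion of a polynomial of bounded degree around its leading value propagates the perturbation additively to the same order. The quadratic-in-$\vec g$ term is treated with the decomposition of Section \ref{sec:concentration}: its expectation splits as $\langle\mathbb{E}\vec g_i^t, \mathbb{E}\vec g_j^t\rangle$, again polynomial in $\ba$ by the same reasoning, plus a diagonal contribution of order $d/n = \Theta(1)$ whose prefactor is once more a polynomial in the overlap entries, hence in $\ba$ up to the desired remainder. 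Absorbing the $\cO(\polylog(d)/\sqrt d)$ errors into $\tilde Q^{t+1}_\ba, \tilde M^{t+1}_\ba$ closes the step.

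\textbf{Main obstacle.} The central delicate point is the \emph{preservation} of the polynomial-in-$\ba$ structure: any transcendental activation would generate an infinite Hermite series, whose truncation would have to be controlled separately. This is precisely where Assumption \ref{ass:activ} is used, capping the Hermite expansion at a finite degree and guaranteeing that each update maps polynomials of $\ba$ to polynomials of $\ba$ of controlled (though growing) degree. A secondary subtlety is that entries of $Q^t, M^t$ may contain components of $\bw_i^t$ living in $V^\star \setminus U_t^\star$; by statement (iii) of the induction these projections are $\cO(\polylog(d)/\sqrt d)$, so they contribute only to the remainder. Finally, one must check that the $\cO(\polylog(d)/\sqrt d)$ error does not blow up under the bounded-degree polynomial that expresses the expectations in terms of the overlaps — this follows because the degree is dimension-independent and the perturbation is of the same polylog/$\sqrt d$ order, keeping the remainder absorbed within $\tilde Q^{t+1}_\ba, \tilde M^{t+1}_\ba$.
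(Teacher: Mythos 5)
Your proposal follows essentially the same route as the paper's proof: you condition on $\mathcal{F}_t$, expand the one-step updates for $Q^{t+1}$ and $M^{t+1}$, reduce the conditional expectations of the gradient inner products to Gaussian integrals of polynomials in the overlaps $(Q^t, M^t)$ via Stein's lemma / Gaussian integration by parts (what the paper organizes as the map $F_g$ and proves polynomial by Wick's--Isserlis' theorem), split the quadratic term into an outer product of means plus a $\Theta(d/n)$ diagonal piece exactly as in \eqref{eq:app:exp_norm_g_decomp}, and propagate the inductive polynomial description plus its $\cO(\polylog(d)/\sqrt d)$ remainder through a bounded-degree polynomial. You also correctly identify that Assumption \ref{ass:activ} is what caps the Hermite expansion and makes the polynomial-in-$\ba$ structure closed under the update, and that the unlearned components in $V^\star\setminus U_t^\star$ are absorbed into the remainder by part (iii) of the inductive hypothesis.
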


\begin{proof}

Recall that:
\begin{equation}
\begin{split}
    Q_{i,j}^{t+1}&=\langle \bw^{t+1}_i,\bw^{t+1}_j \rangle
    \\ &=Q_{i,j}^{t}+\eta \langle \bg^{t}_i,\bw^{t}_j \rangle+\eta \langle \bw^{t}_i,\bg^{t}_j \rangle+\eta^2 \langle \bg^{t}_i,\bg^{t}_j \rangle.\\
    M_{i,j}^{t+1}&=\langle \bw^\star_i,\bw^{t+1}_j \rangle
    \\ &=M_{i,j}^{t}+\eta \langle \bw^\star_i,\bg^{t}_j \rangle
\end{split}
\end{equation}\label{eq:qmupd}
By the induction hypothesis, the entries of $\Ea{Q^t}, \Ea{M^t}$ converge with high-probability to polynomial limits with error $\cO\left(\frac{\polylog d}{d}\right)$. 
Therefore, it suffices to show that $\Ea{\langle \vec{g}^t_i,\vec{w}^t_j \rangle}$, $\Ea{\langle \vec{w}^t_i,\vec{w}^t_j \rangle},\Ea{\langle \vec{g}^t_i,\vec{g}^t_j \rangle}$ converge to dimension-inpedendent polynomial limits. First, consider the case $i=j$. We have, analogous to Equation \ref{eq:app:exp_norm_g_decomp}

\begin{equation}\label{eq:app:exp_norm_g_t_decomp}
    \dE\left[\norm{\vec{g}^t_i}^2\right] =  \underbrace{\frac{a^2_i}{np^2} \dE\left[ \norm{\vec{z}^\nu}^2 \sigma'(\langle \vec{w}^t_i, \vec{z}^\nu \rangle)^2 (f^\star(\vec{z}^\nu)-\hat f(\vec{z}^\nu; \bW^t, \vec{a}))^2 \right]}_{T_1}+\underbrace{\frac{n(n-1)}{n^2}\norm*{\dE[\vec{g}_i]}^2}_{T_2}
\end{equation}

The first term $T_1$ can be decomposed as follows:
\begin{small}
\begin{align*}
    \frac{a^2_i}{np^2}\dE\left[ \norm{\vec{z}^\nu}^2 \sigma'(\langle \vec{w}^t_i, \vec{z}^\nu \rangle)^2(f^\star(\vec{z}^\nu)-\hat f(\vec{z}^\nu; \bW^t, \vec{a}))^2 \right]= \frac {da^2_i}{np^2}\dE\left[ \sigma'(\langle \vec{w}^t_i, \vec{z}^\nu \rangle)^2 (f^\star(\vec{z}^\nu)-\hat f(\vec{z}^\nu; \bW^t, \vec{a}))^2 \right]\\+\frac{a^2_i}{np^2}\dE\left[(d-\norm{\vec{z}^\nu}^2)\sigma'(\langle \vec{w}^t_i, \vec{z}^\nu \rangle)^2 (f^\star(\vec{z}^\nu)-\hat f(\vec{z}^\nu; \bW^t, \vec{a}))^2 \right].
\end{align*}
\end{small}
Similar to equation \ref{eq:norm_holder}, Holder's inequality implies that conditioned on the event in $\mathcal{F}_t$ of $Q_t, M_t$ being bounded independent of $d,n$, the second term is of order $\cO(\frac{\sqrt{d}}{n})=\cO(\frac{1}{\sqrt{d}})$.
Consider the first term, conditioned on $\mathcal{F}_t$.
\begin{equation}
   \frac {da^2_i}{np^2}\dE\left[ \sigma'(\langle \vec{w}^t_i, \vec{z}^\nu \rangle)^2 (f^\star(\vec{z}^\nu)-\hat f(\vec{z}^\nu; \bW^t, \vec{a}))^2 \vert  \mathcal{F}_t\right]
\end{equation}
By assumption, $\frac {da^2_i}{np^2}=\frac{a^2_i\alpha}{p^2}$ for some constant $\alpha$. Therefore, by definition of $f^\star(\vec{z}^\nu)$ and $\hat f(\vec{z}^\nu; \bW^t, \vec{a})$, the term inside the expectation only depends on the overlaps of $\vec{z}^\nu$  with the neurons and teacher subspace i.e $\langle \vec{w}^t_1, \vec{z}^\nu \rangle$, $\cdots, \langle \vec{w}^t_p, \vec{z}^\nu \rangle$ $ \langle\vec{w}^\star_1, \vec{z}^\nu$,$ \rangle,\cdots, \langle \vec{w}^\star_r, \vec{z}^\nu \rangle$. By a change of variables the above term can therefore be expressed as an expectation w.r.t the $r+j$ correlated variables corresponding to the above overlaps. 

Concretely, we have:
\begin{equation}
    \frac d{np^2}\dE\left[ \sigma'(\langle \vec{w}^t_i, \vec{z}^\nu \rangle)^2 (f^\star(\vec{z}^\nu)-\hat f(\vec{z}^\nu; \bW^t, \vec{a}))^2 \vert \mathcal{F}_t \right]
    = F_g(Q_t,M_t),
\end{equation}
for some function $F:\R \rightarrow \R$

\begin{lemma}
    $F_g$ is a polynomial in $Q_t,M_t$ independent of $d,n$.
\end{lemma}

\begin{proof}
    By assumption, $\sigma'$ and $f^\star$ are polynomials in $\langle \vec{w}^t_1, \vec{z}^\nu \rangle,\cdots$ $\langle \vec{w}^t_p, \vec{z}^\nu \rangle$ and $\langle \vec{w}^\star_1, \vec{z}^\nu \rangle,\cdots,$ $ \langle \vec{w}^\star_r, \vec{z}^\nu \rangle$ respectively. Therefore, $ \sigma'(\langle \vec{w}^t_i, \vec{z}^\nu \rangle)^2 (f^\star(\vec{z}^\nu)-\hat f(\vec{z}^\nu; \bW^t, \vec{a}))^2$ is a polynomial in the zero mean correlated Gaussian variables $\langle \vec{w}^t_1, \vec{z}^\nu \rangle,\cdots, \langle \vec{w}^t_p, \vec{z}^\nu \rangle,\langle \vec{w}^\star_1, \vec{z}^\nu \rangle,\cdots, \langle \vec{w}^\star_r, \vec{z}^\nu \rangle$. Therefore, by Wick's/Isserlis' theorem \citep{janson1997gaussian,polyak2005feynman}, $F_g$ is a polynomial in $Q_t,M_t$.
\end{proof}

 By the induction hypothesis, with high-probability, $Q_t = \tilde{Q}_{t,\ba} + \cO(\frac{\polylog d}{\sqrt{d}})$ and $\tilde{M}_{t,\ba}+ \cO(\frac{\polylog d}{\sqrt{d}})$, where $\tilde{Q}_{t,\ba},\tilde{M}_{t,\ba}$ denote deterministic matriceswith entries being  polynomial functions of $\ba$. By propagating the errors through the polynomial $F_g$, we obtain that
$F_g(Q_t,M_t)=F_g(\tilde{Q}_{t,\ba},\tilde{M}_{t,\ba})+\cO(\frac{\polylog d}{\sqrt{d}})$.

Next, consider the term $T_2$ in Equation \ref{eq:app:exp_norm_g_t_decomp}. By repeatedly applying Stein's Lemma w.r.t terms $\langle \vec{w}^t_i,\bz\rangle$ for $i \in [p]$ and  $\langle \vec{w}^*,\bz\rangle$ for $j \in [r]$, analogous to Lemma \ref{lem:app:grad_exp}, $\dE[\vec{g}_i]$, can be expressed as a linear combination of  $\vec{w}^t_1,\cdots, \vec{w}^t_p$ and $\vec{w}^*_1,\cdots, \vec{w}^*_r$. Concretely, we have:
\begin{align*}
\dE[\vec{g}^t_i]&=a_i\dE\left[ \vec{z} \sigma'(\langle \vec{w}^t_i, \vec{z} \rangle) (f^\star(\vec{z})-\hat f(\vec{z}^\nu; \bW^t, \vec{a}))\right]\\
&= a_i\dE\left[ \vec{z} \sigma'(\langle \vec{w}^t_i, \vec{z} \rangle) f^\star(\vec{z}) \right]-a_i\dE\left[\vec{z} \sigma'(\langle \vec{w}^t_i, \vec{z} \rangle)\hat f(\vec{z}^\nu; \bW^t, \vec{a}))\right]
\end{align*}

Consider the first-term, by Stein's Lemma, we obtain:
\begin{align*}
    \dE\left[  f^\star(\vec{z})\sigma'(\langle \vec{w}_i, \vec{z} \rangle) \vec{z}\right]&=\dE\left[ \vec{z} g^\star(\langle\vec{w_1}^\star,\vec{z}\rangle,\cdots, \langle\vec{w_r}^\star,\vec{z}\rangle)\sigma'(\langle \vec{w}, \vec{z} \rangle)\right]\\
    &=\dE\left[\sigma'(\langle \vec{w}, \vec{z} \rangle) \nabla g^\star(\langle\vec{w_1}^\star,\vec{z}\rangle,\cdots, \langle\vec{w_r}^\star,\vec{z}\rangle)\right]\\&+\dE\left[ \vec{w}_i\sigma''(\langle \vec{w}_i, \vec{z} \rangle) g^\star(\langle\vec{w_1}^\star,\vec{z}\rangle,\cdots, \langle\vec{w_r}^\star,\vec{z}\rangle)\right]
\end{align*}
By chain rule, $\nabla g^\star(\langle\vec{w_1}^\star,\vec{z}\rangle,\cdots, \langle\vec{w_r}^\star,\vec{z}\rangle)$ can be expressed as a linear combination of $\vec{w_1}^\star,\cdots,$ $ \vec{w_r}^\star$ and $\vec{w}_i^t$ with coefficients being polynomials in $\langle\vec{w_1}^\star,\vec{z}\rangle,\cdots, \langle\vec{w_r}^\star,\vec{z}\rangle$ independent of $d$. 

Therefore, by Wick's theorem \citep{janson1997gaussian,polyak2005feynman}, $\dE\left[  f^\star(\vec{z})\sigma'(\langle \vec{w}_i, \vec{z} \rangle) \vec{z}\right]$ can be expressed as $\sum_{k=1}^r p_k(Q_t,M_t)\vec{w}_k^*+h_i(Q_t,M_t)\vec{w}^t_i$,
where $\{p_k\}_{k=1,\cdots,r}$ and $h_i$ are polynomials independent of $d$. 
Similarly, we obtain $\dE\left[\vec{z} \sigma'(\langle \vec{w}^t_i, \vec{z} \rangle)\hat f(\vec{z}^\nu; \bW^t, \vec{a}))\right]$ as a linear combination of $\vec{w_1}^t,\cdots, \vec{w_p}^t$.
Therefore, $\norm{\dE[\vec{g}^t_i]}^2$ conditioned on $\mathcal{F}_t$ is a polynomial in $Q_t, M_t$. Propagating errors from time $t$, we conclude that $T_2$ can be approximated by polynomials in $Q_t, M_t$ with error $\cO(\frac{\polylog d}{\sqrt{d}})$

Similary, the terms $\Ea{\langle \vec{g}^t_i,\vec{g}^t_j \rangle},\Ea{\langle \vec{w}^*_i,\vec{g}^t_j \rangle}$  converge with high-probability to dimension-independent polynomials in $Q_t, M_t$.

\end{proof}

Next, we prove that $(ii), (iii), (iv)$ hold in expectation:
\begin{lemma}
   Let $\bv \in V^\star$, with $\norm{\bv}=1$ be arbitrary with components $\bv^m \in \R^r$ along $\vec{w_1}^\star,\cdots, \vec{w_r}^\star$, then $\Ea{\langle \vec{v}, \vec{g}^{t}_j \rangle} =h_t(j,\vec{v}^m,\vec{a},Q_t,M_t)+\cO(\frac{\polylog d}{p\sqrt{d}})$,
where $h_t(\vec{v}^m,\vec{a})$ satisfies:
\begin{enumerate}
    \item $h_t(j,\vec{v}^m,\vec{a})$  is a non-zero polynomial in $\vec{a}$ if $\bv \in U_{t+1}^\star$.
    \item $h(\vec{v}^m,\vec{a})=0$ otherwise.
    \review{
    \item For any  $\bv \in U_{t+1}^\star$:
    \begin{equation}
        \operatorname{min deg} (h_t(j,\vec{v}^m,\vec{a})) >  \operatorname{min deg} (h_{t-1}(j,\vec{v}^m,\vec{a})), 
    \end{equation}
where $\operatorname{min deg}$ denotes the minimum degree as a polynomial in $\vec{a}$.
    \item There exists a basis $\bv^{(1)}, \cdots,\bv^{(r_t-t_{t-1})}$ of $U_{t+1}^\star \cap (U_t^\star)^{\bot}$ such that the minimum degree in the sequence of polynomials $[h_{i,t,\bv^{(1)}_m,\ba}, \cdots, h_{i,t,\bv^{(r_t-r_{t-1})}_m,\ba}]$ is strictly increasing with the minimum degree term in each of the components depending only on $a_i$.
    }
\end{enumerate}

\end{lemma}

Consider the gradient w.r.t the $j_{th}$ neuron's parameters:
\begin{align} 
 \vec{g}^{t}_j  = -\nabla_{\vec{w}_j} \cL\left(\hat f(\vec{z}^\nu; \bW^t, \vec{a}), f^\star(\vec{z}^\nu)\right)
&=  
    \frac{1}{n}a_j\sum_{\nu=1}^n \bz^\nu (f^\star(\vec{z}^\nu)-\hat f(\vec{z}^\nu; \bW^t, \vec{a}))\sigma'(\langle\bz^\nu, \vec{w}^t_j\rangle)\label{eq:decomposition_gradient_time_t}
\end{align}
\end{proof}

Suppose that $\vec{v} \in U_{t+1}^\star \cap (U_{t}^\star)^\perp$ i.e when $\vec{v}$ is a new direction not yet learned upto time $t$.Using, equation \eqref{eq:decomposition_gradient_time_t}, the expectation $\Ea{\langle \vec{v}, \vec{g}^{t}_j \rangle}$ can be expressed as:
\begin{equation}\label{eq:grad_decom_multi}
    \begin{split}
    \Ea{\langle \vec{v}, \vec{g}^{t}_j \rangle}=\Ea{a_j (f^\star(\vec{z})-\hat f(\vec{z}; \bW^t, \vec{a}))\sigma'(\langle \bz,\vec{w}^t_j\rangle)\langle \bz,\vec{v} \rangle}.
    \end{split}
\end{equation}

Consider the term $\Ea{\hat f(\vec{z}; \bW^t, \vec{a}))\sigma'(\langle \bz,\vec{w}^t_j\rangle)\langle \bz, \vec{v} \rangle}$. Through a change of variables, and Wick's theorem \citep{janson1997gaussian,polyak2005feynman}, one obtains that $\Ea{\hat f(\vec{z}; \bW^t, \vec{a})\sigma'(\langle\bz,\vec{w}^1\rangle)_j\langle\bz,\vec{v}\rangle}$ is a polynomial in $Q$ and the overlaps $\langle \vec{w}^i,\vec{v}\rangle$ for $i \in [p]$ having value $0$ when $\langle \vec{w}^i,\vec{v}\rangle=0$ for all $i \in [p]$. By the induction hypothesis, $\langle \vec{w}^i,\vec{v}\rangle=\cO(\frac{\polylog d}{\sqrt{d}})$ with high probability. Therefore $\Ea{\hat f(\vec{z}; \bW^t, \vec{a}))\sigma'(\langle \bz,\vec{w}^t_j\rangle)\langle \bz, \vec{v} \rangle \vert \mathcal{F}_t}=\cO(\frac{\polylog d}{\sqrt{d}})$ with high probability. Similarly, $\Ea{\hat f(\vec{z}; \bW^t, \vec{a}))\sigma'(\langle \bz,\vec{w}^t_j\rangle)\langle \bz, \vec{v} \rangle\vert \mathcal{F}_t}=\cO(\frac{\polylog d}{\sqrt{d}})$ holds when
$\bv \notin U_{t+1}^\star$.

 Now, consider the term $\Ea{a_j f^\star(\vec{z})\sigma'(\langle\bz^,\vec{w}^t_j\rangle)\langle \bz,\vec{v} \rangle}$. First, using Fubini's theorem, we take the expectation w.r.t the component $\bz^\bot$ of $z$ in ${V^\star}^\bot$. 
 
 Recall that  $\bv_{j,\ba}=\bW_t^*\bw^t_j$ and $\kappa=\bW_t^*\bz$. 
  The resulting expectation converges in probability to a function of $\kappa$:
\begin{equation}
    \Eb{\bz^\bot}{a_j f^\star(\vec{z})\sigma'(\langle\bz,\vec{w}^t_j\rangle)\langle \bz,\vec{v} \rangle}
   =  \Eb{\kappa}{a_jf^\star(\vec{z})f_1(\ba,\kappa)\langle \bz,\vec{v} \rangle},
\end{equation}
where $f_1(\ba,\kappa)$ is defined as follows:
\begin{align*}
f_1(\ba,\kappa)&=\Eb{\bz^\bot}{\sigma'(\bz^\top\vec{w}^t_j)}\\
    &=\Eb{u \sim \mathcal{N}(0,1)}{\sigma'(c_{j,\vec{a}}u+\langle \kappa, \vec{v}_{j,\vec{a}} \rangle)}
\end{align*},
where $c_{j,\vec{a}}$ denotes the norm of $\vec{w}^t_j$ along the orthogonal complement of $V^*$.
$f_1(\ba,\kappa)$ generalizes the ``shifted-hermite" $\mu_{1,\kappa,j}$ defined in the section \ref{sec:proof_sketch_thm3}.
By assumption on $\sigma$, $\sigma'(c_{j,\vec{a}}u+\langle \kappa, \vec{v}_{j,\vec{a}} \rangle)$ is a polynomial in $c_{j,\vec{a}}u,\langle \kappa, \vec{v}_{j,\vec{a}} \rangle$. Furthermore, only the odd terms in $c_{j,\vec{a}}u$ are zero in expectation $u \sim \mathcal{N}(0,1)$. Therefore, $f_1(\ba,\kappa)$ is a polynomial in $\langle \kappa, \vec{v}_{j,\vec{a}} \rangle$ and $c_{j,\vec{a}}$ with only even degree terms in $c_{j,\vec{a}}$. By the induction hypothesis, $c^2_{j,\vec{a}}$ converges in probability to a  polynomial in $\vec{a}$. Therefore $f_1(\ba,\kappa)$ is a polynomial in $\vec{a},\kappa$.

Subsequently, we consider the expectation w.r.t $\langle \bz,\vec{v} \rangle$, at a fixed value of $\kappa$. Define the following function of $\kappa$:
\begin{equation}
    f_2(\kappa) = \Eb{\langle \bz,\vec{v} \rangle}{y\langle \bz,\vec{v} \rangle \vert \kappa}.
\end{equation}

Using the tower law of expectation, we obtain:
\begin{equation}\label{eq:overlap_def}
    \Ea{a_j f^\star(\bz)\sigma'(\bz^\top\vec{w}^t_j)\langle \bz,\vec{v} \rangle}
   =  \Eb{\kappa}{\review{a_j}f_1(a_j,\kappa)f_2(\kappa)},
\end{equation}

When 
$\bv \notin U_{t+1}^\star$, $f_2(\kappa)$ is identically $0$ and the above expectation vanishes.

We aim to show that the above expectation does not vanish except for $a_i$ belonging to a zero-measure set. By the definition of subspace conditioning (definition \ref{def:subspace_conditioning}), $\exists \kappa > 0$ such that $\Eb{\langle \bz,\vec{v} \rangle}{f^\star(\bz)\bz \vert \kappa}$ has non-zero overlap with $\vec{v}$. 

Therefore, $f_2(\kappa)$ is not identically zero.
Furthermore, since $f^\star$ is a polynomial by assumption, and $\vec{v} \perp V^\star$, a rotation of basis implies that $f_2$ is a polynomial in $\kappa$.  Let $\mathcal{S}_{y,t}$ be the set of degrees $s \in \N_0$ such that $\Eb{f_2(\kappa)\kappa^s}{\kappa} \neq 0$. Since 
$f_2$ is not identically $0$, we have that 
$\mathcal{S}_{y,t} \neq \phi$.

Now, recall that:
\begin{align*}
    f_1(\ba,\kappa) &= \Eb{u \sim \mathcal{N}(0,1)}{\sigma'(c_{j,\vec{a}}u+\langle \kappa, \vec{v}_{j,\vec{a}} \rangle)}\\
    &=  \Eb{u \sim \mathcal{N}(0,1)}{\sum_{k=0}^{\operatorname{deg}(\sigma)-1}(k+1)b_{k+1} (c_{j,\vec{a}}u+\langle \kappa, \vec{v}_{j,\vec{a}} \rangle)^r}\\
    &=  \sum_{k=0}^{\operatorname{deg} (\sigma)-1}(k+1)b_{k+1}\Eb{u \sim \mathcal{N}(0,1)}{(c_{j,\vec{a}}u+\langle \kappa, \vec{v}_{j,\vec{a}} \rangle)^k}.
\end{align*}
Now, let $s \in \mathcal{S}_{y,t}$ be arbitrary. By assumption,
$\operatorname{deg}(\sigma)-1 \geq s$. 
Let $p_s(\ba)$ denote the coefficient of  $\kappa^s$ in $f_1(\ba,\kappa)$.
Since $c_{j,\vec{a}},\vec{v}_{j,\vec{a}}$ are non-constant polynomials in $\vec{a}$, the coefficient 
 of $\kappa^s$ in $(c_{j,\vec{a}}u+\langle \kappa, \vec{v}_{j,\vec{a}} \rangle)^k$  is a non-constant polynomial in  $\ba$ for any $k$ such that $k-q$ is even. Furthermore, the degree of the coefficient of $\kappa^s$ in $(c_{j,\vec{a}}u+\langle \kappa, \vec{v}_{j,\vec{a}} \rangle)^k$ is strictly increasing in $k$. Therefore, for any $s \in \mathcal{S}_{y,t}$, $p_s(\ba)$ is a non-constant polynomial in $\ba$. Now, consider the term in $p_s(\ba)$ with the least degree in $a_j$. From the definition of $\vec{v}_{j,\vec{a}}$, we have that  $\vec{v}_{j,\vec{a}}=0$ whenever $a_j=0$. Let 
$d_j$ denote the least $s \in \N_0$ such that the coefficient of $a^s_j$ in $\langle \vec{v}_{j,\vec{a}}, \kappa \rangle$ is non-zero. We have that $d_j>0$. Consequently, the minimum degree of $a_j$ in $(c_{j,\vec{a}})^q(\langle \kappa,\vec{v}_{j,\vec{a}} \rangle)^s, $ is $(d_j)^s$ for any $q$. Therefore, the minimum degree of $p_s(\ba)$ is strictly increasing in $s$. This implies that $p_s(\ba)$ are linearly independent for $s=1,\cdots,\operatorname{deg}(\sigma)-1$.

Now, consider the function defined above in Equation \ref{eq:overlap_def}:
\begin{equation}
    h(t,\vec{a})= \Eb{\kappa}{\review{a_j}f_1(\vec{a},\kappa)f_2(\kappa)}.
\end{equation}
By expanding $f_1,f_2$ along 
$\kappa$, the coefficient of $\kappa^s$ for each $s \in \mathcal{S}_{y,t}$ results in a non-constant polynomial in $\vec{a}$. We obtain:
\begin{equation}
     h_t(j,\vec{v}^m,\vec{a})=\sum_{s\in  \mathcal{S}_{y,t}} c_s p_s(\ba),
\end{equation}
where $c_s$ denote non-zero constants independent of $d,n$. Therefore, we have that $h_t(j,\vec{v}^m,\vec{a})$ is a non-constant polynomial in $\vec{a}$. Using Fubini's theorem, we have that the set of zeros of non-zero multivariate polynomials has $0$ measure w.r.t the Lebesque measure (for a generalization, see \citep{mityagin2020zero}), we obtain that $ h_t(j,\vec{v}^m,\vec{a}) \neq 0$ almost surely. 

\review{We now show $(iii)$ and $(iv)$. From the above argument and the inductive hypothesis $(iv)$, we further obtain that the minimum degree term in $p_s(\vec{a})$ depends only on $a_j$ implying that the minimum degree term in $h_t(j,\vec{v}^m,\vec{a})$ depends only on $a_j$. Since $h_{t-1}(j,\vec{v}^m,\vec{a})=0$, we directly obtain $(iii)$.
\\ Now, let $\bu^{(1)}, \cdots,\bu^{(r_t-t_{t-1})}$ be an arbitrary basis of $U^\star_{t+1} \cap U^\star_{t}$. Let $\tilde{\bv}^{(1)}_m$ be the  vector along the coefficients of the minimum degree terms in $[h_{i,t,\bu^{(1)}_m,\ba}, \cdots, h_{i,t,\bu^{(r_t-r_{t-1})}_m,\ba}]$. Similarly, we obtain vectors $\tilde{\bv}^{(2)}_m, \cdots$ corresponding to coefficients of strictly increasing degrees. Applying Gram-Schmidt orthogonalization to $\tilde{\bv}^{(1)}_m,\tilde{\bv}^{(2)}_m, \cdots$, we obtain the desired basis in $(iv)$.}

Now, suppose that $\vec{v} \in  (U_{t}^\star)$, i.e when $\vec{v}$ is an already learned direction. By the induction hypothesis, $\langle \bw^t_i, \vec{v}\rangle$ converges to a non-constant polynomial in $\vec{a}$. Consider the term $ \frac{a_j}{\sqrt{p}}\Ea{\hat f(\vec{z}; \bW^t, \vec{a}))\sigma'(\langle \bz,\vec{w}^t_j\rangle)\langle \bz, \vec{v} \rangle}$ in $\langle \bg^t_i, \vec{v}\rangle$. By expanding  $\hat f(\vec{z}; \bW^t, \vec{a})$ we obtain:
\begin{equation}
    \frac{a_j}{\sqrt{p}}\Ea{\hat f(\vec{z}; \bW^t, \vec{a}))\sigma'(\langle \bz,\vec{w}^t_j\rangle)\langle \bz, \vec{v} \rangle} = \frac{a_j}{p}\sum_{i = 1}^{p} a_i \Ea{\sigma(\langle \vec{w}^t_i, \vec{z} \rangle)\sigma'(\langle \bz,\vec{w}^t_j\rangle)\langle \bz, \vec{v} \rangle}
\end{equation}
The term correspondign to the $j_{th}$ neuron has the form:
\begin{equation}
    \frac{a^2_j}{p}\Ea{\sigma(\langle \vec{w}^t_j, \vec{z} \rangle)\sigma'(\langle \bz,\vec{w}^t_j\rangle)\langle \bz, \vec{v} \rangle}
\end{equation}
By Wick's theorem \citep{janson1997gaussian,polyak2005feynman} and the polynomial assumption on $\sigma$, $\Ea{\sigma(\langle \vec{w}^t_j, \vec{z} \rangle)\sigma'(\langle \bz,\vec{w}^t_j\rangle)\langle \bz, \vec{v} \rangle}$ is a non-zero polynomial in $\langle \bw^t_j, \vec{v}\rangle$. Let $d_j$ be the degree of $a_j$ in $\langle \bw^t_j, \vec{v}\rangle$. Then, the degree of $a_j$ in $\frac{a^2_j}{p}\Ea{\sigma(\langle \vec{w}^t_j, \vec{z} \rangle)\sigma'(\langle \bz,\vec{w}^t_j\rangle)\langle \bz, \vec{v} \rangle}$
is at-least $d_j+2$. Proceeding similarly for the other terms, one can show that the degree of $a_j$ in $\langle \bg^t_i, \vec{v}\rangle$ is strictly larger than in $\langle \bw^t_i, \vec{u}\rangle$. This ensures that $\langle \bw^{t+1}_i, \vec{v}\rangle=\langle \bg^t_i, \vec{v}\rangle+\eta \langle \bg^t_i, \vec{v}\rangle$ remains a non-constant polynomial upto error $\cO(\frac{\polylog d}{\sqrt{d}})$. Therefore, almost surely over $\vec{a}$, a direction is not ``un-learned". \review{The strict increase in degree further implies $(iii)$}

Finally, by decomposing along a general $\vec{v} \in U_{t+1}^\star$, along $U_{t}^\star$ and $U_{t+1}^\star \cap (U_{t}^\star)^\perp$, one obtains that points $(ii)$ and $(iii)$ of the induction statements hold in expectation.

Next, we prove that the events $(i),(ii),(iii)$ hold with high probability. By the induction hypothesis, we have that and the above analysis, we have that:

\begin{lemma}\label{lem:g_conc_t}
Suppose that the induction hypothesis holds at time $t$. 
Then, the following events occur with high-probability for all $i,j \in [p]$
\begin{enumerate}
    \item $\abs{\norm{\vec{g}^{t+1}_i}^2-\Ea{\norm{\vec{g}^{t+1}_i}^2}} 
    = \cO\left(\frac{\polylog{d}}{\sqrt{d}}\right)$ 
    \item $\norm{\langle \vec{g}_i, \vec{g}_j\rangle-\Ea{\langle \vec{g}_i, \vec{g}_j\rangle}} = \cO\left(\frac{\polylog{d}}{\sqrt{d}}\right)$
    \item For any $k \in [r]$, and any unit vector $\vec{w}$
     \begin{equation}
         \left|\langle \vec{w}, \vec{g}_i \rangle - \dE[\langle \vec{w}, \vec{g}_i \rangle]\right| =\cO\left(\frac{\polylog{d}}{\sqrt{d}}\right)
     \end{equation}
\end{enumerate}

\end{lemma}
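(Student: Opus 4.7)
\textbf{Proof proposal for Lemma \ref{lem:g_conc_t}.} The plan is to mirror the structure of Proposition \ref{prop:app:update_concentration} (the analogous concentration statement for the first step), but carried out conditionally on $\mathcal{F}_t$. Conditioned on $\mathcal{F}_t$, the weight matrix $W^t$ and second layer $\vec{a}$ are deterministic, so the only randomness in $\vec{g}^{t+1}_i$ comes from the fresh independent batch $\vec{Z}^{t+1}$. Writing
\begin{equation*}
    \vec{X}^\nu_i = \vec{z}^\nu\, \sigma'(\langle \vec{w}^t_i, \vec{z}^\nu \rangle)\,\bigl(f^\star(\vec{z}^\nu) - \hat f(\vec{z}^\nu; W^t, \vec{a})\bigr),
\end{equation*}
we have $\vec{g}^{t+1}_i = (a_i/\sqrt{p}) \cdot n^{-1}\sum_\nu \vec{X}^\nu_i$ with the $\vec{X}^\nu_i$ i.i.d.\ across $\nu$. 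The critical fact, made available by Assumption \ref{ass:activ}, is that $\sigma'$ and $f^\star$ are polynomials of degree independent of $d,n$, so each coordinate of $\vec{X}^\nu_i$ is a polynomial (of bounded degree, say $k_0$) in a Gaussian vector whose relevant covariance entries are controlled by the induction hypothesis on $Q_t, M_t$. Hence $\langle \vec{w}, \vec{X}^\nu_i \rangle$ has a finite $\psi_{2/k_0}$ Orlicz norm bounded by a dimension-independent constant, for any unit $\vec{w}$.

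For point (iii), once $\mathcal{F}_t$ is fixed, $\langle \vec{w}, \vec{g}^{t+1}_i \rangle$ is an average of i.i.d.\ random variables with bounded $\psi_{2/k_0}$ norm. By Theorem \ref{thm:app:orlicz_sum} applied to the centered sum, together with the Orlicz tail bound \eqref{eq:app:orlicz_concentration}, one obtains
\begin{equation*}
    \bigl|\langle \vec{w}, \vec{g}^{t+1}_i \rangle - \E[\langle \vec{w}, \vec{g}^{t+1}_i \rangle \mid \mathcal{F}_t]\bigr| \;\leq\; \frac{C \log(n)^{k_0/2}}{p\sqrt{n}} \;=\; \cO\!\left(\frac{\polylog(d)}{\sqrt{d}}\right)
\end{equation*}
with probability at least $1 - e^{-c\log(n)^2}$, exactly as in Lemma \ref{lem:app:linear_concentration}. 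A union bound over $i\in[p]$ and a basis of $V^\star$ handles all $k\in[r]$.

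For points (i) and (ii), I would replicate the diagonal/off-diagonal split of Section \ref{sec:concentration}:
\begin{equation*}
    \langle \vec{g}^{t+1}_i, \vec{g}^{t+1}_j \rangle = \frac{a_i a_j}{n^2 p}\Bigl( \underbrace{\sum_{\nu} \langle \vec{X}^\nu_i, \vec{X}^\nu_j\rangle}_{S_1} + \underbrace{\sum_{\nu \neq \nu'} \langle \vec{X}^\nu_i, \vec{X}^{\nu'}_j\rangle}_{S_2} \Bigr).
\end{equation*}
The term $S_1$ is a sum of i.i.d.\ variables of order $d$ with bounded $\psi_{1/k_0}$ norm (by sub-multiplicativity, Lemma \ref{lem:app:orlicz_submult}); Theorem \ref{thm:app:orlicz_sum} then yields $|S_1 - \E S_1| = \cO(d\sqrt{n}\,\polylog(n))$, which after the $1/(n^2 p)$ prefactor gives a contribution of the right order. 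For $S_2$, I would decouple via the Peña--Giné inequality, then split each $\vec{X}^\nu_i$ into its projection onto a fixed low-dimensional subspace $V_{ij} = V^\star + \vect(\vec{w}^t_i, \vec{w}^t_j)$ and its orthogonal complement. The low-dimensional part concentrates by the same argument as Lemma \ref{lem:app:S_11_norm_concentration}; the orthogonal part reduces to a scaled inner product $\langle \vec{X}_\bot, \vec{Y}_\bot\rangle$ of two independent Gaussian vectors, handled exactly as in Lemma \ref{lem:app:S_12_concentration}.

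The main obstacle is bookkeeping rather than any fundamentally new idea: one must verify that the polynomial activation assumption yields uniform (in $d,n$, and conditional on the high-probability events of the induction hypothesis on $Q_t, M_t$) Orlicz-norm control for all the random variables involved, and that $\hat f(\vec{z}; W^t, \vec{a})$ remains polynomially bounded despite being a sum of $p$ terms (which is fine since $p$ is fixed and $\vec{a}$ is deterministic given $\mathcal{F}_t$). Once this polynomial-moment control is in place, the calculations of Section \ref{sec:concentration} transfer verbatim, only with $\psi_1/\psi_2$ replaced by the appropriate $\psi_{2/k_0}/\psi_{1/k_0}$ and an additional $\polylog(n)^{k_0}$ factor that is absorbed into $\polylog(d)$ since $n=\Theta(d)$.
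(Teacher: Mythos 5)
Your proposal follows the paper's own route almost exactly: condition on $\mathcal{F}_t$, observe that the polynomial assumption (Assumption~\ref{ass:activ}) gives each relevant random variable a bounded Orlicz norm of some fixed order $\alpha_t$ independent of $d,n$ (the paper calls it $\alpha_t$, you call it $2/k_0$), and then replay the $S_1$/$S_2$ decomposition, decoupling, and subspace split of Section~\ref{sec:concentration}, with Bernstein's inequality replaced by the general-order Orlicz estimate of Theorem~\ref{thm:app:orlicz_sum}. The structure, the definition of $\vec{X}_i^\nu$, and the appeal to Lemmas~\ref{lem:app:orlicz_submult}, \ref{lem:app:s1_bound}, \ref{lem:app:S_11_norm_concentration}, \ref{lem:app:S_12_concentration} are all as in the paper.

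One correction is needed in your treatment of $S_2$: you propose conditioning on the projection onto $V_{ij} = V^\star + \vect(\vec{w}^t_i, \vec{w}^t_j)$, but the scalar prefactor $\alpha^\nu_i = \sigma'(\langle \vec{w}^t_i, \vec{z}^\nu\rangle)\bigl(f^\star(\vec{z}^\nu) - \hat f(\vec{z}^\nu; W^t, \vec{a})\bigr)$ now involves $\hat f$, which is a function of $\langle \vec{w}^t_k, \vec{z}^\nu\rangle$ for \emph{all} $k \in [p]$, not just $k \in \{i,j\}$. The conditional-Gaussian argument of Lemma~\ref{lem:app:S_12_concentration} requires that $\alpha^\nu$ be measurable with respect to the low-dimensional projection, so the subspace must be enlarged to $V^\star + \vect(\vec{w}^t_1, \dots, \vec{w}^t_p)$. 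This is harmless because $p$ is fixed in Theorem~\ref{thm:staircase}, so the subspace remains of bounded dimension and the finite-dimensional union-bound arguments in Lemma~\ref{lem:app:S_11_norm_concentration} still apply; but with $V_{ij}$ as you wrote it, the decomposition $\sum_\nu \vec{X}^\nu_\bot \overset{d}{=} \norm{\alpha}\,\vec{X}_\bot$ would fail.
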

\begin{proof}

We condition on the event in $\mathcal{F}_t$ that $Q_t, M_t$ are bounded by some constants independent of $d,n$. Subsequently, the proof proceeds similar to  Proposition \ref{prop:app:update_concentration}, with the additional incorporation of the term due to $\hat f(\vec{z}^\nu; \bW^t, \vec{a}))$ in the gradient.

We have:
\begin{equation}
    \vec{g}^t_j =  \frac{1}{n}a_j\sum_{\nu=1}^n \bz_i (f^\star(\vec{z}^\nu)-\hat f(\vec{z}^\nu; \bW^t, \vec{a}))\sigma'(\bz_i^\top\vec{w}^t_j)
\end{equation}
Define:
\begin{equation}
    \vec{X}_i^\nu = \vec{z}^\nu \sigma'(\langle \vec{w}^t_i, \vec{z}^\nu \rangle)(f^\star(\vec{z}^\nu)-\hat f(\vec{z}^\nu; \bW^t, \vec{a})).
 \end{equation}

Analogous to the proof of Proposition \ref{prop:app:update_concentration}, we have:
 \begin{small}
\begin{equation}
     \norm{\vec{g}_i}^2 - \dE[\norm{\vec{g}_i}^2] = \frac{a^2_i}{n^2p^2} \left( \underbrace{\sum_{\nu = 1}^n \norm{\vec{X^\nu}_i}^2 - n\dE[\norm{\vec{X^\nu}}^2]}_{S_1} + \underbrace{\sum_{\nu \neq \nu'} \langle \vec{X}_i^\nu, \vec{X}_i^{\nu'} \rangle - n(n-1) \norm{\dE \langle \vec{X}_i^\nu, \vec{X}_i^{\nu'} \rangle}^2}_{S_2} \right)
 \end{equation}
\end{small}

Similarly, we have:
 \begin{small}
\begin{equation}
    \langle \vec{g}_i, \vec{g}_j \rangle - \dE[\vec{g}_i, \vec{g}_j]= \frac{a^2_i}{n^2p^2} \left( \underbrace{\sum_{\nu = 1}^n \langle \vec{X^\nu}_i, \vec{X^\nu}_j \rangle - n\dE[\langle \vec{X^\nu}_i, \vec{X^\nu}_j \rangle]}_{S'_1} + \underbrace{\sum_{\nu \neq \nu'} \langle \vec{X}_i^\nu, \vec{X}_j^{\nu'} \rangle - n(n-1) (\dE \langle \vec{X}_i^\nu, \vec{X}_j^{\nu'} \rangle)^2}_{S'_2} \right)
\end{equation}
\end{small}
Note that $f^\star(\vec{z}^\nu)$ and $\hat f(\vec{z}^\nu; \bW^t, \vec{a})$ are polynomials in finite-number of correlated Gaussians $\langle \vec{w}^t_1, \vec{z}^\nu \rangle,\cdots, \langle \vec{w}^t_p, \vec{z}^\nu \rangle$, $\langle \vec{w}^\star_1, \vec{z}^\nu \rangle,\cdots, \langle \vec{w}^\star_r, \vec{z}^\nu \rangle$.
Therefore, by repeated applications of Lemma \ref{lem:app:orlicz_submult} and Theorem \ref{thm:app:orlicz_sum}, we obtain that
$\sigma'(\langle \vec{w}^t_i, \vec{z} \rangle), f^\star(\vec{z}^\nu)$ and $\hat f(\vec{z}^\nu; \bW^t, \vec{a}))$ have bounded Orlicz norms of some finite order $\alpha_t$.

Subsequently, similar to Lemma \ref{lem:app:s1_bound}, through Holder's inequality, Lemma \ref{lem:app:orlicz_submult} and Theorem \ref{thm:app:orlicz_sum}, we obtain that $\norm{\vec{X}_i^\nu}^2$, $\langle \vec{X}_i^\nu, \vec{X}_i^{\nu'}\rangle,\langle \vec{X}_i^\nu, \vec{X}_j^{\nu}\rangle,\langle \vec{X}_i^\nu, \vec{X}_j^{\nu'}\rangle$, 
have Orlicz norms of order $\cO(d)$ with $\alpha=\alpha_t$ for some $\alpha_t$ independent of $d$. 

The remaining proof follows by repeating the arguments in Lemmas \ref{lem:app:s1_bound}, \ref{lem:app:S_11_norm_concentration} for Orlicz norms of general order.

Similarly (iii) is obtained by replacing the application of Bernstein's inequality in Lemma \ref{lem:app:linear_concentration} by Theorem \ref{thm:app:orlicz_sum}.
\end{proof}

Lemmas \ref{lem:g_exp_t} and \ref{lem:g_conc_t} together with \ref{eq:qmupd} and the induction hypothesis imply statement $(iii)$ at time $t+1$.

It remains to prove the base case i.e $t=1$. If the leap $\ell > 1$, $U^\star_t={0}$ for all $t \geq 1$. Applying the above arguments then implies that $(i)$ and $(iii)$ hold for all timesteps $t$.

Therefore, we may  assume that $\ell =1$.
At $t=1$, $U^\star_1$ is simply the subspace along $(C_1(f^\star))$. Let   $\bv =  \pm \frac{1}{\norm{C_1(f^\star)}}C_1(f^\star)$ be a vector as per $(ii)$ let $i \in [p]$ be an arbitrary neuron. We have:
\begin{align*}
    \langle \bv, \bw^1_i \rangle &= \langle \bv, \bw^0_i \rangle + \eta \langle \bv, \bg^i \rangle\\
    &= \eta \langle \bv, \bg^i \rangle + \cO(\frac{1}{\sqrt{d}})
\end{align*}
It is straightforward to check that Lemma \ref{lem:app:grad_exp} holds when $\sigma, g^*$ are polynomials, while Lemma \ref{lem:norm_first_step} holds in expectation. Applying the concentration results for Orlicz norms of general order as in Lemma \ref{lem:g_conc_t} imply that Lemma \ref{lem:norm_first_step} also holds in probability for polynomial $\sigma, g^*$.
To establish $(i)$ at time $t=1$, we note that Lemma \ref{lem:norm_first_step} implies that $\Ea{\bw_i}^2$ converges to $1+c a^2_i$ where $c$ is independent of $d,n$.
By Lemma \ref{lem:app:grad_exp} the first term equals with high-probability, $\pm \frac{a_i\mu_1}{p} \norm{C_1(f^\star)} + \cO{(\frac{1}{\sqrt{d}})}$. Since, $\frac{a_i\mu_1}{p} \norm{C_1(f^\star)}$ is a non-constant (linear) polynomial in $a_i$, this proves $(ii)$ for $t=1$.

\review{Lemmas \ref{lem:app:grad_exp} and part $(iv)$ in Lemma \ref{lem:g_conc_t} directly implies $(iv)$ of the induction  statement and  $(ii)$ of Theorem \ref{thm:staircase}.
We now explain how $(ii)$ and $(iii)$ in the induction statements 
imply $(i)$ in Theorem \ref{thm:staircase}. By $(iii)$, the overlaps can be decomposed as:
 \begin{equation}
    \vec{q}^t(i,\ba) \coloneqq [q_{i,t,\bv^{(1)}_m,\ba}, \cdots, q_{i,t,\bv^{(r_t)}_m,\ba}] = \vec{p}^t(i,a_i)+ \delta(i,\ba),
 \end{equation}}
\review{ 
where $\vec{h}^t(i,a_i)=[p_{i,t,\bv^{(1)}_m,a_i}, \cdots, h_{i,t,\bv^{(r_t)}_m,a_i}]$ contains polynomials of strictly increasing minimum degree, depending only on $a_i$ and $\delta(\ba) \in \mathbb{R}^{r_t}$ satisfies:
\begin{equation}\label{eq:deg_ineq}
    \operatorname{min-deg}(\delta(i,\ba)_j) > \operatorname{min-deg}(\vec{h}^t_j(i,a_i)) \ \forall j \in [r_t],
\end{equation}
}
\review{
By the strict increase in degree, we obtain that for any $\vec{v} \in \mathbf{R}^{r_t}$  with $\norm{\vec v}=1$, $\langle \vec{h}^t(i,a_i),\vec{v} \rangle = 0$ for at most $r_t$ values of $a_i$. Therefore, the set of vectors $\vec{h}^t(i,a_i)$ span $\mathbf{R}^{r_t}$. The independence of $a_i$ then implies that for $p > r_t$, the matrix:
\begin{equation}
   \vec{H}^t= \begin{bmatrix}
       \vec{h}^t(i,a_1) \\
       \vec{h}^t(i,a_2) \\
       \vdots \\
       \vec{h}^t(i,a_p)
    \end{bmatrix},
\end{equation}
is almost surely full-rank. Now, the condition \ref{eq:deg_ineq} implies that there exists $\epsilon > 0$ such that for $\abs{a_i} < \epsilon$:
\begin{equation}
    \inf_{\vec{v} \in \mathbf{R}^{r_t},\norm{v}=1} (\abs{\langle \vec{h}^t(i,a_i),\vec{v} \rangle} - \abs{\delta(i,\ba),\vec{v} \rangle}) > 0.
\end{equation}
Therefore, we obtain that conditioned in $\abs{a_i} < \epsilon \forall i \in [p]$,
the overlap matrix
\begin{equation}
   \vec{Q}^t= \begin{bmatrix}
       \vec{q}^t(1,\ba) \\
       \vec{q}^t(2,\ba) \\
       \vdots \\
       \vec{q}^t(p,\ba)
    \end{bmatrix},
\end{equation}
is full-rank almost surely. This implies that the determinant of $(\vec{Q}^t)^\top\vec{Q}^t$ is a non-zero polynomial in $\vec{a}$ and thus $\vec{Q}^t$ is almost surely full-rank. The continuity of the determinant then implies $(i)$ in Theorem \ref{thm:staircase}.}

\subsection{Prediction of the alignment at the second step}

We now utilize the analysis in the previous section to obtain a theoretical prediction for the gradient orientation after two steps for the target function 
defined in bottom right of Figure \ref{fig:multiple_steps} i.e.:
\begin{equation}
    f^{\star}(\vec z) = \sigma^{\star}_1(\langle\vec{w_1}^{\star},\vec{z}\rangle) + \sigma^{\star}_2(\langle\vec{w_2}^{\star},\vec{z}\rangle),
\end{equation}
with $\sigma^{\star}_1(z) = z-z^2$ and $\sigma^{\star}_2(z) = z+ z^2$. Equivalently, the above target function can be expressed in a rotated basis as:
\begin{equation}
     f^{\star}(\vec z)= \sqrt{2}\vec{u_1}^{\star}+2\vec{u_1}^{\star}\vec{u_2}^{\star}.
\end{equation}
Where $\vec{u_1}^{\star}=\frac{1}{\sqrt{2}}(\vec{w_1}^{\star}+\vec{w_2}^{\star})$
and $\vec{u_2}^{\star}=\frac{1}{\sqrt{2}}(\vec{w_1}^{\star}-\vec{w_2}^{\star})$. Therefore $\vec{v}^*=\sqrt{2}\vec{u_1}^{\star}$ with $\norm{\vec{v}^*}=\sqrt{2}$.

We follow the notation defined in the proof sketch in Section \ref{sec:proof_sketch_thm3} and assume that $a_i=\pm \frac{1}{\sqrt{p}}$, while $\alpha= \frac{n}{d}=4$ as in Figure \ref{fig:multiple_steps}.

We have, using Equation \eqref{eq:grad_decom}:

\begin{equation}
    \Ea{\langle \vec{u}, \vec{g}^{1}_j \rangle} \rightarrow \Ea{y\mu_{1,c'_j\langle \bz , \vec{v}^\star \rangle}( \langle \bz_i, \vec{u} \rangle)}-\Ea{\hat{y}^{1}_i\sigma'(\langle \bz_i, \vec{w}^1 \rangle)_j (\langle \bz_i, \vec{u} \rangle)}.
\end{equation}
As explained in the previous section, the second term does not contribute to an alignment towards $\vec{v}^\bot$. Denoting by $\vec{v}_u^\star$, the normalized vector along $\vec{v}^*$,we consider the ratio of the first term when $\vec{u}=\vec{v}_u^\star$ or $\vec{u}=\vec{v}^\bot$.
We obtain:
\begin{equation}
   \langle \vec{g}^1, \vec{v}_u^\star \rangle \approx \Ea{y\mu_{1,c_j'\langle \bz , \vec{v}^\star \rangle,j }\langle \bz_i, \vec{v}_u^\star\rangle},
\end{equation}
Since the first Hermite coefficient $\mu_1$ of the student activation for Relu equals $0.5$, we obtain that $c'j=a_j\eta$.

We assume that $c_j \approx 1$. Therefore, $\mu_{1, \langle \bz , \vec{v}_u^\star \rangle}$ corresponds to the first Hermite coefficient of a translated Relu function and is given by:
\begin{equation}
    \mu_{1,\kappa,j}= (1-\Phi(-\kappa))= \frac{1}{2}(1+\text{erf}(-\kappa/\sqrt{2})).
\end{equation}
Therefore, when $\eta=2$, we obtain that $c'j=2$:
\begin{equation}
    \mu_{1,c'_j\langle \bz , \vec{v}^\star \rangle,j} = \frac{1}{2}(1\pm\text{erf}(\langle \bz,\vec{v}^\star \rangle/\sqrt{2}))=\frac{1}{2}(1\pm\text{erf}(\langle \bz,\vec{v}_u^\star \rangle))
\end{equation}

Let $\Vec{v}_1^{(t=2)}$ and $\Vec{v}_2^{(t=2)}$ denote the projections of the neurons $\vec{w}_j$ with $a_j=1$ and $a_j=-1$ respectively.
Therefore, using the Hermite decomposition of $\text{erf}$, we obtain the following predicted orientations in the setting considered in the right panel of Fig.~\ref{fig:multiple_steps}:
\begin{equation}
\Vec{v}_1^{(t=2)} = (1 - \frac{2}{\sqrt{3 \pi}}) \Vec{w}^\star_1 + (1 + \frac{2}{\sqrt{3 \pi}}) \Vec{w}^\star_2 \qquad 
\Vec{v}_2^{(t=2)}  = (1 + \frac{2}{\sqrt{3 \pi}}) \Vec{w}^\star_1 + (1 - \frac{2}{\sqrt{3 \pi}}) \Vec{w}^\star_2  
\end{equation}
\subsection{Limitations of the Staircase Structure}

We show that a natural class of teacher functions, containing neurons with identical activation functions and uniform second-layer weights does not contain a staircase structure:
\begin{proposition}
Let $y = f^\star(\vec{z}) = \sum_{k=1}^r \sigma^\star(\langle \vec{w}_k^\star, \vec{z} \rangle)$ for some $\sigma^\star$ having leap index $1$, then $U^\star_i=U^\star_1$ for all $i \geq 1$.
\end{proposition}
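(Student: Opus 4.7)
The approach exploits the permutation symmetry of the target function $f^\star(\vec z) = \sum_{k=1}^r \sigma^\star(\langle \vec{w}_k^\star, \vec z\rangle)$ to show that the staircase sequence collapses after its very first step. Let $\mu_1^\star$ denote the (nonzero, by the leap-one hypothesis) first Hermite coefficient of $\sigma^\star$. First I would compute $U^\star_1$ explicitly: by Lemma~\ref{lem:app:low_dim_hermite}, $C_1(f^\star) = \mu_1^\star \sum_{k=1}^r \vec{w}_k^\star$, so
\[
U^\star_1 = \vect(\vec{v}^\star), \qquad \vec{v}^\star := \tfrac{1}{\sqrt{r}}\sum_{k=1}^r \vec{w}_k^\star,
\]
a one-dimensional subspace (and $\|\vec{v}^\star\|=1$ since the $\vec{w}_k^\star$ are orthonormal).

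The main step is to show that $\mu_{U^\star_1, \vec{x}}(f^\star) = \vec{0}$ for \emph{every} $\vec{x} \in U^\star_1$, which by Definition~\ref{def:subspace_conditioning} immediately yields $U^\star_2 = U^\star_1$, and then by a trivial induction $U^\star_t = U^\star_1$ for all $t\geq 1$. Decompose each teacher direction as $\vec{w}_k^\star = \tfrac{1}{\sqrt{r}}\vec{v}^\star + \vec{w}_k^{\star,\bot}$, where $\vec{w}_k^{\star,\bot} \in (U^\star_1)^\bot$ has $\|\vec{w}_k^{\star,\bot}\|^2 = 1 - 1/r$. Writing $\vec{x} = \lambda \vec{v}^\star$ with $\lambda \in \dR$ and differentiating $f^\star$ in $\vec{z}$,
\[
\nabla_{\vec{x}^\bot} f^\star_{U^\star_1,\vec{x}}(\vec{x}^\bot) \;=\; \sum_{k=1}^r (\sigma^\star)'\!\left(\tfrac{\lambda}{\sqrt{r}} + \langle \vec{w}_k^{\star,\bot}, \vec{x}^\bot\rangle\right)\vec{w}_k^{\star,\bot}.
\]
The key observation is that for $\vec{x}^\bot \sim \cN(0, I_{(U^\star_1)^\bot})$, each scalar $\langle \vec{w}_k^{\star,\bot}, \vec{x}^\bot\rangle$ is a centered Gaussian with the \emph{same} variance $1-1/r$, so the scalar factor
\[
c(\lambda) \;:=\; \dE_{\vec{x}^\bot}\!\left[(\sigma^\star)'\!\left(\tfrac{\lambda}{\sqrt{r}} + \langle \vec{w}_k^{\star,\bot}, \vec{x}^\bot\rangle\right)\right]
\]
is independent of $k$. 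Taking the expectation of the gradient then gives
\[
\mu_{U^\star_1, \vec{x}}(f^\star) \;=\; c(\lambda)\sum_{k=1}^r \vec{w}_k^{\star,\bot} \;=\; c(\lambda)\, P_{(U^\star_1)^\bot}\!\left(\sqrt{r}\,\vec{v}^\star\right) \;=\; \vec{0},
\]
since $\vec{v}^\star \in U^\star_1$.

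I do not anticipate a genuine obstacle here: the only subtlety is checking that $\mu_{U^\star_1, \vec{x}}(f^\star)$ is defined via a gradient in $\vec{x}^\bot$ (so it automatically lives in $(U^\star_1)^\bot$) and that the symmetry argument applies uniformly in $\vec{x} \in U^\star_1$, which is immediate because $\vec{x}$ acts through a single scalar $\lambda$ identical across all $k$. The induction step is then formal: assuming $U^\star_t = U^\star_1$, the computation of $\mu_{U^\star_t, \vec{x}}(f^\star)$ is literally the same as for $t=1$, so $U^\star_{t+1} = U^\star_1$ as well. The takeaway is that a symmetric ``sum-of-equal-neurons'' teacher aligns its entire first Hermite mass along the single direction $\vec{v}^\star$, and once that direction is conditioned out, the residual function becomes permutation-symmetric in a way that kills every higher-order conditional linear term.
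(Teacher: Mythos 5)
Your proof is correct. Both you and the paper exploit the same underlying symmetry—that the $r$ teacher directions enter $f^\star$ interchangeably—to show the conditional first Hermite coefficient $\mu_{U^\star_1,\vec{x}}(f^\star)$ vanishes identically, but the bookkeeping differs. The paper works in the standard teacher basis and argues at the level of the Hermite expansion: for $\vec{u}\perp\vec{v}^\star$ in $V^\star$, the coupling $\dE[f^\star \,\He_k(\langle\vec{v}^\star,\vec{z}\rangle)\,\langle\vec{u},\vec{z}\rangle]$ factors (by permutation invariance of $f^\star$ and of the Gaussian measure) into $\bigl(\sum_j u_j\bigr)\cdot\text{(common value)}$, which is zero because $\vec{u}\perp\vec{v}^\star$. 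You instead compute the conditional gradient directly and coordinate-free: decomposing $\vec{w}_k^\star = \tfrac{1}{\sqrt r}\vec{v}^\star + \vec{w}_k^{\star,\bot}$, the expectation over $\vec{x}^\bot$ of each $(\sigma^\star)'$-factor depends only on the law of $\langle\vec{w}_k^{\star,\bot},\vec{x}^\bot\rangle$, which is the same $\cN(0,1-1/r)$ for every $k$, so the gradient reduces to $c(\lambda)\sum_k\vec{w}_k^{\star,\bot} = 0$. Your version is slightly more economical in that it needs only the equal residual norms $\|\vec{w}_k^{\star,\bot}\|$ (an immediate consequence of orthonormality) rather than invoking full permutation invariance, and you also make the final induction $U^\star_t = U^\star_1$ explicit, which the paper leaves implicit. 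Both routes are sound.
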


\begin{proof} 
    For any such target function,  $\vec{v}_u^\star$ is given by $\vec{v}_u^\star=\frac{1}{\sqrt{r}}(\sum_{k=1}^r\vec{w}_k^\star)$. Without loss of generality, assume that $\vec{w}_k^\star = \vec{e}_k$, where $ \vec{e}_k$ denotes the unit vector corresponding to the $k_{th}$ coordinate.
    Now, consider any direction  $\vec{u} \perp \vec{v}^\star$ in the teacher subspace. Such a vector satisfies $\sum_{k=1}^r u_i = 0$.
    Therefore, for any $k \geq 0$, we have:
    \begin{equation}
    \begin{split}
        \Ea{f^\star(\vec{z})H_k(\langle \vec{v}^\star , \vec{z} \rangle )(\langle \vec{u}_i, \vec{z} \rangle )} &= (\sum_{i=1}^p (u_i))(\Ea{f^\star(\vec{z})H_k(\langle \vec{v}_u^\star , \vec{z} \rangle)z_1})\\
        &=0.
    \end{split}
    \end{equation}
    Where we used the symmetry of $f^\star(\vec{z})$ w.r.t permutations of the first $r$ coordinates.
    Therefore, the Hermite decomposition of $f^\star(\vec{z})$ does not contain any term that linearly couples $\vec{u}$ to  $\vec{v}^\star$.
\end{proof}

Therefore, the presence of a staircase structure requires asymmetry between the the dependence of the target function on different directions in the teacher subspace.
\clearpage

\section{Learning the second layer}
\label{sec:appendix:cget_proofs}

\subsection{Proof of Proposition \ref{prop:fixed_width_lower_bound}}
We first prove the finite $p$ case of Proposition \ref{prop:fixed_width_lower_bound}. Let $\vec{a}$ be a second layer vector with $a_i \leq c/\sqrt{p}$, and assume that $W$ only learns a subspace $U \subseteq V^\star$. We write \review{$\dR^d = U \oplus U^\bot_\star \oplus V^{\star\bot}$, where $U^\bot_\star$ is the orthogonal subspace of $U$ in $V^\star$. By assumption, we have $\norm{P_{U^\bot_\star}\vec{w}_i} \leq \eps_d$ for every $i$; where $\eps_d$ is going to zero as $d$ grows.}

For any $\vec{z}\in \dR^d$, we have
\begin{align*}
    \hat f(\vec{z}; W, \vec{a}) &= \sum_{i=1}^p\frac{a_i}{\sqrt{p}} \sigma(\langle \vec{w}_i, P_{U} \vec{z} \rangle + \langle \vec{w}_i, P_{\review{U^\bot_\star}} \vec{z} \rangle  + \langle \vec{w}_i, P_{V^{\star\bot}} \vec{z} \rangle ) \\
    &= \sum_{i=1}^p\frac{a_i}{\sqrt{p}} \sigma(\langle \vec{w}_i, P_{U} \vec{z} \rangle + \langle \vec{w}_i, P_{V^{\star\bot}} \vec{z} \rangle ) + \frac{a_i}{\sqrt{p}}\eps_d \tilde \sigma(\langle \vec{w}_i, \review{P_{U^\bot_\star}} \vec{z} \rangle)
\end{align*}
where $\tilde \sigma$ is a Lipschitz function. We call the first term of the above expression $\tilde f(P_{U} \vec{z}, P_{V^{\star\bot}\vec{z}})$, forgetting the structure of the function $\hat f$. Then, we can write the risk as
\begin{equation}
    \cR(W, \vec{a}) = \dE_{\vec{z}}\left[ \left(f^\star(\vec{z}) - \tilde f(P_{U} \vec{z}, P_{V^{\star\bot}\vec{z}})\right)^2 \right] + O(\eps_d),
\end{equation}
having used the Cauchy-Schwarz inequality to bound the contribution of $\tilde \sigma$. Then, by successive expectations, 
\begin{align*}
    \cR(W, \vec{a}) &= \dE_{P_{V^{\star\bot}}\vec{z}, P_{U} \vec{z}}\left[\dE_{P_{U^\bot}\vec{z}}\left[ \left(f^\star(\vec{z}) - \tilde f(P_{U} \vec{z}, P_{V^{\star\bot}\vec{z}})\right)^2 \bigg \vert P_{V^{\star\bot}}\vec{z}, P_{U} \vec{z} \right]\right] + O(\eps_d), \\
    &\geq \dE_{P_{V^{\star\bot}}\vec{z}, P_{U} \vec{z}}\left[ \inf_f \dE_{P_{U^\bot}\vec{z}}\left[ \left(f^\star(\vec{z}) - f(P_{U} \vec{z}, P_{V^{\star\bot}\vec{z}})\right)^2 \bigg \vert P_{V^{\star\bot}}\vec{z}, P_{U} \vec{z} \right]\right] + O(\eps_d)
\end{align*}
where the infimum is taken over all measurable functions $f: U \times V^{\star \bot} \to \dR$. But this infimum exactly corresponds to the definition of conditional expectation/conditional variance, which is independent from $P_{V^{\star\bot}}\vec{z}$ (since $f^\star$ is). As a result,
\begin{equation}
    \cR(W, \vec{a}) \geq  \dE_{P_{U} \vec{z}}\left[ \Var\left(f^\star(\vec{z})\vert P_{U} \vec{z}\right) \right] + O(\eps_d),
\end{equation}
which implies the statement of Proposition \ref{prop:fixed_width_lower_bound}.

\subsection{Full statement of Theorem \ref{thm:cget}}
We now provide the full statement of Theorem \ref{thm:cget}. It establishes the asymptotic equivalence of the training and generalization errors of the original features and the conditional Gaussian features defined by equation \eqref{eq:ck_kernel}.

Consider the sequence of  vectors $\bv_n \in \R^d$ defined as in Equation \eqref{eq:spike+bulk} by $\vec{v}_n = \frac{1}{n}\sum_{i=1}^n  y_i\bz_i$. For simplicity, we omit the dependence of $\bv_n$ on $n$ and denote each entry by $\bv$. For any vector $\vec{z} \in \R^d$, define the decomposition $\vec{z} = z_{\vec{v}} \vec{v} + \vec{z}^\bot$ and feature maps:
\begin{equation}\phi_{\text{CK}}(\vec{z}) = \sigma(W^{(1)}\vec{z}),
\end{equation}
where $W^{(1)}$ denotes the weight matrix obtained through the application of a single gradient step.

Then the random variable $\phi_{\text{CK}}(\vec{z})$ admits a regular conditional distribution conditioned on the values of $z_{\vec{v}}$ (Theorem 8.37
in \cite{klenke2013probability}). Therefore, the following mean, correlation, and covariance matrix are well-defined:
\begin{equation}
\begin{split}
\mu(z_{\vec{v}}) &= \dE\left[ \phi_{\text{CK}}(\vec{z}) \mid z_{\vec{v}} \right], \quad\Psi(z_{\vec{v}}) = \dE\left[ \phi_{\text{CK}}(\vec{z})(z^\bot)^\top \mid z_{\vec{v}} \right],\\ 
    \Phi(z_{\vec{v}}) &= \Cov\left[ \phi_{\text{CK}}(\vec{z}) \mid z_{\vec{v}} \right] - \Psi(z_{\vec{v}})\Psi(z_{\vec{v}})^\top
\end{split}
\end{equation}

Now, for each value of $z_{\vec{v}}$, define the following random variable:
\begin{equation}\label{eq:cget_features}
    \phi_{\text{CL}}(\vec{z}; \vec{v}) = \mu\left(z_{\vec{v}}\right) + \Psi(z_{\vec{v}}) \vec{z}^\bot +  \Phi(z_{\vec{v}})\vec{\xi}.
\end{equation}
Then $\phi_{\text{CL}}(\vec{z}; \vec{v})$ satisfies:
\begin{equation}
    \dE\left[ \phi_{\text{CL}}(\vec{z}) \mid z_{\vec{v}} \right] = \mu(z_{\vec{v}}) , \dE\left[ \phi_{\text{CL}}(\vec{z})(\vec{z}^\bot)^\top \mid z_{\vec{v}} \right]  = \Psi(z_{\vec{v}}),\\
     \Cov\left[ \phi_{\text{CL}}(\vec{z}) \mid z_{\vec{v}} \right] =  \Cov\left[ \phi_{\text{CK}}(\vec{z}) \mid z_{\vec{v}} \right].
\end{equation}
Therefore, \review{$\phi_{\text{CL}}(\vec{z}; \vec{v})$} is a Gaussian variable having the same conditional mean, covariance as $\phi_{\text{CK}}(\vec{z}; \vec{v})$ and the same corrrelation with $\bz_{\bot}$ as $\phi_{\text{CK}}(\vec{z}; \vec{v})$. Since $\bz_{\bot}$ is Gaussian and independent of $z_{\vec{v}}$, this uniquely characterizes the conditional measure of  $\phi_{\text{CL}}(\vec{z}; \vec{v})$.

Now, consider a set of $n$ training inputs $\bz_1,\cdots,\bz_n$. For each $i \in n$, generate an equivalent feature map $\bm \Phi_{\text{CL}}$ through equation \eqref{alg:gd_training}, with $\vec{\xi}$  being independently sampled for each example. Let $\vec{\Phi}_{\text{CK}}$ and $\vec{\Phi}_{\text{CL}}$ denote matrices in $\R^{n \times p}$ with rows $\phi_{\text{CK}}(\vec{z_i})$ and $\phi_{\text{CL}}(\vec{z_i})$ respectively,

Consider the following minimization problem:
\begin{equation}\label{eq:ck_minima}
\begin{split}
     \min_{\vec{a} \in \dR^p} \frac1n \sum_{\nu = 1}^n \left( \langle \vec{a}, \phi_{\text{CK}}(\vec{z}^\nu)\rangle - f^\star(\vec{z}^\nu) \right)^2 + \lambda \norm{\vec{a}}^2\\
\end{split}
\end{equation}

Define the following constraint set:
 \begin{equation}\label{eq:1d_clt_constraints}
     \cS_p = \Set*{\ba \in \dR^d \given \norm{\ba}_2 \leq R, \quad \norm{\ba}_\infty \leq Cp^{-\eta}}.
 \end{equation} 
We make the following assumption:
\begin{assumption}\label{assump:constraint}
There exist constants $R,C, \eta$ such that the minimizer $\hat{\vec{a}}_{\text{CK}}$  of the optimization problem defined by equation \eqref{eq:ck_minima} lies in $\cS_p$ with high probability as $n,d \rightarrow \infty$.
\end{assumption}
The above assumption can be enforced by utilizing constrained minimization for the second layer. Alternatively, for overparameterized models i.e $p/n>1$, one could utilize the arguments in Theorem 5 of \cite{Montanari2022}. Let $\hR_n^\star(\bm\Phi, \bm y(\bm Z)),\mathcal{R}_g^\star(\bm\Phi, \bm y(\bm Z))$ denote the training and generalization errors respectively with features $\bm\Phi$ and labels $\bm y(\bm Z)$.
\setcounter{theorem}{3}
\begin{theorem}
 Assume that $n, p = \Theta(d)$, and that the vector $\vec v^\star=C_1(f^*)$ defined in Theorem \ref{thm:one_step_learning} is nonzero. Then, the sequence of vectors $\vec v_n = \frac{1}{n}\sum_{i=1}^n  y_i\bz_i \in \R^d$ satisfy:
    \begin{enumerate}
        \item As $n,d \rightarrow \infty$, $P_{V^\star}\vec{v} \overset{\dP}{\longrightarrow} \vec{v}^\star$.
        \item Under Assumption \ref{assump:constraint}, the training and generalization errors obtained through the minimization of the objective \eqref{eq:ck_kernel} for training distribution defined by feature maps $\phi_{\text{CK}}(\vec{z})$ converge in distribution to the corresponding training and generalization errors for features $\phi_{\text{CL}}(\vec{z}; \vec{v})$. 

Concretely, we have that for any bounded Lipschitz function $\Psi: \dR \to \dR$:
\[\lim_{n, p \to \infty} \left| \Ea{\Psi\left(\hR_n^\star(\bm\Phi_{CK}, \bm y(\bm Z))\right)} - \Ea{\Psi\left(\hR_n^\star(\bm\Phi_{\text{CL}}, \bm y(\bm Z))\right)} \right| = 0\]
\[\lim_{n, p \to \infty} \left| \Ea{\Psi\left(\mathcal{R}_g(\bm\Phi_{CK}, \bm y(\bm Z))\right)} - \Ea{\Psi\left(\mathcal{R}_g(\bm\Phi_{\text{CL}}, \bm y(\bm Z))\right)} \right| = 0\]
In particular, for any $\mathcal{E} \in \dR$, and denoting 
 $\overset{\dP}{\longrightarrow}$  the convergence in probability:
\begin{equation}\label{eq:in_proba_convergence}
\begin{split}
\hR_n^\star(\bm\Phi_{CK}, \bm y(\bm Z)) \overset{\dP}{\longrightarrow} \mathcal{E} \quad \text{if and only if} \quad \hR_n^\star(\bm \Phi_{CL}, \bm y(\bm Z)) \overset{\dP}{\longrightarrow} \mathcal{E}\\
\mathcal{R}_g^\star(\bm \Phi_{CK}, \bm y(\bm Z)) \overset{\dP}{\longrightarrow} \mathcal{E} \quad \text{if and only if} \quad \mathcal{R}_g(\bm \Phi_{CL}, \bm y(\bm Z)) \overset{\dP}{\longrightarrow} \mathcal{E},
\end{split}
\end{equation}
\end{enumerate}
\end{theorem}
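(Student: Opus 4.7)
The vector $\vec{v} = \frac{1}{n}\sum_\nu y^\nu \vec{z}^\nu$ is an empirical average for $\dE[f^\star(\vec{z})\vec{z}]$, and by Stein's lemma (as used in Lemma~\ref{lem:app:grad_exp}) this expectation equals $C_1(f^\star)=\vec{v}^\star$. To upgrade the law of large numbers into convergence of $P_{V^\star}\vec{v}$, pick any unit vector $\vec{u}\in V^\star$: the difference $\langle \vec{u},\vec{v}\rangle - \langle \vec{u},\vec{v}^\star\rangle$ is a sum of $n$ centered i.i.d.\ terms $y^\nu\langle\vec{u},\vec{z}^\nu\rangle$ that are sub-exponential (Assumption~\ref{assump:smoothness} makes $f^\star$ Lipschitz). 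Bernstein's inequality, exactly as in Lemma~\ref{lem:app:linear_concentration}, gives $|\langle\vec{u},\vec{v}-\vec{v}^\star\rangle| = O(\log n/\sqrt{n})$ with high probability, and a union bound over a finite orthonormal basis of $V^\star$ yields $P_{V^\star}\vec{v}\overset{\dP}{\to}\vec{v}^\star$.

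\textbf{Plan for Part 2.} The main content is a \emph{conditional} version of the Gaussian Equivalence Theorem. Start from the spike+bulk decomposition \eqref{eq:spike+bulk}: the trained first layer satisfies $W^{(1)} = W^0 + \eta(\vec{u}\vec{v}^\top + \Delta)$, and by Lemma~\ref{lem:delta} the bulk $\Delta$ has small operator norm, is almost orthogonal to $V^\star$, and has nearly orthogonal rows. Decompose each input as $\vec{z} = z_{\vec{v}}\hat{\vec{v}} + \vec{z}^\bot$ with $\hat{\vec{v}} = \vec{v}/\norm{\vec{v}}$, so that
\begin{equation}
    \phi_{\text{CK}}(\vec{z}) = \sigma\bigl(\eta\norm{\vec{v}}\,z_{\vec{v}}\,\vec{u} + (W^0+\eta\Delta)\vec{z}\bigr).
\end{equation}
Conditional on $z_{\vec{v}}$, the spike contribution is deterministic and acts as a sample-dependent shift of the activation, while $(W^0+\eta\Delta)\vec{z}$ behaves like features of a standard random features model with effective activation $\tilde\sigma_{i,z_{\vec{v}}}(x) = \sigma(\eta\norm{\vec{v}} z_{\vec{v}} u_i + x)$. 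For such a random features model with Gaussian input $\vec{z}^\bot$, the standard GET machinery \citep{goldt_gaussian_2021,hu2022universality,Montanari2022,dandi2023universality} produces a Gaussian surrogate matching the first two conditional moments, which is precisely the map $\phi_{\text{CL}}(\vec{z};\vec{v}) = \mu(z_{\vec{v}}) + \Psi(z_{\vec{v}})\vec{z}^\bot + \Phi(z_{\vec{v}})\vec{\xi}$ defined in \eqref{eq:cget_features}. A Lindeberg swap, applied feature-by-feature inside the ridge loss \eqref{eq:ck_minima} and controlled uniformly over the constraint set $\cS_p$ via Assumption~\ref{assump:constraint}, then transfers the equivalence from features to the minimizer's training and generalization errors, and convergence in distribution (hence in probability) of the errors follows by testing against bounded Lipschitz $\Psi$.

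\textbf{Main obstacle.} The subtle point is that one cannot argue pointwise in $z_{\vec{v}}$ and then integrate, because the spike direction $\vec{v}$, the matrix $W^0+\eta\Delta$, and the labels $y^\nu=f^\star(\vec{z}^\nu)$ are all coupled through the same data. The right level of abstraction is the conjugate kernel matrix $K_{\text{CK}}=\Phi_{\text{CK}}\Phi_{\text{CK}}^\top/p$: one must show that $K_{\text{CK}}$ concentrates around a conditional kernel $K_{\text{CL}}$ whose entries depend on the scalars $(z_{\vec{v}}^\nu)_{\nu=1}^n$ in a jointly Lipschitz way, with a fluctuation bound that survives when integrated against the joint law of the $(z_{\vec{v}}^\nu, \vec{z}^{\nu,\bot})$. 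The Lipschitz regularity comes from smoothness of $\sigma$ and boundedness of its derivatives (Assumption~\ref{assump:smoothness}), while the $\ell^\infty$ constraint $\norm{\ba}_\infty\leq Cp^{-\eta}$ in Assumption~\ref{assump:constraint} is what keeps the per-swap Lindeberg error $o(1/n)$ and hence summable across the $n=\Theta(d)$ terms. The analogous spike-free version of this argument is what underlies the universality results of \cite{hu2022universality,Montanari2022,dandi2023universality}; the novelty here is purely the conditioning on $z_{\vec{v}}$, which we believe is the main technical contribution of the theorem.
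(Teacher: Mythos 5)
Your Part~1 is fine and essentially matches the paper (the paper invokes Lemma~\ref{lem:app:grad_exp_bounds} directly, but the Stein + Bernstein argument you give is the underlying mechanism).

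For Part~2, your high-level strategy agrees with the paper: use the spike+bulk decomposition \eqref{eq:spike+bulk}, condition on $z_{\vec v}$ so that $\phi_{\text{CK}}$ becomes a random-features map $\vec z^\bot \mapsto \sigma(\eta\|\vec v\|z_{\vec v}u_i + \langle\vec w_i^\bot,\vec z^\bot\rangle)$ with shifted, sample-dependent activations, and invoke a conditional Gaussian-equivalence argument controlled uniformly over the constraint set $\cS_p$. Where you diverge is the transfer step: you propose a discrete feature-by-feature Lindeberg swap in the spirit of \cite{hu2022universality}, whereas the paper interpolates continuously between $\phi_{\text{CK}}$ and $\phi_{\text{CL}}$ along a trigonometric path, generalizing Theorem~1 of \cite{Montanari2022}. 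Both routes reduce to the same key intermediate: a conditional one-dimensional CLT (Lemma~\ref{lem:1dclt}, stating that $(\btheta_1^\top\phi_{\text{CK}}(\vec z),\btheta_2^\top\vec z)$ and $(\btheta_1^\top\phi_{\text{CL}}(\vec z),\btheta_2^\top\vec z)$ have the same conditional law given $z_{\vec v}=k$, uniformly over $\btheta_1\in\cS_p$). You implicitly rely on this but never state it; it is the actual technical content and deserves to be isolated, since it replaces Assumption~5 of \cite{Montanari2022} (or the 1d-CLT of \cite{hu2022universality}) in the conditional setting.

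Your ``main obstacle'' paragraph somewhat misidentifies the difficulty. First, $\vec v$, $W^0+\eta\Delta$, and the ridge labels are \emph{not} coupled through the same data: Assumption~\ref{ass:training} uses a fresh batch for the second-layer ridge regression, so $\vec v$ and $W^1$ are fixed (and independent of the new batch) when the conditional GET is applied. Second, the paper \emph{does} argue pointwise in the conditioning values and then integrates: equation \eqref{eq:conv_uni} takes a supremum over $(v_{\bz_1},\dots,v_{\bz_n})$ and the final step is a law of total expectation. What actually makes this delicate, and what your plan does not address, is that the conditional means $\bmu(z_{i,\vec v})$ are unbounded as functions of $z_{\vec v}$, so one cannot blindly invoke either \cite{hu2022universality} or \cite{Montanari2022}. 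The paper resolves this by establishing an a priori bound $\frac1n\sum_i(\hat{\vec a}^\top\bmu(z_{i,\vec v}))^2\le C$ with high probability (Lemma~\ref{lem:get_lem_3}, proved via a comparison with $\vec a=0$ and Bernstein on $y_i^2$), restricting to the truncation event $\cA_1$, and de-truncating at the end. Also worth noting: the label $y=f^\star(\vec z)$ depends on $z_{\vec v}$ as well as on $\vec z^\bot$; after conditioning, this dependence has to be absorbed into a per-sample loss function, and one needs the operator-norm control of Lemma~\ref{lem:get_lem_2}. Your $\ell^\infty$-constraint intuition for the per-swap error is correct, but it is the mean-term control that your plan is missing.
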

\setcounter{theorem}{4}

Part (i) follows directly from Lemma \ref{lem:app:grad_exp_bounds}.
To prove the equivalence of training and generalization errors for the given direction, we rely on the framework of one-dimensional CLT (Central Limit Theorem), discussed in \cite{goldt_gaussian_2021}. One-dimensional CLT was recently shown to imply the universality of training and generalization errors for Random feature models in \cite{hu2022universality}. However, in our setting where we train the model, and as verified empirically in \cite{ba2022high}, a naive one-dimensional CLT with equivalent Gaussian features no longer holds.

Instead, we introduce a generalization termed ``conditional one-dimensional CLT", given by the following Lemma:
\begin{lemma}\label{lem:1dclt}
For any Lipschitz function $\varphi: \dR^2 \to \dR$, and $\forall k \in \R$: 
\begin{equation}\label{eq:one_dimensional_clt_main}
    \lim_{n, p \to \infty} \sup_{\btheta_1\in\cS_p,\btheta_2\in\cS^{d-1}} 
    \left| 
    \Ea{\varphi(\btheta_1^\top \phi_{\text{CK}}(\vec{z}),\btheta_2^\top\vec{z})\,\big|\,z_{\vec{v}}=k} - 
   \Ea{\varphi(\btheta^\top\phi_{\text{CL}}(\vec{z}),\btheta_2^\top\vec{z})\,\big|\,z_{\vec{v}}=k}
   \right| = 0
\end{equation}
where $\cS^{d-1}$ denotes the unit sphere in $\R^d$
\end{lemma}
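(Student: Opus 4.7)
\textbf{Proof plan for Lemma \ref{lem:1dclt}.} The plan is to isolate the contribution of the spike direction $\vec{v}$ by conditioning, and then apply the standard Gaussian equivalence theorem in the orthogonal subspace, where the weights resemble a genuine random-features matrix. Throughout I will use the spike+bulk decomposition of equation \eqref{eq:spike+bulk}: $W^{(1)} = W^{(0)} - \eta\,\vec{u}\vec{v}^\top - \eta\,\Delta$, together with Lemma \ref{lem:delta}, which controls the spectral and entrywise size of $\Delta$.

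\textbf{Step 1: Conditioning and reduction to the orthogonal subspace.} Fix $k\in\R$ and write $\hat{\vec{v}} = \vec{v}/\|\vec{v}\|$. Conditionally on $z_{\vec{v}}=k$, the input decomposes as $\vec{z} = k\hat{\vec{v}} + \vec{z}^\bot$ with $\vec{z}^\bot$ a standard Gaussian on $\hat{\vec{v}}^\bot$. Denote $\vec{c}_k := k\,W^{(1)}\hat{\vec{v}}$ and $\tilde W := W^{(1)}(I - \hat{\vec{v}}\hat{\vec{v}}^\top)$. Then
\[
\phi_{\mathrm{CK}}(\vec{z}) \;=\; \sigma\bigl(\vec{c}_k + \tilde W \vec{z}^\bot\bigr),
\qquad
\btheta_2^\top \vec{z} \;=\; k\,\langle\btheta_2,\hat{\vec{v}}\rangle + \btheta_2^\top\vec{z}^\bot.
\]
Both terms inside $\varphi$ are therefore (affine) functions of the Gaussian vector $\vec{z}^\bot$, evaluated at weights that are conditionally deterministic (given the training batch $\bZ_0$ used for the first gradient step).

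\textbf{Step 2: Controlling the effective weight matrix $\tilde W$.} I will use Lemma \ref{lem:delta} to show that $\tilde W$ is a mild perturbation of a random-features matrix: its rows have norm $\tilde\Theta(1)$, their pairwise inner products are $o(1)$, and they are asymptotically orthogonal to any fixed direction $\btheta_2$ in $\hat{\vec{v}}^\bot$. Writing $\tilde W = W^{(0)}(I-\hat{\vec{v}}\hat{\vec{v}}^\top) - \eta\,\Delta(I-\hat{\vec{v}}\hat{\vec{v}}^\top)$, the first part is a Haar-distributed piece with well-known concentration, and the second part satisfies $\|\Delta\| = O(\polylog(d)/\sqrt{d})$, with its entries along any fixed direction of size $O(\polylog(d)/(p\sqrt{d}))$. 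In particular, the conditional mean and covariance of $\tilde W\vec{z}^\bot$ exist and are close to their unconditional counterparts.

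\textbf{Step 3: Conditional Gaussian equivalence on $\vec{z}^\bot$.} With $\tilde W$ behaving as a random features matrix, I will invoke the one-dimensional CLT used in \cite{goldt_gaussian_2021,hu2022universality} inside the $(d-1)$-dimensional subspace $\hat{\vec{v}}^\bot$. The key step here is to verify the two standard hypotheses uniformly over $\btheta_1\in\cS_p,\btheta_2\in\cS^{d-1}$: (a) the rows of $\tilde W$ satisfy the required delocalization/orthogonality properties, and (b) the linear statistic $\btheta_1^\top\sigma(\vec{c}_k + \tilde W\vec{z}^\bot)$ is a sum of weakly dependent terms with $\|\btheta_1\|_\infty = O(p^{-\eta})$. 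Matching the first two conditional moments then yields the linearization
\[
\sigma\bigl(\vec{c}_k + \tilde W\vec{z}^\bot\bigr) \;\stackrel{d}{\approx}\; \mu(k) + \Psi(k)\,\vec{z}^\bot + \Phi(k)\,\bxi
\]
in the sense of \eqref{eq:one_dimensional_clt_main}, which is precisely $\phi_{\mathrm{CL}}(\vec{z};\vec{v})$.

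\textbf{Step 4: Uniformity and conclusion.} The supremum over $(\btheta_1,\btheta_2)$ requires a standard $\eps$-net argument combined with the concentration bounds on $\tilde W$ and a Lipschitz estimate on $\varphi$; the constraint $\btheta_1\in\cS_p$ (in particular $\|\btheta_1\|_\infty\le Cp^{-\eta}$) provides the Lindeberg-type control needed for the CLT to hold uniformly. Taking the limit $n,p\to\infty$ finishes the proof.

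\textbf{Main obstacle.} The delicate step is Step 3: ensuring that after conditioning on $z_{\vec{v}}=k$, the perturbed matrix $\tilde W$ still enjoys enough independence/orthogonality with respect to $\vec{z}^\bot$ to satisfy the hypotheses of the one-dimensional CLT. In particular, even though $\Delta$ is not independent of $\vec{z}^\bot$ (it depends on the first-step batch $\bZ_0$, which is independent of $\vec{z}$, so this is fine, but $\Delta$ also carries neuron-to-neuron correlations), one must verify that its contribution to the conditional covariance matrix and to the anisotropic directions is negligible in the CLT limit. This is exactly where parts (ii) and (iii) of Lemma \ref{lem:delta} enter: they bound both the operator norm and the cross-correlations of $\Delta$, ruling out the formation of any additional low-dimensional spike beyond $\vec{v}$ itself.
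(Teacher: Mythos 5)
Your proposal follows essentially the same route as the paper: condition on $z_{\vec{v}} = k$, decompose the weight matrix via the spike+bulk decomposition so that $\phi_{\text{CK}}$ becomes a random-features map in $\vec{z}^\bot$ with an additive shift $\vec{c}_k$, control the perturbation of the orthogonal weights via Lemma \ref{lem:delta}, and invoke the one-dimensional CLT of \cite{goldt_gaussian_2021,hu2022universality} with the $\|\btheta_1\|_\infty$ constraint supplying the Lindeberg control. The paper implements Step 3 a bit more concretely than you do: it absorbs the shift $\vec{c}_k$ and the per-neuron row norm $c_i$ into a family of centered, neuron-dependent activations $\sigma_{i,z_{\vec{v}}}(u)$ (so that each has zero conditional mean), works on the symmetric half of the neurons, packages your ``orthogonality of $\tilde W$'' and operator-norm statements into two explicit high-probability events $\cA_1,\cA_2$ (the latter drawing on Lemma 14 of \cite{ba2022high}), and — a point your sketch glosses over — notes that the odd-activation hypothesis of \cite{hu2022universality} must be relaxed, which it does by appealing to exact equality of the first two conditional moments of $\phi_{\text{CK}}$ and $\phi_{\text{CL}}$ in the spirit of Theorem 6 of \cite{dandi2023universality}. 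If you flesh out Step 3 with the centered, row-normalized activations and address the odd-activation hypothesis explicitly, your argument matches the paper's.
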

\begin{proof}
For an input $\vec{z} \sim \mathcal{N}(0, I_d)$, we consider the decomposition $\vec{z}=z_{\vec{v}}\vec{v}+\vec{z}^\perp$
We note that conditioned on on $z_{\vec{v}}=k$, $\phi_{\text{CL}}(\vec{z}),\vec{z_\bot}$ is a Gaussian random variable.
Next, consider $\phi_{\text{CK}}(\vec{z})$. Our proof relies on the observation that while the features $\phi_{\text{CK}}(\vec{z})$ have complex non-linear dependence on $z_{\vec{v}}$, for a fixed value of  $z_{\vec{v}}$, they are equivalent to a random-features mapping applied to $\vec{z_\bot}$. Concretely, we have by Lemma \ref{lem:delta} that the weight matrix $\bW^{(1)}$ has the following spike+bulk decomposition (equation \eqref{eq:spike+bulk}):
\begin{equation}
    \bW^{(1)} =\eta \vec{u}\vec{v}^\top+\bW^{(0)} + \eta\Delta,
\end{equation}
where $\vec{u}=\frac{\mu_1}{p}\vec{a}$

Let $\bW^{\perp}$ denote the combined matrix $\bW^{(0)} + \eta\Delta$ with rows $\vec{w}^{\perp}_i$ for $i \in [p]$.

Lemma \ref{lem:norm_first_step} implies that there exist constants $c_i$ for $i \in [p]$ depending only on $a_i$ such that $\norm{\vec{w}^{\perp }_i}^2=c_i + \cO(\frac{\polylog d}{\sqrt{d}})$ with high-probability.
Define the following neuron-wise activation functions:
 \begin{equation}\label{eq:neuron_wise}
       \sigma_{i,z_{\vec{v}}}(u) =  \sigma(c_j u+\eta v_{\vec{z}})-\Eb{u}{\sigma(c_j u+\eta u_i z_{\vec{v}})},
 \end{equation}
where $i \in [p]$ denotes the index of the neuron and the expectation is w.r.t $z \sim \mathcal{N}(0,1)$.
Under the choise of symmetric initialization in Equation \eqref{eq:sample_archit}, it suffices to restrict ourselves to the first half $p/2$ neurons.

For a fixed value of $z_{\vec{v}}$, the feature map $\phi_{\text{CK}}(\bm z) = \sigma(\vec{W}^{1}\vec{z})$ is equivalent to a random features mapping with neurons $\sigma_{i,v_{\vec{z}}}$ applies to inputs $\vec{z}^\bot \in \R^d$ with approximately orthogonal weights $\bW^{\perp}$.
Consider the following events for some positive constants $C_1,C_2,C_3$:
\begin{align*}
    \cA_1 = \Set*{ \sup_{i, j \in [p/2]} \left| {\langle\vec{w}^{\perp }_i}, \vec{w}^{\perp}_j\rangle - c_i\delta_{ij} \right| \leq C_1\left(\frac{\polylog d}{d}\right)^{1/2}} \quad
    \cA_2 = \Set*{\norm{\bW^{\perp}}_{\mathrm{op}} \leq C_3(\polylog d)}
\end{align*}
 
We have, using Lemma \ref{lem:delta} and a union bound, that for $p,d=\Theta(n)$, $\Pr[\cA_1] \overset{n,d \rightarrow \infty}{\longrightarrow} 1$. Furthermore, part (ii) of Lemma 14 in \cite{ba2022high} implies that  $\Pr[\cA_2] \rightarrow 1$.
Next, we utilize Corollary 2  and Lemma 3 in \cite{hu2022universality}. 
Note that the  neuron wise activation functions \eqref{eq:neuron_wise} for a fixed value of $z_{\vec{v}}$ satisfy $\Eb{u}{\sigma_{i,v_{\vec{z}}}(u)}=0$. We relax the requirement of odd-activation in \cite{hu2022universality} by noting that $\phi_{CK},\phi_{CL}$ have exactly equivalent means and covariances as in  Theorem 6 of \cite{dandi2023universality}. 
\end{proof}
The above Lemma states that the one-dimensional projections of $\phi_{\text{CK}}(\vec{z})$ are asymptotically distributed as jointly Gaussian variables with $\vec{z_\bot}$. 

 \subsection{Conditional GET}

We now prove part (ii) of Theorem \ref{thm:cget} using Lemma \ref{lem:1dclt}. This relies on the universality of training and generalization errors between the given distribution and the ``conditional equivalent" distribution. The central idea of the proof again relies on a the isolation of the effects of the ``spikes" and the ``noise" in the features.

The technique presented here is also of independent interest for proving the universality of training, generalization errors in related setups such as with spiked-covariance inputs.

We utilize the following properties of the features :
\begin{lemma}\label{lem:get_lem_1}
For any fixed $z_{\vec{v}}$, the random variable $\phi_{CK}-\bmu(z_{\vec{v}})$ is sub-Gaussian with sub-Gaussian norm independent of $z_{\vec{v}}$ and $n$. 
\end{lemma}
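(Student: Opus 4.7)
\textbf{Proof plan for Lemma \ref{lem:get_lem_1}.} My strategy is to exploit the spike+bulk decomposition $W^{(1)} = \eta\vec u\vec v^\top + W^{\perp}$ established in Equation \eqref{eq:spike+bulk} to reduce the statement to a Gaussian concentration argument for a Lipschitz function. Conditioning on the training data $\vec Z^0$ that generated $W^{(1)}$ (so that $W^{(1)}$, $\vec u$, $\vec v$, and $W^{\perp}$ are all treated as fixed), and further conditioning on the scalar $z_{\vec v}$, the fresh input $\vec z$ decomposes as $\vec z = z_{\vec v}\vec v + \vec z^{\perp}$ where $\vec z^{\perp}$ is a Gaussian vector supported on $\vec v^{\perp}$. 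Since $\vec v^\top\vec z^\perp = 0$ by construction, the rank-one spike contributes nothing to $W^{(1)}\vec z^\perp$, so
\begin{equation}
    \phi_{\text{CK}}(\vec z) = \sigma\bigl(\vec c(z_{\vec v}) + W^{\perp}\vec z^{\perp}\bigr), \qquad \vec c(z_{\vec v}) = z_{\vec v}\bigl(\eta\|\vec v\|^2\vec u + W^{\perp}\vec v\bigr),
\end{equation}
where $\vec c(z_{\vec v})$ is deterministic under the conditioning and carries all the $z_{\vec v}$-dependence through the argument of $\sigma$ (hence through the mean $\bmu(z_{\vec v})$).

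Next I would invoke Gaussian concentration. The map $F:\vec z^\perp \mapsto \sigma(\vec c(z_{\vec v}) + W^{\perp}\vec z^\perp)$ is Lipschitz from $(\vec v^\perp, \|\cdot\|_2)$ to $(\R^p,\|\cdot\|_2)$ with constant
\begin{equation}
    L \leq \|\sigma'\|_\infty\cdot\|W^{\perp}\|_{\mathrm{op}},
\end{equation}
where $\|\sigma'\|_\infty$ is a constant under Assumption \ref{assump:smoothness}, and $\|W^{\perp}\|_{\mathrm{op}} = \cO(\polylog d)$ on the high-probability event $\cA_2$ already exploited in the proof of Lemma \ref{lem:1dclt} (obtained by combining the standard random-matrix bound $\|W^{(0)}\|_{\mathrm{op}}=\cO(1)$ with the control on $\eta\|\Delta\|_{\mathrm{op}}$ from Lemma \ref{lem:delta}(ii)). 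For any fixed unit vector $\btheta\in\R^p$, the scalar map $\vec z^\perp\mapsto \btheta^\top F(\vec z^\perp)$ remains $L$-Lipschitz, and Borell's/Tsirelson's Gaussian concentration inequality yields
\begin{equation}
    \bigl\|\btheta^\top\bigl(\phi_{\text{CK}}(\vec z)-\bmu(z_{\vec v})\bigr)\bigr\|_{\psi_2} \leq c\,L
\end{equation}
for a universal constant $c$. Taking the supremum over $\btheta\in \cS^{p-1}$ yields the desired sub-Gaussian vector bound, with a constant depending only on $\|\sigma'\|_\infty$ and $\|W^{\perp}\|_{\mathrm{op}}$; crucially neither of these depends on $z_{\vec v}$, which appears only through the affine shift absorbed into $\vec c(z_{\vec v})$.

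The main technical point to verify carefully is the $n$-independence of the bound on $\|W^\perp\|_{\mathrm{op}}$. The operator norm of $W^{(0)}$ (rows uniform on $\cS^{d-1}$, with $p = \Theta(d)$) is $\cO(1)$ by classical random matrix results, but the perturbation $\eta\Delta$ has operator norm that naively scales unfavorably with $\eta = \Theta(p)$. However, since we are only promised bounds of the form $\cO(\polylog(d))$ in the surrounding proofs (compare event $\cA_2$ in Lemma \ref{lem:1dclt}), the same polylogarithmic control suffices here, and this is the sense in which the sub-Gaussian norm is ``independent of $n$''. Everything else is routine: the Lipschitz bound for $\sigma$ composed with a linear map, and Gaussian concentration on the $(d-1)$-dimensional subspace $\vec v^\perp$, which is a standard consequence of the Gaussian log-Sobolev inequality and imposes no dimensional penalty.
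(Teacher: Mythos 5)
Your proof takes the same route the paper sketches in its one-line argument (bounded $\sigma'$ plus Lipschitz concentration of Gaussians), but fills in the piece that actually makes the argument go through: the observation that, after conditioning on $z_{\vec v}$, the spike $\eta\vec u\vec v^\top$ is annihilated on $\vec v^\perp$, so the conditional Lipschitz constant involves $\|W^\perp\|_{\mathrm{op}}$ rather than $\|W^{(1)}\|_{\mathrm{op}}$. This is essential and worth stating explicitly: the spike alone has operator norm $\Theta(\eta\|\vec u\|\|\vec v\|)=\Theta(\sqrt{p})$, so a naive application of Gaussian concentration to the full map $\vec z\mapsto\sigma(W^{(1)}\vec z)$ would produce a diverging sub-Gaussian norm. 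Your factored map $\vec z^\perp\mapsto\sigma(\vec c(z_{\vec v})+W^\perp\vec z^\perp)$ correctly isolates the $z_{\vec v}$-dependence into a shift, which leaves the Lipschitz constant — and hence the conditional $\psi_2$-norm — uniform in $z_{\vec v}$.

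The one caveat you raise is legitimate and worth flagging: the bound you obtain is $\cO(\|\sigma'\|_\infty\,\|W^\perp\|_{\mathrm{op}})$, and the available control on $\|W^\perp\|_{\mathrm{op}}$ (event $\cA_2$, via Lemma~14 of \cite{ba2022high}) is $\cO(\polylog d)$, not strictly $O(1)$. Taken literally the lemma's ``independent of~$n$'' is therefore a mild overstatement — the sub-Gaussian norm picks up a polylogarithmic factor once one passes through $\eta\|\Delta\|_{\mathrm{op}}$. This is inherited by Lemma~\ref{lem:get_lem_2}, whose $K\sqrt p$ bound will likewise carry a $\polylog d$ prefactor, though this is harmless for the downstream universality argument. (There is in fact an internal inconsistency in the surrounding statements to be aware of: Lemma~\ref{lem:delta}(ii) asserts $\|\Delta\|=\cO(\polylog d/\sqrt d)$ while event $\cA_2$ asserts $\|W^{(0)}+\eta\Delta\|_{\mathrm{op}}=\cO(\polylog d)$, and with $\eta=\Theta(d)$ these are incompatible unless the former is actually $\cO(\polylog d/d)$; your use of $\cA_2$ is the one that matters here.) Apart from this scaling nuance — which is shared by the paper's own terse proof — your argument is correct and complete.
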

\begin{proof}
The result follows from the assumption of uniform boundedness of the derivative of $\sigma^\star$ and the Lipschitz concentration of Gaussian variables.
\end{proof}
\begin{lemma}\label{lem:get_lem_2}
    There exists a constant $C$ such that the matrix $\bar{\Phi}_{CK}$ with rows $\phi_{CK}-\bmu(z_{\vec{v}})$ satisfies:
    \begin{equation}
        \Pr[\norm{\bar{\Phi}_{CK}} \geq K\sqrt{p}] \leq 2\exp(-Cn)
    \end{equation}
\end{lemma}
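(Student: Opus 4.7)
The plan is to condition on $W^{(1)}$ and the spike direction $\vec{v}$, and then invoke a standard operator-norm bound for random matrices with independent sub-Gaussian rows. Under this conditioning, each row $\phi_{\text{CK}}(\vec{z}^\nu) - \bmu(z_{\vec{v}}^\nu)$ of $\bar{\Phi}_{\text{CK}}$ becomes a measurable function of the single input $\vec{z}^\nu$, and is mean-zero by construction of the centering. Provided the $n$ inputs used here are drawn independently of the batch that produced $W^{(1)}$—which is the natural reading of the fresh-batch convention in Assumption \ref{ass:training}—the rows of $\bar{\Phi}_{\text{CK}}$ are then conditionally i.i.d.

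Next, I would verify uniform sub-Gaussianity. By Lemma \ref{lem:get_lem_1}, conditional on $z_{\vec{v}}^\nu$, the centered feature vector $\phi_{\text{CK}}(\vec{z}^\nu) - \bmu(z_{\vec{v}}^\nu)$ is sub-Gaussian with a sub-Gaussian norm bounded by a constant $K_0$ independent of $z_{\vec{v}}^\nu$ and $n$. Because this bound is uniform in the conditioning variable, integrating over $z_{\vec{v}}^\nu$ (equivalently, using the tower property for the $\psi_2$ Orlicz norm) yields the same bound on the unconditional sub-Gaussian norm of each row.

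With independence and uniform sub-Gaussianity in hand, the key ingredient is the standard operator-norm concentration inequality for random matrices with independent sub-Gaussian rows (see e.g.\ Theorem~4.6.1 of \cite{vershynin2018high} together with its non-isotropic variants): there exist constants $c, C > 0$ depending only on $K_0$ such that for every $t \geq 0$,
\[
\Pr\!\left[\norm{\bar{\Phi}_{\text{CK}}}_{\mathrm{op}} \geq C(\sqrt{n} + \sqrt{p}) + t\right] \leq 2\exp(-c t^2).
\]
Choosing $t = \sqrt{n}$ and using the proportional scaling $n, p = \Theta(d)$ so that $\sqrt{n} + \sqrt{p} + t = O(\sqrt{p})$, one obtains $\norm{\bar{\Phi}_{\text{CK}}}_{\mathrm{op}} \leq K\sqrt{p}$ with conditional probability at least $1 - 2\exp(-C'n)$. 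Since these constants do not depend on $W^{(1)}$ or $\vec{v}$, the unconditional bound stated in the lemma follows by taking the outer expectation.

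The hard part will be justifying the conditional independence of the rows if the same batch is re-used for the second-layer regression: in that case, conditioning on $W^{(1)}$ constrains the inputs non-trivially. I would address this by exploiting the spike+bulk decomposition from Lemma \ref{lem:delta} and Equation~\eqref{eq:spike+bulk}: since $W^{(1)}$ equals $W^{(0)}$ plus a rank-one spike along $\vec{v}$ plus a small perturbation $\eta\Delta$ of bounded operator norm, I would further condition on $(\vec{v}, W^{(0)})$ and bound the residual dependence induced by the bulk part using the operator-norm estimate on $W^{(1)}$ in part~(ii) of Lemma~14 of \cite{ba2022high}. This produces a bound of the same order and closes the argument.
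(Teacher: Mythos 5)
Your proof is correct and takes essentially the same route as the paper's, which is a two-sentence argument: cite Lemma \ref{lem:get_lem_1} for sub-Gaussianity of the centered rows and then invoke the operator-norm concentration for matrices with independent sub-Gaussian rows (the paper cites Theorem~5.39 of \cite{vershynin2010introduction}; you cite the equivalent Theorem~4.6.1 of \cite{vershynin2018high}). The additional points you raise — conditioning on $W^{(1)}$ and $\vec{v}$, using the tower property to pass from the conditional to the unconditional $\psi_2$ bound, and the fresh-batch convention ensuring conditional i.i.d.\ rows — are all correct and arguably deserve the explicit mention you give them, even though the paper leaves them implicit.
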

\begin{proof}
By Lemma \ref{lem:get_lem_1}, each row of $\bar{\Phi}_{CK}$ is sub-Gaussian. Therefore, the result follows from the concentration of spectral norm of matrices with independent sub-Gaussian rows  (Theorem 5.39 in \cite{vershynin2010introduction}).
\end{proof}

We start by proving certain properties of the optimal parameters $a_i$ upon the training of the second layer:
\begin{lemma}\label{lem:get_lem_3}
Let $\hat{a}_{\text{CK}}(\lambda)$  be the parameters obtained through ridge regression on features $\phi_{\text{CK}}(\vec{z_i})_{\{i=1,\cdots n\}}$ with regularization strength $\lambda$. Then, there exists a constants $C$ such that  with high probability as $n,d \rightarrow \infty$:
\begin{align}
    \frac{1}{n}\sum_{i=1}^n\left(\hat{a}_{\text{CK}}^\top\mu(z_{i,\vec{v}})\right)^2 \leq C
\end{align}
\end{lemma}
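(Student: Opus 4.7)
} The strategy is to leverage the optimality of $\hat{\vec a}_{\text{CK}}$ for the ridge objective, which automatically controls the magnitude of $\frac{1}{n}\|\bm\Phi_{CK}\hat{\vec a}_{\text{CK}}\|^2$, and then transfer this control to $\frac{1}{n}\|M\hat{\vec a}_{\text{CK}}\|^2$, where $M$ denotes the matrix with rows $\mu(z_{i,\vec v})$. The key decomposition is $\bm\Phi_{CK} = M + \bar{\bm\Phi}_{CK}$, where $\bar{\bm\Phi}_{CK}$ is the centered feature matrix with rows $\phi_{\text{CK}}(\vec z_i)-\mu(z_{i,\vec v})$, precisely the object already controlled by Lemma \ref{lem:get_lem_2}.

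First, I would plug $\vec a = 0$ into the ridge objective and use optimality of $\hat{\vec a}_{\text{CK}}$ to obtain
\begin{equation*}
\frac{1}{n}\|\bm\Phi_{CK}\hat{\vec a}_{\text{CK}} - \vec y\|^2 + \lambda \|\hat{\vec a}_{\text{CK}}\|^2 \;\leq\; \frac{1}{n}\|\vec y\|^2.
\end{equation*}
Since $y_i = f^\star(\vec z_i)$ has bounded second moment under Assumption \ref{ass:activ}, the law of large numbers yields $\frac{1}{n}\|\vec y\|^2 = O(1)$ with high probability. Combining this with the triangle inequality gives $\frac{1}{\sqrt n}\|\bm\Phi_{CK}\hat{\vec a}_{\text{CK}}\| = O(1)$.

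Next I would write $\frac{1}{\sqrt n}\|M\hat{\vec a}_{\text{CK}}\| \leq \frac{1}{\sqrt n}\|\bm\Phi_{CK}\hat{\vec a}_{\text{CK}}\| + \frac{1}{\sqrt n}\|\bar{\bm\Phi}_{CK}\hat{\vec a}_{\text{CK}}\|$ and bound the second term via $\|\bar{\bm\Phi}_{CK}\hat{\vec a}_{\text{CK}}\| \leq \|\bar{\bm\Phi}_{CK}\|_{\mathrm{op}}\cdot \|\hat{\vec a}_{\text{CK}}\|$. By Assumption \ref{assump:constraint}, $\|\hat{\vec a}_{\text{CK}}\| \leq R$ with high probability; by Lemma \ref{lem:get_lem_2}, $\|\bar{\bm\Phi}_{CK}\|_{\mathrm{op}} \leq K\sqrt{p}$ with probability $1 - 2e^{-Cn}$. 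Since $p = \Theta(n)$, this gives $\frac{1}{\sqrt n}\|\bar{\bm\Phi}_{CK}\hat{\vec a}_{\text{CK}}\| = O(1)$, and thus $\frac{1}{n}\|M\hat{\vec a}_{\text{CK}}\|^2 = \frac{1}{n}\sum_{i=1}^n\bigl(\hat{\vec a}_{\text{CK}}^\top \mu(z_{i,\vec v})\bigr)^2 \leq C$.

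The only nontrivial ingredients are the two prior lemmas and the constraint assumption, all of which are already in place; no concentration argument beyond those is needed. The main subtlety to check carefully is that Assumption \ref{assump:constraint} is what allows us to avoid a $1/\lambda$ factor (which would otherwise blow up if $\lambda \to 0$); without it, one would need to invoke the overparametrized argument à la \cite{Montanari2022} to control $\|\hat{\vec a}_{\text{CK}}\|$. Given this assumption, the argument is essentially a one-line triangle inequality, and no obstacle of substance remains.
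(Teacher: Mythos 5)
Your proof takes essentially the same route as the paper: bound $\frac{1}{n}\|\bm\Phi_{CK}\hat{\vec a}_{\text{CK}}\|^2$ by comparing the ridge objective at $\hat{\vec a}_{\text{CK}}$ against the trivial choice $\vec a=\vec 0$ (together with the sub-exponential concentration of $\frac{1}{n}\sum y_i^2$), then split $\bm\Phi_{CK} = M + \bar{\bm\Phi}_{CK}$ and control the centered part via the operator-norm bound of Lemma~\ref{lem:get_lem_2} and the $\ell_2$ constraint from Assumption~\ref{assump:constraint}. One minor difference: the paper's penultimate display passes directly from $\frac{1}{n}\|\bm\Phi_{CK}\hat{\vec a}\|^2 \le 4\cdot\frac{1}{n}\sum y_i^2$ to $\frac{1}{n}\|M\hat{\vec a}\|^2 + \frac{1}{n}\|\bar{\bm\Phi}_{CK}\hat{\vec a}\|^2 \le 4\cdot\frac{1}{n}\sum y_i^2$, which silently drops the cross term $\frac{2}{n}(M\hat{\vec a})^\top(\bar{\bm\Phi}_{CK}\hat{\vec a})$ — and since $\hat{\vec a}$ depends on the data, that term does not vanish on its own. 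Your triangle-inequality version sidesteps this entirely and makes explicit where Assumption~\ref{assump:constraint} and Lemma~\ref{lem:get_lem_2} are actually used, so your write-up is the more careful one.
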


\begin{proof}
By assumption, $y_i(\vec{z})=\frac{1}{\sqrt{p}} a^\top \sigma(W \vec{z})$ with $\sigma'$ uniformly bounded. Therefore, from the concentration of Lipschitz functions of gaussian variables, $y_i(\vec{z})$ is sub-Gaussian. Thus $y^2_i(\vec{z})$ are sub-exponential variables. Using Bernstein's inequality \cite{vershynin2018high}, we obtain:
\begin{equation}\label{eq:bernstien}
    \Pr[\frac{1}{n}\sum_{i=1}^n(y_i)^2 -\Ea{(y_i)^2} > K] \leq 2\exp(-\min(c_1K,c_2K^2)n).
\end{equation}
For constants $c_1,c_2$.
\end{proof}

Let $\cA_{\by}$ denote the following event:
\begin{equation}
   \cA_{\by}= \Set*{ \frac{1}{n}\sum_{i=1}^n(y_i)^2 < C_1}.
\end{equation}

By Equation \eqref{eq:bernstien}, we have $\Pr{[\cA_{\by}]} \rightarrow 1$ as $n,d \rightarrow \infty$.

Let $\hat{\cR}(W, \vec{a})$ denote the empirical risk at given values of $\vec{a},W$. We have:
\begin{equation}
   \hat{\vec{a}} = \argmin_{\vec{a}}\hat{\cR}(W, \vec{a})= \argmin_{\vec{a}}{\frac{1}{2n}\sum_{i=1}^n(y_i-\vec{a}^\top\phi_k(\bz_i))^2}.
\end{equation}
We note that when $\vec{a}=\vec{0}$, we have:
\begin{equation}
    \hat{\cR}(W, \vec{0}) =  \frac{1}{n}\sum_{i=1}^n(y_i)^2.
\end{equation}
Since $\hat{\vec{a}}$ minimizes $\hat{\cR}(W, \vec{a})$, we must have:
\begin{equation}
  \hat{\cR}(W, \vec{\hat{a}}) \leq  \hat{\cR}(W, \vec{0}).
\end{equation}
We obtain:
\begin{align*}
    \frac{1}{2n}\sum_{i=1}^n(y_i-\vec{a}^\top\phi_k(\bz_i))^2 &\leq \frac{1}{n}\sum_{i=1}^n(y_i)^2\\
    \implies \frac{1}{2n}\sum_{i=1}^n (\vec{a}^\top\phi_k(\bz_i))^2 &\leq \frac{1}{n}\sum_{i=1}^n \vec{a}^\top\phi_k(\bz_i)y_i\\
     \implies \frac{1}{2n}\sum_{i=1}^n (\hat{\vec{a}}^\top\phi_k(\bz_i))^2 &\leq \sqrt{\frac{1}{n}\sum_{i=1}^n (\vec{a}^\top\phi_k(\bz_i))^2 }\sqrt{\frac{1}{n}\sum_{i=1}^n y_i^2 },
\end{align*}
where the last inequality follows from Cauchy-Schwarz.
Therefore:
\begin{align*}
    \sqrt{\frac{1}{n}\sum_{i=1}^n (\vec{a}^\top\phi_k(\bz_i))^2 } &\leq 2\sqrt{\frac{1}{n}\sum_{i=1}^n y_i^2}\\
    \frac{1}{n}\sum_{i=1}^n (\vec{a}^\top\mu(\bz_i))^2 + \frac{1}{n} \norm{\bar{\Phi}_{\text{CK}}^\top \vec{a}}^2_2  &\leq 4(\frac{1}{n}\sum_{i=1}^n y_i^2).
\end{align*}

Applying Lemma \ref{lem:get_lem_2} and $\Pr{[\cA_{\by}]} \overset{n,d \rightarrow}{\longrightarrow} 1$ then completes the proof.

Next, we prove the universality of the training, generalization error, conditioned on the values of the projections $z_{\vec{v}}$. This can be achieved through a number of techniques such as the Lindeberg's method in \cite{hu2022universality}. We utilize the result of  \cite{Montanari2022}, who apply the interpolation technique to continuously transform the inputs $\vec{x}_i$ to equivalent Gaussian vectors $\vec{g}_i$.

Instead, we interpolate between the features $\phi_{CK}(\vec{z})$ and $\phi_{CL}(\vec{z})$. Define:
\[ \bm u_{t, i} = \bmu(z_{i,\vec{v}}) + \cos(t)(\bm \phi_{CK}(\vec{z}) - \bmu(z_{i,\vec{v}})) + \sin(t)(\bm \phi_{CL}(\vec{z}) -\bmu(z_{i,\vec{v}})), \].

Let $\cA_1$ denote the event:
\begin{equation}
\cA_1= \Set*{\frac{1}{n}\sum_{i=1}^n\left(\hat{a}_{\text{CK}}^\top\mu(z_{i,\vec{v}})\right)^2 \leq C_1}
\end{equation}
Under the above interpolation path, we generalize Theorem 1 in \cite{Montanari2022} to obtain that for any bounded Lipschitz function $\Phi: \dR \to \dR$:
\begin{small}
\begin{equation}\label{eq:conv_uni}
\lim_{n, p \to \infty} \sup_{v_{\bz_1},\cdots,v_{\bz_n}}\left|\Ea{\mathbf{1} _{\cA_{1}}\Phi\left(\hR_n^\star(\bm \Phi_{CK}, \bm y(\bm Z))\right) -\mathbf{1} _{\cA_{1}} \Phi\left(\hR_n^\star(\bm \Phi_{\text{CL}}, \bm y(\bm Z))\right)\mid v_{\bz_1},\cdots,v_{\bz_n} } \right| = 0.
\end{equation}
\end{small}

Below, we explain the modifications to Theorem 1 in \cite{Montanari2022} that allow its applicability to our setting:
\begin{enumerate}
    \item We replace equation (12) in  Assumption 5 tof \cite{Montanari2022} by the conditional 1d-CLT (Lemma \ref{lem:1dclt}). This is similar to the conditioning utilized in \cite{dandi2023universality} for proving the universality in mixture models.
    \item Our target function $y = f^\star(\bz)$ depends on the projection along the spike $v_{\bm z}$ as well as the orthogonal component $\vec{z}^{\bot}$. Since we condition on the values of $v_{\bm z}$, their dependence can be absorbed into the loss function for each input $\vec{z}_i^{\bot}$
    \item While Theorem 1 in \cite{Montanari2022} does not allow a dependence of the labels on the latent variables $\bz$, such a target function can be incorporated by considering the inputs to be the joint variables in $(\bm \Phi_{\text{CK}}(\bz),\bz) \in \R^{p+d}$ and constraining the parameters to have $0$ components along the last $d$ directions.
    \item The event $\cA_1$  and Lemma \ref{lem:get_lem_2} ensure that Lemmas 5 and 6 in \cite{Montanari2022} hold under the presence of variable and unbounded means across samples $\bmu(z_{i,\vec{v}})$.
\end{enumerate}

Next, using the Law of total expectation and Equation \ref{eq:conv_uni}, we obtain:
\[\lim_{n, p \to \infty} \left|\Ea{\mathbf{1} _{\cA_{1}}\Phi\left(\hR_n^\star(\bm \Phi_{CK}, \bm y(\bm Z))\right)} - \Ea{\mathbf{1} _{\cA_{1}}\bm \Phi\left(\hR_n^\star(\bm \phi_{\text{CL}}, \bm y(\bm Z))\right)} \right| = 0.\]
Finally, we note Lemma \ref{lem:get_lem_3} implies that $\Pr[\cA^c_{1}] \rightarrow 0$. Since $\Phi$ is bounded, we have that
\[\lim_{n, p \to \infty} \left|\Ea{\mathbf{1} _{\cA^c_{1}}\Phi\left(\hR_n^\star(\bm \Phi_{CK}, \bm y(\bm Z))\right)} - \Ea{\mathbf{1}_{\cA^c_{1}}\bm \Phi\left(\hR_n^\star(\bm \phi_{\text{CL}}, \bm y(\bm Z))\right)} \right| = 0.\]
This completes the proof of Theorem \ref{thm:cget}.

 \subsection{Generalization Error Lower Bounds: Proof of Corollary \ref{corr:lower_bound}}

From Theorem \ref{thm:cget}, it is sufficient to prove the lower bound for the generalization error corresponding to the equivalent features $\phi_{\text{CL}}(\vec{z})$. Let $\vec{Z}$ denote the input design matrix with rows $\vec{z}_i$.
Similarly, let $\vec{\Xi}$ denote the matrix with rows containing $n$ independent Gaussian vectors, denoting the uncorrelated noise in the equivalent conditional Gaussian features defined by equation \eqref{eq:cget_features}.
We have that $\hat{a}_{\text{CL}}(\lambda,\vec{Z},\vec{\Xi})=\left(\vec{\Phi}_{CL}^\top\vec{\Phi}_{CL} + \frac{\lambda n}{N}\bI\right)^{-1} \vec{\Phi}_{CL}^\top{\vec{y}}$.
The generalization error can then be expressed as:
\begin{align*}
    \cR(W,\hat{a}_{\text{CL}})&=\Eb{\vec{z},\xi}{(f^\star(\vec{z})-\hat{a}_{\text{CL}}(\lambda,\vec{Z},\vec{\Xi})^\top\phi_{\text{CL}}(\vec{z}))^2}\\
    &= \Eb{\xi}{\Eb{\vec{z}}{(f^\star(\vec{z})-\hat{a}_{\text{CL}}(\lambda,\vec{Z},\vec{\Xi})^\top\phi_{\text{CL}}(\vec{z}))^2}},
\end{align*}
where the last line follows from Fubini's theorem.

We note that the predictor $\hat{f}(\vec{z})=\frac{1}{\sqrt{p}}\hat{a}^\top_{\text{CL}}\phi_{\text{CL}}(\vec{z})$ 
is a linear function of $\vec{z^\bot}$ with coefficients dependent on $z_{\vec{v}}$. Therefore, $\hat{f}(\vec{z}) \in \mathcal{P}_{\vec{v},1}$.

For a fixed value of $\xi$, we obtain the following expression for the generalization error:
\begin{align*}
    \Eb{\vec{z}}{(f^\star(\vec{z})-\hat{f}(\vec{z}))^2} &= \norm{f^\star-\hat{f}}^2.
    \\ &=\norm{P_{v,1}(f^\star-\hat{f})}^2+ \norm{P_{v,>1}(f^\star-\hat{f})}^2
    \\ &\geq  \norm{P_{v,>1}(f^\star)^2}^2,
\end{align*}
where we used that $P_{v,>1}(f^\star-\hat{f})=P_{v,>1}(f^\star)$.
Since the projection of $f^\star$ on the orthogonal complement of the teacher subspace is $0$, Corollary \ref{corr:lower_bound} then follows using $P_{V^\star}\vec{v} \overset{\dP}{\longrightarrow} \frac{\mu}{\sqrt{p}}\vec{v}^\star$ and the dominated convergence theorem for the RHS.

\newpage

\vskip 0.2in
\bibliography{biblio}

\end{document}